\newcommand{\NN}{\mathbb{N}}
\newcommand{\RR}{\mathbb{R}}
\newcommand{\ZZ}{\mathbb{Z}}
\newcommand{\limn}{\lim_{n\rightarrow\infty}}
\newcommand{\cov}{\operatorname{Cov}}
\newcommand{\limD}{\lim_{D\rightarrow\infty}}
\newcommand{\calN}{\mathcal{N}}
\newcommand{\eps}{\varepsilon}
\newcommand{\lmat}{\begin{bmatrix}}
\newcommand{\rmat}{\end{bmatrix}}
\renewcommand*{\@opargbegintheorem}[3]{\trivlist
	\item[\hskip \labelsep{\bfseries #1\ #2}] \textbf{(#3)}\ \itshape}
\begin{document}

\title{Wide Neural Networks with Bottlenecks are Deep Gaussian Processes}

%\author{Anonymous}
%\begin{comment}
\author{\name Devanshu Agrawal \email dagrawa2@vols.utk.edu \\
       \addr The Bredesen Center\\
       University of Tennessee\\
       Knoxville, TN 37996-3394, USA
       \AND
       \name Theodore Papamarkou \email papamarkout@ornl.gov \\
       \addr Computational Sciences and Engineering Division\\
       Oak Ridge National Lab\\
       Oak Ridge, TN 37830-8050, USA
       \AND
       \name Jacob Hinkle \email hinklejd@ornl.gov \\
       \addr Computational Sciences and Engineering Division\\
       Oak Ridge National Lab\\
       Oak Ridge, TN 37830-8050, USA}
%\end{comment}

% \editor{To be selected}

\maketitle

\begin{abstract}%   <- trailing '%' for backward compatibility of .sty file
There has recently been much work on the ``wide limit'' of neural networks, where Bayesian neural networks (BNNs) are shown to converge to a Gaussian process (GP) as all hidden layers are sent to infinite width. 
However, these results do not apply to architectures that require one or more of the hidden layers to remain narrow. 
In this paper, we consider the wide limit of BNNs where some hidden layers, called ``bottlenecks'', are held at finite width. 
The result is a composition of GPs that we term a ``bottleneck neural network Gaussian process'' (bottleneck NNGP). 
Although intuitive, the subtlety of the proof is in showing that the wide limit of a composition of networks is in fact the composition of the limiting GPs. 
We also analyze theoretically a single-bottleneck NNGP, finding  that the bottleneck 
induces dependence between the outputs of a multi-output network that persists through extreme post-bottleneck depths, and 
prevents the kernel of the network from losing discriminative power at extreme post-bottleneck depths. 
%We demonstrate the potential utility of bottleneck NNGPs empirically on multiple datasets, showing that bottlenecks boost model likelihood and allow the network to operate at greater depths.
\end{abstract}

\begin{keywords}
  Bayesian neural networks, deep learning, Gaussian processes, kernels, phase transitions
\end{keywords}

%\tableofcontents

%\listoftheorems

\section{Introduction}

Deep neural networks have found great empirical success, achieving state-of-the-art performance on a variety of tasks such as those in computer vision and natural language understanding~\citep{krizhevsky2012imagenet, antipov2015learned, liang2017text}.
There is considerable interest in understanding the theoretical aspects of deep neural networks both to establish guarantees on the behavior of 
these models on certain classes of problems as well as to guide architecture design and optimization.
One avenue of pursuit in this endeavor leads to the study of Bayesian neural networks (BNNs), where the parameters of the network are random variables following some probability distributions.
BNNs thus bring the formalism and machinery of probability theory to bear on neural networks.

It is a foundational result that a BNN converges to a Gaussian process (GP) in the ``wide limit''---i.e., as the widths of all hidden layers are sent to infinity while the prior distributions on weights are sharpened accordingly~\citep{neal1996priors}. 
The resulting GP is called a ``neural network Gaussian process'' (NNGP). 
Although NNGP limits have been derived from various BNN architectures, they cannot be obtained from architectures requiring some hidden layers to remain narrow, such as certain autoencoders. 
It seems intuitive that the wide limit of a BNN with some hidden layers restricted to finite-width ``bottlenecks'' is a composition of NNGPs,
but until now this claim has not been proven.
Such a composition of GPs is called a ``deep Gaussian process'' (DGP) in the literature~\citep{damianou2013deep}.
Although DGPs were inspired by the compositional structure of deep neural networks, their connection to BNNs has not been established formally.

In this paper, we give a formal proof of the convergence of BNNs with bottleneck layers to a DGP in the wide limit, 
where the DGP is a composition of NNGPs. 
In doing so, we unify the two major approaches to making GPs ``deeper''---NNGPs and DGPs, 
thus allowing NNGPs to be examined in the DGP framework. 
We will refer to the limiting DGP as a ``bottleneck NNGP''.
Even though the result is intuitive, the proof is nontrivial as it requires us to formally justify that the limit of a composition of BNNs equals the composition of the limiting NNGPs.
%This exchangeability of limit and composition holds when the post-bottleneck component of a bottleneck BNN converges with sufficient uniformity; showing this is the main work of this paper.

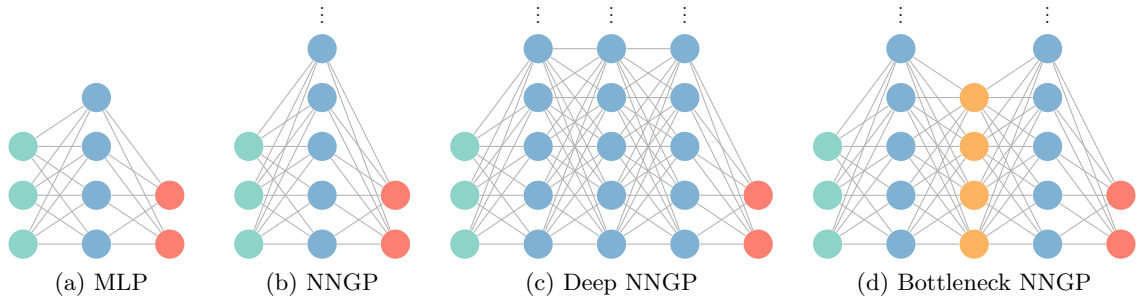
\begin{figure}
\centering
    \def\figscale{.65}
\def\numinputs{3}
\def\numoutputs{2}
\def\layersep{1.5cm}

% first pass
%\def\inputcolor{green!70!blue}
%\def\hiddencolor{blue!70!black}
%\def\infhiddencolor{blue!50}
%\def\bottleneckcolor{blue!70!black}
%\def\outputcolor{red!80!black}

% colorbrewer Paired
%\def\inputcolor{rgb,255: red,51;green,160;blue,44}
%\def\hiddencolor{rgb,255: red,31;green,120;blue,180}
%\def\infhiddencolor{rgb,255: red,166; green,206; blue,227}
%\def\bottleneckcolor{rgb,255: red,31;green,120;blue,180}
%\def\outputcolor{rgb,255: red,227;green,26;blue,28}

% colorbrewer Set3
%141,211,199
%128,177,211
%253,180,98
%251,128,114
\def\inputcolor{rgb,255: red,141;green,211;blue,199}
\def\hiddencolor{rgb,255: red,128;green,177;blue,211}
\def\infhiddencolor{rgb,255: red,128;green,177;blue,211}
\def\bottleneckcolor{rgb,255: red,253;green,180;blue,98}
\def\outputcolor{rgb,255: red,251;green,128;blue,114}

\def\edgecolor{black!30}
% \subfloat[Shallow MLP]{
\subfloat[MLP]{
    \begin{tikzpicture}[shorten >=0pt,-,draw=\edgecolor, node
        distance=\layersep, scale=\figscale, every node/.style={transform shape}]
    \tikzstyle{every pin edge}=[<-,shorten <=0pt]
    \tikzstyle{neuron}=[circle,fill=black!25,minimum size=17pt,inner sep=0pt]
    \tikzstyle{input neuron}=[neuron, fill=\inputcolor]
    \tikzstyle{output neuron}=[neuron, fill=\outputcolor]
    \tikzstyle{hidden neuron}=[neuron, fill=\hiddencolor]%\bottleneckcolor];
    \tikzstyle{annot} = [text width=4em, text centered]

    % Draw the input layer nodes
    \foreach \name / \y in {1,...,\numinputs}
    % This is the same as writing \foreach \name / \y in {1/1,2/2,3/3,4/4}
        \node[input neuron] (I-\name) at (0,-\y) {};

    % Draw the hidden layer nodes
    \foreach \name / \y in {1,...,4}
        \path[yshift=0.5cm]
        node[hidden neuron] (H-\name) at (\layersep,-\y.5 cm + 1cm) {};

    % Draw the output layer nodes
        \foreach \name / \y in {1,...,\numoutputs}
        \node[output neuron] (O-\name) at (2*\layersep, -\y.5cm - 0.5cm) {};

    % Connect every node in the input layer with every node in the
    % hidden layer.
    \foreach \source in {1,...,3}
        \foreach \dest in {1,...,4}
            \path (I-\source) edge (H-\dest);

    % Connect every node in the hidden layer with the output layer
    \foreach \source in {1,...,4}
    \foreach \dest in {1,...,\numoutputs}
            \path (H-\source) edge (O-\dest);

    % Annotate the layers
    %\node[annot,above of=H-1, node distance=1cm] (hl) {Hidden layer};
    %\node[annot,left of=hl] {Input layer};
    %\node[annot,right of=hl] {Output layer};
\end{tikzpicture}
\label{fig-seq-nn-mlp}
} \hfill
% \subfloat[Shallow NNGP~\citep{neal1996priors}]{
\subfloat[NNGP]{
    \begin{tikzpicture}[shorten >=0pt,-,draw=\edgecolor, node
        distance=\layersep, scale=\figscale, every node/.style={transform shape}]
    \tikzstyle{every pin edge}=[<-,shorten <=0pt]
    \tikzstyle{neuron}=[circle,fill=black!25,minimum size=17pt,inner sep=0pt]
    \tikzstyle{input neuron}=[neuron, fill=\inputcolor]
    \tikzstyle{output neuron}=[neuron, fill=\outputcolor]
    \tikzstyle{hidden neuron}=[neuron, fill=\infhiddencolor]
    \tikzstyle{annot} = [text width=4em, text centered]

    % Draw the input layer nodes
    \foreach \name / \y in {1,...,3}
    % This is the same as writing \foreach \name / \y in {1/1,2/2,3/3,4/4}
        \node[input neuron] (I-\name) at (0,-\y) {};

    % Draw the hidden layer nodes
    \foreach \name / \y in {1,...,5}
        \path[yshift=0.5cm]
        node[hidden neuron] (H-\name) at (\layersep,-\y cm + 1.5cm) {};
    \node[annot, above of=H-1,yshift=-0.7cm] {$\mathbf{\vdots}$};

    % Draw the output layer nodes
    \foreach \name / \y in {1,...,\numoutputs}
        \node[output neuron] (O-\name) at (2*\layersep, -\y.5cm - 0.5cm) {};

    % Connect every node in the input layer with every node in the
    % hidden layer.
    \foreach \source in {1,...,3}
        \foreach \dest in {1,...,5}
            \path (I-\source) edge (H-\dest);

    % Connect every node in the hidden layer with the output layer
    \foreach \source in {1,...,5}
        \foreach \dest in {1,...,\numoutputs}
            \path (H-\source) edge (O-\dest);

    % Annotate the layers
    %\node[annot,above of=H-1, node distance=1cm] (hl) {Hidden layer};
    %\node[annot,left of=hl] {Input layer};
    %\node[annot,right of=hl] {Output layer};
\end{tikzpicture}
\label{fig-seq-nn-nngp}
} \hfill
% \subfloat[Deep NNGP~\citep{lee2017deep,matthews2018gaussian}]{
\subfloat[Deep NNGP]{
    \begin{tikzpicture}[shorten >=0pt,-,draw=\edgecolor, node
        distance=\layersep, scale=\figscale, every node/.style={transform shape}]
    \tikzstyle{every pin edge}=[<-,shorten <=0pt]
    \tikzstyle{neuron}=[circle,fill=black!25,minimum size=17pt,inner sep=0pt]
    \tikzstyle{input neuron}=[neuron, fill=\inputcolor]
    \tikzstyle{output neuron}=[neuron, fill=\outputcolor]
    \tikzstyle{hidden neuron}=[neuron, fill=\infhiddencolor]
    \tikzstyle{annot} = [text width=4em, text centered]

    % Draw the input layer nodes
    \foreach \name / \y in {1,...,3}
    % This is the same as writing \foreach \name / \y in {1/1,2/2,3/3,4/4}
        \node[input neuron] (I-\name) at (0,-\y) {};

    % Draw the hidden layer nodes
    \foreach \name / \y in {1,...,5}
        \path[yshift=0.5cm]
        node[hidden neuron] (H-\name) at (\layersep,-\y cm + 1.5cm) {};
    \node[annot, above of=H-1,yshift=-0.7cm] {$\mathbf{\vdots}$};

    % Draw the hidden layer nodes
    \foreach \name / \y in {1,...,5}
        \path[yshift=0.5cm]
        node[hidden neuron] (H2-\name) at (\layersep*2,-\y cm + 1.5cm) {};
    \node[annot, above of=H2-1,yshift=-0.7cm] {$\mathbf{\vdots}$};

    % Draw the hidden layer nodes
    \foreach \name / \y in {1,...,5}
        \path[yshift=0.5cm]
        node[hidden neuron] (H3-\name) at (\layersep*3,-\y cm + 1.5cm) {};
    \node[annot, above of=H3-1,yshift=-0.7cm] {$\mathbf{\vdots}$};

    % Draw the output layer nodes
    \foreach \name / \y in {1,...,\numoutputs}
        \node[output neuron] (O-\name) at (4*\layersep, -\y.5cm - 0.5cm) {};

    % Connect every node in the input layer with every node in the
    % hidden layer.
    \foreach \source in {1,...,3}
        \foreach \dest in {1,...,5}
            \path (I-\source) edge (H-\dest);

    \foreach \source in {1,...,5}
        \foreach \dest in {1,...,5}
            \path (H-\source) edge (H2-\dest);

    \foreach \source in {1,...,5}
        \foreach \dest in {1,...,5}
            \path (H2-\source) edge (H3-\dest);

    % Connect every node in the hidden layer with the output layer
    \foreach \source in {1,...,5}
        \foreach \dest in {1,...,\numoutputs}
            \path (H3-\source) edge (O-\dest);

    % Annotate the layers
    %\node[annot,above of=H-1, node distance=1cm] (hl) {Hidden layer};
    %\node[annot,left of=hl] {Input layer};
    %\node[annot,right of=hl] {Output layer};
\end{tikzpicture}
\label{fig-seq-nn-deep-nngp}
} \hfill
% \subfloat[Bottleneck NNGP (ours)]{
\subfloat[Bottleneck NNGP]{
    \begin{tikzpicture}[shorten >=0pt,-,draw=\edgecolor, node
        distance=\layersep, scale=\figscale, every node/.style={transform shape}]
    \tikzstyle{every pin edge}=[<-,shorten <=0pt]
    \tikzstyle{neuron}=[circle,fill=black!25,minimum size=17pt,inner sep=0pt]
    \tikzstyle{input neuron}=[neuron, fill=\inputcolor]
    \tikzstyle{output neuron}=[neuron, fill=\outputcolor]
    \tikzstyle{hidden neuron}=[neuron, fill=\infhiddencolor]
    \tikzstyle{bottleneck neuron}=[neuron, fill=\bottleneckcolor]
    \tikzstyle{annot} = [text width=4em, text centered]

    % Draw the input layer nodes
    \foreach \name / \y in {1,...,3}
    % This is the same as writing \foreach \name / \y in {1/1,2/2,3/3,4/4}
        \node[input neuron] (I-\name) at (0,-\y cm) {};

    % Draw the hidden layer nodes
    \foreach \name / \y in {1,...,5}
        \path[yshift=0.5cm]
        node[hidden neuron] (H-\name) at (\layersep,-\y cm + 1.5cm) {};
    \node[annot, above of=H-1,yshift=-0.7cm] {$\mathbf{\vdots}$};

    % Draw the hidden layer nodes
    \foreach \name / \y in {1,...,4}
        \path[yshift=0.5cm]
        node[bottleneck neuron] (H2-\name) at (\layersep*2,-\y cm + 0.5cm) {};
    %\node[annot, above of=H2-1,yshift=-0.7cm] {$\mathbf{\vdots}$};

    % Draw the hidden layer nodes
    \foreach \name / \y in {1,...,5}
        \path[yshift=0.5cm]
        node[hidden neuron] (H3-\name) at (\layersep*3,-\y cm + 1.5cm) {};
    \node[annot, above of=H3-1,yshift=-0.7cm] {$\mathbf{\vdots}$};

    % Draw the output layer nodes
    \foreach \name / \y in {1,...,\numoutputs}
        \node[output neuron] (O-\name) at (4*\layersep, -\y.5cm - 0.5cm) {};

    % Connect every node in the input layer with every node in the
    % hidden layer.
    \foreach \source in {1,...,3}
        \foreach \dest in {1,...,5}
            \path (I-\source) edge (H-\dest);

    \foreach \source in {1,...,5}
        \foreach \dest in {1,...,4}
            \path (H-\source) edge (H2-\dest);

    \foreach \source in {1,...,4}
        \foreach \dest in {1,...,5}
            \path (H2-\source) edge (H3-\dest);

    % Connect every node in the hidden layer with the output layer
    \foreach \source in {1,...,5}
        \foreach \dest in {1,...,\numoutputs}
            \path (H3-\source) edge (O-\dest);

    % Annotate the layers
    %\node[annot,above of=H-1, node distance=1cm] (hl) {Hidden layer};
    %\node[annot,left of=hl] {Input layer};
    %\node[annot,right of=hl] {Output layer};
\end{tikzpicture}
\label{fig-seq-nn-bottleneck-nngp}
}
\caption{\label{fig-seq-nn} %
Depiction of various NNGP architectures with three predictors (green nodes) and two response variables (red nodes). 
Blue nodes indicate hidden layers, with ellipses indicating layers that increase in width toward infinity.
In our bottleneck NNGP model, one or more bottlenecks (finite-width hidden layers with orange nodes) are surrounded by infinite-width hidden layers. 
In the historical development of NNGP architectures,
MLPs (sub-figure \protect\subref{fig-seq-nn-mlp}) have been succeeded by
shallow NNGPs (sub-figure \protect\subref{fig-seq-nn-nngp}, see \citet{neal1996priors}), 
which in turn have been succeeded by 
deep NNGPs (sub-figure \protect\subref{fig-seq-nn-deep-nngp}, see \citetalias{lee2017deep,matthews2018gaussian}). 
In our paper, we propose bottleneck NNGPs (sub-figure \protect\subref{fig-seq-nn-bottleneck-nngp}).
}
\end{figure}

In the bottleneck NNGP limit, we consider a sequence of BNNs all having the same architecture except that some hidden layers are growing to infinite width (Fig.~\ref{fig-seq-nn}). 
We call the hidden layers held to finite widths ``bottleneck layers'' or simply ``bottlenecks'', 
and we call each network in the sequence a ``bottleneck BNN''. 
We use the term ``component'' to refer to any subnetwork that is either (1) between the input layer and the first bottleneck layer, (2) between two bottleneck layers with no bottlenecks in 
between, or (3) between the last bottleneck layer and the output layer; 
each BNN is thus a composition of components, 
and each component maintains constant input and output dimensions with only its hidden layers growing in width over the sequence of networks.

We know that each sequence of corresponding components converges to an NNGP in the wide limit. 
It is therefore intuitive to expect that the sequence of bottleneck BNNs (each BNN being a composition of components) converges to the composition of NNGPs---i.e., a bottleneck NNGP. 
%The proof of convergence thus reduces to showing that
%the limit of a composition of components is the composition of the limits of components
%in the cases of BNNs.
However, this fact is not immediate, and care must be taken to verify that the limit procedure can be exchanged with the composition of components. 
In particular, we find that this exchangeability holds if each post-bottleneck component converges to an NNGP with a sufficient amount of uniformity with respect to its inputs.

We demonstrate the utility of bottleneck NNGPs and their link to no-bottleneck NNGPs empirically,
showing that restricting a hidden layer of an NNGP to a bottleneck can boost its model likelihood on three example datasets\footnote{Code for our simulations and experiments is available at
%\url{https://anonymized}.
\url{https://code.ornl.gov/d0a/bottleneck_nngp}.
}.
% by allowing it to operate at greater post-bottleneck depths.

We also characterize the effect of a bottleneck layer theoretically by analyzing an example multi-output single-bottleneck NNGP with rectified linear unit (ReLU) activation. 
We find that the bottleneck induces dependence between distinct response variables and derive a closed-form expression for the correlation between the squares (i.e., quadratic correlation) of the response variables. 
We show that in the deep post-bottleneck limit (infinitely many infinite-width hidden layers after the bottleneck), the quadratic correlation tends to $0$ when the network is in the ``disordered phase''---so that response variables decouple---but remains a nontrivial function of the inputs in the ``ordered phase''---so that information about the inputs can be recovered. 
We identify the prior variance of the network weights as the order parameter responsible for the phase transition.

Similarly, in the deep post-bottleneck limit, we obtain a closed-form expression for the quadratic correlation of outputs of a single response variable given two inputs. 
We find that the quadratic correlation is 100\% in the ``disordered phase''---so that the network has lost all discriminative power at infinite depth---but is surprisingly a nontrivial function of the inputs in the ``ordered phase''.
This behavior  in the ordered phase stands in stark contrast to that of no-bottleneck NNGPs and indicates that bottleneck layers are essential for a very deep network to maintain discriminative power.

\section{Preliminaries}

In this section, we review prior work on DGPs and NNGPs to contextualize and motivate the bottleneck NNGP model.
We also review the main theorem of \citetalias{matthews2018gaussian} in Sec.~\ref{subsection-no-bottleneck}, introducing notation that will be essential to stating our main result in Sec.~\ref{subsection-bottleneck-nngp-theorem}.

\subsection{Deep Gaussian processes}
\label{subsection-dgp}

Compositions of GPs are known as deep Gaussian processes (DGPs) in the literature and were originally motivated by the success of deep neural networks and the hope to obtain similar success on small data sets where Bayesian methods generally shine~\citep{damianou2013deep}.
DGPs have indeed been shown to outperform shallow GPs on a variety of regression and classification tasks~\citep{damianou2013deep, salimbeni2017doubly}.
\citet{damianou2013deep} implement DGPs in a sparse variational inducing points framework based on the work of 
\citet{titsias2009variational} in order to simplify the composition of GPs to a set of separate but coupled GPs, but their
implementation is restricted to small data sets with only a few hundred entries.
Much of the DGP literature has therefore been dedicated to developing more efficient and scalable implementations~\citep{hensman2014nested, dai2015variational, bui2016deep, wang2016sequential, salimbeni2017doubly}.
\citet{salimbeni2017doubly} in particular show that DGPs can be put into a stochastic variational framework as in \citet{hensman2013gaussian}, allowing the models to be applied to much larger data sets.

DGPs offer the additional advantage that they can capture correlation between multiple outputs~\citep{wang2016sequential}.
In contrast, the outputs of a shallow multi-output GP are by default independent, which can limit predictive performance for multi-output problems.
Although methods have been proposed to model correlation in a shallow multi-output GP, such as through linear mixing of latent outputs~\citep{bonilla2008multi, alvarez2009sparse}, DGPs capture correlation naturally through shared feature representations in the latent ``bottleneck space''---similar to the approach taken with multi-task deep neural networks~\citep{ruder2017overview}.
DGPs have therefore been applied to problems that can benefit from modeling correlation between multiple outputs, such as multi-task regression~\citep{alaa2017deep} and tasks involving partially observed multivariate outputs---i.e., missing values~\citep{wang2016sequential}.

The mechanism by which the outputs of a DGP are made dependent predates the DGP model itself, as it was first introduced in the context of the Bayesian Gaussian process latent variable model (GP-LVM), 
where a Gaussian prior is placed on the latent inputs of a GP~\citep{lawrence2004gaussian, titsias2010bayesian}; 
for regression, each input is concatenated with such a latent random variable before it is fed into the GP~\citep{dutordoir2018gaussian}. 
The outputs then become dependent through their dependence on the common set of latent random variable inputs, 
which are analogous to the bottleneck activations of a DGP. 
However, it still remains quantitatively unclear how the introduction of a bottleneck layer---or in the case of an NNGP, the restriction of a layer to finite width---induces correlation between multiple outputs under the prior and how this translates to correlation under the posterior. 
Moreover, although this mechanism is well-established for the GP-LVM and DGP models, it is conspicuously absent in the NNGP literature and thus its implications for NNGPs are not fully understood.

The DGP prior has been studied by
\citet{lu2019interpretable}, who show that for a single-bottleneck DGP with a single response variable, the prior has heavy tails, in contrast to shallow GPs.
Their calculation of the prior kurtosis is similar to that of the quadratic correlation between distinct response variables of a multi-output DGP, but this connection is not discussed.
Moreover, they only consider a bottleneck of width one and primarily focus on stationary kernels that do not arise from NNGP limits with common activation functions.

There is considerable interest in understanding the ``deep limit'' of DGPs---i.e. when arbitrarily many GPs are composed together. 
\citet{duvenaud2014avoiding} and \citet{dunlop2018deep} show that DGPs with a certain class of kernels have trivial, pathological, or convergent deep limits, 
meaning that increasing the depth of a DGP beyond some point is either detrimental to performance or diminishingly beneficial. 
However, they do not consider NNGP kernels and thus do not analyze deep limits of architectures with both bottlenecks and infinitely many infinite-width hidden layers.

Although DGPs were inspired by deep neural networks, there is little literature concretely establishing their connection.
\citet{duvenaud2014avoiding} discuss the connection between DGPs and neural networks at a high level to motivate studying the deep limit of DGPs with radial basis function (RBF) kernels, but the implication for neural networks is not treated formally. 
\citet{gal2015dropout} consider a DGP where the kernel of each GP layer is an integral as in \citet{williams1997computing}.
They show that a Monte Carlo estimation of the kernels leads to a BNN approximation of the DGP, where the width of a hidden layer corresponds to the size of the Monte Carlo sample.
However, they do not formally verify convergence of a BNN to a DGP in the limit of infinite width.
Moreover, their bottleneck layers have no activation function and are not scaled to allow an NNGP to be recovered as the bottlenecks are sent to infinite width.

\subsection{Wide neural networks as GPs}

A foundational result in the study of BNNs came when \citet{neal1996priors} showed that a BNN with one hidden layer converges to a GP in the ``wide limit''---i.e., as the number of hidden neurons is sent to infinity.
Shortly after, \citet{williams1997computing} derived analytic expressions for the kernel of the GPs corresponding to neural networks with sigmoidal and Gaussian hidden units.
These works connected neural networks to the world of Bayesian nonparametrics and kernel methods and thus offered a new perspective to interrogate and probe the behavior of neural networks.
In particular, while training neural networks is challenging since it requires the optimization of highly non-convex objective functions, GPs are nonparametric models that admit exact Bayesian inference, where the predictive posterior distribution can be written in  closed form~\citep{rasmussen2006gaussian}.
%Even though GP inference (without resorting to approximations) becomes infeasible on data sets with more than a few thousand entries, it 
%remains a valuable theoretical tool since it enables us to in principle write down the optimal neural network fitted to a data set (this is of course assuming 
%that the network parameters are random variables and the number of hidden neurons is in the wide limit regime).

Since the works of \citet{neal1996priors, williams1997computing}, new insights into BNNs have steadily emerged.
\citet{cho2009kernel} interpreted a BNN as a feature embedding map and derived the equations for the propagation of a kernel through the layers of a deep neural network with rectified polynomial unit activations.
Subsequent works built upon these findings to elucidate key theoretical aspects of neural networks including expressivity~\citep{poole2016exponential}, generalization power~\citep{hazan2015steps}, initialization~\citep{daniely2016toward}, and trainability~\citep{schoenholz2016deep}.
More recently, the original result by \citet{neal1996priors} has been extended to deep architectures by showing that a deep BNN 
converges to a GP as the widths of all hidden layers are simultaneously sent to infinity~\citepalias{lee2017deep, matthews2018gaussian}.
We refer to GPs that arise from such a limit as ``neural network Gaussian processes'' (NNGPs).
As the work of \citetalias{matthews2018gaussian} illustrates, this extension is nontrivial; the proof by \citet{neal1996priors} relies on the Central Limit Theorem, 
but the assumption of independent and identically distributed (IID) random variables necessary for the Central Limit Theorem does not hold for deep 
architectures.
The proof by \citetalias{matthews2018gaussian} for deep architectures is instead based on a more exotic central limit theorem as given in \citet{blum1958central}.

Since the extension of the NNGP limit to deep architectures, there have been a number of works establishing and analyzing analogous wide-limit results for more modern neural network architectures that are used in practice today.
These include convolutional neural networks~\citep{garriga-alonso2019deep, novak2019bayesian}, weight-tied autoencoders~\citep{li2018on}, and most generally any network that can be represented as a ``tensor program''---including recurrent neural networks and attention networks among others~\citep{yang2019scaling}. 
Alongside these works, new insights into the trainability and generalization power of neural networks have continued to emerge, based on the tractable learning dynamics of neural networks in the wide limit~\citep{jacot2018neural, lee2019wide, arora2019exact}.

One application of the NNGP limit that is of particular note is that it can make the analysis of the ``deep limit''---i.e., as the number of hidden layers is sent to infinity---tractable~\citep{poole2016exponential, schoenholz2016deep, yang2017mean, lee2017deep}.
%The covariance function---or kernel---of an NNGP is defined recursively over depth and can be viewed in terms of an iterative map or as a discrete dynamical system.
\citet{poole2016exponential} and \citet{schoenholz2016deep} show that the correlation between two inputs transformed through an NNGP with sigmoidal activation function has a fixed point at 100\% in the deep limit that transitions from stable to unstable (i.e., ordered to chaotic) when the variances of the Gaussian weights and biases cross a certain phase boundary; 
the network is shown to be highly expressive in the chaotic phase and optimally trainable near the phase boundary.
In contrast, for networks with rectified linear unit (ReLU) activations, %there is no order-to-chaos phase transition;
the correlation between transformed inputs has a stable fixed point at 100\% regardless of the weight variance, implying that an NNGP with ReLU activation has no discriminative power at infinite depth~\citep{lee2017deep}.

The works described above all consider BNNs in the wide limit, and thus the results and insights therein do not apply to neural network 
architectures that require one or more finite width or ``bottleneck'' layers.
One of the most important classes of neural networks that frequently require bottleneck layers is that of autoencoders~\citep{hinton2006reducing, kingma2014auto}.
Another example is neural networks with a word embedding layer, which is currently key to the successful application of neural networks to natural language understanding~\citep{mikolov2013distributed}.
Both word embedding layers and many autoencoder models aim to find dense feature representations and therefore depend on low-dimensional spaces.
Even for neural network architectures that are not directly meant for dense representation learning, it has still been argued and demonstrated that bottleneck layers perform data compression and therefore help to boost generalization power~\citep{tishby2015deep}.
Particularly for fully-connected architectures, which is what we consider in this work, it has been shown that the insertion of linear bottleneck layers between two linear ReLU layers boosts predictive performance by reducing sparsity and improving gradient flow~\citep{lin2015far}. 
This prompts the question:
How can insights based on very wide BNNs be generalized to networks in which one or more hidden layers are held fixed to a finite width?
The first step in addressing this question is to understand what happens if we let all but finitely many hidden layers of a BNN grow to infinite width.
We call these finite-width hidden layers ``bottleneck layers''.
It is already established that the component networks between consecutive bottleneck layers converge to GPs, and thus we intuitively expect a BNN with bottleneck layers to converge in the wide limit to a composition of GPs.
In Sec.~\ref{subsection-bottleneck-nngp-theorem}, we formally verify that this is the case.

\subsection{The no-bottleneck NNGP limit}
\label{subsection-no-bottleneck}

The bottleneck NNGP limit is a generalization of the (no-bottleneck) NNGP result proved by \citetalias{matthews2018gaussian}. 
Moreover, one component of our proof is verifying that BNNs converge in distribution uniformly on compact sets, and our approach to 
proving this closely follows the proof of \citetalias{matthews2018gaussian}. 
In this section, we state the NNGP limit result by \citetalias{matthews2018gaussian}, which also allows us to introduce key concepts and notation along the 
way.

We consider a traditional fully-connected network mapping $\RR^M$ to $\RR^L$ with $D$ hidden layers and nonlinearity $\phi$. 
Let $H_{\mu}$ be the width of the $\mu$-th hidden layer. 
The propagation of an input $x$ through the network is then governed by a recursion with initial step
\begin{equation} \label{eq-f1}
f_i^{(1)}(x) = b_i^{(1)} + \sum_{j=1}^{M} w_{ij}^{(1)} x_j,
\end{equation}
and for $\mu\in\{1,\ldots,D\}$,
\begin{align}
g_i^{(\mu)}(x) &= \phi[f_i^{(\mu)}(x)], \label{eq-gmu} \\
f_i^{(\mu+1)}(x) &= b_i^{(\mu+1)} + \sum_{j=1}^{H_{\mu}} w_{ij}^{(\mu+1)} g_j^{(\mu)} (x). \label{eq-fmu}
\end{align}
In Eq.~\eqref{eq-gmu}, $i$ ranges from $1$ to $H_{\mu}$. 
In Eq.~\eqref{eq-fmu}, $i$ ranges from $1$ to $H_{\mu+1}$ for $\mu=1,\ldots,D-1$, 
and from $1$ to $L$ for $\mu=D$. 
We refer to $f^{(\mu)}(x)$ and the $g^{(\mu)}(x)$ as the preactivations into and activations out of the $\mu$-th hidden layer, respectively. 
The top-most preactivations $f^{(D+1)}(x)$ are the outputs of the network.

We require mild assumptions on the nonlinearity $\phi$ for our main theorem to hold; these are the same assumptions made by \citetalias{matthews2018gaussian}, namely the linear envelope condition.

\begin{definition}[Linear envelope condition]
    \label{defn-linenv}
A nonlinearity $\phi:\RR\mapsto\RR$ is said to satisfy the linear envelope condition if it is continuous and there exist positive constants 
$C$ and $M$ such that
\begin{equation*}
|\phi(x)| < C + M|x| \mbox{ for all } x\in\RR.
\end{equation*}
\end{definition}

Many popular activation functions such as tanh, ReLU, and leaky ReLU satisfy the linear envelope condition, and thus our result is quite general with 
regards to the choice of nonlinearity.

We now turn the above network into a random network by placing IID normal distributions on the weights $w^{(\mu)}$ and biases $b^{(\mu)}$ of the 
network:
\begin{align}
w_{ij}^{(\mu)} &\sim \mathcal{N}\left(0, \frac{v_w^{(\mu)}}{H_{\mu-1}}\right), \label{eq-v_w} \\
b_i^{(\mu)} &\sim \mathcal{N}(0, v_b^{(\mu)}), \label{eq-v_b}
\end{align}
where we set $H_0 = 1$ for the purpose of defining these  distributions. 
The variance of the weights after the first layer are scaled inversely to the preceding hidden layer width so that the Central Limit Theorem can be applied to 
the convergence of BNNs to a GP. 
With a slight abuse of terminology, we will call the constants $v_w^{(\mu)}$ and $v_b^{(\mu)}$ ``weight and bias variance hyperparameters'' even though $v_w^{(\mu)}$ is not the actual variance of the weights.

The output $f^{(D+1)}(x)$ is now a random vector of dimension $L$ for each input $x$, and we therefore understand a BNN as an instance of a stochastic process. 
We give a formal definition next, after we introduce some notation.
If $\Omega$ is a probability space and $E$ is a measurable space, then an $E$-valued stochastic process $F$ with index set $X$ is a function $F: X\times\Omega\mapsto E$ such that $F(x, \cdot)$ is a measurable function for each $x\in X$. 
By the notation $F(x)$, we refer to the random variable $F(x): \Omega\mapsto E$ defined by $F(x)(\omega) = F(x, \omega)$. 
%When the probability space $\Omega$ and state space $E$ are left implicit, we will often denote the stochastic process $F$ by $\{F(x): x\in X\}$; this notation emphasizes that a stochastic process is an indexed collection of random variables. 

\begin{definition}[Bayesian neural network] \label{defn-bnn}
A Bayesian neural network (BNN) $F$ mapping $\RR^M$ to $\RR^L$ with $D$ hidden layers of widths $H_{\mu}$, $\mu\in\{1,\ldots,D\}$, and nonlinearity 
$\phi:\RR\mapsto\RR$ is a stochastic process $F: \RR^M\times\Omega\mapsto\RR^L$ defined such that $F(x) = f^{(D+1)}(x)$, where $f^{(D+1)}$ is 
the neural network output defined through the recursion of preactivations and activations of Eqs.~\eqref{eq-gmu}-\eqref{eq-fmu}.
\end{definition}

\citetalias{matthews2018gaussian} prove the following theorem concerning the convergence of BNNs with no bottleneck layers.

\begin{theorem}[NNGP theorem,~\citetalias{matthews2018gaussian}] \label{thm-matthews}
Let $\{F[n]\}_{n=1}^\infty$ be a sequence of BNNs mapping $\RR^M$ to $\RR^L$ each with $D$ hidden layers of widths $H_{\mu}[n]$, $\mu\in\{1,\ldots,D\}$, and nonlinearity $\phi:\RR\mapsto\RR$ that satisfies the linear envelope condition. 
If $H_{\mu}[n]$ is strictly increasing in $n$ for each $\mu$, then $\{F[n]\}_{n=1}^\infty$ restricted to a countable index set $\mathcal{X}\subset \RR^M$ 
converges in distribution to the Gaussian process $\mathcal{GP}(0, K)$, where $K$ is a kernel
defined recursively by Eqs.~\eqref{eq-K1}-\eqref{eq-Kmu}.
\end{theorem}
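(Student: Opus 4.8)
The plan is to prove convergence of all finite-dimensional distributions and then invoke the standard fact that, for a stochastic process indexed by a \emph{countable} set, convergence in distribution is equivalent to convergence of every finite-dimensional marginal. So I would fix a finite collection of inputs $x_1,\dots,x_r\in\mathcal X$ and aim to show that the $\RR^{Lr}$-valued vector $(F[n](x_1),\dots,F[n](x_r))$ converges in distribution to a centered Gaussian with the covariance prescribed by $K$. By the Cram\'er--Wold device this reduces to showing that every fixed scalar linear combination of the coordinates of that vector converges to a one-dimensional Gaussian with the matching variance.

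The argument then runs by induction on the layer index $\mu$. The inductive hypothesis has two parts: (i) jointly over the finite input set, the preactivations $f^{(\mu)}[n]$ converge in distribution to the centered Gaussian with the partially-built covariance $K^{(\mu)}$; and (ii) the empirical second-moment matrix $\frac{1}{H_\mu[n]}\sum_j g^{(\mu)}_j[n](x)\,g^{(\mu)}_j[n](x')$ of the activations converges in probability to the deterministic matrix $\mathbb{E}[\phi(u)\phi(u')]$, the expectation taken under the limiting law of $(f^{(\mu)}(x),f^{(\mu)}(x'))$; this second statement is what feeds the kernel recursion of Eqs.~\eqref{eq-K1}--\eqref{eq-Kmu}. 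The base case $\mu=1$ is immediate: $f^{(1)}[n]$ is an exact linear image of Gaussian weights and biases, hence exactly Gaussian, and a law of large numbers over the independent units gives the empirical-moment claim.

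For the inductive step I would fix the output coordinate and the linear functional and write the relevant scalar as $b+\sum_{j=1}^{H_\mu[n]} w_j\,\psi_j$, where $\psi_j$ is a fixed linear functional of $g^{(\mu)}_j[n]$ evaluated on the input set and the $w_j$ are the rescaled weights (variance $v_w^{(\mu+1)}/H_\mu[n]$). Across $j$ these summands are \emph{exchangeable} but not independent, since the $g^{(\mu)}_j$ share the weights of layers $1,\dots,\mu-1$; this is exactly why the classical CLT fails here. I would therefore invoke the exchangeable central limit theorem of \citet{blum1958central}: a triangular array of row-wise exchangeable, row-wise-centered variables with asymptotically negligible individual contributions converges to a Gaussian \emph{provided} the empirical second moment of a row converges \emph{in probability to a constant}. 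The continuity half of the linear envelope condition lets me push the convergence of $f^{(\mu)}[n]$ through $\phi$ via the continuous mapping theorem, and the linear-growth half supplies the uniform-integrability and second-moment bounds needed to (a) upgrade distributional convergence of $f^{(\mu)}[n]$ to convergence in probability of the empirical activation moments, (b) multiply in the weights (independent of the $g^{(\mu)}_j$, contributing the factor $v_w^{(\mu+1)}$) while preserving this convergence, and (c) control the tails of the $\psi_j$ so the negligibility condition holds.

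I expect the main obstacle to be precisely the verification that the empirical second-moment matrix of the activations concentrates on a \emph{constant} rather than on a genuinely random limit --- this is the difference between a Gaussian limit and a scale mixture of Gaussians. The natural route is to condition on the $\sigma$-algebra of layers $1,\dots,\mu-1$, under which the $g^{(\mu)}_j$ become genuinely i.i.d.\ across $j$ so a conditional law of large numbers applies; but one must still show that the conditional mean, itself a function of the lower-layer activations, converges to the deterministic kernel value, which is where the two-part induction has to be threaded carefully and where moment control via the linear envelope condition is indispensable. Two bookkeeping points also need care: the strict monotonicity of each $H_\mu[n]$ in $n$ makes $\{F[n]\}$ a genuine sequence, so no subsequence or diagonal extraction is required; and the whole induction must be carried out jointly over the finite input set and over output coordinates, so that the Cram\'er--Wold reduction actually yields joint Gaussianity with the correct cross-covariances. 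Once the induction reaches $\mu=D$, one more application of the exchangeable CLT to $F[n]=f^{(D+1)}[n]$ yields $\mathcal{GP}(0,K)$ with $K=K^{(D+1)}$.
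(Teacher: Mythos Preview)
Your proposal is essentially the same strategy that \citetalias{matthews2018gaussian} use (and that this paper recapitulates and extends in the proof of Lemma~\ref{lemma-tildeFn}): reduce to finite-dimensional marginals via Theorem~\ref{thm-marginal}, apply Cram\'er--Wold, and induct on the layer index using the exchangeable CLT of \citet{blum1958central}, with the linear envelope condition supplying the necessary moment control.

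One point of divergence worth flagging: you phrase the key hypothesis of the exchangeable CLT as ``the empirical second moment of a row converges in probability to a constant'' and propose to verify it by conditioning on the lower-layer $\sigma$-algebra. The version actually used (Theorem~\ref{thm-blum-clt}) instead asks for the \emph{unconditional} moment conditions $\operatorname{E}[X_1 X_2]=0$, $\operatorname{E}[X_1^2 X_2^2]\to\sigma^4$, and $\operatorname{E}[|X_1|^3]=o(\sqrt{n})$, and the paper (Lemmas~\ref{lemma-condition-variance}--\ref{lemma-condition-3}) verifies these directly: the inductive hypothesis gives convergence in distribution of the preactivations, the continuous mapping theorem pushes this through $\phi$, and uniform integrability (established via eighth-moment bounds from the linear envelope condition) lets one pass the limit inside the expectation. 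No conditioning step is needed, and the stronger statement that the empirical second-moment matrix concentrates in probability is never proved as such. Your route via conditioning would likely also work but is more circuitous; the direct moment-condition verification is what is actually done.
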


Note that we use the suffix $[n]$ instead of a subscript to index a sequence of stochastic processes.
Convergence in distribution is defined in the measurable space $((\RR^L)^{\infty}, \mathcal{A})$ of $\RR^L$-valued sequences; 
details are provided in Appendix~\ref{appendix-section-convergence}. 
The limiting GP in Thm.~\ref{thm-matthews} is called a neural network Gaussian process (NNGP). 
If $f^{(\mu)}_i$ is the limiting NNGP of $\{f^{(\mu)}_i[n]\}_{n=1}^\infty$, then the NNGP kernel is defined through a recursion with initial step
\begin{equation} \label{eq-K1}
K^{(1)}_{ij}(x_1, x_2) = \operatorname{E}[f^{(1)}_i(x_1) f^{(1)}_j(x_2)] 
= \delta_{ij}(v_b^{(1)} + v_w^{(1)} x_1\cdot x_2),
\end{equation}
and for $\mu\in\{1,\ldots,D\}$,
\begin{align} \label{eq-Kmu}
\begin{aligned}
K^{(\mu+1)}_{ij}(x_1, x_2) &= \operatorname{E}[f^{(\mu+1)}_i(x_1) f^{(\mu+1)}_j(x_2)] \\
&= \delta_{ij}\left(v_b^{(\mu+1)} + v_w^{(\mu+1)}\operatorname{E}_{z_1,z_2\sim \mathcal{N}(0, C^{(\mu)})}[\phi(z_1)\phi(z_2)]\right),
\end{aligned}
\end{align}
where $C^{(\mu)}$ is the $2\times 2$ matrix with entries $c^{(\mu)}_{ab} = K^{(\mu)}_{11}(x_a, x_b)$; 
here we could have used $K^{(\mu)}_{ii}$ in place of $K^{(\mu)}_{11}$ for any $i$ since the NNGP preactivations $f^{(\mu)}_i(x)$ into the $\mu$-th hidden layer are IID over $i$. 
The (countably infinite) kernel matrix $K^{(\mu)}(\mathcal{X}, \mathcal{X})$ is therefore block-diagonal
with the $(i, j)$-th block $K^{(\mu)}_{ij}(\mathcal{X}, \mathcal{X})$. 

The kernel $K: \RR^M\times\RR^M\mapsto \RR^{L\times L}$ in Thm.~\ref{thm-matthews} is given by $K = K^{(D+1)}$. 
Observe that the $L$ outputs of the BNNs converge to IID GPs so that all correlations between the outputs of the networks are lost in the infinite width limit. 
We will see  that bottleneck layers help to preserve some correlations between outputs.

\section{The bottleneck NNGP theorem}
%\subsection{The bottleneck NNGP limit}
\label{section-bottleneck-limit}

In this section, we state and prove the bottleneck NNGP theorem,
we show that a single-bottleneck NNGP approximates a no-bottleneck NNGP as the bottleneck width is increased,
and we explore the effect of depth and width on bottleneck NNGPs using three example datasets.

\subsection{Statement of main theorem}
\label{subsection-bottleneck-nngp-theorem}

We now state our main theorem, which is a direct generalization of Thm.~\ref{thm-matthews} to compositions of BNNs. 
Given two stochastic processes $F^{(1)}: X\times\Omega^{(1)}\mapsto Y$ and $F^{(2)}: Y\times\Omega^{(2)}\mapsto Z$, we define the composition $F^{(2)}\circ F^{(1)}$ as the stochastic process $F^{(2)}\circ F^{(1)}: X\times (\Omega^{(1)}\times \Omega^{(2)})\mapsto Z$ with
\begin{equation*}
(F^{(2)}\circ F^{(1)})(x, (\omega_1, \omega_2)) = F^{(2)}(F^{(1)}(x, \omega_1), \omega_2).
\end{equation*}

\begin{theorem}[Bottleneck NNGP theorem] \label{thm-bottleneck}
Let $\{B_d\in\NN\}_{d=0}^D$ for $D\in\NN$ with $B_0=M$ and $B_D=L$. 
For each $d\in\{1,\ldots,D\}$, let $\{F^{(d)}[n]\}_{n=1}^\infty$ be a sequence of BNNs mapping $\RR^{B_{d-1}}$ to $\RR^{B_d}$ with $D_d$ hidden layers of widths $H^{(d)}_{\mu}[n]$ , $\mu\in\{1,\ldots,D_d\}$, and nonlinearity $\phi$ that satisfies the linear envelope condition. 
If $H^{(d)}_{\mu}[n]$ is strictly increasing in $n$ for each $d\in\{1,\ldots,D\}$ and $\mu\in\{1,\ldots,D_d\}$, then the sequence of bottleneck random neural 
networks $\{F^{(D)}[n]\circ\cdots\circ F^{(1)[n]}\}_{n=1}^\infty$ restricted to a countable index set $\mathcal{X}\in\RR^M$ converges in distribution in 
$((\RR^L)^{\infty}, \mathcal{A})$ to $F^{(D)}\circ\cdots\circ F^{(1)}$, where $F^{(d)}$ is the limiting NNGP of $\{F^{(d)}[n]\}_{n=1}^\infty$.
\end{theorem}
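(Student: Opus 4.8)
The plan is to argue by induction on the number of components $D$. The base case $D=1$ is exactly Theorem~\ref{thm-matthews}. For the inductive step I would carry the stronger inductive hypothesis that the composition of the first $d$ component sequences converges in distribution, not merely on a countable index set, but as a sequence of random elements of the path space $C(\RR^M,\RR^{B_d})$ equipped with the topology of uniform convergence on compact sets, with a limit having continuous sample paths. The first task is therefore to upgrade Theorem~\ref{thm-matthews} itself to this functional form: I would prove a lemma stating that any sequence $\{F[n]\}_{n=1}^\infty$ of BNNs satisfying the hypotheses of Theorem~\ref{thm-matthews} converges in distribution in $C(\RR^M,\RR^L)$ (compact-open topology) to a random element whose finite-dimensional distributions are those of $\mathcal{GP}(0,K)$; in particular this exhibits a continuous modification of the NNGP, so that the composition $F^{(D)}\circ\cdots\circ F^{(1)}$ in the theorem is well defined as a stochastic process in the sense of Sec.~\ref{subsection-bottleneck-nngp-theorem}. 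To prove the lemma I would take the countable index set of Theorem~\ref{thm-matthews} to be dense in $\RR^M$ (so that convergence of finite-dimensional distributions is already in hand on a dense set) and then establish tightness of $\{F[n]\}$ in $C(\RR^M,\RR^L)$; after exhausting $\RR^M$ by compacta and diagonalizing, tightness reduces to a uniform-in-$n$ bound at a single point together with a uniform modulus-of-continuity estimate $\lim_{\delta\downarrow0}\limsup_{n\to\infty}\operatorname{P}\big(\sup_{\,\|x-x'\|\le\delta,\ x,x'\in K}\|F[n](x)-F[n](x')\|>\eps\big)=0$ for every compact $K$ and every $\eps>0$.

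Granting this lemma, I would handle the composition as follows. The component sequences $F^{(d)}[n]$ are defined on mutually independent probability spaces, so the marginal convergences $F^{(d)}[n]\to F^{(d)}$ in the Polish spaces $C(\RR^{B_{d-1}},\RR^{B_d})$ combine into joint convergence of the tuple $(F^{(1)}[n],\dots,F^{(D)}[n])$ to $(F^{(1)},\dots,F^{(D)})$, the limit having independent coordinates. The composition map $(f_1,\dots,f_D)\mapsto f_D\circ\cdots\circ f_1$ from $\prod_{d=1}^{D} C(\RR^{B_{d-1}},\RR^{B_d})$ into $C(\RR^M,\RR^L)$ is continuous for these compact-open topologies, because each intermediate Euclidean space $\RR^{B_d}$ is locally compact Hausdorff and composition of continuous maps between function spaces is continuous when the middle space is locally compact; hence the continuous mapping theorem gives $F^{(D)}[n]\circ\cdots\circ F^{(1)}[n]\to F^{(D)}\circ\cdots\circ F^{(1)}$ in distribution in $C(\RR^M,\RR^L)$. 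This is the precise place where the ``uniformity'' provided by the lemma is consumed: equivalently, one may realize the joint convergence almost surely via Skorokhod's theorem, observe that $F^{(D-1)}[n]\circ\cdots\circ F^{(1)}[n]$ sends each compact set into an eventually relatively compact set, and then pass the random inner argument through the limit using uniform convergence of $F^{(D)}[n]$ on that set together with uniform continuity there of the continuous limit $F^{(D)}$. Finally, restriction to the countable index set $\mathcal{X}$ is a continuous map from $C(\RR^M,\RR^L)$ into $((\RR^L)^\infty,\mathcal{A})$, so one more application of the continuous mapping theorem yields the convergence in distribution asserted in Theorem~\ref{thm-bottleneck}; the induction then closes, since the $C$-space statement is exactly what the lemma (applied to the new outermost component) and the composition-exchange step both consume and reproduce.

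I expect the uniform modulus-of-continuity estimate to be the main obstacle; this is the genuinely new technical content, and, as anticipated in the introduction, the argument would closely follow that of \citetalias{matthews2018gaussian}. One propagates increment bounds through the recursion Eqs.~\eqref{eq-f1}--\eqref{eq-fmu}: at the first layer $f^{(1)}(x)-f^{(1)}(x')$ is exactly Gaussian with covariance proportional to $\|x-x'\|^2$, so all its moments are controlled; at each subsequent layer the increment must be passed through the nonlinearity $\phi$ --- where only continuity and the linear envelope condition are available, so $\phi$ is merely uniformly continuous on compacta rather than Lipschitz --- and then through the random affine map of Eq.~\eqref{eq-fmu}, whose $1/H_{\mu-1}$ variance scaling is exactly what keeps the bound from degrading as the widths $H^{(d)}_{\mu}[n]\to\infty$. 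The delicate points are (a) controlling the modulus uniformly in $n$ despite the growing number of summands per layer, for which conditioning on the lower layers together with the central limit theorem for exchangeable sequences of \citet{blum1958central} used by \citetalias{matthews2018gaussian} are the natural tools, and (b) checking that all the moments invoked along the way genuinely propagate through the recursion, which is precisely what the linear envelope condition secures. Once this modulus estimate and the one-point bound are established, an Arzel\`a--Ascoli argument gives tightness on each compact set and the remainder of the proof proceeds as sketched above.
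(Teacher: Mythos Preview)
Your strategy is genuinely different from the paper's. Both argue by induction on $D$ with the same base case, but the inductive step diverges. The paper never leaves finite-dimensional marginals: via Theorem~\ref{thm-marginal} it reduces to showing $\tilde F^{(d+1)}[n](Z[n])\xrightarrow{D}\tilde F^{(d+1)}(Z)$ for each finite input batch, and for this it proves a composition lemma (Lemma~\ref{lemma-Fn-Xn}) whose key hypothesis is that the outer batch BNNs converge in distribution \emph{uniformly on compact sets of inputs}. That uniformity is obtained not by tightness in a path space but by establishing \emph{continuous convergence in distribution} (Definition~\ref{defn-cd}, Proposition~\ref{prop-uc}): one reruns the \citetalias{matthews2018gaussian} argument with the fixed input batch replaced by a convergent sequence of batches (Lemma~\ref{lemma-tildeFn}), which amounts only to redoing the moment and uniform-integrability bookkeeping along a moving input. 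Your route through $C(\RR^M,\RR^L)$ with the compact-open topology is cleaner in principle---the continuous mapping theorem does all the composition work at once, and a continuous modification of the limit comes for free---whereas the paper's route is more pedestrian but, crucially, never asks for any sample-path regularity that is uniform in $n$.

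That difference is exactly where your proposal has a real gap. The linear envelope condition gives only continuity and linear growth of $\phi$; it imposes no H\"older or Lipschitz modulus. Your sketch propagates increment bounds layer by layer, but after the first layer the increment $\phi(f^{(\mu)}(x))-\phi(f^{(\mu)}(x'))$ is controlled only through the modulus of continuity of $\phi$ on the (random) range of $f^{(\mu)}$, and for general continuous $\phi$ this modulus need not decay like any power of the input increment. It is therefore unclear that a Kolmogorov-type bound $\operatorname{E}\|F[n](x)-F[n](x')\|^p\le C\|x-x'\|^{M+\alpha}$, or any substitute sufficient for Arzel\`a--Ascoli tightness, can be obtained uniformly in $n$ under the stated hypotheses; your appeal to the exchangeable CLT of \citet{blum1958central} does not help here, since it delivers distributional limits rather than the quantitative moment or tail control that tightness needs. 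The paper sidesteps precisely this issue: its uniform-on-compacta convergence is of \emph{distribution functions}, and it follows simply because the limiting marginal is Gaussian with a continuous kernel (Lemma~\ref{lemma-cont}), so $x\mapsto\Pr(F(x)\in U)$ is continuous---no equicontinuity of paths is ever invoked. Indeed the paper's method extends to $\phi$ that is merely almost-everywhere continuous (Remark~\ref{rem-discont}), a regime in which your $C$-space formulation cannot even get started. To make your route go through you would likely have to add a H\"older or local-Lipschitz assumption on $\phi$, or else replace the Arzel\`a--Ascoli step by an argument that does not pass through pathwise increment moments.
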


\begin{remark}[Nonlinear bottleneck] \label{rem-nonlinbottleneck}
Theorem~\ref{thm-bottleneck} as stated above assumes no nonlinearity on the bottleneck layers. 
However, the theorem also holds when we replace $F^{(d)}[n]$ and $F^{(d)}$ with $F^{(d)}[n]\circ\left(\frac{1}{\sqrt{B_{d-1}}}\phi\right)$ and 
$F^{(d)}\circ\left(\frac{1}{\sqrt{B_{d-1}}}\phi\right)$ respectively for $d\in\{2,\ldots,D\}$, that is when 
we scale the weights after each bottleneck layer by layer width in the same way as all other weights after a hidden layer and
and we place nonlinearities $\phi$ on the bottleneck layers.
The proof is nearly identical to the proof of Thm.~\ref{thm-bottleneck} (see Remark~\ref{rem-nonlinbottleneck-appendix} in Appendix~\ref{appendix-section-main} for details).
\end{remark}

\begin{remark}[Discontinuous nonlinearity] \label{rem-discont}
Theorem~\ref{thm-bottleneck} holds even if the nonlinearity $\phi:\RR\mapsto\RR$ is continuous only almost everywhere, as long as $\phi$ is continuous at $0$ or $v_b > 0$. 
Each of these two conditions ensures that the Continuous Mapping Theorem is still applicable in Lemmas~\ref{lemma-condition-variance}-\ref{lemma-condition-3} (see Remark~\ref{rem-discont-appendix} in Appendix~\ref{appendix-section-main} for details). 
This extends the class of allowable nonlinearities to include such prominent examples as the Heaviside step function used in the first perceptron model~\citep{rosenblat1958perceptron}.
\end{remark}

\begin{remark}[Converse of the main theorem] \label{rem-converse}
The converse of Thm.~\ref{thm-bottleneck}---that every DGP is the bottleneck NNGP limit of a BNN with IID priors and a nonlinearity satisfying the linear envelope condition---does not hold. 
A simple counterexample is a no-bottleneck single-hidden-layer NNGP with the rectified polynomial unit activation $\phi(x) = \operatorname{max}(0, x)^n$ for $n\geq 2$~\citep{cho2009kernel}; 
this is a GP that can only result from a wide limit if the linear envelope condition is violated. 
A more trivial counterexample is any GP with a linear kernel $k(x_1, x_2) = x_1^\top G x_2$ where $G$ is a symmetric positive-semidefinite matrix not proportional to the identity matrix. 
By Eq.~\eqref{eq-K1}, the NNGP kernel depends on its inputs only through their dot product and is thus invariant under rotations. 
The kernel $k$ with metric $G$ can therefore only arise from a wide limit if the ``IID priors'' condition is violated.
%Finding conditions under which the converse holds or strengthening Thm.~\ref{thm-bottleneck} to hold for a larger class of activation functions are left as open problems.
\end{remark}

Here we consider a sequence of BNNs with $D-1$ bottleneck layers of widths $B_1,\ldots,B_{D-1}$. 
As all hidden layers except the bottleneck layers tend to infinite width, each component network converges to an NNGP by Thm.~\ref{thm-matthews}, but it is less obvious that the composition of components tends to the composition of the limiting NNGPs. 
Our proof depends on several original lemmas (Lemmas~\ref{lemma-Fn-Xn}-\ref{lemma-tildeFn} in Appendix~\ref{appendix-section-main}).
However, Lemma~\ref{lemma-tildeFn} is a simple generalization of Lemma 12 in \citetalias{matthews2018gaussian}, and its proof therefore runs in parallel to that in \citetalias{matthews2018gaussian}. 
The complete proof of the main theorem as well as proofs for all supporting lemmas can be found in Appendix~\ref{appendix-section-bottleneck}; 
we recommend readers to start at the introduction of Appendix~\ref{appendix-section-bottleneck}, 
where we provide a detailed sketch of the proof and discuss the high-level function of each lemma. 
Definitions and properties of the various modes of convergence of stochastic processes pertinent to the proof are discussed in Appendix~\ref{appendix-section-convergence}.

\subsection{Correspondence to the no-bottleneck NNGP}
\label{subsection-correspondence}

We expect that in the limit as bottlenecks are sent to infinite width, the bottleneck NNGP converges to the (no-bottleneck) NNGP with the same number of hidden layers. 
The next theorem gives this result for the case of a bottleneck NNGP with one bottleneck layer.

\begin{theorem}[Wide bottleneck correspondence] \label{thm-correspondence}
Let $\{F^{(H)}\}_{H=1}^\infty$ be a sequence of single-bottleneck NNGPs mapping $\RR^M$ to $\RR^L$ with $D_1$ hidden layers and $D_2$ hidden layers before and after the bottleneck of width $H$ and with nonlinearity $\phi:\RR\mapsto\RR$ that satisfies the linear envelope condition. 
Suppose the nonlinearity $\phi$ is also applied to the bottleneck and that its activations are scaled by $\frac{1}{\sqrt{H}}$, in accordance with Remark~\ref{rem-nonlinbottleneck}. 
Suppose also that IID Gaussian noise $\calN(0, v_n)$ is added to the networks for any $v_n > 0$. 
Then
\begin{enumerate}[label={(\alph*)}]
\item $\{F^{(H)}\}_{H=1}^\infty$ restricted to a countable index set $\mathcal{X}\subset \RR^M$ converges in distribution in $((\RR^L)^{\infty}, \mathcal{A})$ to an NNGP $F$ with $D_1+D_2+1$ hidden layers and Gaussian noise $\calN(0, v_n)$. \label{thm-correspondence:a}
\item For every finite set of inputs $X\subset\RR^M$, the sequence of probability density functions (PDFs) of $\{F^{(H)}\}_{H=1}^\infty$ converges pointwise to the PDF of the NNGP $F$. \label{thm-correspondence:b}
\end{enumerate}
\end{theorem}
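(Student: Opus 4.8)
The plan is to reduce everything to a single conditioning step followed by the law of large numbers. Write the single-bottleneck NNGP, with the convention of Remark~\ref{rem-nonlinbottleneck}, as $F^{(H)} = F^{(2)}\circ\bigl(\tfrac{1}{\sqrt H}\phi\bigr)\circ F^{(1)}$ with independent $\calN(0,v_n)$ noise added coordinatewise to the output, where $F^{(1)}$ is the NNGP of the $D_1$-hidden-layer pre-bottleneck network, $F^{(2)}$ is the NNGP of the $D_2$-hidden-layer post-bottleneck network, and the two are realized on independent probability spaces. Fix a finite set $X=\{x_1,\dots,x_k\}\subset\mathcal X$ and write $d=kL$. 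By Eqs.~\eqref{eq-K1}-\eqref{eq-Kmu} the kernel of $F^{(1)}$ is block-diagonal, $K^{(1)}_{ij}=\delta_{ij}\kappa$, so the coordinates $F^{(1)}_1(X),\dots,F^{(1)}_H(X)$ are i.i.d.\ $\calN(0,\kappa(X,X))$; the bottleneck activations $b_i(x_a)=\tfrac1{\sqrt H}\phi(F^{(1)}_i(x_a))$ thus have empirical Gram matrix $\Gamma_H$ with $\Gamma_H[a,a']=\tfrac1H\sum_{i=1}^H\phi(F^{(1)}_i(x_a))\phi(F^{(1)}_i(x_{a'}))$. Since the NNGP kernel of $F^{(2)}$ depends on its inputs only through their pairwise dot products (again Eqs.~\eqref{eq-K1}-\eqref{eq-Kmu}), conditionally on $F^{(1)}(X)$ the vector $F^{(H)}(X)$ is $\calN(0,\Sigma_H)$ with $\Sigma_H := K_{\mathrm{post}}(\Gamma_H)+v_n I_d$, where $K_{\mathrm{post}}(\cdot)$ sends a $k\times k$ input Gram matrix to the output covariance of $F^{(2)}$ on $k$ points. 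In particular $\Sigma_H\succeq v_n I_d\succ 0$, so $F^{(H)}(X)$ has Lebesgue density $p_H(y)=\operatorname{E}[\varphi_{\Sigma_H}(y)]$ (expectation over $F^{(1)}(X)$, $\varphi_\Sigma$ the $\calN(0,\Sigma)$ density), and $p_H(y)\le(2\pi v_n)^{-d/2}$ uniformly in $H$ and $y$ because $\det\Sigma_H\ge v_n^{\,d}$.

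Next I would pass to $H\to\infty$. The linear envelope condition (Definition~\ref{defn-linenv}) together with Gaussian moment bounds gives $\operatorname{E}\lvert\phi(F^{(1)}_1(x_a))\phi(F^{(1)}_1(x_{a'}))\rvert<\infty$, so by the strong law of large numbers $\Gamma_H\to C$ almost surely, where $C[a,a']=\operatorname{E}_{(z_1,z_2)\sim\calN(0,\,\kappa_{\{a,a'\}})}[\phi(z_1)\phi(z_2)]$ with $\kappa_{\{a,a'\}}$ the relevant $2\times2$ submatrix of $\kappa(X,X)$; this is exactly the quantity appearing in the recursion step Eq.~\eqref{eq-Kmu} for the layer that becomes the $(D_1{+}1)$-th hidden layer of the $(D_1{+}D_2{+}1)$-hidden-layer network. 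The map $K_{\mathrm{post}}(\cdot)$ is a finite composition of the recursion maps $C^{(\mu)}\mapsto\operatorname{E}_{\calN(0,C^{(\mu)})}[\phi(z_1)\phi(z_2)]$ of Eq.~\eqref{eq-Kmu}, each continuous on positive-semidefinite matrices (the dominated-convergence argument using the linear envelope condition that underlies the proof of Thm.~\ref{thm-matthews}), so $\Sigma_H\to\Sigma:=K_{\mathrm{post}}(C)+v_n I_d$ almost surely. Comparing recursions term by term---identifying the weight and bias variance hyperparameters of $F^{(1)}$ and $F^{(2)}$ with the corresponding ones of the no-bottleneck network, as is implicit in the statement---shows $K_{\mathrm{post}}(C)$ is precisely the output covariance at $X$ of the no-bottleneck NNGP $F$ with $D_1+D_2+1$ hidden layers, so $\Sigma$ is the covariance of $F(X)$ with $\calN(0,v_n)$ noise.

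Finally I would conclude both parts by dominated convergence and Scheff\'e's lemma. For fixed $y$, $\Sigma\mapsto\varphi_\Sigma(y)$ is continuous on positive-definite matrices, hence $\varphi_{\Sigma_H}(y)\to\varphi_{\Sigma}(y)$ almost surely; with the uniform bound $\varphi_{\Sigma_H}(y)\le(2\pi v_n)^{-d/2}$, dominated convergence yields $p_H(y)=\operatorname{E}[\varphi_{\Sigma_H}(y)]\to\varphi_{\Sigma}(y)$, and since $\Sigma$ is deterministic the limit is the density of the noisy NNGP $F$ restricted to $X$; as $y$ and the finite $X$ were arbitrary, this is part~\ref{thm-correspondence:b}. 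For part~\ref{thm-correspondence:a}, pointwise convergence of the (probability) densities and Scheff\'e's lemma give convergence in total variation---hence in distribution---of every finite-dimensional marginal of $F^{(H)}$ to that of $F$, and by the characterization of convergence in distribution in $((\RR^L)^\infty,\mathcal A)$ via finite-dimensional marginals (Appendix~\ref{appendix-section-convergence}), $\{F^{(H)}\}$ restricted to $\mathcal X$ converges in distribution to $F$.

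I expect the main obstacle to be the bookkeeping of the conditioning step: verifying that, because the components live on independent probability spaces and the post-bottleneck NNGP kernel is rotation invariant (depends on its inputs only through their Gram matrix), the conditional law of $F^{(H)}(X)$ given the pre-bottleneck activations really is $\calN(0,K_{\mathrm{post}}(\Gamma_H)+v_nI_d)$, and---relatedly---confirming that $K_{\mathrm{post}}$ evaluated at the almost-sure limit $C$ reproduces the no-bottleneck NNGP recursion exactly. Everything after that is routine. It is also worth flagging that the assumption $v_n>0$ is essential rather than cosmetic: it forces every covariance to be nondegenerate, which is what makes the densities exist and be uniformly bounded, and that is precisely what powers both the Scheff\'e and the dominated-convergence steps.
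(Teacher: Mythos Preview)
Your proposal is correct and follows essentially the same route as the paper's proof: both reduce to the empirical Gram matrix $\Gamma_H$ of the i.i.d.\ bottleneck activations, invoke the law of large numbers to send $\Gamma_H$ to the expected Gram matrix (which is precisely the $(D_1{+}1)$-th step of the no-bottleneck kernel recursion), use continuity of the post-bottleneck kernel map together with the uniform density bound supplied by $v_n>0$ to get pointwise convergence of the densities, and then conclude part~\ref{thm-correspondence:a} from part~\ref{thm-correspondence:b} via Scheff\'e and the finite-dimensional characterization of convergence in $((\RR^L)^\infty,\mathcal A)$. The only cosmetic difference is that the paper phrases the passage to the limit as weak convergence of the law of $\Gamma_H$ applied to a bounded continuous test function, whereas you use almost-sure convergence plus dominated convergence on the expectation; these are interchangeable here, and your citation for continuity of $K_{\mathrm{post}}$ is exactly the content of Lemma~\ref{lemma-cont}.
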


\begin{remark} \label{rem-correspondence}
Statement~\ref{thm-correspondence:a} of Thm.~\ref{thm-correspondence} holds even if there is no additive Gaussian noise ($v_n = 0$); 
the proof uses the technique in the proof of Lemma~\ref{lemma-Fn-Xn}, 
where the function $X\rightarrow \Pr(F(X)\in U)$ is shown to be continuous for $F$ an NNGP and $U$ a continuity set.
\end{remark}

The proof of Thm.~\ref{thm-correspondence} is given in Appendix~\ref{appendix-section-correspondence}. 
It is based on the observation that the activations in the bottleneck layer of a single-bottleneck NNGP are IID. 
Since the post-bottleneck NNGP depends on these activations through their Gram matrix only and since the activations are inversely scaled by the bottleneck width, then the post-bottleneck NNGP is a function of the sample covariance of bottleneck activations, 
which converges to the pre-bottleneck NNGP kernel in the limit of infinite bottleneck width by the Law of Large Numbers. 
Extending this result to the case of multiple bottlenecks is left for future work.

\subsection{Experiments}
\label{subsection-nngp-experiments}

Statement~\ref{thm-correspondence:b} of Thm.~\ref{thm-correspondence} implies that the marginal log-likelihood (MLL) of a single-bottleneck NNGP architecture given data $(X, Y)$ and fixed variance hyperparameters $v_n, v_b, v_w$ converges to the MLL of the corresponding NNGP as the bottleneck is sent to infinite width. 
The MLL of the bottleneck NNGP is just the logarithm of the PDF given in Eq.~\eqref{eq-pH} evaluated at the dataset. 
This formally validates the intuition that a bottleneck NNGP with a sufficiently wide bottleneck is a similar model to a no-bottleneck NNGP. 
However, the utility of bottleneck NNGPs with narrower bottlenecks as measured by MLL is less clear. 
We investigate this question on a simulated dataset that we call Rings and on two publicly available datasets---Fisher's Iris data set~\citep{anderson1935, fisher1936} and the US Census Boston housing prices dataset~\citep{harisson1978}.

The Rings dataset consists of $120$ points lying on two interlocked cylindrical bands or ``rings'' orthogonal to one another and passing through one another's centers (Fig.~\ref{fig-rings-vis}).
We took a regular $12\times 5$ lattice of 60 points in $[0, 2\pi)\times\left[-\frac{1}{2}, \frac{1}{2}\right]$ and mapped it onto one of the rings in $\RR^3$ by
\[ (\theta, z) \mapsto (\cos\theta, \sin\theta, z). \]
To generate the second ring, we rotated a copy of the first by $90^\circ$ in the $xz$-plane and translated it by $1$ along the $y$-axis. 
We assigned a label of $0$ to the 60 points on the first ring and a label of $1$ to the 60 points on the second.
The Rings dataset is therefore a non-linearly separable binary classification problem where the dimension of the data manifold is less than that of the linear span of the data points.

\begin{figure}
\centering
\includegraphics[width=0.35\textwidth]{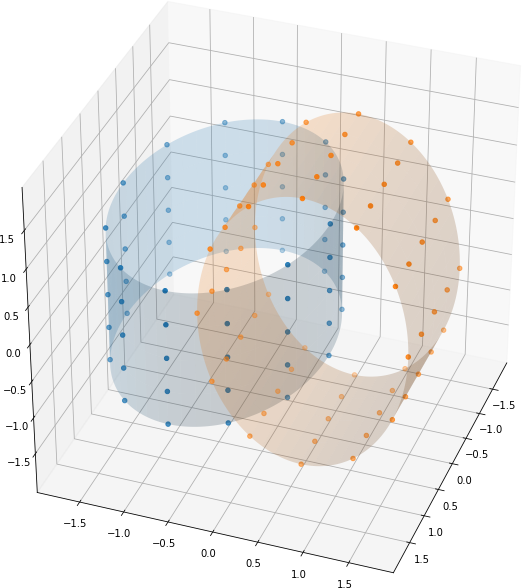}
\caption{\label{fig-rings-vis} %
Visualization of the simulated Rings dataset. 
We take 60 regularly spaced points from each ring. 
We assign a binary label to each point based on the ring to which it belongs.
}
\end{figure}

We also considered the Iris and Boston House-Prices datasets. 
Like Rings, Iris is a classification problem, but we one-hot encoded its labels to include a multivariate dataset with strongly correlated labels. 
For simplicity, following~\citet{lee2017deep}, we implemented all three problems as regression tasks. 
We standardized both the input and target sets of all three datasets to help place the three problems on similar scales, as well as to set a reasonable scale for the variance hyperparameters $v_b$ and $v_w$. 
%After standardization, we added IID Gaussian noise with variance $10^{-4}$ to the labels of the Rings dataset.

%We used a single-bottleneck NNGP with one infinitely wide hidden layer before the bottleneck and various numbers of infinitely wide hidden layers afterwards. 
%We set the noise variance $v_n=10^{-4}$ to match the true variance in the labels of the Rings dataset. 
%We used the same noise variance for all three datasets; 
%since it is small, then it allows us to do model comparison for bottleneck NNGP architectures without having to account for variability in noise. 
%We set the bias and weight variances to $(v_b, v_w) = (0.09, 1.1)$. 

We used a single-bottleneck NNGP with one infinitely wide hidden layer before the bottleneck, bottleneck width $H$, and post-bottleneck depth $D$ (i.e., $D$ infinitely wide hidden layers after the bottleneck) for various values of $H$ and $D$. 
We equipped all hidden neurons with the normalized ReLU activation
\begin{equation} \label{eq-relu}
\phi(x) = \sqrt{2}\operatorname{max}(0, x).
\end{equation}
The propagation of the NNGP kernel through the hidden layers given in Eq.~\eqref{eq-Kmu} admits a closed form for the normalized ReLU activation and is given by~\citep{cho2009kernel}:
\begin{align}
K^{(\mu+1)}_{ij}(x_1, x_2) 
& = \delta_{ij}\left[v_b + v_w\cdot\frac{1}{\pi}\sqrt{K^{(\mu)}_{11}(x_1, x_1)K^{(\mu)}_{11}(x_2, x_2)}J_1(\theta^{(\mu)})\right], \label{eq-Kmu-relu} \\
\theta^{(\mu)}
& = \cos^{-1}\left[\frac{K^{(\mu)}_{11}(x_1, x_2)}{\sqrt{K^{(\mu)}_{11}(x_1, x_1)K^{(\mu)}_{11}(x_2, x_2)}}\right], \nonumber
\end{align}
where the function $J_1$ is defined as
\begin{equation*} %\label{eq-J1}
J_1(\theta) = \sin\theta + (\pi-\theta)\cos\theta.
\end{equation*}

Our goal is to find the bottleneck NNGP architecture (i.e., combination of bottleneck width $H$ and post-bottleneck depth $D$) with the greatest likelihood given a dataset $(X, Y)$ of $N$ observations. 
We calculated the MLL of each bottleneck NNGP architecture $(H, D)$ using
\begin{equation} \label{eq-experiments-mll}
\operatorname{MLL}(H, D; X, Y) = \log p(Y; X, H, D, v_{b*}(H, D), v_{w*}(H, D), v_{n*}(H, D)),
\end{equation}
where on the right-hand side, $p$ is the PDF in Eq.~\eqref{eq-pH} of the data outputs given the data inputs and network architecture, 
and where the variance hyperparameters are set to their maximum likelihood estimates. 
We found the optimal variance hyperparameters iteratively through gradient descent. 
During the forward pass through the network in each iteration, we estimated the integral in Eq.~\eqref{eq-pH} by drawing $100$ IID Monte Carlo (MC) samples---each an $N\times H$ matrix with IID columns---from the pre-bottleneck NNGP. 
We did so using the local reparameterization trick~\citep{kingma2015variational}, so that each sample is actually a transformation of a draw from the $(N\times H)$-dimensional standard normal distribution. 
We used the Adam optimizer~\citep{kingma2014adam} to take advantage of the gradient noise generated by MC sampling during optimization; 
we set the initial learning rate to $0.1$. 
In order to ensure that the noise observed in the learning curves was due only to MC sampling and not a large learning rate, we decayed the learning rate as follows: 
After the backward pass of each iteration, we re-evaluated the MLL using the same draw from the $(N\times H)$-dimensional standard normal distribution for the MC samples; 
if the new MLL was less than the value obtained from the initial forward pass of the iteration, then we multiplied the learning rate by $0.9$. 
We iterated the optimization procedure until convergence of the MLL learning curves; 
once complete, we evaluated Eq.~\eqref{eq-experiments-mll} once more---this time with $1000$ MC samples---to obtain the final MLL estimate for each network architecture. 

On all three datasets, the maximum MLL is attained at a finite bottleneck width and post-bottleneck depth (
$H_*=1024$ and $D_*=1$ for Rings; 
$H_*=8$ and $D_*=5$ for Iris; 
$H_*=64$ and $D_*=7$ for Boston), 
thus demonstrating the utility of bottleneck layers in NNGP models (Fig.~\ref{fig-heatmaps}). 
On Rings and Boston, we also observe that the optimal post-bottleneck depth conditional on a bottleneck width roughly decreases as the bottleneck width increases. 
since the no-bottleneck NNGP kernel with ReLU activation is known to degenerate to a constant kernel with no discriminative power~\citep{lee2017deep}, 
it makes sense that a deeper network may require a narrower bottleneck to help information propagate through the network. 
although not conclusive from the figures alone, we believe this observation at least warrants further investigation. 
In Sec.~\ref{section-bottleneck-dep}, we do just that and find that when the variance hyperparameters are fixed, then the bottleneck width and post-bottleneck depth are indeed intimately related.

\begin{figure}
\centering
\subfloat[Rings]{\includegraphics[height=0.21\textwidth]{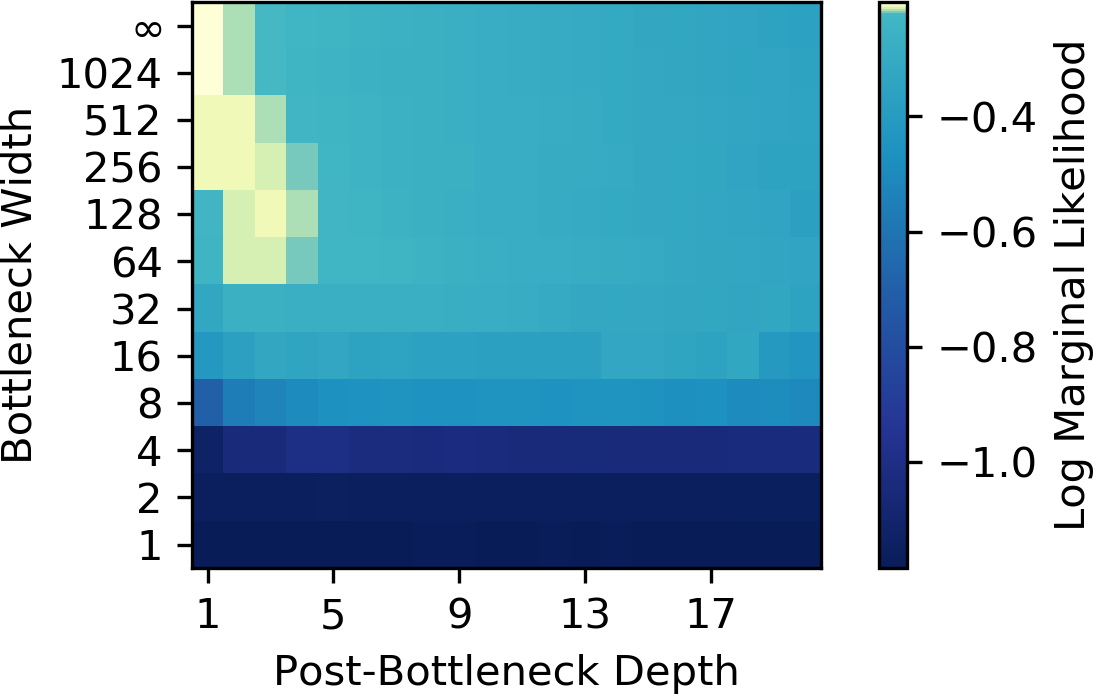}} \hfill
\subfloat[Iris]{\includegraphics[height=0.21\textwidth]{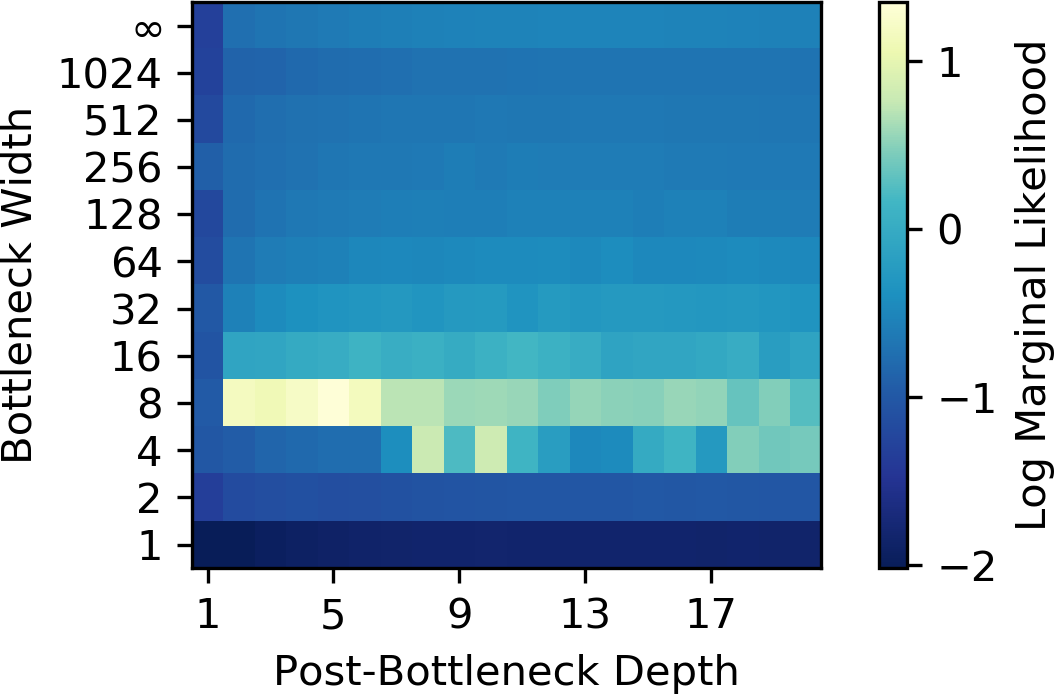}} \hfill
\subfloat[Boston]{\includegraphics[height=0.21\textwidth]{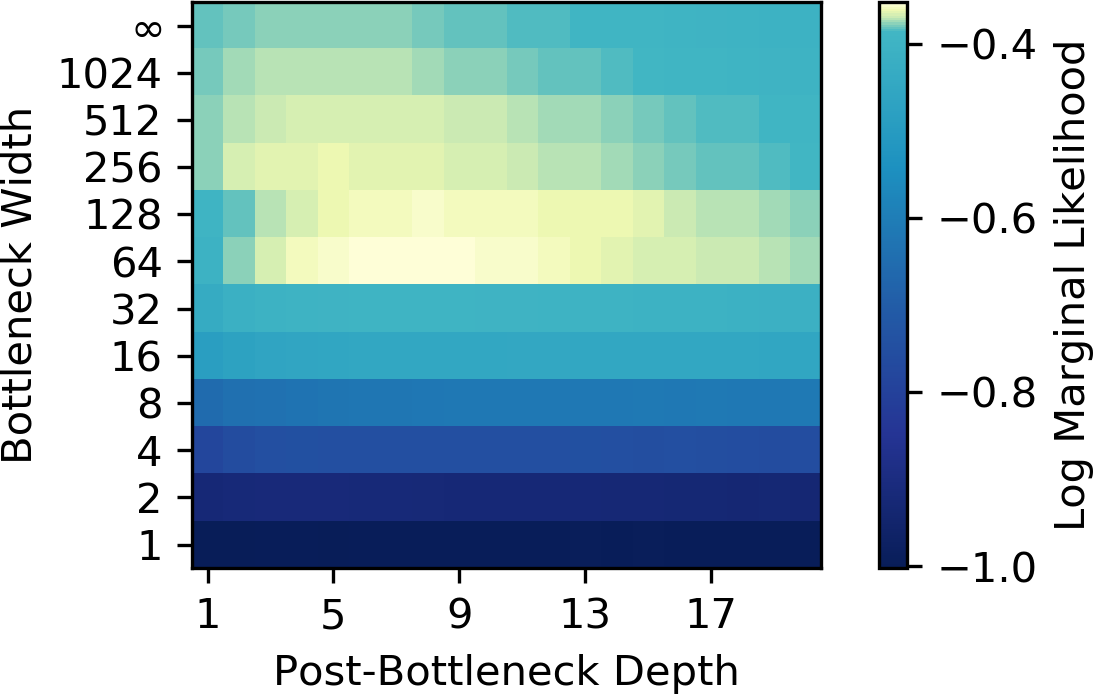}}
\caption{\label{fig-heatmaps} %
Marginal log-likelihoods (MLL) of three datasets normalized by number of observations (data points) under a bottleneck NNGP for various bottleneck widths and post-bottleneck depths. 
Infinite bottleneck width corresponds to the limiting no-bottleneck NNGP. 
On all three datasets, the maximum MLL is attained at some finite bottleneck width and post-bottleneck depth. 
}
\end{figure}

\section{Bottleneck layers induce dependence}
\label{section-bottleneck-dep}

In Sec.~\ref{subsection-nngp-experiments}, we showed empirically that the model likelihood of an NNGP is improved if one of the hidden layers is restricted to a finite-width bottleneck and speculated that the optimal post-bottleneck depth may increase as the bottleneck is narrowed. 
In this section, we investigate possible mechanisms underlying these observed trends in model performance. 
We start by showing that although linear correlations between distinct response variables or ``output neurons'' of an NNGP remain zero even in the presence of bottlenecks, 
the corresponding \emph{quadratic} correlations are often non-zero. 
We also analyze the behavior of this quadratic correlation in the deep post-bottleneck limit---i.e., as the post-bottleneck depth is sent to infinity. 
This deep limit is distinct from the ones typically considered for DGPs, where the number of GP components is sent to infinity, and for NNGPs, where the depth of a single GP component is sent to infinity. 
Proposition~\ref{prop-q} provides a striking result, which implies that bottleneck layers help a network retain discriminative power even at extreme post-bottleneck depths.

Note that in this section, we primarily consider the ReLU activation defined in Eq.~\eqref{eq-relu} as it is by far the most common nonlinearity used in deep learning today. 
In Sec.~\ref{subsection-other-nonlin}, we briefly consider other nonlinearities and contrast their deep limit behaviors with that of the ReLU activation, 
thereby highlighting the peculiarities of the ReLU activation.

\subsection{An exact formula for quadratic correlation}
\label{subsection-cor-exact-formula}

The outputs of a multi-output GP prior are IID, and it follows that the outputs of the corresponding posterior remain independent (though not necessarily identically distributed). 
This is a limitation of GPs for multi-task learning applications, since information cannot be shared across tasks. 
One method to solve this problem was proposed by \citet{bonilla2008multi} who introduce a coupling matrix hyperparameter through which distinct output neurons or tasks can interact.
This method, if applied to finite-width neural networks, would be superfluous since tasks could share information through a common set of features learned in the earlier layers of the network.

The key to correlating tasks in neural networks is clearly not depth alone, since the outputs of a (no-bottleneck) NNGP---however deep---are independent. 
Rather, following from the DGP framework as discussed in Sec.~\ref{subsection-dgp}, the outputs of an NNGP become correlated if bottleneck layers are introduced, 
so that bottleneck NNGPs support multi-task learning out of the box. 
Correlation arising from finite-width bottleneck layers is exactly the type of behavior we expect in neural network architectures such as word embedding layers and many kinds of autoencoders, where the bottleneck width forces dense feature representations (i.e., feature representations that capture correlation) to be learned. 
The correlation structure that is induced in an NNGP prior through bottleneck layers is, however, subtle; 
distinct outputs of a bottleneck NNGP prior remain linearly uncorrelated (i.e., have zero covariance) but can be quadratically correlated (i.e., the squares of the outputs have non-zero covariance).

The expression for the quadratic correlation of outputs in a single-bottleneck NNGP prior can be obtained in closed form. 
Consider a bottleneck NNGP $F: \RR^M\times\Omega\mapsto\RR^2$ with one bottleneck layer of width $H$, any number of infinitely wide hidden layers before the bottleneck, and $D-1$ infinitely wide hidden layers after the bottleneck. 
Suppose all hidden neurons (including in the bottleneck) are equipped with the normalized ReLU activation defined in Eq.~\eqref{eq-relu}.
We also scale the bottleneck activations by $\frac{1}{H}$ in accordance with Remark~\ref{rem-nonlinbottleneck}. 
Let $v_b$ and $v_w$ be the bias variance and weight variance hyperparameters of both the pre-bottleneck and post-bottleneck components of the bottleneck NNGP defined as in Eqs.~\eqref{eq-v_w}-\eqref{eq-v_b}. 
Suppose the bottleneck NNGP is fed two inputs $x_1,x_2\in\RR^M$. 
Then the preactivations into the $H$ neurons in the bottleneck layer are IID with common 2D normal distribution $\mathcal{N}(0, C)$ for some covariance matrix
\[ C = \begin{bmatrix} c_{11} & c_{12} \\ c_{21} & c_{22} \end{bmatrix}. \]
As is commonly done with DGPs~\citep{damianou2013deep, salimbeni2017doubly}, we assume that IID Gaussian noise $\mathcal{N}(0, v_n)$ (with $v_n$ arbitrarily small) is added to the preactivations of the bottleneck layer; 
We include the variance of the noise in the covariance matrix $C$, so that $C$ is invertible. 
We also add IID Gaussian noise $\mathcal{N}(0, v_n)$ to each of the two outputs of the bottleneck NNGP.

%The example bottleneck NNGP $F$ introduced here is the only architecture used throughout the rest of Sec.~\ref{section-bottleneck-dep}; 
%any reference to a bottleneck NNGP $F$ in the rest of Sec.~\ref{section-bottleneck-dep} pertains to this architecture, without being explicitly stated thereafter.

It is easy to verify that the two outputs $F_1(x_a)$ and $F_2(x_b)$ (for any $a,b\in\{1,2\}$) are linearly uncorrelated; 
this immediately follows from the conditional independence of outputs given the bottleneck activations. 
The relationship between the squares of the outputs is, however, less trivial. 
Let $Q^{\times}$ denote the matrix of correlations between the squares of the two outputs of $F$ (the superscript $\times$ is used to emphasize that the correlation is between distinct output neurons). 

\begin{proposition}[Quadratic correlation between outputs]\label{prop-quad-cor}
Consider the single-bottleneck NNGP $F$ with normalized ReLU activation defined above. 
Then
\begin{align}
\label{eq-square-cov}
\begin{aligned}
\cov[F_1(x_a)^2, F_2(x_b)^2]
&= \frac{v_w^{2D}}{H} \cov_{(z,z^\prime)\sim\mathcal{N}(0, K)}[\phi(z)^2, \phi(z^\prime)^2] \\
&= \frac{v_w^{2D}c_{aa}c_{bb}}{H}\left(\frac{2}{\pi}J_2(\beta)-1\right),
\end{aligned}
\end{align}
where $\beta = \frac{c_{ab}}{\sqrt{c_{aa}c_{bb}}}$ and the function $J_2$ is defined as~\citep{cho2009kernel}:
\begin{equation*}
J_2(\beta) = 3\sin\beta\cos\beta + (\pi-\beta)(1 + 2\cos^2\beta).
\end{equation*}
The corresponding correlation is
\begin{align}
\label{eq-square-cor}
\begin{aligned}
q^{\times}_{ab}
&= \frac{\cov[F_1(x_a)^2, F_2(x_b)^2]}{\sqrt{\operatorname{V}[F_1(x_a)^2] \operatorname{V}[F_2(x_b)^2]}} \\
% &= \frac{\left(\frac{2}{\pi}J_2(\beta)-1\right)}{\sqrt{\left[15 + 2H\left(\frac{r_D}{c_{aa}} + 1\right)^2\right]\left[15 + 2H\left(\frac{r_D}{c_{bb}} + 1\right)^2\right]}}, 
&= \frac{\left(\frac{2}{\pi}J_2(\beta)-1\right)}{\sqrt{\left[15 + 2H\left(\frac{r_D}{c_{aa}} + 1\right)^2\right]\left[15 + 2H\left(\frac{r_D}{c_{bb}} + 1\right)^2\right]}},
\end{aligned}
\end{align}
where
\begin{equation} \label{eq-rD}
r_D =
\begin{cases}
v_n + D v_b & \mbox{ if } v_w = 1 \\
\frac{v_n}{v_w^D} + \frac{v_b}{1-v_w}\left(\frac{1}{v_w^D}-1\right) & \mbox{ otherwise.}
\end{cases}
\end{equation}
\end{proposition}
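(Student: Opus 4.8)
The plan is to condition on the bottleneck and then chase moments. Write $\tilde f_1,\dots,\tilde f_H$ for the noisy preactivations into the $H$ bottleneck neurons; as noted before the statement they are IID over $j$ with $(\tilde f_j(x_1),\tilde f_j(x_2))\sim\calN(0,C)$, and after the inverse-width scaling of Remark~\ref{rem-nonlinbottleneck} the post-bottleneck component sees the bottleneck activations only through the Gram entries $S_{ab}:=\tfrac1H\sum_{j=1}^H\phi(\tilde f_j(x_a))\phi(\tilde f_j(x_b))$. Conditionally on the $\tilde f_j$, the bottleneck NNGP is, by Thm.~\ref{thm-bottleneck} together with Remark~\ref{rem-nonlinbottleneck}, the post-bottleneck NNGP (with $D-1$ hidden layers) applied to the bottleneck activations, plus the $\calN(0,v_n)$ output noise; since the coordinate processes of any NNGP are IID with variance the diagonal of a block-diagonal kernel (Thm.~\ref{thm-matthews}, Eq.~\eqref{eq-Kmu}), $F_1$ and $F_2$ are conditionally IID, centered, and Gaussian, with common conditional variance $\kappa(x_a)$ equal to $v_n$ plus the diagonal $K^{(D)}_{11}$ of the post-bottleneck kernel evaluated at the bottleneck activations, a quantity depending on the $\tilde f_j$ only through $S_{aa}$. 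Two uses of the tower rule then give $\cov[F_1(x_a)^2,F_2(x_b)^2]=\cov[\kappa(x_a),\kappa(x_b)]$ (the conditional covariance vanishes by conditional independence) and, using $\operatorname{V}[N^2]=2\sigma^4$ for $N\sim\calN(0,\sigma^2)$, $\operatorname{V}[F_1(x_a)^2]=3\operatorname{E}[\kappa(x_a)^2]-\operatorname{E}[\kappa(x_a)]^2$.

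Next I would make $\kappa$ explicit. The identity $\operatorname{E}_{z\sim\calN(0,\sigma^2)}[\phi(z)^2]=\sigma^2$ for the normalized ReLU collapses the diagonal of the recursion Eq.~\eqref{eq-Kmu} to the affine recursion $K^{(\mu+1)}_{11}=v_b+v_w K^{(\mu)}_{11}$; iterating this $D-1$ times from $K^{(1)}_{11}=v_b+v_w S_{aa}$ (Eq.~\eqref{eq-K1} at the bottleneck activations, whose Gram entries are the $S_{ab}$) yields $\kappa(x_a)=v_w^D(S_{aa}+r_D)$, where $v_w^D r_D$ is the geometric sum $v_b\sum_{k=0}^{D-1}v_w^k$ plus $v_n$; this is exactly Eq.~\eqref{eq-rD}, the $v_w=1$ line being its degenerate limit. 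Since the $\tilde f_j$ are IID over $j$, $\cov[S_{aa},S_{bb}]=\tfrac1H\cov_{(z,z')\sim\calN(0,C)}[\phi(z)^2,\phi(z')^2]$ and $\operatorname{V}[S_{aa}]=\tfrac1H\operatorname{V}[\phi(z)^2]$, so substituting $\kappa=v_w^D(S+r_D)$ gives $\cov[F_1(x_a)^2,F_2(x_b)^2]=\tfrac{v_w^{2D}}{H}\cov_{(z,z')\sim\calN(0,C)}[\phi(z)^2,\phi(z')^2]$, the first line of Eq.~\eqref{eq-square-cov}.

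It remains to evaluate the Gaussian moments of $\phi$, which I would handle via the arc-cosine kernels of \citet{cho2009kernel}. Writing $\phi(z)^2=2\max(0,z)^2$ and realizing $(z,z')\sim\calN(0,C)$ as $(x\cdot w,\,y\cdot w)$ with $w\sim\calN(0,I_2)$, $\|x\|^2=c_{aa}$, $\|y\|^2=c_{bb}$, and $\cos\beta=c_{ab}/\sqrt{c_{aa}c_{bb}}$, the mixed moment $\operatorname{E}[\phi(z)^2\phi(z')^2]=4\operatorname{E}[\Theta(x\cdot w)\Theta(y\cdot w)(x\cdot w)^2(y\cdot w)^2]$ equals $\tfrac2\pi c_{aa}c_{bb}J_2(\beta)$ by the order-$2$ arc-cosine identity; subtracting $\operatorname{E}[\phi(z)^2]\operatorname{E}[\phi(z')^2]=c_{aa}c_{bb}$ gives $c_{aa}c_{bb}(\tfrac2\pi J_2(\beta)-1)$ and completes Eq.~\eqref{eq-square-cov}. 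For the variance, the single-neuron fourth moment $\operatorname{E}[\phi(z)^4]=6c_{aa}^2$ gives $\operatorname{V}[\phi(z)^2]=5c_{aa}^2$, hence $\operatorname{E}[S_{aa}]=c_{aa}$ and $\operatorname{V}[S_{aa}]=5c_{aa}^2/H$; then $\operatorname{E}[\kappa(x_a)]=v_w^D(c_{aa}+r_D)$, $\operatorname{E}[\kappa(x_a)^2]=v_w^{2D}(5c_{aa}^2/H+(c_{aa}+r_D)^2)$, and the total-variance formula above yields $\operatorname{V}[F_1(x_a)^2]=\tfrac{v_w^{2D}c_{aa}^2}{H}(15+2H(1+r_D/c_{aa})^2)$, with the analogous expression for $x_b$. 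Dividing, the common factor $v_w^{2D}c_{aa}c_{bb}/H$ cancels and Eq.~\eqref{eq-square-cor} follows.

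I expect the only genuinely non-mechanical step to be the conditioning reduction in the first paragraph: one must combine the compositional definition of the bottleneck NNGP (Thm.~\ref{thm-bottleneck}) with the IID-coordinate, block-diagonal-kernel structure of NNGPs (Thm.~\ref{thm-matthews}) to be certain that conditioning on the bottleneck preactivations genuinely turns $F$ into the post-bottleneck NNGP applied to those activations, with conditional variance the scalar $\kappa$ that depends on them only through $S_{aa}$. Everything after that --- the affine ReLU recursion for $\kappa$, the arc-cosine identities for the second and fourth moments of $\phi$, and the two uses of the tower rule --- is routine bookkeeping, the only mild care being the $v_w=1$ special case of the geometric series for $r_D$.
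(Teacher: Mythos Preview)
Your proposal is correct and follows essentially the same approach as the paper: condition on the bottleneck preactivations so that $F_1,F_2$ become conditionally IID centered Gaussians with random variance $k_{aa}=b_D+w_D S_{aa}$ (your $\kappa(x_a)$), then reduce the quadratic covariance and variance to moments of $S_{aa},S_{bb}$ via the tower rule and the Gaussian fourth-moment identity, and finally evaluate those moments using the Cho--Saul $J_2$ formula. The only cosmetic difference is that the paper carries out the conditioning as explicit nested integrals over $h$ and $y$ rather than invoking the law of total (co)variance, and it derives the key identity $\operatorname{V}[F_1(x_a)^2]=3\cov[F_1(x_a)^2,F_2(x_a)^2]+2\operatorname{E}[F_1(x_a)^2]^2$ from $\operatorname{E}[y^4\mid h]=3k_{aa}^2$ in integral form; your formulation $\operatorname{V}[F_1(x_a)^2]=3\operatorname{E}[\kappa^2]-\operatorname{E}[\kappa]^2$ is the same computation.
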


\begin{remark}[Quadratic correlation for stationary kernels] \label{rem-quad-cor-stationary}
In Prop.~\ref{prop-quad-cor}, if we instead consider a single-bottleneck DGP $F$ where the post-bottleneck GP has a stationary kernel (such as the RBF kernel), 
then the quadratic correlation between outputs is $q^{\times}_{ab} = 0$. 
The non-stationarity of the NNGP kernel is therefore key to capturing some amount of correlation.
\end{remark}

The proof of Prop.~\ref{prop-quad-cor} as well as the proofs of all other propositions in Sec.~\ref{section-bottleneck-dep} are given in Appendix~\ref{appendix-section-cor}. 
The significance of Prop.~\ref{prop-quad-cor} is two-fold. 
First, quadratic correlation under the prior---although subtle---may translate to stronger dependence (such as linear correlation) under the posterior. 
Indeed, as stated in Remark~\ref{rem-quad-cor-stationary}, a DGP with RBF kernel captures less correlation under its prior than does a bottleneck NNGP with ReLU activation, 
and yet the former has been shown to be useful in modeling dependence in practice~\citep{alaa2017deep, wang2016sequential}. 
This suggests that a bottleneck NNGP with ReLU activation may be just as useful in modeling dependence. 
Second, the ability of the network to capture quadratic correlation is closely linked to its ability to operate effectively at extreme depths; 
we discuss this in more detail in Sec.~\ref{subsection-nondegenerate}; 

The pre-bottleneck component of the bottleneck NNGP is a map that sends input vectors in $\RR^M$ to normally distributed real-valued random variables (preactivations of bottleneck neurons). 
By understanding covariance as an inner product on the space of finite-variance random variables, we can see that the covariance matrix $C$ is the Gram matrix of bottleneck preactivations, 
and the angle $\beta$ appearing in Prop.~\ref{prop-quad-cor} is the angle between the preactivations of two inputs at one bottleneck neuron; 
we will call $\beta$ the bottleneck angle.

The quadratic correlation (Eq.~\eqref{eq-square-cor}) varies with the bottleneck width $H$ roughly as $\frac{1}{H}$ and thus vanishes in the limit of infinite bottleneck width, recovering the independence of outputs of an NNGP with no bottlenecks. 
We empirically verified Eq.~\eqref{eq-square-cor} for a bottleneck NNGP prior with one hidden layer before the bottleneck, one hidden layer after the bottleneck ($D=2$), and with variance parameters $v_b=v_w=1$ and $v_n=10^{-4}$. 
We fed the example bottleneck NNGP two inputs $x_1=(1, 0)$ and $x_2=(0, 1)$. 
Then for each bottleneck width $H\in\{1,\ldots,10\}$, we generated $10^6$ IID samples of $(F_1(x_1), F_2(x_2))$ and used them to estimate $q^{\times}_{12}$. 
We repeated this simulation $10$ times, and we report the mean estimate of $q^{\times}_{12}$ along with its standard deviation for each bottleneck width (Fig.~\ref{fig-quadcorrempirical}). 
The empirical quadratic correlation estimates are very close to the theoretical values predicted by Eq.~\eqref{eq-square-cor}, with standard deviations all on the order of $10^{-3}$.

%\begin{table}
%\centering
%\input{tables/cor_over_width_2.tex}
%\caption{\label{table-cor-over-width} %
%Empirical estimates of the quadratic correlation of the outputs of an example bottleneck NNGP compared to theory (Eq.~\eqref{eq-square-cor}) for various widths of the bottleneck layer.
%}
%\end{table}
\begin{figure}
    \centering
    \includegraphics[width=.7\textwidth]{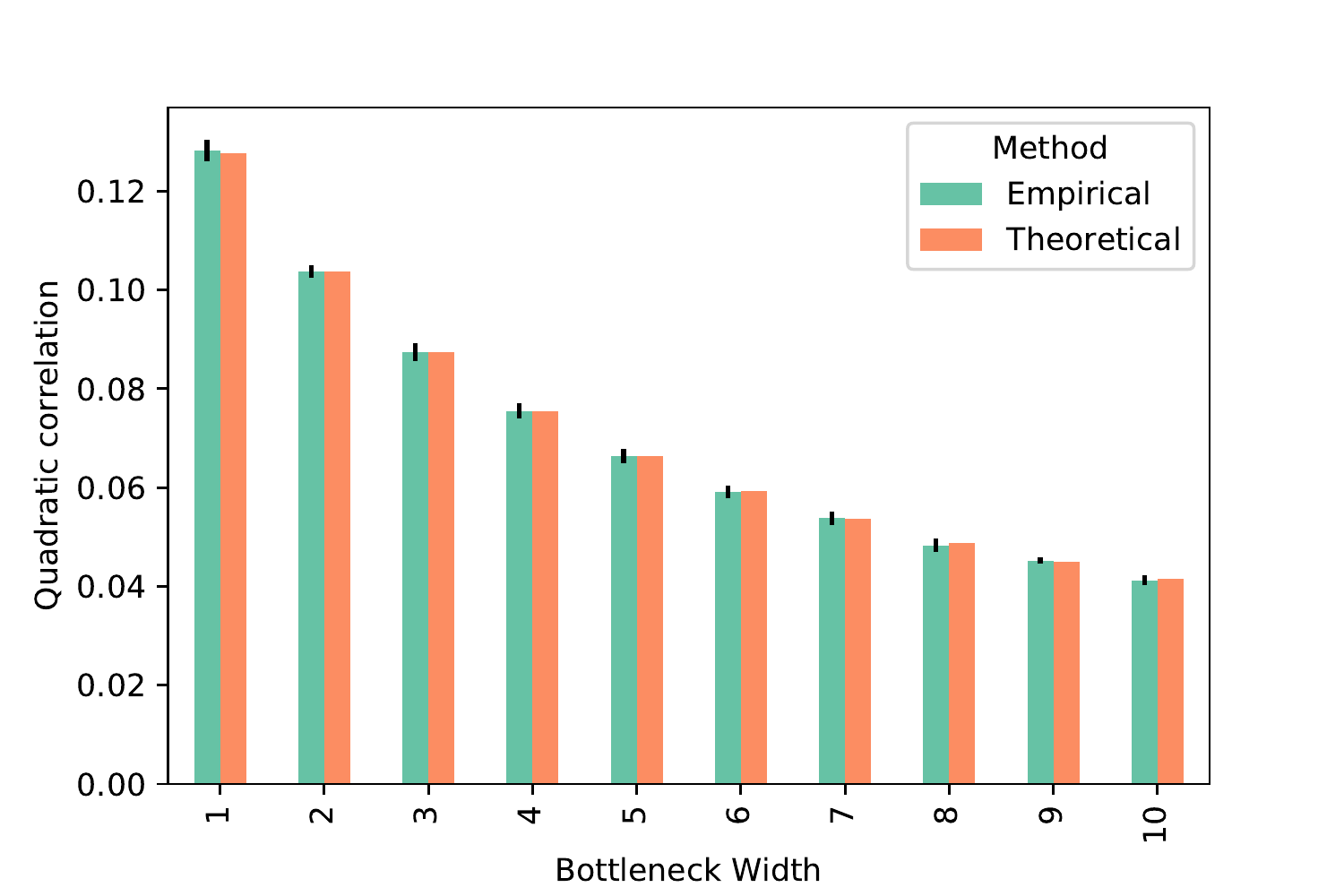}
    \caption{\label{fig-quadcorrempirical}
    Empirical estimates with standard errors of the quadratic correlation of the outputs of an example bottleneck NNGP with one bottleneck surrounded by two infinite hidden layers, compared to theory (Eq.~\eqref{eq-square-cor}) for various widths of the bottleneck layer. The theoretical values are all within one standard error of the empirical mean value.}
\end{figure}

We performed additional simulations to understand how multiple bottleneck layers affect the correlation of outputs, as we found this to be intractable theoretically.
We still consider a bottleneck NNGP with $v_b=v_w=1$ and $v_n=10^{-4}$ that is fed two 2D inputs $x_1=(1, 0)$ and $x_2=(0, 1)$, but now we suppose the bottleneck NNGP has 11 hidden layers (including all bottleneck layers). 
We chose 11 hidden layers since it allows us to restrict zero to three hidden layers to bottlenecks such that the bottlenecks are equally spaced in depth. 
For each of the zero to three bottleneck layers, we ran the experiment described above for a single bottleneck and estimated the quadratic correlation $q^{\times}_{12}$ along with its standard deviation over ten runs for various bottleneck widths (all bottleneck layers have the same width). 
The quadratic correlation tends to zero with increasing bottleneck width regardless of the number of bottleneck layers, as we expect (Fig.~\ref{fig-cor-multi-bottleneck}). 
We additionally observe that for bottleneck widths $H\geq 2$, the quadratic correlation increases with the number of bottlenecks even if the overall depth of the bottleneck NNGP remains the same, 
suggesting that bottlenecking more layers has a similar effect to further narrowing existing bottlenecks.

\begin{figure}
\centering
\includegraphics[width=.7\textwidth]{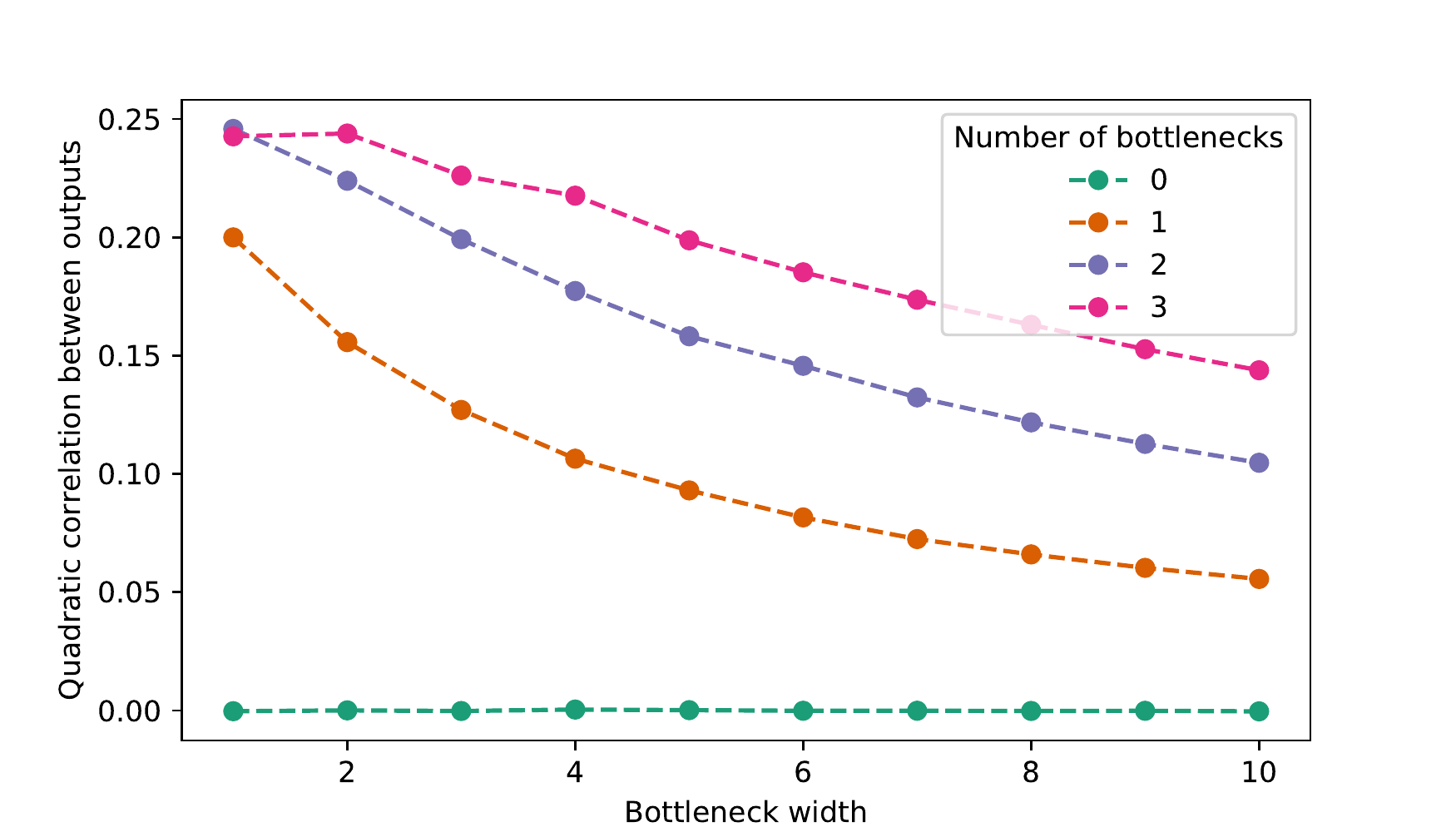}
\caption{\label{fig-cor-multi-bottleneck} %
Empirical quadratic correlation of the outputs of an example bottleneck NNGP with 11 hidden layers, some of which are restricted to regularly spaced bottlenecks. 
Quadratic correlation increases with more numerous and narrower bottlenecks.
}
\end{figure}

%\begin{figure}[t]
%	
%	\begin{minipage}[c]{0.625\textwidth}
%		
%		\centering
%		\includegraphics[width=1\textwidth]{cor_multi_bottleneck.pdf}
%	\end{minipage}\hfill
%	\begin{minipage}[c]{0.375\textwidth}
%		\vspace{-0.05cm}
%		\caption{Empirical quadratic correlation of the outputs of an example bottleneck NNGP with 11 hidden layers, some of which are restricted to regularly spaced bottlenecks.  Quadratic correlation increases with more numerous and narrower bottlenecks.
%		}
%		\label{fig-cor-multi-bottleneck}
%	\end{minipage}
%\end{figure}

\subsection{Quadratic correlation as a function of depth}
\label{subsection-quad-cor-lim}

The inverse dependence of the quadratic correlation on bottleneck width is intuitive since we know that the outputs of an NNGP are independent in the absence of bottlenecks. 
There is also an interesting and less obvious dependence of the quadratic correlation on the post-bottleneck depth $D$ (where there are $D-1$ post-bottleneck hidden layers of infinite width) as well as on the angle $\beta$ between the random bottleneck preactivations of the inputs $x_a$ ($a=1,2$) in the bottleneck layer, as captured by the covariance matrix $C$. 
We denote the quadratic correlation by $q^{\times(D)}_{ab}$ to make explicit its dependence on the post-bottleneck depth $D$ and will sometimes write $q^{\times(D)}_{ab}(\beta)$ to further clarify its dependence on the bottleneck angle $\beta$. 
By Prop.~\ref{prop-quad-cor}, it is easy to verify that $q^{\times(D)}_{ab}(\beta)$ is strictly decreasing in $\beta$ on $[0, \pi]$ with $\beta < \frac{\pi}{2}$ giving positive correlation, $\beta=\frac{\pi}{2}$ giving zero correlation, and $\beta>\frac{\pi}{2}$ giving negative correlation at all depths $D$. 
The quadratic correlation between outputs therefore encodes the correlation of inputs in the bottleneck layer. 
By Eq.~\eqref{eq-rD}, $r_D$ is strictly increasing in $D$ regardless of the values of $v_b,v_w > 0$. 
It follows that the absolute quadratic correlation $|q^{\times(D)}_{ab}(\beta)|$ strictly decreases with $D$ for $\beta\neq 0$ (and remains $0$ otherwise). 
A final property of the quadratic correlation is its range of possible values, which easily follows from the ranges of $J_2$ and $r_D$:
\begin{equation} \label{eq-square-cor-bound}
-\frac{1}{17} < q^{\times(D)}_{ab}(\beta) < \frac{5}{17}.
\end{equation}
These properties are apparent in the plot of $q^{\times(D)}_{ab}(\beta)$ over $D$ for an example single-bottleneck NNGP ($H=2$, $v_b=0.09$, $v_w=1.1$) with no pre-bottleneck hidden layers that is fed two inputs $x_1=(1, 0)$ and $x_2=(\cos\alpha, \sin\alpha)$ for various values of the input angle $\alpha$ (Fig.~\ref{fig-cor-over-depth}); 
Since there are no pre-bottleneck hidden layers, the input angle $\alpha$ and bottleneck angle $\beta$ are related through the equation
\begin{equation*}
\cos\beta = \frac{v_b + v_w\cos\alpha}{v_b+v_w}.
\end{equation*}
Orthogonal bottleneck preactivations ($\beta = \frac{\pi}{2}$) and thus $0$ quadratic correlation in the example bottleneck NNGP are then achieved at an input angle $\alpha\approx 0.526\pi$. 
Observe in general that if $v_b > 0$, then $\beta < \alpha$ and the range of $\beta$ is strictly smaller than $[0, \pi]$, indicating that bias units promote positive quadratic correlation.

\begin{figure}
\centering
\includegraphics[width=.7\textwidth]{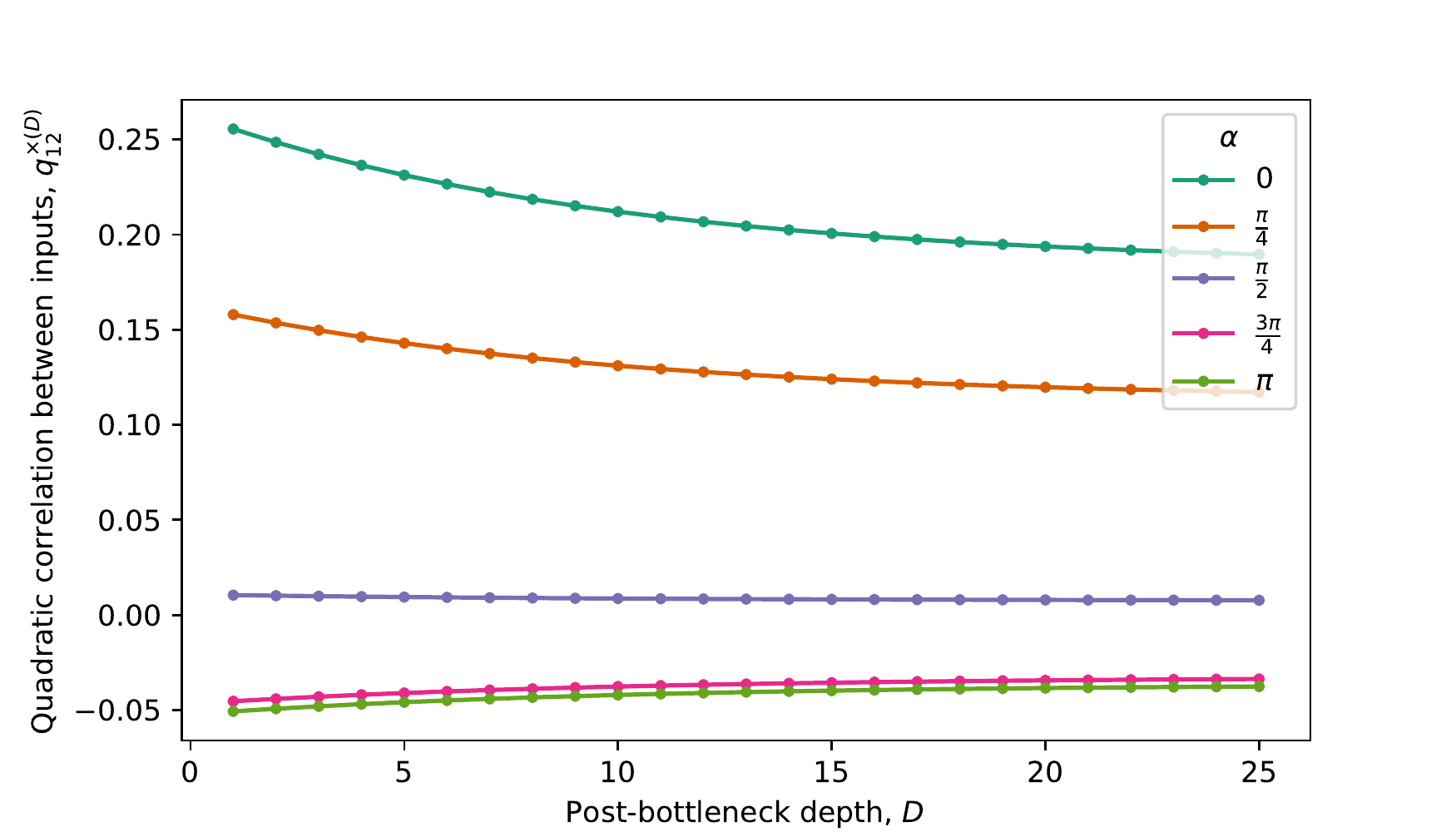}
\caption{\label{fig-cor-over-depth} %
Theoretical quadratic correlation between outputs of a single-bottleneck NNGP over the post-bottleneck depth $D$ for various angles $\alpha$ between the inputs. 
These quadratic correlations are asymptotically non-zero whenever $v_w>1$ and the bottleneck preactivations (as random variables) of the inputs are not orthogonal 
(bottleneck orthogonality occurs at $\alpha\approx 0.526\pi$).
}
\end{figure}

%\begin{figure}[t]
%	
%	\begin{minipage}[c]{0.625\textwidth}
%		
%		\centering
%		\includegraphics[width=1\textwidth]{cor_over_depth.pdf}
%	\end{minipage}\hfill
%	\begin{minipage}[c]{0.375\textwidth}
%	 \vspace{0.375cm}
%		\caption{Theoretical quadratic correlation between outputs of a single-bottleneck NNGP over the post-bottleneck depth $D$ for various angles $\alpha$ between the inputs. These quadratic correlations are asymptotically non-zero whenever $v_w>1$ and the inputs are not orthogonal.
%		}
%  	\label{fig-cor-over-depth}
%	\end{minipage}
%\end{figure}

The behavior of $r_D$ (Eq.~\eqref{eq-rD}) in the limit of infinite post-bottleneck depth ($D\rightarrow\infty$) is easily analyzed. 
We have
\begin{equation} \label{eq-r_infty}
r_{\infty} = \limD r_D =
\begin{cases}
\frac{v_b}{v_w-1} & \mbox{ if } v_w > 1 \\
\infty & \mbox{ otherwise.}
\end{cases}
\end{equation}
This lets us determine what happens to the quadratic correlation of outputs as the number of post-bottleneck hidden layers grows to infinity.

\begin{proposition}[Infinite-depth quadratic correlation between outputs] \label{prop-quad-cor-lim}
Consider the single-bottleneck NNGP $F$ with normalized ReLU activation from Prop.~\ref{prop-quad-cor}, and suppose we send the post-bottleneck depth to infinity.
\begin{enumerate}[label={(\alph*)}]
\item The infinite-depth quadratic correlation matrix $Q^{\times(\infty)}$ has $(a,b)$-th element
$q^{\times(\infty)}_{ab}
= \limD q^{\times(D)}_{ab}$
given by
\begin{equation} \label{eq-square-cor-lim}
q^{\times(\infty)}_{ab}
% = \limD q^{\times(D)}_{ab}
= \begin{cases}
% \frac{\left(\frac{2}{\pi}J_2(\beta)-1\right)}{\sqrt{\left[15 + 2H\left(\frac{v_b}{(v_w-1)c_{aa}} + 1\right)^2\right]\left[15 + 2H\left(\frac{v_b}{(v_w-1)c_{bb}} + 1\right)^2\right]}} & \mbox{ if } v_w > 1 \\
\frac{\left(\frac{2}{\pi}J_2(\beta)-1\right)}{\displaystyle\prod_{c=c_{aa},c_{bb}}\sqrt{15 + 2H\left(\frac{v_b}{(v_w-1)c} + 1\right)^2}} & \mbox{ if } v_w > 1 \\
0 & \mbox{ otherwise.}
\end{cases}
\end{equation} \label{prop-quad-cor-lim:a}
\item Let $G$ be the Gram matrix of inputs with entries $g_{ab} = x_a^\top x_b$. 
If $v_w > 1$, then the mapping $G\mapsto Q^{\times(\infty)}$ is invertible. \label{prop-quad-cor-lim:b}
\end{enumerate}
\end{proposition}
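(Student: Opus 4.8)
For the first claim I would simply pass to the limit $D\to\infty$ in the closed form of Prop.~\ref{prop-quad-cor}. The only $D$-dependent quantity in Eq.~\eqref{eq-square-cor} is $r_D$: the bottleneck angle $\beta$ and the diagonal entries $c_{aa},c_{bb}$ of the bottleneck covariance are fixed by the (unchanged) pre-bottleneck component, and $v_n$ enters only through that covariance. So I would substitute $r_\infty$ from Eq.~\eqref{eq-r_infty}. When $v_w>1$, $r_\infty=v_b/(v_w-1)$ is finite and Eq.~\eqref{eq-square-cor-lim} follows by direct substitution into Eq.~\eqref{eq-square-cor}; when $v_w\le 1$, $r_\infty=\infty$, and since $c_{aa},c_{bb}>0$ (diagonal entries of an invertible covariance matrix), each bracket in the denominator of Eq.~\eqref{eq-square-cor} grows like $2Hr_D^2/c^2\to\infty$ while the numerator $\tfrac{2}{\pi}J_2(\beta)-1$ stays bounded, so $q^{\times(D)}_{ab}\to 0$. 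This part is routine; only continuity of the explicit expression in $r_D$ is used.

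The substance is the second claim. The plan is to factor $G\mapsto Q^{\times(\infty)}$ through the bottleneck covariance as $G\mapsto C\mapsto Q^{\times(\infty)}$ and to show each arrow is injective (hence invertible onto its image). I would assume $v_b>0$, which is implicit throughout this section and in fact necessary: when $v_b=0$ one has $r_\infty=0$, the diagonal entries of $Q^{\times(\infty)}$ collapse to the constant $5/(15+2H)$, and the scale of $C$---hence $G$---cannot be recovered. For the arrow $C\mapsto Q^{\times(\infty)}$, Eq.~\eqref{eq-square-cor-lim} exhibits $Q^{\times(\infty)}$ as an explicit function of $(c_{11},c_{22},\beta)$ with the hyperparameters held fixed, so I would invert it entry by entry. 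Taking $a=b$ gives $\beta=0$ and $J_2(0)=3\pi$, so the diagonal entries satisfy $q^{\times(\infty)}_{aa}=\frac{5}{15+2H\left(\frac{v_b}{(v_w-1)c_{aa}}+1\right)^2}$, which is strictly increasing in $c_{aa}>0$ (using $v_b>0$, $v_w>1$); hence $c_{11}$ and $c_{22}$ are recovered. With these in hand the off-diagonal entry determines $\tfrac{2}{\pi}J_2(\beta)-1$, which is strictly decreasing in $\beta$ on $[0,\pi]$ (noted in Sec.~\ref{subsection-quad-cor-lim}), so $\beta$ and thus $c_{12}=\sqrt{c_{11}c_{22}}\cos\beta$ are recovered.

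For the arrow $G\mapsto C$, I would use that the NNGP kernel depends on the inputs only through their dot products (Eq.~\eqref{eq-K1}), so $C$ is obtained by iterating the ReLU recursion Eq.~\eqref{eq-Kmu-relu} through the pre-bottleneck hidden layers starting from $G$ and then adding $v_nI$. On the diagonal, $J_1(0)=\pi$ collapses each step to the strictly increasing affine map $t\mapsto v_b+v_wt$ (as $v_w>0$), and the first step $g_{aa}\mapsto v_b+v_wg_{aa}$ is likewise strictly increasing, so $g_{aa}\mapsto c_{aa}$ is injective. On the off-diagonal, with the diagonal trajectories already pinned down by $g_{11},g_{22}$, each step sends $K^{(\mu)}_{11}(x_1,x_2)$ to $K^{(\mu+1)}_{11}(x_1,x_2)$ strictly increasingly: the inner ratio is increasing, $\cos^{-1}$ and $J_1$ are each strictly decreasing on their range (note $J_1'(\theta)=-(\pi-\theta)\sin\theta$), and the outer map is increasing since $v_w>0$. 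Composing the pre-bottleneck steps with the initial map $g_{12}\mapsto v_b+v_wg_{12}$ shows $g_{12}\mapsto c_{12}$ is injective given $g_{11},g_{22}$. Chaining the two arrows then recovers $G$ from $Q^{\times(\infty)}$ triangularly: $q^{\times(\infty)}_{aa}\to c_{aa}\to g_{aa}$ for $a=1,2$, then $(q^{\times(\infty)}_{12},c_{11},c_{22})\to\beta\to c_{12}$, then $(c_{12},g_{11},g_{22})\to g_{12}$.

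I expect the main obstacle to be the bookkeeping for the off-diagonal recursion: verifying that \emph{strict} monotonicity in $g_{12}$ genuinely survives all the pre-bottleneck iterations of Eq.~\eqref{eq-Kmu-relu} (mere non-strict monotonicity would not yield injectivity), together with pinning down the hypotheses---$v_b>0$ and positive-definiteness of $C$---under which the diagonal part of the map $C\mapsto Q^{\times(\infty)}$ is actually invertible.
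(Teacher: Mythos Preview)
Your proposal is correct and follows essentially the same route as the paper. For part~(a) both you and the paper simply substitute $r_\infty$ from Eq.~\eqref{eq-r_infty} into Eq.~\eqref{eq-square-cor}; for part~(b) both factor $G\mapsto C\mapsto Q^{\times(\infty)}$ and invert each arrow separately, recovering $c_{aa}$ from the diagonal of $Q^{\times(\infty)}$ and then $\beta$ from the off-diagonal via strict monotonicity of $J_2$. The only presentational difference is in the arrow $G\mapsto C$: the paper writes down the explicit backward recursion by solving Eq.~\eqref{eq-Kmu-relu} for $K^{(\mu-1)}$ in terms of $K^{(\mu)}$ (using that $J_1$ is strictly decreasing on $[0,\pi]$), whereas you argue injectivity of the forward recursion layer by layer via strict monotonicity of each step; these are two sides of the same coin, and your version has the small advantage of making the role of the hypothesis $v_b>0$ explicit.
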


We visualize the infinite-depth quadratic correlation by plotting Eq.~\eqref{eq-square-cor-lim} as a function of $v_w$ for a single-bottleneck NNGP ($H=2$, $v_b=0.09$) with no pre-bottleneck hidden layers that is fed two inputs $x_1=(1, 0)$ and $x_2=(\cos\alpha, \sin\alpha)$ for various values of the input angle $\alpha$ (Fig.~\ref{fig-cor-deep-limit-QX}). 
Note that the covariance matrix $C$ and hence the bottleneck angle $\beta$ are themselves functions of $v_w$, which is why we plot the quadratic correlation for fixed values of $\alpha$ instead of $\beta$. 
The infinite-depth quadratic correlation exhibits interesting behavior around $v_w=1$; 
it is continuous but not differentiable at $v_w=1$. 
The bottleneck NNGP therefore undergoes a phase transition at $v_w=1$. 
The quadratic correlation tends to $0$ in the $v_w\leq 1$ regime---meaning that the outputs of a bottleneck NNGP decouple (up through second order correlations) at infinite depth---while quadratic correlation is maintained through infinitely many, infinitely wide hidden layers in the regime $v_w > 1$, though even then the limiting correlation is weak (Eq.~\eqref{eq-square-cor-bound}).

\begin{figure}
\centering
\includegraphics[width=.7\textwidth]{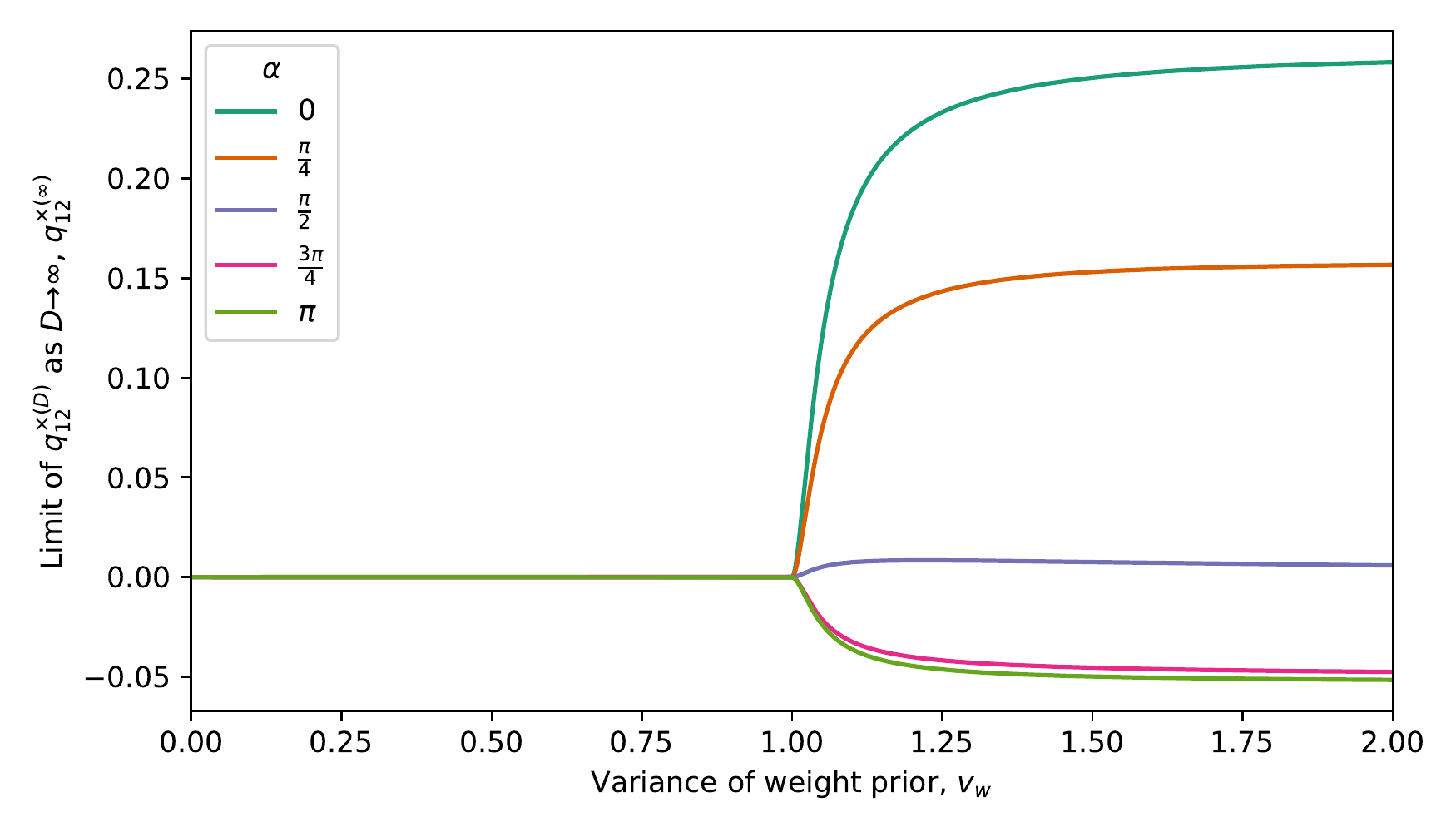}
\caption{\label{fig-cor-deep-limit-QX} %
The infinite-depth quadratic correlation between outputs as a function of the weight-variance hyperparameter $v_w$ for an example bottleneck NNGP for various angles $\alpha$ between the inputs.
}
\end{figure}

%\begin{figure}[t]
%	
%	\begin{minipage}[c]{0.625\textwidth}
%		
%		\centering
%		\includegraphics[width=1\textwidth]{cor_deep_limit_QX.pdf}
%	\end{minipage}\hfill
%	\begin{minipage}[c]{0.375\textwidth}
%		\vspace{-1.45cm}
%		\caption{The infinite-depth quadratic correlation between outputs as a function of the weight-variance hyperparameter $v_w$ for an example bottleneck NNGP for various angles $\alpha$ between the inputs.
%		}
%		\label{fig-cor-deep-limit-QX}
%	\end{minipage}
%\end{figure}

A phase transition at $v_w=1$ has already been noted in the literature in the behavior of no-bottleneck NNGP models at infinite depth~\citep{schoenholz2016deep, poole2016exponential, lee2017deep}. 
Specifically, the kernel of an NNGP with normalized ReLU activation degenerates to a constant kernel at infinite depth with a value of either $\frac{v_b}{1-v_w}$ if $v_w < 1$ and $\infty$ otherwise. 
Bottleneck layers help to reveal a richer structure of this phase transition, as we explain next. 
Drawing an analogy to the classical Ising model in statistical mechanics~\citep{baxter2016exactly}, the hyperparameter $v_w$ operates as an inverse temperature with a critical value at $v_w=1$. 
The quadratic correlation is then an order parameter analogous to magnetization whose derivative contains a discontinuity at the phase boundary $v_w=1$. 
In the $v_w < 1$ phase, the infinite-depth quadratic correlation is $0$ regardless of the bottleneck angle $\beta$; 
information about the inputs into the bottleneck NNGP is therefore lost, analogous to a disordered system at large scale. 
However, as $v_w$ crosses the phase boundary from below, the system undergoes a symmetry breaking with the infinite-depth quadratic correlation taking a distinct value for each bottleneck angle $\beta$ as well as for each input angle $\alpha$. 
This lets us recover information about the inputs from the infinite-depth quadratic correlation, 
indicating that bottlenecks help information propagate to extreme depths (Prop.~\ref{prop-quad-cor-lim}~\ref{prop-quad-cor-lim:b}). 

The symmetry breaking discussed above is not apparent in the phase transition of degenerate NNGP kernels noted in the literature; 
i.e., all information about the inputs are lost in an infinite-depth (no-bottleneck) NNGP in either phase. 
We see, however, that the restriction of just one hidden layer to a bottleneck is sufficient to break this symmetry. 
Moreover, the symmetry can be recovered by sending the bottleneck width to infinity (Thm.~\ref{thm-correspondence}); 
the bottleneck is therefore analogous to an external magnetic field in the Ising model.

\subsection{A divergent depth scale}
\label{subsection-depth-scale}

\citet{schoenholz2016deep} show that the characteristic depth scale on which the kernel of a (no-bottleneck) NNGP degenerates exponentially to its deep limit diverges at a phase boundary in $(v_b, v_w)$-space, and they use this result to argue that values of $(v_b, v_w)$ near criticality or ``at the edge of chaos'' optimize trainability by maximizing the depth to which information can penetrate in an NNGP. 
We show that  the depth scale on which the quadratic correlation $Q^{\times(D)}$ converges to its limit also diverges at the phase boundary $v_w=1$. 

\begin{proposition}[Characteristic depth scale] \label{prop-depth-scale}
Consider the single-bottleneck NNGP $F$ with normalized ReLU activation from Prop.~\ref{prop-quad-cor}, and suppose it is fed two inputs $x_1$ and $x_2$ with $\|x_1\|=\|x_2\|$ and bottleneck angle $\beta\neq \frac{\pi}{2}$. 
Then for all positive $v_w\neq 1$, the quadratic correlation $q^{\times(D)}_{ab}$ is asymptotically exponential in $D$, meaning that the limit
\begin{equation} \label{eq-asymp-exp-L}
L = \limD \frac{q^{\times(D)}_{ab}-q^{\times(\infty)}_{ab}}{e^{-\frac{D}{\lambda}}}
\end{equation}
exists and is finite and non-zero for some $\lambda > 0$, which we find to be
\begin{equation} \label{eq-depth-scale}
\lambda = 
\begin{cases}
\ln\left(\frac{1}{v_w^2}\right)^{-1} & \mbox{ if } v_w < 1 \\
\ln(v_w)^{-1} & \mbox{ if } v_w > 1.
\end{cases}
\end{equation}
In the case $v_w=1$, the limit $L$ is infinite for all finite $\lambda > 0$.
\end{proposition}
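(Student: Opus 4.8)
The plan is to reduce the whole statement to elementary asymptotics of a single scalar sequence. First I would invoke the hypothesis $\|x_1\|=\|x_2\|$: because the NNGP kernel depends on its inputs only through their dot products (Eqs.~\eqref{eq-K1}--\eqref{eq-Kmu}), the diagonal entries of the bottleneck covariance coincide, $c_{aa}=c_{bb}=:c>0$, so the denominator of Eq.~\eqref{eq-square-cor} collapses and
\[
q^{\times(D)}_{ab}=\frac{N}{\,15+2H\!\left(\tfrac{r_D}{c}+1\right)^{2}}\,,\qquad N:=\tfrac{2}{\pi}J_2(\beta)-1 .
\]
Here $N$ does not depend on $D$, and since $\beta=\tfrac{\pi}{2}$ is the unique zero of $\tfrac{2}{\pi}J_2-1$ on $[0,\pi]$, the hypothesis $\beta\neq\tfrac{\pi}{2}$ gives $N\neq 0$. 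Thus $q^{\times(D)}_{ab}=f(r_D)$ for the smooth map $f(r)=N/(15+2H(r/c+1)^2)$, and everything is controlled by the asymptotics of $r_D$ from Eq.~\eqref{eq-rD}.

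Next I would split into the three regimes. For $v_w>1$, write $r_D=r_\infty+A\,v_w^{-D}$ with $r_\infty=\tfrac{v_b}{v_w-1}$ (Eq.~\eqref{eq-r_infty}) and $A=v_n-r_\infty$, which is nonzero (in fact negative) because $v_n$ is taken arbitrarily small; a first-order Taylor expansion of $f$ about $r_\infty$ gives $q^{\times(D)}_{ab}-q^{\times(\infty)}_{ab}=f'(r_\infty)\,A\,v_w^{-D}+O(v_w^{-2D})$, so dividing by $e^{-D/\lambda}$ with $\lambda=\ln(v_w)^{-1}$ yields the finite nonzero limit $L=f'(r_\infty)A$. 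For $v_w<1$, $r_D\sim B\,v_w^{-D}\to\infty$ with $B=v_n+\tfrac{v_b}{1-v_w}>0$, so the denominator is asymptotic to $\tfrac{2HB^2}{c^2}v_w^{-2D}$ and hence $q^{\times(D)}_{ab}\sim\tfrac{Nc^2}{2HB^2}v_w^{2D}$; since $q^{\times(\infty)}_{ab}=0$ and $v_w^{2D}=e^{-D/\lambda}$ with $\lambda=\ln(1/v_w^2)^{-1}$, we get $L=\tfrac{Nc^2}{2HB^2}\neq 0$. Finally, for $v_w=1$, $r_D=v_n+Dv_b$ grows only linearly (using the standing assumption $v_b>0$), so $q^{\times(D)}_{ab}\sim\tfrac{Nc^2}{2Hv_b^2}\,D^{-2}$ decays polynomially while $q^{\times(\infty)}_{ab}=0$; dividing by $e^{-D/\lambda}$ for any finite $\lambda>0$ then diverges, so $L$ is infinite.

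The main obstacle is not the existence of the limit but its non-vanishing, i.e.\ verifying that the leading-order term of $q^{\times(D)}_{ab}-q^{\times(\infty)}_{ab}$ does not cancel. In the $v_w>1$ case this splits into two checks: $A\neq 0$, which is precisely where the additive bottleneck noise $v_n$ enters essentially (were $v_n=\tfrac{v_b}{v_w-1}$ exactly, $r_D$ would be constant and $L=0$), and $f'(r_\infty)=\dfrac{-4HN\!\left(\tfrac{r_\infty}{c}+1\right)}{c\,\bigl(15+2H(\tfrac{r_\infty}{c}+1)^2\bigr)^{2}}\neq 0$, which follows from $N\neq 0$ together with $\tfrac{r_\infty}{c}+1>0$. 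In the other two regimes the leading coefficients are manifestly nonzero once $N\neq 0$, so no cancellation can occur. As a byproduct I would also record the sign of $L$, which by the monotonicity of $r_D$ noted after Eq.~\eqref{eq-rD} shows that $q^{\times(D)}_{ab}$ approaches $q^{\times(\infty)}_{ab}$ monotonically from one side.
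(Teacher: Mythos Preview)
Your proposal is correct and takes essentially the same approach as the paper: collapse the denominator via $c_{aa}=c_{bb}$, then reduce everything to the asymptotics of $r_D$ in the three regimes, the only cosmetic difference being that you use a direct Taylor/asymptotic expansion of $f(r_D)$ where the paper instead applies L'H\^opital's rule to the quotient in Eq.~\eqref{eq-asymp-exp-L}. Your explicit check that the leading coefficient does not vanish (in particular $A=v_n-r_\infty\neq 0$ and $f'(r_\infty)\neq 0$ in the $v_w>1$ case) is in fact more careful than the paper's proof, which simply announces that it is dropping ``non-zero constants of proportionality'' without isolating the implicit condition $v_n\neq v_b/(v_w-1)$.
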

%\prooflink{prop-depth-scale}
Proposition~\ref{prop-depth-scale} excludes the case of orthogonal bottleneck preactivations ($\beta=\frac{\pi}{2}$) since it leads to trivial asymptotic behavior ($q^{\times(D)}_{ab}=0$ for all $D$). 
The quantity $\lambda$ given in Eq.~\eqref{eq-depth-scale} is called the characteristic depth scale and is proportional to the ``half-life'' of quadratic correlation at large depth. 
The divergence of the depth scale at $v_w=1$ gives us another perspective on the phase transition in bottleneck NNGP models (Fig.~\ref{fig-cor-depth-scale}). 
Based on this, we expect optimal values of the $v_w$ hyperparameter to be greater than but close to $1$; 
although $v_w=1$ gives the smallest decay rate (i.e., largest depth scale) to the infinite-depth quadratic correlation, this limiting value is $0$. 
Larger values of $v_w$ admit non-zero infinite-depth quadratic correlations, but values that are too large lead to fast decay rates (i.e., small depth scales).
We hypothesize that this tension between large depth scales (near $v_w=1$) and non-zero quadratic correlations ($v_w\gg 1$) is the main driving force determining the optimal value of $v_w$ in the $v_w>1$ phase.

\begin{figure}
\centering
\includegraphics[width=.7\textwidth]{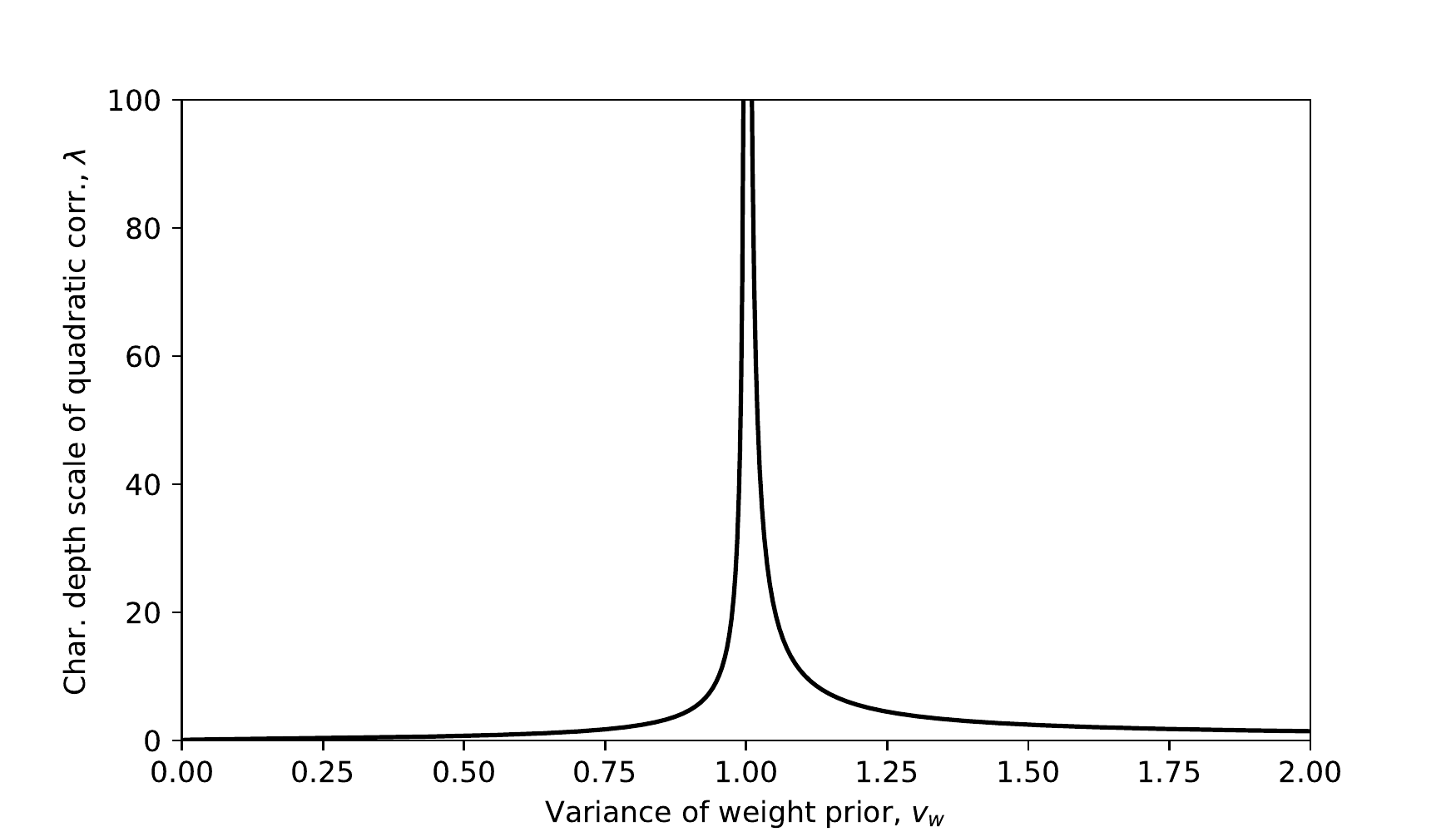}
\caption{\label{fig-cor-depth-scale} %
Characteristic depth scale of the convergence of the quadratic correlation between outputs to its infinite-depth limit as a function of the weight-variance hyperparameter $v_w$.
}
\end{figure}

%\begin{figure}[t]
%	
%	\begin{minipage}[c]{0.625\textwidth}
%		
%		\centering
%		\includegraphics[width=1\textwidth]{cor_depth_scale.pdf}
%	\end{minipage}\hfill
%	\begin{minipage}[c]{0.375\textwidth}
%		\vspace{-1.05cm}
%		\caption{Characteristic depth scale of the convergence of the quadratic correlation between outputs to its infinite-depth limit as a function of the weight-variance hyperparameter $v_w$.
%		}
%		\label{fig-cor-depth-scale}
%	\end{minipage}
%\end{figure}

\subsection{Non-degenerate kernels at extreme depths}
\label{subsection-nondegenerate}

The non-trivial dependence of the infinite-depth quadratic correlation $q^{\times(\infty)}_{ab}(\beta)$ on the bottleneck angle $\beta$ has remarkable implications for the kernel or covariance matrix $K^{(\infty)}$ of individual output neurons of a bottleneck NNGP at infinite depth. 
Consider again the single-bottleneck NNGP $F$ described at the beginning of Sec.~\ref{subsection-cor-exact-formula}, and suppose it is fed two linearly independent inputs $x_1$ and $x_2$. 
Assume that the infinite-depth kernel $K^{(\infty)}$ is degenerate so that the associated correlation matrix $\hat{K}^{(\infty)}$ is a matrix of ones; 
this is indeed the case when there are no bottlenecks, where the correlations in $\hat{K}$ tend to 100\% even when all elements of the kernel $K$ grow to infinity at infinite depth. 
Then the outputs $F_i(x_1)$ and $F_i(x_2)$ of a single output neuron $i$ at infinite depth are linearly dependent and are in fact equal. 
It follows that the elements $q^{\times(\infty)}_{ab}$ of the quadratic correlation matrix $Q^{\times(\infty)}$ between distinct output neurons are all equal, but this is impossible in the $v_w>1$ phase since $q^{\times(\infty)}_{12}(\beta)$ is a one-one function of $\beta$ (Fig.~\ref{fig-cor-deep-limit-QX}) and the diagonal elements assume $\beta=0$ while the off-diagonal elements assume $\beta > 0$ (since $x_1$ and $x_2$ are linearly independent). 
We therefore learn that the kernel of an NNGP does not degenerate to a constant in the $v_w>1$ phase if at least one hidden layer is bottlenecked. 

Unfortunately, the kernel---and thus the associated correlation matrix---of each output neuron of a bottleneck NNGP do not admit closed forms, even in the infinite depth limit. 
The quadratic correlation matrix $Q^{(D)}$ (without a superscript $\times$) for individual outputs is intractable as well, but its infinite depth limit does admit an elegant closed form. 

\begin{proposition}[Infinite-depth quadratic correlation for single output]\label{prop-q}
Consider the single-bottleneck NNGP $F$ with normalized ReLU activation from Prop.~\ref{prop-quad-cor}, and suppose we send the post-bottleneck depth to infinity.
\begin{enumerate}[label={(\alph*)}]
\item The infinite-depth quadratic correlation matrix $Q^{(\infty)}$ of a single output neuron has $(a, b)$-th element
$q^{(\infty)}_{ab} = \limD q^{(D)}_{ab}$ given by
\begin{equation}
\label{eq-prop-q}
q^{(\infty)}_{ab} = 
% \limD q^{(D)}_{ab} =
\begin{cases}
% 3q^{\times (\infty)}_{ab} + \frac{\left(\frac{v_b}{(v_w-1)c_{aa}}+1\right)\left(\frac{v_b}{(v_w-1)c_{bb}}\right)}{\sqrt{\left(\frac{15}{2H} + \left(\frac{v_b}{(v_w-1)c_{aa}} + 1\right)^2\right)\left(\frac{15}{2H} + \left(\frac{v_b}{(v_w-1)c_{bb}} + 1\right)^2\right)}} & \mbox{ if } v_w > 1 \\
3q^{\times (\infty)}_{ab} + \frac{\left(\frac{v_b}{(v_w-1)c_{aa}}+1\right)\left(\frac{v_b}{(v_w-1)c_{bb}}+1\right)}{\displaystyle\prod_{c=c_{aa},c_{bb}}\sqrt{\frac{15}{2H} + \left(\frac{v_b}{(v_w-1)c} + 1\right)^2}} & \mbox{ if } v_w > 1 \\
1 & \mbox{ otherwise.}
\end{cases}
\end{equation} \label{prop-q:a}
\item Let $G$ be the Gram matrix of inputs with entries $g_{ab} = x_a^\top x_b$. 
If $v_w > 1$, then the mapping $G\mapsto (Q^{(\infty)}, \operatorname{diag}(G))$ is invertible. \label{prop-q:b}
\end{enumerate}
\end{proposition}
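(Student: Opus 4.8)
The plan is to run the conditioning argument behind Prop.~\ref{prop-quad-cor} for two outputs of the \emph{same} neuron, thereby reducing the single-output quadratic correlation to the cross-output quantity of Prop.~\ref{prop-quad-cor-lim} plus one correction term whose infinite-depth limit I compute by hand, and then to deduce part~\ref{prop-q:b} from a monotonicity argument mirroring Prop.~\ref{prop-quad-cor-lim}\ref{prop-quad-cor-lim:b}.

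Let $\mathcal{A}$ be the $\sigma$-algebra generated by the $H$ bottleneck activations at $x_1,x_2$. Conditionally on $\mathcal{A}$ the post-bottleneck output neurons are i.i.d.\ centered Gaussian processes, so for a fixed neuron $i$ the pair $(F_i(x_a),F_i(x_b))$ is centered bivariate normal with covariance entries $\sigma_a^2,\sigma_b^2,\sigma_{ab}$ that are deterministic functions of the bottleneck activations (obtained by iterating Eq.~\eqref{eq-Kmu} through the $D-1$ post-bottleneck layers and adding $v_n$ on the diagonal), depending on those activations only through their Gram matrix. Applying the Gaussian fourth-moment identities conditionally and then the law of total (co)variance gives
\begin{equation*}
\cov[F_i(x_a)^2,F_i(x_b)^2]=2\operatorname{E}[\sigma_{ab}^2]+\cov(\sigma_a^2,\sigma_b^2),\qquad
\operatorname{V}[F_i(x_a)^2]=3\operatorname{V}(\sigma_a^2)+2\operatorname{E}[\sigma_a^2]^2.
\end{equation*}
Since distinct neurons are conditionally independent given $\mathcal{A}$, the term $\cov(\sigma_a^2,\sigma_b^2)$ is exactly $\cov[F_1(x_a)^2,F_2(x_b)^2]$, and $\operatorname{V}[F_i(x_a)^2]$ equals the (neuron-independent) variance $\operatorname{V}[F_1(x_a)^2]$ appearing in the denominator of $q^{\times(D)}_{ab}$; both are supplied by Prop.~\ref{prop-quad-cor}. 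Hence $q^{(D)}_{ab}=q^{\times(D)}_{ab}+\frac{2\operatorname{E}[\sigma_{ab}^2]}{\sqrt{\operatorname{V}[F_1(x_a)^2]\operatorname{V}[F_1(x_b)^2]}}$, so part~\ref{prop-q:a} reduces to finding the limit of $\operatorname{E}[\sigma_{ab}^2]$ under the normalization that makes $q^{\times(D)}_{ab}$ converge (Prop.~\ref{prop-quad-cor-lim}).

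For that limit, positive-homogeneity of the normalized ReLU makes the post-bottleneck diagonal propagate linearly, so $\sigma_a^2=v_w^{D}(U_{aa}+r_D)$ with $U_{aa}$ the appropriately normalized squared norm of the $x_a$-bottleneck activation vector and $r_D$ as in Eq.~\eqref{eq-rD}; the Cho--Saul integrals~\citep{cho2009kernel} give $\operatorname{E}[U_{aa}]=c_{aa}$, $\operatorname{V}[U_{aa}]=5c_{aa}^2/H$ and $\cov(U_{aa},U_{bb})=\tfrac{c_{aa}c_{bb}}{H}\!\left(\tfrac{2}{\pi}J_2(\beta)-1\right)$ --- the same identities behind Prop.~\ref{prop-quad-cor}, which also re-confirm the displayed form of $\operatorname{V}[F_i(x_a)^2]$. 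Writing $\sigma_{ab}=\sqrt{\sigma_a^2\sigma_b^2}\,\hat\rho^{(D)}$, where $\hat\rho^{(D)}\in[-1,1]$ is the conditional post-bottleneck correlation after $D-1$ iterations of the ReLU correlation recursion, the essential analytic input is that, conditionally on $\mathcal{A}$, this recursion drives $\hat\rho^{(D)}\to1$ (the ReLU kernel degeneracy of \citet{lee2017deep}, applied componentwise to the post-bottleneck block). If $v_w>1$ then $\sigma_{ab}^2/v_w^{2D}=(U_{aa}+r_D)(U_{bb}+r_D)(\hat\rho^{(D)})^2\to(U_{aa}+r_\infty)(U_{bb}+r_\infty)$ a.s.\ with $r_\infty$ as in Eq.~\eqref{eq-r_infty}; since $|\hat\rho^{(D)}|\le1$ and $r_D$ increases to $r_\infty$, dominated convergence with majorant $(U_{aa}+r_\infty)(U_{bb}+r_\infty)\in L^1$ yields $\operatorname{E}[\sigma_{ab}^2]/v_w^{2D}\to\cov(U_{aa},U_{bb})+(c_{aa}+r_\infty)(c_{bb}+r_\infty)$. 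Dividing numerator and denominator of $q^{(D)}_{ab}$ by $v_w^{2D}$, passing to the limit, and comparing with the closed form of $q^{\times(\infty)}_{ab}$ in Eq.~\eqref{eq-square-cor-lim} collapses the algebra to Eq.~\eqref{eq-prop-q}: the $\cov(U_{aa},U_{bb})$ terms assemble into $3q^{\times(\infty)}_{ab}$ and the $(c_{aa}+r_\infty)(c_{bb}+r_\infty)$ term into the second summand. For $v_w\le1$ one instead divides by $v_w^{2D}r_D^2$; since $r_D\to\infty$ (Eq.~\eqref{eq-r_infty}) the random parts $U_{aa}/r_D$ wash out, all of $\sigma_a^2,\sigma_b^2,\sigma_{ab}$ become the same deterministic quantity in the limit, and both the rescaled numerator and denominator tend to $2$, giving $q^{(\infty)}_{ab}=1$. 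I expect the main obstacle to be justifying these two limit interchanges cleanly --- in particular the $r_D$-rescaling when $v_w\le1$, and the positive-probability (but harmless) event that a bottleneck activation vector vanishes, on which $\sigma_{ab}=0$ and all quantities are bounded.

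Part~\ref{prop-q:b} follows from part~\ref{prop-q:a} together with two monotonicities, in parallel with Prop.~\ref{prop-quad-cor-lim}\ref{prop-quad-cor-lim:b}. Knowing $\operatorname{diag}(G)=(g_{11},g_{22})$ recovers $c_{11},c_{22}$ immediately, since the pre-bottleneck ReLU kernel recursion makes each $c_{aa}$ a function of $g_{aa}$ alone; it remains to recover $g_{12}$ from $q^{(\infty)}_{12}$. Holding $g_{11},g_{22}$ fixed, each pre-bottleneck layer maps $g_{12}$ to $c_{12}$ strictly monotonically (monotonicity of $J_1$ in the angle), so $g_{12}\mapsto\beta$ is strictly monotone; and by Eq.~\eqref{eq-prop-q} the map $\beta\mapsto q^{(\infty)}_{12}$ equals $3q^{\times(\infty)}_{12}(\beta)$ plus a constant depending only on $c_{11},c_{22},v_b,v_w,H$, while $q^{\times(\infty)}_{12}(\beta)$ is strictly decreasing in $\beta$ (strict monotonicity of $J_2$ on $[0,\pi]$, already used in Sec.~\ref{subsection-quad-cor-lim}). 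Composing, $g_{12}\mapsto q^{(\infty)}_{12}$ is strictly monotone, hence injective, so $g_{12}$ --- and with it all of $G$ --- is determined by $(Q^{(\infty)},\operatorname{diag}(G))$.
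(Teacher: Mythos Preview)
Your proposal is correct and follows essentially the same route as the paper's proof: the decomposition $\cov[F_1(x_a)^2,F_1(x_b)^2]=2\operatorname{E}[k_{ab}^2]+\cov[F_1(x_a)^2,F_2(x_b)^2]$ (your law-of-total-covariance identity is precisely the paper's integral computation), the use of the ReLU correlation degeneracy $\hat\rho^{(D)}\to1$ to replace $k_{ab}^2$ by $k_{aa}k_{bb}$ in the limit, a dominated-convergence step, and finally the same inversion-by-monotonicity argument for part~\ref{prop-q:b}. The only cosmetic difference is the dominating function: the paper uses a single crude bound $I_D(h)\le A+B\phi(\max h)^4$ that works uniformly for all $v_w$, whereas you propose the tighter majorant $(U_{aa}+r_\infty)(U_{bb}+r_\infty)$ when $v_w>1$ and flag the $v_w\le1$ case as needing care---but that case is handled by the same idea after dividing by $r_D^2$, since $r_D$ is increasing so $(U_{aa}/r_D+1)(U_{bb}/r_D+1)\le(U_{aa}/r_1+1)(U_{bb}/r_1+1)\in L^1$.
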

%\prooflink{prop-q}

The infinite-depth single-output quadratic correlation $Q^{(\infty)}$ carries many of the same properties as the infinite-depth between-output quadratic correlation $Q^{\times(\infty)}$. 
Recalling that $q^{\times(\infty)}_{ab}\rightarrow 0$ as $H\rightarrow\infty$, it is easy to verify that $q^{(\infty)}_{ab}\rightarrow 1$. 
%Although this limit alone does not imply that the kernel degenerates to a constant at infinite depth as we would expect in the absence of bottlenecks, this is likely the case. 
The single-output quadratic correlation also exhibits the same symmetry breaking at the phase boundary $v_w=1$ as the between-output quadratic correlation; 
this is evident in the example plot of $q^{(\infty)}_{12}(\beta)$ as a function of $v_w$, using the same setup as for Fig.~\ref{fig-cor-deep-limit-QX} (see Fig.~\ref{fig-cor-deep-limit-Q}).

\begin{figure}
\centering
\includegraphics[width=.7\textwidth]{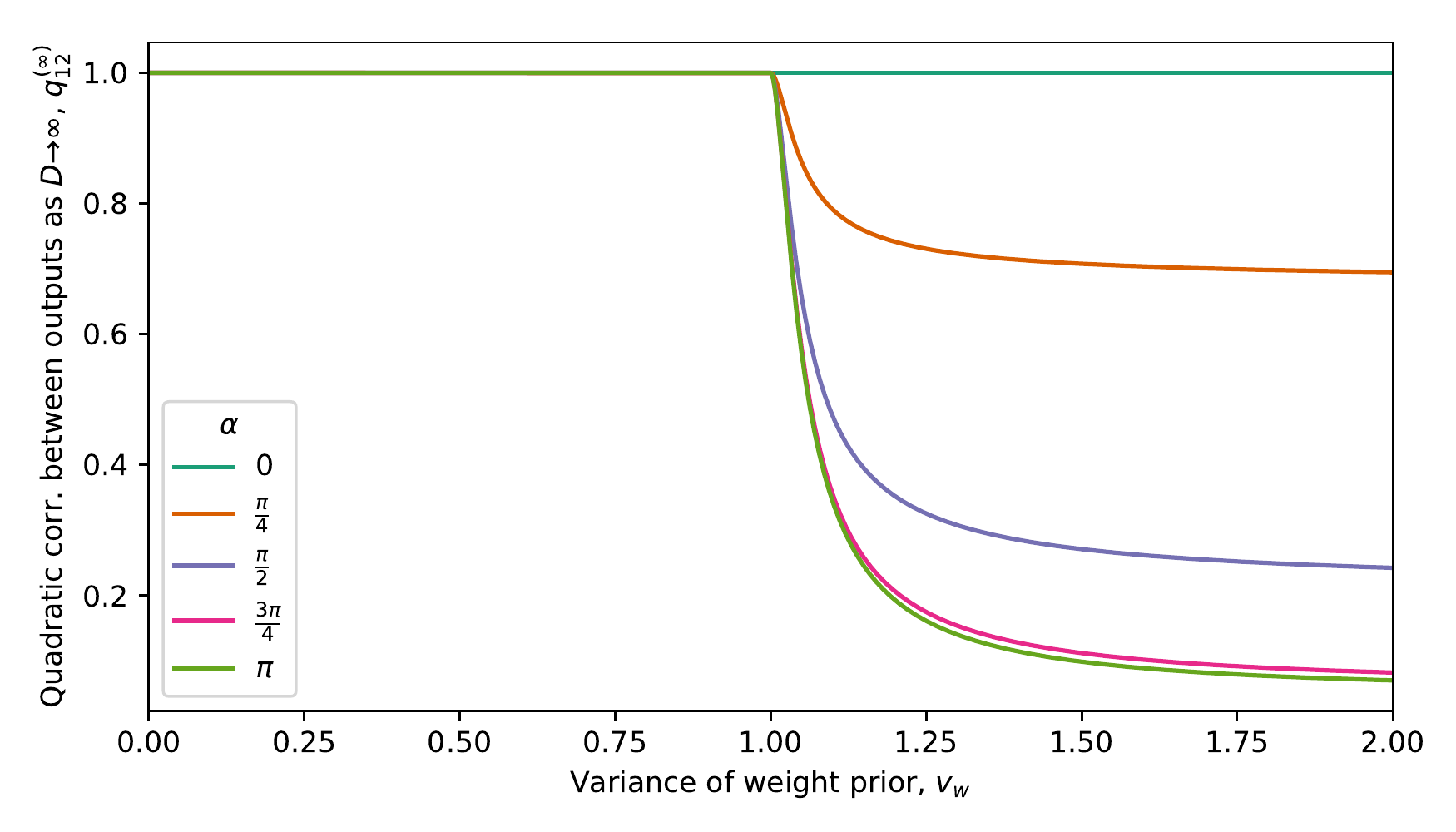}
\caption{\label{fig-cor-deep-limit-Q} %
The infinite-depth quadratic correlation of a single output as a function of the $v_w$ hyperparameter for an example bottleneck NNGP.
}
\end{figure}

%\begin{figure}[t]
%	
%	\begin{minipage}[c]{0.625\textwidth}
%		
%		\centering
%		\includegraphics[width=1\textwidth]{cor_deep_limit_Q.pdf}
%	\end{minipage}\hfill
%	\begin{minipage}[c]{0.375\textwidth}
%		\vspace{-2.25cm}
%		\caption{The infinite-depth quadratic correlation of a single output as a function of the $v_w$ hyperparameter for an example bottleneck NNGP.
%		}
%		\label{fig-cor-deep-limit-Q}
%	\end{minipage}
%\end{figure}

In the phase $v_w < 1$, the single-output quadratic correlation is 100\%, suggesting degeneracy at infinite depth. 
In contrast, in the phase $v_w>1$---where symmetry breaks---$q^{(\infty)}_{ab}$ becomes a strictly increasing function of the input angle $\alpha$, allowing us to recover some information about the inputs (Prop.~\ref{prop-q}~\ref{prop-q:b}). 
In particular, Prop.~\ref{prop-q}~\ref{prop-q:b} implies that in the ordered phase $v_w > 1$, if the norms of the inputs are known (if the inputs are constrained to a sphere, for example), then the input angle can be recovered from the single-output quadratic correlation even at infinite depth. 
This stands in stark contrast to no-bottleneck NNGP models, where all information is lost at infinite depth regardless of the phase, and it suggests that bottleneck layers are vital for the trainability of very deep models.

\subsection{Other nonlinearities}
\label{subsection-other-nonlin}

The behavior described in Prop.~\ref{prop-q} does not extend to other common nonlinearities and is thus all the more striking. 
Two notable examples are the sigmoidal nonlinearity $\phi(x) = \tanh x$ and the sinusoidal nonlinearity 
\begin{equation} \label{eq-sinusoidal}
\phi(x) = \cos x + \sin x.
\end{equation}
The significance of the latter is that the corresponding NNGP has an RBF kernel; 
the random features literature~\citep{rahimi2008random, cutajar2017random} hints at this connection but does not discuss it in the context of NNGPs or neural network nonlinearities. 
We make this connection more precise in the next proposition and subsequent remark 
(see Appendix~\ref{appendix-section-other-nonlin} for the proofs of both Props.~\ref{prop-rbf-nngp},~\ref{prop-rbf-nngp-bottleneck}).

\begin{proposition}[RBF-NNGP kernel recursion] \label{prop-rbf-nngp}
Consider an NNGP mapping $\RR^M$ to $\RR$ with $D$ hidden layers and the sinusoidal nonlinearity in Eq.~\eqref{eq-sinusoidal}. 
\begin{enumerate}[label={(\alph*)}]
\item The NNGP kernel $K^{(\mu)}:\RR^M\times\RR^M\mapsto\RR$ for the $\mu$-th hidden layer is given recursively as
\begin{align}
K^{(1)}(x, x^\prime) &= v_b + v_w x^\top x^\prime \label{eq-rbf-nngp-K1} \\
K^{(\mu+1)}(x, x^\prime) &= v_b + v_w e^{-\frac{1}{2}\left[K^{(\mu)}(x, x)+K^{(\mu)}(x^\prime, x^\prime) - 2K^{(\mu)}(x, x^\prime)\right]}, \label{eq-rbf-nngp-Kmu}
\end{align}
for $\mu=1,\ldots,D$ where $K^{(D+1)}$ is the kernel of the output layer. \label{prop-rbf-nngp:a}
\item In the deep limit $D\rightarrow\infty$, the NNGP kernel converges pointwise to
\begin{equation} \label{eq-rbf-nngp-Kinfty}
K^{(\infty)}(x, x^\prime) = v_*(v_b, v_w)
\begin{cases}
1, & \mbox{ if } x=x^\prime \\
c_*(v_b, v_w), & \mbox{ otherwise,}
\end{cases}
\end{equation}
where
\begin{align}
v_*(v_b, v_w) &= v_b+v_w \label{eq-rbf-nngp-v} \\
c_*(v_b, v_w) &=
\begin{cases}
1, & \mbox{ if } v_w < 1 \\
c^\prime, & \mbox{ if } v_w > 1.
\end{cases} \label{eq-rbf-nngp-c}
\end{align} \label{prop-rbf-nngp:b}
\end{enumerate}
\end{proposition}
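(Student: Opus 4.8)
The plan is to treat part~\ref{prop-rbf-nngp:a} as a direct Gaussian moment computation and part~\ref{prop-rbf-nngp:b} as a one-dimensional fixed-point analysis of the scalar recursion produced in part~\ref{prop-rbf-nngp:a}.

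\emph{Part (a).} The initial step \eqref{eq-rbf-nngp-K1} is immediate from \eqref{eq-K1} with $L=1$. For the recursion I would start from \eqref{eq-Kmu}, which (again using $L=1$) reads $K^{(\mu+1)}(x,x')=v_b+v_w\,\operatorname{E}_{(z_1,z_2)\sim\calN(0,C^{(\mu)})}[\phi(z_1)\phi(z_2)]$ with $c^{(\mu)}_{ab}=K^{(\mu)}(x_a,x_b)$. Using $\cos t+\sin t=\sqrt2\cos(t-\tfrac{\pi}{4})$ together with the product-to-sum identity, I would rewrite $\phi(z_1)\phi(z_2)=\cos(z_1-z_2)+\sin(z_1+z_2)$. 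Since for a centered Gaussian $Z$ with variance $\sigma^2$ one has $\operatorname{E}[\cos Z]=e^{-\sigma^2/2}$ and $\operatorname{E}[\sin Z]=0$, and since $z_1-z_2$ has variance $c^{(\mu)}_{11}+c^{(\mu)}_{22}-2c^{(\mu)}_{12}$ while $z_1+z_2$ is centered, the expectation equals $e^{-\frac12(c^{(\mu)}_{11}+c^{(\mu)}_{22}-2c^{(\mu)}_{12})}$, which is exactly \eqref{eq-rbf-nngp-Kmu}.

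\emph{Part (b).} First I would observe the diagonal collapses: setting $x=x'$ in \eqref{eq-rbf-nngp-Kmu} gives $K^{(\mu+1)}(x,x)=v_b+v_w$ for every $\mu\ge1$, so $K^{(\infty)}(x,x)=v_b+v_w=v_*$, which is \eqref{eq-rbf-nngp-v}. Now fix $x\ne x'$. For $\mu\ge2$ both diagonal entries equal $v_*$, so writing $k_\mu=K^{(\mu)}(x,x')$ and $c_\mu=k_\mu/v_*$, the recursion \eqref{eq-rbf-nngp-Kmu} becomes $c_{\mu+1}=g(c_\mu)$ with $g(c)=\tfrac{v_b}{v_*}+\tfrac{v_w}{v_*}e^{v_*(c-1)}$. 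The map $g$ is smooth, strictly increasing, strictly convex, satisfies $g(1)=1$ and $g'(1)=v_w$, and maps $(0,1)$ into itself; from the explicit first step $k_2=v_b+v_w e^{-\frac12 v_w\|x-x'\|^2}$ one gets $c_\mu\in(0,1)$ for all $\mu\ge2$. I would then study $h(c):=g(c)-c$, which is convex with $h(1)=0$ and $h'(c)=v_w e^{v_*(c-1)}-1$. If $v_w<1$, then $h'<0$ on $(-\infty,1]$, so $h>0$ on $(-\infty,1)$; hence $(c_\mu)$ is strictly increasing, bounded by $1$, and converges to the unique fixed point $1$, so $K^{(\mu)}(x,x')\to v_*$ and $c_*=1$. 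If $v_w>1$, then $h'(1)>0$, $h(0)>0$, and $h\to+\infty$ as $c\to-\infty$, so convexity forces $h$ to have exactly one further zero $c'\in(0,1)$, with $h>0$ on $(-\infty,c')$ and $h<0$ on $(c',1)$; a monotone-convergence argument (the iterates increase to $c'$ from below $c'$ and decrease to $c'$ from above, never leaving $(0,1)$) gives $c_\mu\to c'$, so $K^{(\mu)}(x,x')\to v_*c'$ and $c_*=c'$. Assembling the $x=x'$ and $x\ne x'$ cases yields \eqref{eq-rbf-nngp-Kinfty}.

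\emph{Main obstacle.} The computation in part (a) is routine once the trigonometric identity is spotted, and the diagonal collapse and the $v_w<1$ branch of part (b) are straightforward monotone iterations. The delicate step is the $v_w>1$ branch: one must show that the second fixed point $c'$ lies strictly inside $(0,1)$ and that the iteration converges to it from every admissible starting value $c_2$. I expect this to be the point needing the most care, and I plan to handle it purely through the convexity of $g$ (equivalently, that $h$ has at most two real zeros and $h'$ is monotone) together with the invariance of $(0,1)$ under $g$.
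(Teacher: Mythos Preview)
Your proposal is correct and follows essentially the same route as the paper: the same trigonometric identity $\phi(z_1)\phi(z_2)=\cos(z_1-z_2)+\sin(z_1+z_2)$ for part~(a), and the same convexity-based fixed-point analysis of the map $g(c)=\tfrac{v_b}{v_*}+\tfrac{v_w}{v_*}e^{v_*(c-1)}$ for part~(b). Your execution of part~(a) is in fact slightly more direct than the paper's---you compute $\operatorname{E}[\cos(z_1-z_2)]$ straight from the variance of $z_1-z_2$, whereas the paper first performs a change of variables $z=X^\top w$ and then invokes the Fourier eigenfunction property of the Gaussian; these are the same computation, but yours avoids the detour.
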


\begin{remark}[RBF-DGP as a bottleneck NNGP]
Since the sinusoidal nonlinearity in Eq.~\eqref{eq-sinusoidal} clearly satisfies the linear envelope condition, 
then we can apply Thm.~\ref{thm-bottleneck} to a BNN with the sinusoidal nonlinearity and obtain a bottleneck NNGP limit. 
If no bottlenecks are imposed, then the wide limit of such a BNN is the NNGP described in Prop.~\ref{prop-rbf-nngp}. 
In particular, using Eqs.~\eqref{eq-rbf-nngp-K1}-\eqref{eq-rbf-nngp-Kmu}, the single-hidden-layer NNGP with sinusoidal nonlinearity has kernel
\begin{align*}
K^{(2)}(x, x^\prime)
&= v_b + v_w e^{-\frac{1}{2}[K^{(1)}(x, x) + K^{(1)}(x^\prime, x^\prime) - 2K^{(1)}(x, x^\prime)]} \\
&= v_b + v_w e^{-\frac{1}{2}[v_b+v_w\Vert x\Vert^2+v_b+v_w\Vert x^\prime\Vert^2 - 2(v_b+v_wx\cdot x^\prime)]} \\
%&= v_b + v_w e^{-\frac{v_w}{2}(\Vert x\Vert^2+\Vert x^\prime\Vert^2 - 2x\cdot x^\prime)} \\
&= v_b + v_w e^{-\frac{v_w}{2}\Vert x-x^\prime\Vert^2},
\end{align*}
which we recognize as the RBF kernel. 
More generally, if the BNN with sinusoidal nonlinearity has an odd number of hidden layers $D$ and the $\mu$-th hidden layers are restricted to bottlenecks for $\mu=2,4,6,\ldots,D-1$, 
then the bottleneck NNGP limit is a DGP with $\frac{D+1}{2}$ GP components each with RBF kernel (and sinusoidal nonlinearities applied to the bottleneck layers).
\end{remark}

The deep limit of an NNGP with sinusoidal nonlinearity is described in Prop.~\ref{prop-rbf-nngp}~\ref{prop-rbf-nngp:b}. 
It is identical to the behavior of an NNGP with sigmoidal nonlinearity as described by~\citet{poole2016exponential}, 
except that $v_*(v_b, v_w)$ and $c_*(v_b, v_w)$ take different forms and the phase boundary has a different location. 
In one phase ($v_w<1$ for the sinusoidal nonlinearity), all inputs tend to $100\%$ correlation as depth is increased without bound---similar to the ReLU activation. 
However, in the other phase ($v_w>1$ for the sinusoidal nonlinearity)---the ``chaotic phase''---the infinite-depth correlation for distinct inputs is a constant less than $100\%$; 
thus, unlike the ReLU activation, the sigmoidal and sinusoidal nonlinearities allow distinct inputs to remain distinct through infinite depth, 
although all information about the distince between distinct inputs is lost. 
Moreover, the introduction of a bottleneck does not remove this degeneracy---in sharp contrast to Prop.~\ref{prop-q}. 
We substantiate this with the next proposition.

\begin{proposition}[Deep post-bottleneck limit for sigmoidal and sinusoidal nonlinearities] \label{prop-rbf-nngp-bottleneck}
Consider a single-bottleneck NNGP $F$ mapping $\RR^M$ to $\RR$  with either sigmoidal or sinusoidal nonlinearity $\phi:\RR\mapsto\RR$. 
Suppose IID Gaussian noise $\calN(0, v_n)$ is added to the outputs for any $v_n>0$. 
Given two distinct inputs $x_1,x_2\in\RR^M$, let $p^{(D)}:\RR^2\mapsto [0, \infty)$ be the PDF of $\{F(x_1), F(x_2)\}$, 
where $D$ is the post-bottleneck depth. 
Then in the deep post-bottleneck limit $D\rightarrow\infty$, the PDF converges pointwise to
\begin{equation} \label{eq-rbf-nngp-pinfty}
p^{(\infty)}(y) = \calN(y; 0, K^{(\infty)}(X, X)+v_n I),
\end{equation}
where $K^{(\infty)}(X, X)$ is a $2\times 2$ matrix with entries $K^{(\infty)}(x_a, x_b)$ with $K^{(\infty)}$ given in Eq.~\eqref{eq-rbf-nngp-Kinfty}.
\end{proposition}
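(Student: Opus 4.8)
The plan is to condition on the bottleneck, reduce the problem to the convergence of a (conditionally Gaussian) covariance matrix, invoke the deep-limit behaviour of the post-bottleneck NNGP kernel, and finish with a dominated-convergence argument at the level of densities.

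First I would condition on the bottleneck activations. Write $h(x)=\bigl(\phi(\zeta_1(x)),\ldots,\phi(\zeta_H(x))\bigr)\in\RR^H$ for the activations out of the bottleneck (with $\phi$ applied to the bottleneck, per Remark~\ref{rem-nonlinbottleneck}), where the preactivation pairs $\bigl(\zeta_j(x_1),\zeta_j(x_2)\bigr)$ are IID over $j$ with a common centered bivariate normal law $\calN(0,C)$. Given $\{h(x_1),h(x_2)\}$, the single-bottleneck NNGP $F$ reduces to the post-bottleneck NNGP evaluated at the fixed points $h(x_1),h(x_2)$, which, being a GP, is centered Gaussian; adding the independent output noise, $\bigl(F(x_1),F(x_2)\bigr)$ is a centered bivariate normal with covariance $\Sigma_D=K^{(D)}_\star+v_nI$, where $K^{(D)}_\star$ is the $2\times2$ matrix of post-bottleneck NNGP kernel values between $h(x_1)$ and $h(x_2)$ at post-bottleneck depth $D$. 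By Eq.~\eqref{eq-K1} and the kernel recursion, $K^{(D)}_\star$ depends on $h(x_1),h(x_2)$ only through their Gram matrix, so $\Sigma_D$ is a deterministic function of the bottleneck randomness, and the density of interest is $p^{(D)}(y)=\operatorname{E}\bigl[\calN(y;0,\Sigma_D)\bigr]$, the expectation being over the bottleneck.

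Next I would study $K^{(D)}_\star$ as $D\to\infty$. Its first post-bottleneck value is the random ``linear'' kernel with entries $v_b+\tfrac{v_w}{H}\langle h(x_a),h(x_b)\rangle$ (the $\tfrac1H$ coming from the bottleneck weight scaling), a positive semidefinite matrix whose correlation $\rho^{(1)}$ lies in $[-1,1]$. For the sinusoidal $\phi$, the recursion Eq.~\eqref{eq-rbf-nngp-Kmu} pins every diagonal entry to $v_*=v_b+v_w$ after one step, and the correlation thereafter follows the scalar map $g(\rho)=\tfrac{v_b}{v_*}+\tfrac{v_w}{v_*}e^{-v_*(1-\rho)}$, which is strictly increasing and convex with $g(1)=1$ and $g'(1)=v_w$; the same analysis of the recursion Eq.~\eqref{eq-rbf-nngp-Kmu} that proves Prop.~\ref{prop-rbf-nngp}~\ref{prop-rbf-nngp:b}---which nowhere uses the particular initial kernel---gives $\rho^{(\mu)}\to c_*$ (Eq.~\eqref{eq-rbf-nngp-c}) from every starting value $\rho^{(1)}<1$ (and trivially when $\rho^{(1)}=1$). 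For the sigmoidal $\phi$ the analogous facts---the variance map converges to $v_*$ from every positive starting value, and the correlation map then converges to $c_*$ from every starting correlation $<1$---are exactly the mean-field analysis of the $\tanh$ kernel recursion of \citet{poole2016exponential,schoenholz2016deep}. In both cases $\rho^{(1)}<1$ almost surely: $\rho^{(1)}=1$ would force $\phi(\zeta_j(x_1))=\phi(\zeta_j(x_2))$ for all $j$, but $C$ is nondegenerate for distinct inputs (e.g.\ whenever $v_b>0$, using that the pre-bottleneck kernel has off-diagonal entry strictly below its diagonal entries at finite depth) while $\{(u,v):\phi(u)=\phi(v)\}$ has Lebesgue measure zero, so the event is null. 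Therefore $\Sigma_D\to\Sigma_\infty:=K^{(\infty)}(X,X)+v_nI$ almost surely, with $K^{(\infty)}$ as in Eq.~\eqref{eq-rbf-nngp-Kinfty}, and---crucially---this limit is deterministic, since $K^{(\infty)}$ depends on $x_1,x_2$ only through whether they are equal.

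To conclude, since $v_n>0$ the matrices $\Sigma_D-v_nI$ and $\Sigma_\infty-v_nI$ are positive semidefinite, so every eigenvalue of $\Sigma_D$ (and of $\Sigma_\infty$) is at least $v_n$; these covariances are nonsingular with determinant at least $v_n^2$, so $\calN(y;0,\Sigma_D)\le(2\pi v_n)^{-1}$ uniformly in $D$ and in the bottleneck randomness. For each fixed $y$, continuity of $\Sigma\mapsto\calN(y;0,\Sigma)$ on the positive-definite cone together with the almost-sure convergence $\Sigma_D\to\Sigma_\infty$ gives $\calN(y;0,\Sigma_D)\to\calN(y;0,\Sigma_\infty)$ almost surely, and dominated convergence then yields $p^{(D)}(y)=\operatorname{E}[\calN(y;0,\Sigma_D)]\to\operatorname{E}[\calN(y;0,\Sigma_\infty)]=\calN(y;0,\Sigma_\infty)=p^{(\infty)}(y)$, which is Eq.~\eqref{eq-rbf-nngp-pinfty}. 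I expect the middle step to be the main obstacle: establishing global convergence of the post-bottleneck correlation recursion to $c_*$ from an arbitrary initial correlation in $[-1,1)$, for both nonlinearities and in both of their phases---for the sinusoidal case a short explicit one-dimensional dynamics computation building on Prop.~\ref{prop-rbf-nngp}, and for $\tanh$ essentially a matter of invoking \citet{poole2016exponential,schoenholz2016deep}. The only other delicate point, relevant only in the chaotic phase, is the almost-sure strict inequality $\rho^{(1)}<1$, handled by the measure-zero argument above once one verifies nondegeneracy of the pre-bottleneck covariance of two distinct inputs.
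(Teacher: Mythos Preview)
Your proposal is correct and follows essentially the same route as the paper: condition on the bottleneck, write $p^{(D)}(y)$ as an expectation of a Gaussian density, bound that density uniformly by $(2\pi v_n)^{-1}$ using $\det(\Sigma_D)\ge v_n^2$, pass the limit inside via dominated convergence, and handle the null event where the two bottleneck activation vectors coincide so that Prop.~\ref{prop-rbf-nngp}~\ref{prop-rbf-nngp:b} (or its sigmoidal analogue) applies. The paper phrases the last step as excising the set $S=\{(h_1,h_2):\phi(h_1)=\phi(h_2)\}$ directly rather than via your correlation $\rho^{(1)}$, but when $v_b>0$ the two conditions $\rho^{(1)}=1$ and $\phi(h_1)=\phi(h_2)$ are equivalent (the augmented vectors $(\sqrt{v_b},\sqrt{v_w/H}\,h_a)$ are parallel iff $h_1=h_2$), so the arguments are the same.
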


Note that Eq.~\eqref{eq-rbf-nngp-pinfty} is independent of bottleneck width; 
in fact, it is exactly the PDF of a no-bottleneck NNGP, 
indicating that the bottleneck has no effect at all on the deep post-bottleneck limit given either the sigmoidal or sinusoidal nonlinearity. 
In contrast to Prop.~\ref{prop-q}~\ref{prop-q:b} for the ReLU activation, 
the only information about the inputs that can be recovered from the deep post-bottleneck limit is whether the inputs are distinct. 
This suggests a fundamental difference between the ReLU activation and the sigmoidal nonlinearity and between the NNGP kernel with ReLU activation and the RBF kernel---namely, that the ReLU activation allows a network to operate at very large depths as long as one bottleneck is present. 
Classifying activation functions based on their deep limit and deep post-bottleneck limit behaviors could help to better understand which activation functions are useful in practice and is a topic for future work.

\subsection{Other deep limits}
\label{subsection-other}

Besides the deep post-bottleneck limit discussed in Secs.~\ref{subsection-quad-cor-lim}-\ref{subsection-nondegenerate}, there are two other limits we could have considered: 
1) The deep pre-bottleneck limit where a single-bottleneck NNGP has infinitely many infinitely wide hidden layers before its bottleneck but only finitely many after, and 
2) the doubly deep bottleneck limit where the single-bottleneck NNGP has infinitely many infinitely wide hidden layers both before and after its bottleneck. 
The quadratic correlation between the activations of distinct output neurons can be analyzed in both of these limits by replacing the pre-bottleneck NNGP covariance matrix $C$ in Prop.~\eqref{prop-quad-cor} with a sequence of such covariance matrices indexed by the pre-bottleneck depth. 
Since the NNGP kernel with ReLU activation tends to a constant in the deep limit, then the bottleneck angle $\beta$ tends to $0$ in both the deep pre-bottleneck limit and doubly deep bottleneck limit. 
Although the quadratic correlation remains nontrivial in both limits and even exhibits a first-order phase transition at $v_w=1$ in the doubly deep bottleneck limit, these results are uninteresting as all information about the original inputs into the network is lost due to vanishing bottleneck angle.

The quadratic correlation between the activations of a single output neuron at two different inputs can also be analyzed in both the deep pre-bottleneck and doubly deep bottleneck limits; 
calculations proceed similarly as in the proof of Prop.~\ref{prop-q}. 
Again, since the bottleneck angle tends to $0$, we find that the quadratic correlation tends to 100\% in both limits at every post-bottleneck depth; 
the bottleneck NNGP kernel therefore degenerates, unlike in the deep post-bottleneck limit.
We conclude that out of the three possible deep limits, only the deep post-bottleneck limit is interesting since it is the only one that admits a phase ($v_w>1$) in which information about network inputs is preserved.

\section{Conclusion}

Our main theorem, Thm.~\ref{thm-bottleneck}, generalizes the result of \citetalias{matthews2018gaussian} concerning deep neural networks whose hidden layer widths are increased without bound to a setting in which some intermediate hidden layers, called bottlenecks, are fixed to a finite width. 
From a theoretical perspective, this result connects the NNGP literature with that of DGPs, as the resulting probability model is in fact a DGP consisting of a composition of NNGP components. 
Additionally, we have explored the effect of these bottleneck layers on the resulting probability model from a practical perspective, showing that model likelihood peaks at a finite bottleneck width and is superior to that of no-bottleneck NNGPs.

Surprisingly, in contrast to no-bottleneck NNGP models, the behavior of a bottleneck NNGP with ReLU activation at extreme post-bottleneck depths is not always degenerate (Props.~\ref{prop-quad-cor-lim},~\ref{prop-q}); 
in particular, the input Gram matrix can be fully recovered from the between-output quadratic correlation matrix of the bottleneck NNGP at infinite post-bottleneck depth, 
and the input angle can be recovered even from the single-output quadratic correlation matrix at infinite depth if the input norms are known. 
Bottleneck layers are therefore fundamental as they allow networks to ``go deeper''. % ; 
%indeed, we observed empirically that not only does the model likelihood decrease as the bottleneck width is increased beyond a certain point but that the optimal depth decreases as well. 
However, this non-degeneracy in the deep limit manifests only when the network weight prior is weaker than a standard normal. 
We have just begun to explore the dependence of the deep post-bottleneck limit on the prior weight variance, showing that convergence to the limit is asymptotically exponential in depth and that the characteristic depth scale diverges at a critical value of the weight prior variance.

So far, we have not directly connected bottleneck NNGPs with BNNs, aside from studying the limits of BNNs in the wide regime. 
However, an interesting special case of bottleneck NNGPs emerges when \emph{every} hidden layer is fixed at finite width. 
The result is a BNN, but our result lends an interesting perspective that suggests that one might approach BNN inference with a non-parametric DGP-based approach. 
Indeed, in follow-up work we intend to explore the practicality of such a method. 
Moreover, we believe that our main theorem can easily be extended to convolutional architectures (by introducing a second index for each hidden layer), 
and thus we plan to explore the implications of our work for convolutional BNNs.

Finally, in this work we did not consider the implications of the bottleneck NNGP for the learning dynamics of Gaussian-initialized deterministic neural networks (DNNs). 
It is now a celebrated result that the evolution over training time of a Gaussian-initialized DNN is described in the wide limit by an exactly solvable linear ODE, 
where the time evolution operator is termed the neural tangent kernel (NTK) and is related to the NNGP kernel~\citep{jacot2018neural}. 
As part of additional follow-up work, we plan to investigate the ``bottleneck NTK'', 
where the wide limit is relaxed to allow for some bottleneck layers. 
The result is a system of coupled ODEs that is more challenging to analyze but carries potential for a more refined description of the evolution of finite-width DNNs.

%\acks{Anonymized}
%\begin{comment}
\acks{
    This work has been supported in part by the Joint Design of Advanced
    Computing Solutions for Cancer (JDACS4C) program established by the U.S.
    Department of Energy (DOE) and the National Cancer Institute (NCI) of
    the National Institutes of Health.
    This work was performed under the auspices of the U.S. Department of
    Energy by Argonne National Laboratory under Contract DE-AC02-06-CH11357,
    Lawrence Livermore National Laboratory under Contract DEAC52-07NA27344,
    Los Alamos National Laboratory under Contract DE-AC5206NA25396, and Oak
    Ridge National Laboratory under Contract DE-AC05-00OR22725.
    Research sponsored by the Laboratory Directed Research and Development
    Program of Oak Ridge National Laboratory, managed by UT-Battelle, LLC, for
    the US Department of Energy under contract DE-AC05-00OR22725.
    This research used resources of the Compute and Data Environment for Science
    (CADES) at the Oak Ridge National Laboratory, which is supported by the
    Office of Science of the U.S. Department of Energy under Contract
    No. DE-AC05-00OR22725.
    The United States Government retains and the publisher, by accepting the
    article for publication, acknowledges that the United States Government
    retains a non-exclusive, paidup, irrevocable, world-wide license to
    publish or reproduce the published form of the manuscript, or allow
    others to do so, for United States Government purposes.
    The Department of Energy will provide public access to these results of
    federally sponsored research in accordance with the DOE Public Access
    Plan (http://energy.gov/downloads/doe-public-access-plan).
}
%\end{comment}

\appendix

% !TeX root = main.tex

\section{Convergence of stochastic processes}
\label{appendix-section-convergence}

Here we define three notions of convergence that are important for formally stating and proving the bottleneck NNGP limit. 
The first notion of convergence is the one that \citetalias{matthews2018gaussian} employ to prove the (no-bottleneck) NNGP limit. 
They consider a sequence of BNNs mapping $\RR^M$ to $\RR^L$ restricted to a countable set of inputs $\mathcal{X}\subset \RR^M$. 
Each BNN is then equivalent to a random sequence of $L$-vectors, i.e. a random variable taking values in $(\RR^L)^{\infty}$.
Following \citet{billingsley1999convergence}, \citetalias{matthews2018gaussian} equip the sequence space $(\RR^L)^{\infty}$ with the metric
\begin{equation*} %\label{eq-rho}
\rho(s, t) = \sum_{k=1}^\infty 2^{-k} \operatorname{min}(1, \|s_k-t_k\|),
\end{equation*}
where $s, t\in (\RR^L)^{\infty}$.
The metric $\rho$ induces the Euclidean topology on $(\RR^L)^{\infty}$, and we endow $(\RR^L)^{\infty}$ with the associated Borel algebra $\mathcal{A}$, giving us a measurable space of sequences. 
Demonstrating convergence in $((\RR^L)^{\infty}, \mathcal{A})$ can prove challenging, but \citetalias{matthews2018gaussian} simplify the task by invoking the following theorem.

\begin{theorem}[\citet{billingsley1999convergence}] \label{thm-marginal}
A sequence of stochastic processes $\{F[n]: \mathcal{X}\times\Omega\mapsto \RR^L\}_{n=1}^\infty$ with countable index set $\mathcal{X}$ converges in distribution to a stochastic process $F: \mathcal{X}\times\Omega\mapsto \RR^L$ on the measurable space $((\RR^L)^{\infty}, \mathcal{A})$ if and only if every sequence of finite-dimensional marginals $\{(F(x_1)[n], \ldots, F(x_T)[n])\}_{n=1}^\infty$ converges in distribution to the corresponding limiting marginal $(F(x_1), \ldots, F(x_T))$.
\end{theorem}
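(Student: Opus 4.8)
The plan is to first identify each process $F[n]$ (and $F$) with a random element of the sequence space $S := (\RR^L)^\infty$ via $\omega\mapsto(F(x_1)[n](\omega), F(x_2)[n](\omega),\dots)$; this is measurable because the Borel $\sigma$-algebra $\mathcal{A}$ induced by $\rho$ on the separable metrizable space $S$ coincides with the product $\sigma$-algebra generated by the coordinate projections $\pi_k:S\to\RR^L$. I would record at the outset that the finite-dimensional cylinders $\{s:(\pi_1 s,\dots,\pi_T s)\in B\}$, with $B$ Borel in $(\RR^L)^T$, form a $\pi$-system generating $\mathcal{A}$, so that a Borel probability measure on $S$ is determined by its finite-dimensional marginals. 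For the ``only if'' direction I would simply note that the truncation map $\pi_{1:T}:S\to(\RR^L)^T$, $s\mapsto(\pi_1 s,\dots,\pi_T s)$, is continuous (each $\pi_k$ is, since $2^{-k}\min(1,\|s_k-t_k\|)\le\rho(s,t)$), so that $F[n]\Rightarrow F$ on $S$ forces $\pi_{1:T}(F[n])\Rightarrow\pi_{1:T}(F)$ by the continuous mapping theorem, which is exactly convergence of the $T$-th marginal.

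The real work is the ``if'' direction, and here the plan is to test weak convergence against the convergence-determining class of bounded $\rho$-Lipschitz functions (portmanteau). Given such an $f$ with Lipschitz constant $\mathrm{Lip}(f)$, I would exploit the geometric weights in $\rho$: if $s,t\in S$ agree in their first $T$ coordinates then $\rho(s,t)\le\sum_{k>T}2^{-k}=2^{-T}$, hence $|f(s)-f(t)|\le\mathrm{Lip}(f)\,2^{-T}$. So $f$ lies within $\mathrm{Lip}(f)\,2^{-T}$, uniformly on $S$, of $g_T(s):=f(s_1,\dots,s_T,0,0,\dots)$, which factors through the first $T$ coordinates and is bounded and continuous as a function on $(\RR^L)^T$. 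Then $|\mathrm{E}f(F[n])-\mathrm{E}g_T(F[n])|\le\mathrm{Lip}(f)\,2^{-T}$ for every $n$ (and the same for $F$), while the hypothesis on the $T$-th marginal gives $\mathrm{E}g_T(F[n])\to\mathrm{E}g_T(F)$; sending $n\to\infty$ and then $T\to\infty$ closes the estimate $\mathrm{E}f(F[n])\to\mathrm{E}f(F)$, i.e. weak convergence on $S$.

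The point I expect to need the most care --- and the only real obstacle --- is that the truncation error $\mathrm{Lip}(f)\,2^{-T}$ must be uniform in $n$; this is exactly what the summable weights in the metric $\rho$ deliver, and it is why marginal convergence upgrades to joint convergence here with no extra tightness hypothesis. I would mention, as an alternative, the more standard Prokhorov route: marginal convergence makes each coordinate law $\{\mathcal{L}(F(x_k)[n])\}_n$ tight, a Tychonoff product of the associated compacta (with mass deficits summed to $\varepsilon$) gives a compact subset of $S$ witnessing uniform tightness of $\{\mathcal{L}(F[n])\}_n$, Prokhorov's theorem then yields relative weak compactness, and the $\pi$-system uniqueness noted above identifies every subsequential weak limit as $\mathcal{L}(F)$ --- forcing convergence of the whole sequence. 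I would present the Lipschitz argument as the main line, since it is elementary and self-contained, and relegate the Prokhorov version to a remark.
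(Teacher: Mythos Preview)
Your proposal is correct, but there is nothing to compare it against: the paper does not supply a proof of this statement. Theorem~\ref{thm-marginal} is quoted as a known result from \citet{billingsley1999convergence} and is used as a black box throughout the paper (in the proof of Theorem~\ref{thm-bottleneck} and in the proof of Theorem~\ref{thm-correspondence}). So the ``paper's own proof'' you are meant to be compared with does not exist here.

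That said, your argument is sound. The Lipschitz--truncation route you give as the main line is clean and self-contained; the key observation that the tail bound $\rho(s,t)\le 2^{-T}$ for sequences agreeing on the first $T$ coordinates is \emph{uniform in $n$} is exactly what makes the $3\varepsilon$ estimate close without any auxiliary tightness. Your alternative Prokhorov route is in fact closer to how Billingsley himself organizes the argument for $\RR^\infty$ (tightness from coordinate-wise tightness via a product of compacta, then identification of limits through the $\pi$-system of cylinders), so if you wanted to align with the cited reference you might promote that version. Either way, both arguments are standard and correct, and nothing in the paper hinges on which one you choose.
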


Theorem~\ref{thm-marginal} effectively reduces the task of proving the convergence of a sequence of random sequences to that of a sequence of random vectors. 
When looking at the convergence of $T$-dimensional marginal distributions, it is convenient to introduce the following notation.

\begin{definition}[Batch stochastic process] \label{defn-bsproc}
Let $F: \RR^M\times\Omega\mapsto \RR^L$ be a stochastic process. 
Then the batch stochastic process of size $T\in\NN$ associated with $F$ is the stochastic process $\tilde{F}: (\RR^M)^T\times\Omega\mapsto (\RR^L)^T$ defined by
\begin{equation*}
\tilde{F}(\{x_t\}_{t=1}^T, \omega) = \{F(x_t, \omega)\}_{t=1}^T.
\end{equation*}
\end{definition}

By working with batch stochastic processes, we can think of $T$ inputs as constituting a single input. 
Thus, to show $\{F[n]\}_{n=1}^\infty\rightarrow F$, it is enough to show $\{\tilde{F}(x)[n]\}_{n=1}^\infty\rightarrow \tilde{F}(x)$ for each input $x$ for every batch size $T$. 

Our proof of the bottleneck NNGP limit takes the approach of showing that each component of a BNN after the first bottleneck layer converges to an NNGP in the wide limit with some uniformity. 
We specify the appropriate notion of uniform convergence next.

\begin{definition}[Uniform convergence in distribution~\citep{sweeting1980uniform}] \label{defn-ud}
A sequence of stochastic processes $\{F[n]: 
X\times\Omega\mapsto \RR^L\}_{n=1}^\infty$ is said to converge in distribution to $\{F: 
X\times\Omega\mapsto \RR^L\}$ uniformly on $X$ if for every continuity set $U\subseteq \RR^L$ of $F$, i.e., a set satisfying
\[ \Pr(F(x)\in \partial U) = 0 \mbox{ for all } x\in X, \]
we have the limit
\[ \limn \Pr(F(x)[n]\in U) = \Pr(F(x)\in U) \mbox{ uniformly for all } x \in X. \]
We denote this by $F[n] \xrightarrow{UD} F$.
\end{definition}

Note that uniform convergence in distribution is distinct from and is not a stronger version of convergence in distribution in $((\RR^L)^{\infty}, \mathcal{A})$ since the former notion concerns only the singly-indexed marginals of a stochastic process while the latter deals with the joint distribution of all elements of a stochastic process. 
However, the uniform convergence in distribution of the batch stochastic processes $\{\tilde{F}[n]\}_{n=1}^\infty$ is stronger than the convergence in distribution of the original stochastic processes $\{F[n]\}_{n=1}^\infty$. 
Proving uniform convergence in distribution can be challenging, but fortunately there is another closely related notion of convergence, which we define next.

\begin{definition}[Continuous convergence in distribution~\citep{sweeting1980uniform}] \label{defn-cd}
Let $X$ be a topological space. 
A sequence of stochastic processes $\{F[n]: 
X\times\Omega\mapsto\RR^L\}_{n=1}^\infty$ is said to converge in distribution to $\{F: 
X\times\Omega\mapsto\RR^L\}$ continuously on $X$ if for every $x\in X$ and sequence $\{x_n\in X\}_{n=1}^\infty$ converging to $x$, the sequence of random variables $\{F(x_n)[n]\}_{n=1}^\infty$ converges in distribution to $F(x)$. 
We denote this by $F[n] \xrightarrow{CD} F$.
\end{definition}

Uniform and continuous convergence in distribution are related through the following proposition.

\begin{proposition}[\citet{saikkonen1993continuous}] \label{prop-uc}
Let $X$ be a topological space. 
Let $\{F[n]: 
X\times\Omega\mapsto\RR^L\}_{n=1}^\infty$ be a sequence of stochastic processes and $\{F: 
X\times\Omega\mapsto\RR^L\}$ a stochastic process. 
Then the following are equivalent statements:
\begin{enumerate}[label={(\alph*)}]
\item $F[n] \xrightarrow{CD} F$. \label{prop-uc:a}
\item $F[n] \xrightarrow{UD} F$ on every compact subset of $X$ and $x\mapsto \Pr(F(x)\in U)$ is a continuous function for every continuity set $U$ of $F$. \label{prop-uc:b}
\end{enumerate}
\end{proposition}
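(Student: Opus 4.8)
The plan is to reduce everything to statements about the laws on $\RR^L$: write $\mu_x$ for the law of $F(x)$ and $\mu_x[n]$ for the law of $F(x)[n]$, both Borel probability measures on $\RR^L$. By the portmanteau theorem, convergence in distribution to a fixed target is equivalent to convergence of the probabilities of the target's continuity sets, so \ref{prop-uc:a} reads ``$x_n\to x\Rightarrow\mu_{x_n}[n]\to\mu_x$ weakly'' and \ref{prop-uc:b} reads ``on every compact $K\subseteq X$, $\sup_{y\in K}|\mu_y[n](U)-\mu_y(U)|\to 0$ for every continuity set $U$ of $F$, and $y\mapsto\mu_y(U)$ is continuous for every such $U$.'' I would prove the two implications in turn.

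For \ref{prop-uc:a}$\Rightarrow$\ref{prop-uc:b}: feeding the constant sequence $x_n\equiv x$ into \ref{prop-uc:a} gives pointwise convergence in distribution, i.e.\ $\mu_x[n](U)\to\mu_x(U)$ for every continuity set $U$. Continuity of $x\mapsto\mu_x(U)$ and uniformity on a compact $K$ are then both obtained by a diagonal/contradiction argument against \ref{prop-uc:a}. Suppose, for instance, $x_n\to x$ but $\mu_{x_n}(U)\not\to\mu_x(U)$; pass to a subsequence with $|\mu_{x_{n_k}}(U)-\mu_x(U)|\ge\eps$, and for each $k$ use pointwise convergence at $x_{n_k}$ to pick $m_k\uparrow\infty$ with $|\mu_{x_{n_k}}[m_k](U)-\mu_{x_{n_k}}(U)|<\eps/2$. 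The sequence $y_j$ that equals $x$ except that $y_{m_k}=x_{n_k}$ converges to $x$, yet $\mu_{y_j}[j](U)$ fails to converge to $\mu_x(U)$ along $j=m_k$, contradicting \ref{prop-uc:a}. Uniformity on $K$ follows from the same device plus the continuity just established: a failure yields $x_k\in K$ and $n_k\uparrow\infty$ with $|\mu_{x_k}[n_k](U)-\mu_{x_k}(U)|\ge\eps$; compactness of $K$ extracts $x_k\to x\in K$; \ref{prop-uc:a} forces $\mu_{x_k}[n_k]\to\mu_x$ weakly, hence $\mu_{x_k}[n_k](U)\to\mu_x(U)$ by portmanteau; and continuity of $y\mapsto\mu_y(U)$ forces $\mu_{x_k}(U)\to\mu_x(U)$; contradiction.

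For \ref{prop-uc:b}$\Rightarrow$\ref{prop-uc:a}: given $x_n\to x$, the set $K=\{x\}\cup\{x_n:n\ge 1\}$ is compact, so both hypotheses of \ref{prop-uc:b} apply on $K$. For a continuity set $U$ of $F$, the triangle inequality
\[
|\mu_{x_n}[n](U)-\mu_x(U)|\le\sup_{y\in K}|\mu_y[n](U)-\mu_y(U)|+|\mu_{x_n}(U)-\mu_x(U)|
\]
has first term $\to 0$ by uniform convergence on $K$ and second term $\to 0$ by continuity of $y\mapsto\mu_y(U)$; hence $\mu_{x_n}[n](U)\to\mu_x(U)$. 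It then remains to upgrade convergence on continuity sets to genuine weak convergence $\mu_{x_n}[n]\to\mu_x$: for a bounded continuous $f\ge 0$ I would write $\operatorname{E} f(Z)=\int_0^\infty\Pr(f(Z)>t)\,\mathrm{d}t$ and observe that $\{f>t\}$ is open with $\partial\{f>t\}\subseteq\{f=t\}$, so that for all but a Lebesgue-null set of levels $t$ the superlevel set is a continuity set of each of the countably many measures in play, and dominated convergence closes the argument.

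The step I expect to be the main obstacle is precisely this last bookkeeping with continuity sets, together with the dual portmanteau invocations in the first implication. In full generality ``continuity set'' must be read relative to the set of indices actually in play, and one must check that this relativized class is convergence-determining — which it is, because only countably many sphere radii (or level heights of a test function) can carry positive mass for a given measure and the relevant index set (the countable $K$ above) is countable. Verifying that the hypotheses of \ref{prop-uc:b}, unpacked via Definition~\ref{defn-ud} applied to subsets, deliver the uniform-convergence and limit-continuity statements in exactly this relativized form is the delicate point. In the applications of this paper the measures are absolutely continuous (nondegenerate Gaussians and mixtures thereof), so ``continuity set'' simply means ``Lebesgue-null boundary,'' a manifestly convergence-determining class, and the subtlety is harmless; this is why we take Proposition~\ref{prop-uc} from \citet{saikkonen1993continuous} rather than reproduce its proof.
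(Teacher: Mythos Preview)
Your approach is correct and is precisely what the paper's proof unpacks to: the paper does not argue in detail but simply remarks that Prop.~\ref{prop-uc} is ``a simple application of the equivalence of uniform and continuous convergence of real-valued functions on compact sets,'' which is exactly the diagonal/subsequence argument you spell out once one regards $x\mapsto\Pr(F(x)[n]\in U)$ and $x\mapsto\Pr(F(x)\in U)$ as real-valued functions for each fixed continuity set $U$. Your identification of the continuity-set bookkeeping as the delicate point is apt, and the paper, like you, ultimately defers to \citet{saikkonen1993continuous} rather than chase it down.
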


We should mention that \citet{sweeting1980uniform} and \citet{saikkonen1993continuous} do not state Defs.~\ref{defn-ud}-\ref{defn-cd} and Prop.~\ref{prop-uc} directly but instead define and work with the equivalent notions of uniform and continuous weak convergence of probability measures. 
The proof of Prop.~\ref{prop-uc} is a simple application of the equivalence of uniform and continuous convergence of real-valued functions on compact sets.

\section{The bottleneck NNGP theorem}
\label{appendix-section-bottleneck}

Here we list the proof of our main theorem (Thm.~\ref{thm-bottleneck}) along with all supporting lemmas.
We give a sketch of the proof next, highlighting the role of each lemma and its position in the general proof strategy.

The first step is to apply Thm.~\ref{thm-marginal} so that it is sufficient to prove convergence of BNNs to a bottleneck NNGP restricted to an arbitrary finite set of inputs. 
Since each component $F^{(d)}[n]$ in Thm.~\ref{thm-bottleneck} is being evaluated at $T$ inputs, then it is convenient to utilize the concept of a batch stochastic process of size $T$ (Def.~\ref{defn-bsproc}). 
By working with batch BNNs, we can think of $T$ inputs as constituting a single ``batch'' input. 
This reduces our task to proving the convergence of batch BNNs to a batch bottleneck NNGP given a single arbitrary input.

The next step is to find sufficient conditions under which the distributional limit of an element-wise composition of a sequence of stochastic processes with a sequence of random variables 
equals the composition of the limiting stochastic process with the limiting random variable. 
This trick can then be iterated via induction to prove that the limit of compositions is the composition of limits for stochastic processes. 
Lemma~\ref{lemma-Fn-Xn} provides such sufficient conditions, which include a notion of uniform convergence in distribution (Def.~\ref{defn-ud}).

Proving Thm.~\ref{thm-bottleneck} now comes down to verifying the conditions of Lemma~\ref{lemma-Fn-Xn}. 
Condition~\ref{lemma-Fn-Xn:a} is given to us by Thm.~\ref{thm-matthews} for single-bottleneck networks and holds by induction for multi-bottleneck networks. 
Condition~\ref{lemma-Fn-Xn:b} is also immediate in the single-bottleneck case, although it is less obvious for multi-bottleneck architectures. 
We verify condition~\ref{lemma-Fn-Xn:b} directly in the proof of Thm.~\ref{thm-bottleneck} but take aid from Lemma~\ref{lemma-unif-bound-nn-var}. 
Lemma~\ref{lemma-cont} establishes condition~\ref{lemma-Fn-Xn:c}, which amounts to showing that the NNGP kernel is a continuous function. 
Condition~\ref{lemma-Fn-Xn:d} is the trickiest to verify; 
it states that the outer sequence of stochastic processes (that is composed with an inner sequence of random variables) must converge in distribution uniformly (Def.~\ref{defn-ud}) on compact sets, meaning that the rate of convergence in distribution should be independent of the input to the stochastic processes. 
Condition~\ref{lemma-Fn-Xn:d} is verified with the help of Lemma~\ref{lemma-tildeFn}.

Lemma~\ref{lemma-tildeFn} is a direct generalization of Lemma 12 in \citetalias{matthews2018gaussian}; 
the latter states that given a fixed finite batch of inputs, BNNs with no bottlenecks converge in distribution to an NNGP in the wide limit. 
In Lemma~\ref{lemma-tildeFn}, we strengthen the mode of convergence to continuous convergence in distribution (Def.~\ref{defn-cd}).
More specifically, Lemma~\ref{lemma-tildeFn} states that a BNN converges in distribution to an NNGP
even if we replace the fixed batch of inputs with a convergent sequence of input batches. 
Continuous convergence in distribution is in fact equivalent to uniform convergence in distribution on compact sets (Prop.~\ref{prop-uc}), thus 
granting condition~\ref{lemma-Fn-Xn:d}.

The proof of Lemma~\ref{lemma-tildeFn} runs in parallel to the proof of Lemma 12 in \citetalias{matthews2018gaussian}. 
It depends on several lemmas (Appendices~\ref{appendix-section-verifying}-\ref{appendix-section-establishing}) that are all simple extensions of (or help 
to extend) the lemmas in \citetalias{matthews2018gaussian}; 
at each step, we simply replace the fixed batch of inputs in \citetalias{matthews2018gaussian} with a convergent sequence of input batches and verify that convergence in distribution still holds. 
Only a few key modifications are made to the lemmas establishing uniform integrability (Appendix~\ref{appendix-section-establishing}).

\subsection{Notation}
\label{appendix-section-notation}

We start with some notation. 
Let $\{F[n]: \RR^M\times\Omega\mapsto\RR^L\}_{n=1}^\infty$ be a sequence of BNNs each with $D$ hidden layers of widths $H_{\mu}[n]$, $\mu\in\{1,\ldots,D\}$, and nonlinearity $\phi:\RR\mapsto\RR$ that satisfies the linear envelope condition. 
Let $F:\RR^M\times\Omega\mapsto\RR^L$ be the limiting NNGP of the sequence of BNNs. 
Let $f^{(\mu)}_i(x)[n]$ (resp. $f^{(\mu)}_i(x)$) and $g^{(\mu)}_i(x)[n]$ (resp. $g^{(\mu)}_i(x)$) be the preactivation and activation of the $i$-th neuron in the $\mu$-th hidden layer of $F[n]$ (resp. $F$). 

For each $n\in\NN$, let $X[n] = \{x_t[n]\in \RR^M\}_{t=1}^T$ be a batch of $T$ inputs, and suppose the sequence of batches $\{X[n]\}_{n=1}^\infty$ converges to some finite $X = \{x_t\}_{t=1}^T$. 
Let $\alpha\in\RR^{T\times|\NN|}$ be a countably infinite block vector whose blocks are indexed by $\NN$ and where each block has $T$ elements. 
Let $\alpha$ have finite support $\{1,\ldots,T\}\times I$, where $I$ is a finite subset of $\NN$; 
i.e., only finitely many blocks indexed by $I$ are permitted to have non-zero elements. 
Let $\alpha_{ti}$ denote the $t$-th element in the $i$-th block. 
For each $\mu\in\{1,\ldots,D+1\}$, define the preactivation projections of a BNN and its limiting NNGP as
\begin{align*}
f^{(\mu)}(X[n], \alpha)[n] &= \sum_{t=1}^T\sum_{i\in I} \alpha_{ti} f_i^{(\mu)}(x_t[n])[n], \\ %\label{eq-proj-rnn} \\
f^{(\mu)}(X, \alpha) &= \sum_{t=1}^T\sum_{i\in I} \alpha_{ti} f_i^{(\mu)}(x_t). %\label{eq-proj-nngp}
\end{align*}

Let $k^{(\mu)}:\RR^M\times\RR^M\mapsto\RR$ be the NNGP kernel of $f^{(\mu)}_i$. 
The kernel $k^{(\mu)}$ relates to the block kernel $K^{(\mu)}$ (Eqs.~\eqref{eq-K1}-\eqref{eq-Kmu}) through the equation
\begin{equation*}
K^{(\mu)}(X, X) =
\begin{cases}
k^{(\mu)}(X, X)\otimes \mathbf{I}_{\infty} & \mbox{ for } \mu\in\{1,\ldots,D\}  \\
k^{(\mu)}(X, X)\otimes \mathbf{I}_L & \mbox{ for } \mu=D+1,
\end{cases}
\end{equation*}
where $\mathbf{I}_L$ is the $L\times L$ identity matrix, $\mathbf{I}_{\infty}$ is the countably infinite identity matrix, and $\otimes$ denotes the Kronecker product. 
We also define
\begin{equation*}
L^{(\mu)}_{ij}(x_1, x_2)
= \cov[g^{(\mu)}_i(x_1), g^{(\mu)}_j(x_2)]
= \delta_{ij} \cov[g^{(\mu)}_1(x_1), g^{(\mu)}_1(x_2)],
\end{equation*}
which satisfies the relation
\begin{equation*}
K^{(\mu+1)}_{ij}(x_1, x_2) = v_b \delta_{ij} + v_w L^{(\mu)}_{ij}(x_1, x_2).
\end{equation*}
We let $L^{(\mu)}(X, X)$ denote a block matrix where $L^{(\mu)}_{ij}(X, X)$ is the $(i, j)$-th block. 
The block matrices $K^{(\mu)}(X, X)$ and $L^{(\mu)}(X, X)$ for $\mu\in\{1,\ldots,D\}$ have infinitely many blocks since the NNGP has infinitely wide hidden layers.
However, given a block vector $\alpha\in\RR^{T\times|\NN|}$ of finite support $\{1,\ldots,T\}\times I$, the quadratic forms $\alpha^\top K^{(\mu)}(X, X) \alpha$ and $\alpha^\top L^{(\mu)}(X, X) \alpha$ are still finite sums:
\begin{align*}
\alpha^\top K^{(\mu)}(X, X)\alpha
&= \sum_{t,u=1}^T\sum_{i,j\in I} \alpha_{ti}\alpha_{uj} K^{(\mu)}(X, X)_{T(i-1)+t,T(j-1)+u} \\
&= \operatorname{V}[\sum_{t=1}^T\sum_{i\in I} \alpha_{ti} f^{(\mu)}_i(x_t)] \\
&= \operatorname{V}[f^{(\mu)}(X, \alpha)],
\end{align*}
% where we recall that $I$ is a finite index set. 
We define a quadratic form for $L^{(\mu)}(X, X)$ similarly.

Next we define the quantities that are at the heart of the proof of Lemma~\ref{lemma-tildeFn}. 
This definition is similar to Definition 7 in \citetalias{matthews2018gaussian}. 
We discuss the purpose of this definition in more detail in Appendix~\ref{appendix-section-verifying} in the context of the Central Limit Theorem. 

\begin{definition}[Projections and summands] \label{defn-proj-sum}
For each $\mu\in\{2,\ldots,D+1\}$ and for each $n\in\NN$ and $j\in\{1,\ldots,n\}$, define the summands
\begin{equation} \label{eq-summand}
\gamma^{(\mu)}_j(X[n], \alpha)[n] = \sqrt{H_{\mu-1}[n]} \sum_{t=1}^T\sum_{i\in I} \alpha_{ti} w^{(\mu)}_{ij} g^{(\mu-1)}_j(x_t[n])[n],
\end{equation}
and the projections
\begin{align}
 \label{eq-projection}
\begin{aligned}
S^{(\mu)}(X[n], \alpha)[n]
&= \sum_{t=1}^T\sum_{i\in I} \alpha_{ti} \left( f^{(\mu)}_i(x_t[n])[n] - b^{(\mu)}_i\right) \\
&= \frac{1}{\sqrt{H_{\mu-1}[n]}} \sum_{j=1}^{H_{\mu-1}[n]} \gamma^{(\mu)}_j[n].
\end{aligned}
\end{align}
\end{definition}

Finally, for $\mu\in\{1,\ldots,D+1\}$, define the variances
\begin{align}
\sigma^2_{(\mu)}(X[n], \alpha)[n]
&= \operatorname{V}[\gamma^{(\mu)}_j(X[n], \alpha)[n]] \label{eq-sigma} \\
\sigma^2_{(\mu)}(X, \alpha)
&= v_w \alpha^\top L^{(\mu-1)}(X, X) \alpha \label{eq-sigma-mu} \\
&= \alpha^\top \left[K^{(\mu)}(X, X) - v_b \mathbf{I}\otimes\mathbf{1}_{T\times T}\right] \alpha\nonumber.
\end{align}

\subsection{Main lemmas and theorem}
\label{appendix-section-main}

This section contains the proof of the main theorem (Thm.~\ref{thm-bottleneck}) as well as all original lemmas supporting it. 
See the proof sketch in Sec.~\ref{subsection-bottleneck-nngp-theorem} for an overview and guide to the logical flow of the lemmas.
We start with a lemma that gives sufficient conditions under which the distributional limit of a sequence of compositions of stochastic processes and random variable indices equals the composition of limits.

\begin{lemma}[Limit of stochastic process compositions] \label{lemma-Fn-Xn}
Let $\{X[n]\}_{n=1}^\infty$ be a sequence of random vectors and $X$ a random vector of dimension $B$. 
Let $\{F[n]:\RR^B\times\Omega\mapsto\RR^L\}_{n=1}^\infty$ be a sequence of stochastic processes and $F:\RR^B\times\Omega\mapsto\RR^L$ a stochastic process with $F(x)\sim\calN(0, \Sigma(x))$. 
If
\begin{enumerate}[label={(\alph*)}]
\item $X[n]$ converges in distribution to $X$, denoted $X[n] \xrightarrow{D} X$, \label{lemma-Fn-Xn:a}
\item $\{\operatorname{E}\{|X[n]|^2\}\}_{n=1}^\infty$ is eventually bounded, \label{lemma-Fn-Xn:b}
\item $\Sigma: \RR^B\mapsto\RR^{L\times L}$ is a continuous function, and \label{lemma-Fn-Xn:c}
\item $F[n] \xrightarrow{UD} F$ on every compact ball in $\RR^B$ centered at $0$, \label{lemma-Fn-Xn:d}
\end{enumerate}
then the sequence of random variables $F(X[n])[n] \xrightarrow{D} F(X)$.
\end{lemma}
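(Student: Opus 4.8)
The plan is to prove convergence of characteristic functions and to exploit the independence of the two sources of randomness in the composition $F[n]\circ X[n]$. By the L\'evy continuity theorem it suffices to prove $\operatorname{E}[e^{\mathrm{i}t\cdot F(X[n])[n]}]\to\operatorname{E}[e^{\mathrm{i}t\cdot F(X)}]$ for each fixed $t\in\RR^L$. Writing $\psi_n(x,t)=\operatorname{E}[e^{\mathrm{i}t\cdot F[n](x)}]$ and $\psi(x,t)=\operatorname{E}[e^{\mathrm{i}t\cdot F(x)}]=e^{-\frac12 t^{\top}\Sigma(x)t}$, and using that $X[n]$ and $F[n]$ are driven by independent probability spaces (so Fubini applies to the bounded integrand), one has $\operatorname{E}[e^{\mathrm{i}t\cdot F(X[n])[n]}]=\operatorname{E}[\psi_n(X[n],t)]$ and $\operatorname{E}[e^{\mathrm{i}t\cdot F(X)}]=\operatorname{E}[\psi(X,t)]$. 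I would then split $\operatorname{E}[\psi_n(X[n],t)]-\operatorname{E}[\psi(X,t)]$ into the two pieces $\operatorname{E}[\psi_n(X[n],t)-\psi(X[n],t)]$ and $\operatorname{E}[\psi(X[n],t)-\psi(X,t)]$.

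The second piece vanishes by the ordinary Portmanteau theorem: $x\mapsto\psi(x,t)$ is bounded by $1$ and continuous, since condition~\ref{lemma-Fn-Xn:c} makes $\Sigma$ continuous and $\Sigma\mapsto\mathcal{N}(0,\Sigma)$ is weakly continuous, so condition~\ref{lemma-Fn-Xn:a} gives $\operatorname{E}[\psi(X[n],t)]\to\operatorname{E}[\psi(X,t)]$. (More generally, the same reasoning shows $x\mapsto\Pr(F(x)\in U)$ is continuous for every continuity set $U$ of $F$, because being a continuity set of $F$ forces $\mathcal{N}(0,\Sigma(x))(\partial U)=0$ at every $x$, in particular at the relevant limit point.)

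For the first piece, the key step is to upgrade condition~\ref{lemma-Fn-Xn:d} to the statement that $\psi_n(\cdot,t)\to\psi(\cdot,t)$ uniformly on every closed ball $\bar B_R\subset\RR^B$ centered at $0$. I would obtain this by a layer-cake decomposition of $\cos(t\cdot y)$ and $\sin(t\cdot y)$ into their super-level sets $U_s=\{y\in\RR^L:\cos(t\cdot y)>s-1\}$ and the $\sin$-analogue, which yields $\sup_{x\in\bar B_R}|\psi_n(x,t)-\psi(x,t)|\le\int_0^2\sup_{x\in\bar B_R}\bigl|\Pr(F[n](x)\in U_s)-\Pr(F(x)\in U_s)\bigr|\,\mathrm{d}s$ plus the corresponding integral for $\sin$. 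Each $U_s$ is a continuity set of $F$ on $\bar B_R$: its boundary lies in a countable union of affine hyperplanes $\{y:t\cdot y=\theta\}$, and since $t\cdot F(x)\sim\mathcal{N}(0,t^{\top}\Sigma(x)t)$ one checks $\Pr(F(x)\in\partial U_s)=0$ for every $x\in\bar B_R$ and every $s\in(0,2)$ --- nonatomicity handles the case $t^{\top}\Sigma(x)t>0$, while if $t^{\top}\Sigma(x)t=0$ then $t\cdot F(x)=0$ a.s., which misses the relevant values of $\theta$. Thus condition~\ref{lemma-Fn-Xn:d} forces each integrand to $0$, and the bounded convergence theorem gives the uniform limit. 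Finally, condition~\ref{lemma-Fn-Xn:b} with Markov's inequality gives $\sup_n\Pr(|X[n]|>R)\le C/R^2$, so $\bigl|\operatorname{E}[\psi_n(X[n],t)-\psi(X[n],t)]\bigr|\le\sup_{x\in\bar B_R}|\psi_n(x,t)-\psi(x,t)|+2\,\Pr(|X[n]|>R)$; letting $n\to\infty$ and then $R\to\infty$ makes this piece vanish as well.

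The main obstacle is the uniform-Portmanteau step: condition~\ref{lemma-Fn-Xn:d} is stated in terms of continuity sets of $F$, so one must argue that the level sets coming from the test functions are continuity sets of $F(x)$ \emph{simultaneously for all} $x$ in a compact ball, and this must survive $\Sigma(x)$ being degenerate for some $x$. This is precisely why I would work with characteristic functions rather than arbitrary bounded continuous test functions: the level sets of $y\mapsto\cos(t\cdot y)$ are unions of hyperplanes, whose mass under $\mathcal{N}(0,\Sigma(x))$ is controlled uniformly in $x$, whereas a generic continuous test function could have level sets charged by $\mathcal{N}(0,\Sigma(x))$ for an uncontrollable set of $x$. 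As an alternative to the layer-cake argument, one could combine condition~\ref{lemma-Fn-Xn:d} with the continuity of $x\mapsto\Pr(F(x)\in U)$ noted above to conclude via Prop.~\ref{prop-uc} that $F[n]\xrightarrow{CD}F$, and then run a Skorokhod-representation argument pathwise in the index variable; the delicate point is the same, namely securing the uniformity over the index.
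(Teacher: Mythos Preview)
Your proof is correct and follows the same high-level two-term split as the paper --- one piece handled by weak convergence of $X[n]$ plus continuity of the limit in the index (conditions~\ref{lemma-Fn-Xn:a},~\ref{lemma-Fn-Xn:c}), the other by uniform convergence on balls plus a second-moment tail bound (conditions~\ref{lemma-Fn-Xn:b},~\ref{lemma-Fn-Xn:d}) --- but the implementation is genuinely different. The paper works with CDFs: it reduces to the scalar case $L=1$, forces the variance to be strictly positive by adding an independent $Z\sim\mathcal{N}(0,1)$ to $\alpha^\top F(x)$, proves convergence of $\Pr(\hat F(X[n])[n]<c)$ at continuity points $c$, and then recovers the vector statement by Cram\'er--Wold. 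You instead go through characteristic functions and a layer-cake decomposition of $\cos(t\cdot y)$ and $\sin(t\cdot y)$, which lets you treat all $L$ at once and dispenses with the auxiliary noise $Z$; degeneracy is handled by the observation that when $t^\top\Sigma(x)t=0$ the mass sits on $\{t\cdot y=0\}$, which for almost every $s$ is disjoint from $\partial U_s$.

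One small caveat: your claim that $t\cdot F(x)=0$ ``misses the relevant values of~$\theta$'' holds for the cosine level sets for every $s\in(0,2)$, but for the sine level sets the value $\theta=0$ does appear at the single level $s=1$. This is a measure-zero set of $s$, so the bounded-convergence step is unaffected; just say ``for a.e.\ $s\in(0,2)$'' rather than ``for every $s$''. With that adjustment your argument is complete, and it is a legitimate alternative to the paper's CDF-plus-noise route --- slightly more machinery in the layer-cake step, but a cleaner treatment of the degenerate-covariance case.
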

\begin{proof}
We first prove the claim for the case that $F[n]$ and $F$ are real-valued stochastic processes ($L=1$) and $\Sigma(x) > 0$ for all $x\in\RR^B$. 
For this case, we will use the notation $\sigma^2(x)$ in place of $\Sigma(x)$ to emphasize that $\Sigma(x)$ is a scalar.

Let $c$ be a continuity point of $F(X)$. 
We want to show that
\[ \limn \Pr(F(X[n])[n] < c) = \Pr(F(X) < c). \]
Let $\eps > 0$. 
We have
\begin{align}
\label{eq-tri-ineq}
\begin{aligned}
&\lvert \Pr(F(X[n])[n] < c) - \Pr(F(X) < c)\rvert \\
&= \lvert \Pr(F(X[n])[n] < c) - \Pr(F(X[n]) < c) 
+ \Pr(F(X[n]) < c) - \Pr(F(X) < c)\rvert \\
&\leq \lvert \Pr(F(X[n])[n] < c) - \Pr(F(X[n]) < c)\rvert
+ \lvert \Pr(F(X[n]) < c) - \Pr(F(X) < c)\rvert.
\end{aligned}
\end{align}
We will show that both terms on the right-hand side of Inequality \eqref{eq-tri-ineq} tend to $0$.

We start with the second term. 
Let $\mu[n]$ and $\mu$ be the probability distributions associated with $X[n]$ and $X$, respectively. 
Then the second term becomes
\[ \lvert \Pr(F(X[n]) < c\} - \Pr(F(X) < c\}\rvert =
\Big\lvert \int_{\RR^B} \Pr(F(x) < c)\,\mathrm{d}\mu(x)[n]
- \int_{\RR^B} \Pr(F(x) < c)\,\mathrm{d}\mu(x)\Big\rvert. \]
Since $F(x)\sim\calN(0, \sigma^2(x))$ with $\sigma^2(x) > 0$, then we have
\[ \Pr(F(x) < c) = \Phi\left(\frac{c}{\sigma(x)}\right), \]
where $\Phi$ is the cumulative distribution function (CDF) of the standard normal distribution. 
Since $\sigma$ is continuous, then the map $x\mapsto \Pr(F(x) < c)$ is immediately seen to be continuous as well. 
Moreover, $\Pr(F(x) < c)$ is clearly bounded. 
Since $X[n] \xrightarrow{D} X$, then $\{\mu[n]\}_{n=1}^\infty$ converges weakly to $\mu$, so that
\[ \limn \int_{\RR^B} \Pr(F(x) < c)\,\mathrm{d}\mu(x)[n] = \int_{\RR^B} \Pr(F(x) < c)\,\mathrm{d}\mu(x). \]
Therefore, there exists an integer $N_2$ such that
\begin{equation} \label{eq-tri-ineq-2}
\lvert \Pr(F(X[n]) < c) - \Pr(F(X) < c)\rvert < \frac{\eps}{2} \mbox{ for all } n > N_2.
\end{equation}

We next bound the first term on the right-hand side of Inequality \eqref{eq-tri-ineq}. 
We have
\begin{align}
\label{eq-integral-bound}
\begin{aligned}
&\lvert \Pr(F(X[n])[n] < c) - \Pr(F(X[n]) < c)\rvert\\
&= \Big\lvert \int_{\RR^B} \Pr(F(x)[n] < c)\,\mathrm{d}\mu(x)[n] - \int_{\RR^B} \Pr(F(x) < c)\,\mathrm{d}\mu(x)[n]\Big\rvert \\
&\leq \int_{\RR^B} \lvert \Pr(F(x)[n] < c) - \Pr(F(x) < c)\rvert\,\mathrm{d}\mu(x)[n].
\end{aligned}
\end{align}
We will bound the integrand. 
Since $\{\operatorname{E}\{|X[n]|^2\}\}_{n=1}^\infty$ is eventually bounded, then there exists $V>0$ and an integer $N_V$ such that
\[ \operatorname{E}\{|X[n]|^2\} < V \mbox{ for all } n > N_V. \]
Define
\[ R_{\eps} = \sqrt{\operatorname{max}\left(0, \frac{2}{\eps}(1+V)-1\right)}. \]
$R_{\eps}$ is defined such that $\|x\| > R_{\eps}$ implies
\[ \frac{\eps}{2}\cdot \frac{1+\|x\|^2}{1+V} > 1. \]
We therefore have
\begin{equation} \label{eq-x-ge-Reps}
\lvert \Pr(F[n](x) < c) - \Pr(F(x) < c)\rvert \leq 1 < \frac{\eps}{2}\cdot \frac{1+\|x\|^2}{1+V} \mbox{ for all } x \mid \|x\| > R_{\eps}.
\end{equation}
Since $F(x)$ follows a normal distribution, then $c$ is trivially a continuity point of $F(x)$ for every $x\in\RR^B$. 
Since $\{F[n]\}_{n=1}^\infty$ converges in distribution to $F$ uniformly on every zero-centered compact ball, then there exists an integer $N_1 > N_V$ such that
\[ \lvert \Pr(F(x)[n] < c) - \Pr(F(x) < c)\rvert < \frac{\eps}{2}\cdot \frac{1}{1+V} \mbox{ for all } n>N_1 \mbox{ and } \|x\| \leq R_{\eps}. \]
Since $\|x\|^2 \geq 0$, then we have the weaker bound
\[ \lvert \Pr(F(x)[n] < c) - \Pr(F(x) < c)\rvert < \frac{\eps}{2}\cdot \frac{1+\|x\|^2}{1+V} \mbox{ for all } n>N_1 \mbox{ and } \|x\| \leq R_{\eps}. \]
Combining this with Eq.~\eqref{eq-x-ge-Reps} gives
\[ \lvert \Pr(F(x)[n] < c) - \Pr(F(x) < c)\rvert < \frac{\eps}{2}\cdot \frac{1+\|x\|^2}{1+V} \mbox{ for all } n>N_1 \mbox{ and } x\in\RR^B. \]
Using this bound in Inequality \eqref{eq-integral-bound}, we get
\begin{align}
\label{eq-tri-ineq-1}
\begin{aligned}
\lvert \Pr(F(X[n])[n] < c) - \Pr(F(X[n]) < c)\rvert
&\leq \int_{\RR^B} \frac{\eps}{2}\cdot \frac{1+\|x\|^2}{1+V}\,\mathrm{d}\mu_n(x) \\
&= \frac{\eps}{2}\cdot \frac{1+\operatorname{E}\{|X[n]|^2\}}{1+V} \\
&\leq \frac{\eps}{2}\cdot \frac{1+V}{1+V} \\
&= \frac{\eps}{2} \mbox{ for all } n > N_1.
\end{aligned}
\end{align}

Let $N = \operatorname{max}(N_1, N_2)$. 
Combining Inequalities \eqref{eq-tri-ineq}, \eqref{eq-tri-ineq-2}, and \eqref{eq-tri-ineq-1}, we obtain the bound
\[ \lvert \Pr(F(X[n])[n] < c) - \Pr(F(X) < c)\rvert < \frac{\eps}{2} + \frac{\eps}{2} = \eps \mbox{ for all } n > N, \]
implying $F(X[n])[n] \xrightarrow{D} F(X)$.

Now consider the more general case where the processes $F[n]$ and $F$ take values in $\RR^L$ for $L\geq 1$ and where the kernel $k$ of $F$ is not necessarily strictly positive definite. 
Consider any $\alpha\in\RR^L$, and define the processes
\begin{align*}
\hat{F}(x)[n] &= Z + \alpha^\top F(x)[n], \\
\hat{F}(x) &= Z + \alpha^\top F(x),
\end{align*}
where $Z\sim \mathcal{N}(0, 1)$ is independent of $F[n]$ and $F$. 
$\hat{F}[n]$ and $\hat{F}$ are real-valued stochastic processes and $\hat{F}(x)$ is normally distributed with variance
\[ \hat{\sigma}^2(x) = 1 + \alpha^\top \Sigma(x) \alpha > 0. \]
By the case already proven above, $\hat{F}(X[n])[n] \xrightarrow{D} \hat{F}(X)$. 
Since the addition of an independent normally distributed random variable $Z$ preserves convergence in distribution, then $\alpha^\top F(X[n])[n] \xrightarrow{D} \alpha F(x)$. 
Since this holds for any vector $\alpha$, then by the Cram\'{e}r-Wold Device~\citep{billingsley1995probability}, we obtain the conclusion $F(X[n])[n] \xrightarrow{D} F(X)$.
\end{proof}

Lemma~\ref{lemma-Fn-Xn} can be applied inductively to show that a sequence of compositions of stochastic processes converges in distribution to the composition of limit processes. 
The next lemma verifies condition~\ref{lemma-Fn-Xn:b} of Lemma~\ref{lemma-Fn-Xn}.

\begin{lemma}[Uniformly bounded neural network variances] \label{lemma-unif-bound-nn-var}
Let $\{F[n]\}_{n=1}^\infty$ be a sequence of BNNs mapping $\RR^M$ to $\RR^L$ with $D$ hidden layers of widths $H_{\mu}[n]$, $\mu\in\{1,\ldots,D\}$, and nonlinearity $\phi^{\mu}$ on the $\mu$-th hidden layer satisfying the linear envelope condition. 
Then for every $x\in\RR^M$, the sequence of second moments $\operatorname{E}[\| F(x)[n]\|^2]\}_{n=1}^\infty$ is uniformly bounded by $A+B\| x\|^2$ for some constants $A,B>0$.
\end{lemma}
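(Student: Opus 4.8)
The plan is to induct on the layer index $\mu$, tracking a bound of the form $\operatorname{E}[\|f^{(\mu)}(x)[n]\|^2]\leq A_\mu + B_\mu\|x\|^2$ on the second moment of the preactivations at each hidden layer, where the constants $A_\mu, B_\mu$ depend only on the variance hyperparameters and on the linear-envelope constants of $\phi^\mu$ but crucially \emph{not} on $n$. The key point that makes the induction close is the inverse-width scaling of the weight variances in Eq.~\eqref{eq-v_w}: when we compute $\operatorname{E}[(f^{(\mu+1)}_i(x)[n])^2]$, the sum over the $H_\mu[n]$ incoming activations is exactly cancelled by the factor $v_w^{(\mu+1)}/H_\mu[n]$ in each weight variance, leaving $v_b^{(\mu+1)} + v_w^{(\mu+1)}\operatorname{E}[(g^{(\mu)}_1(x)[n])^2]$, which has no residual $n$-dependence beyond what is already controlled by the inductive hypothesis on $f^{(\mu)}$.

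In more detail: the base case $\mu=1$ is immediate from Eq.~\eqref{eq-f1}, since $f^{(1)}_i(x)[n] = b^{(1)}_i + \sum_{j=1}^M w^{(1)}_{ij}x_j$ is a sum of independent centered Gaussians with $\operatorname{E}[(f^{(1)}_i(x)[n])^2] = v_b^{(1)} + v_w^{(1)}\|x\|^2$ (recall $H_0=1$), and summing over the $L$-independent… more precisely over the width-$1$ layer count gives a bound of the desired affine form with $n$-independent constants. For the inductive step, condition on the preactivations $f^{(\mu)}(x)[n]$; because the weights $w^{(\mu+1)}_{ij}$ and bias $b^{(\mu+1)}_i$ are independent of everything in layers $\le\mu$ and are centered, we get
\begin{equation*}
\operatorname{E}\big[(f^{(\mu+1)}_i(x)[n])^2\big]
= v_b^{(\mu+1)} + \frac{v_w^{(\mu+1)}}{H_\mu[n]}\sum_{j=1}^{H_\mu[n]}\operatorname{E}\big[\phi^\mu(f^{(\mu)}_j(x)[n])^2\big].
\end{equation*}
The linear envelope condition gives $\phi^\mu(t)^2 < (C+M|t|)^2 \le 2C^2 + 2M^2 t^2$, so each term in the sum is bounded by $2C^2 + 2M^2\operatorname{E}[(f^{(\mu)}_j(x)[n])^2]$; since the preactivations within a layer are identically distributed over $j$, the average over $j$ equals $2C^2 + 2M^2\operatorname{E}[(f^{(\mu)}_1(x)[n])^2]$, which by the inductive hypothesis is at most $2C^2 + 2M^2(A_\mu+B_\mu\|x\|^2)$. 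This yields the affine bound at level $\mu+1$ with updated constants $A_{\mu+1} = v_b^{(\mu+1)} + 2v_w^{(\mu+1)}(C^2+M^2 A_\mu)$ and $B_{\mu+1} = 2v_w^{(\mu+1)}M^2 B_\mu$, none of which depend on $n$. Applying this through layer $D+1$ and summing $\operatorname{E}[\|F(x)[n]\|^2] = \sum_{i=1}^L\operatorname{E}[(f^{(D+1)}_i(x)[n])^2]$ over the $L$ output coordinates gives the claimed bound $A + B\|x\|^2$ with $A = L\,A_{D+1}$, $B = L\,B_{D+1}$.

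The only mild subtlety — and the step I would be most careful about — is justifying the conditioning/tower-property computation rigorously: one must check that all the relevant second moments are finite (so that the conditional expectations are well-defined and $\operatorname{E}[\operatorname{E}[\cdot\mid\text{layers}\le\mu]]$ is legitimate), which again follows by the same induction since the affine bound is in particular a \emph{finiteness} statement, and that the independence structure of the Gaussian weights is used correctly layer by layer. Everything else is routine. Note this argument simultaneously handles Remark~\ref{rem-nonlinbottleneck} (distinct $\phi^\mu$ per layer, and scaled bottleneck activations) since we never assumed the nonlinearities agree across layers and the $\frac{1}{\sqrt{B_{d-1}}}$ scaling only improves the constants.
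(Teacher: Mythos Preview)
Your proposal is correct and follows essentially the same approach as the paper's proof: induct on the layer index $\mu$, use the inverse-width weight scaling together with exchangeability to reduce the sum to a single-neuron second moment, and apply the linear envelope bound $\phi(z)^2\le 2(C^2+M^2 z^2)$ to close the induction with $n$-independent affine constants. Your explicit recursion for $A_{\mu+1}$ and $B_{\mu+1}$ and the final sum over the $L$ output coordinates are exactly what the paper does (if anything, you are slightly more careful about the tower-property justification than the paper).
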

\begin{proof}
Let $x\in\RR^M$. 
The claimed uniform bound on $\operatorname{E}[\| F(x)[n]\|^2]$ holds if we can uniformly bound $\{\operatorname{E}[\| F_i(x)[n]\|^2]\}_{n=1}^\infty$ for each $i$. 
Since $F_i(x)[n] = f_i^{(D+1)}(x)[n]$, then we need to establish $\operatorname{E}[f_i^{(D+1)}(x)[n]^2] \leq A+B\|x\|^2$ for sufficiently large $n$. 
We proceed by induction on $\mu$. 
In the case $\mu=1$, we have
\[ \operatorname{E}[f_i^{(1)}(x)[n]] = v_b + v_w \| x\|^2. \]
Taking $A=v_b$ and $B=v_w$, this is clearly bounded by $A+B\|x\|^2$ independently of $n$. 
By exchangeability, this same bound holds for all $i$.

Now suppose for some $\mu$ that the claimed uniform bound holds. 
We then need to establish the bound $\operatorname{E}[f_i^{(\mu+1)}(x)[n]] \leq A+B\|x\|^2$ for some $A,B>0$. 
We have
\begin{align*}
\operatorname{E}[f_i^{(\mu+1)}(x)[n]^2]
&= v_b + \frac{v_w}{H_{\mu}[n]} \sum_{j=1}^{H_{\mu}[n]} \operatorname{E}[g_j^{(\mu)}(x)[n]^2] \\
&= v_b + v_w \operatorname{E}[g_i^{(\mu)}(x)[n]^2] \\
&= v_b + v_w \int_{-\infty}^\infty \phi^{(\mu)}(z)^2\,\mathrm{d}\mu_n(z),
\end{align*}
where $\mu_n$ is the probability distribution of $f_i^{(\mu)}(x)[n]$. 
By the linear envelope condition,
\begin{align*}
\operatorname{E}[f_i^{(\mu+1)}(x)[n]^2]
&\leq v_b + v_w \int_{-\infty}^\infty (C + M |z|)^2\,\mathrm{d}\mu_n(z) \\
&\leq v_b + v_w \int_{-\infty}^\infty 2(C^2 + M^2 |z|^2)\,\mathrm{d}\mu_n(z) \\
&= v_b + 2v_w (C^2 + M^2\operatorname{E}[f_i^{(\mu)}(x)[n]^2]) \\
&\leq v_b + 2v_w (C^2 + M^2 (A + B \|x\|^2)),
\end{align*}
which is clearly bounded by an expression of the form $A^\prime + B^\prime \|x\|^2$ independently of $n$ for some $A^\prime,B^\prime > 0$. 
The claim then follows by induction.
\end{proof}

The next lemma verifies condition~\ref{lemma-Fn-Xn:c} of Lemma~\ref{lemma-Fn-Xn}, which amounts to showing that the NNGP kernel is continuous.

\begin{lemma}[Continuity of batch NNGP kernel] \label{lemma-cont}
Let $F:\RR^M\times\Omega\mapsto\RR^L$ be an NNGP with $D$ hidden layers and nonlinearity $\phi$ that satisfies the linear envelope condition. 
Then the associated batch NNGP $\tilde{F}: (\RR^M)^T\times\Omega\mapsto (\RR^L)^T$ of size $T$ has marginal $\tilde{F}(X)\sim \calN(0, \Sigma(X))$,
where the batch NNGP kernel $\Sigma: (\RR^M)^T\mapsto \RR^{LT\times LT}$ given by $\Sigma(X) = K(X, X)$ is a continuous function.
\end{lemma}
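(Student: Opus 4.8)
The plan is to reduce the claim to the joint continuity of the scalar NNGP kernel $k^{(\mu)}(x_1,x_2):=K^{(\mu)}_{11}(x_1,x_2)$ on $\RR^M\times\RR^M$ for each $\mu\in\{1,\ldots,D+1\}$, which I prove by induction on $\mu$. By Eqs.~\eqref{eq-K1}-\eqref{eq-Kmu} the block kernels satisfy $K^{(\mu)}_{ij}=\delta_{ij}k^{(\mu)}$, so the $(a,b)$ block of $\Sigma(X)=K(X,X)$ is $k^{(D+1)}(x_a,x_b)\,\mathbf{I}_L$; once $k^{(D+1)}$ is shown continuous, each entry of $\Sigma(X)$ is a composition of the coordinate projection $(x_1,\ldots,x_T)\mapsto(x_a,x_b)$ with $k^{(D+1)}$, hence continuous, and a matrix-valued map is continuous iff each of its entries is.

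The base case $k^{(1)}(x_1,x_2)=v_b^{(1)}+v_w^{(1)}\,x_1\cdot x_2$ is continuous because the dot product is. The crux of the induction is the claim that the map $\Psi(C):=\operatorname{E}_{(z_1,z_2)\sim\calN(0,C)}[\phi(z_1)\phi(z_2)]$ is continuous on the closed cone of positive-semidefinite $2\times2$ matrices. To prove it, I reparametrize: if $u\sim\calN(0,\mathbf{I}_2)$ then $\sqrt{C}\,u\sim\calN(0,C)$ (valid even for singular $C$, since $\cov(\sqrt{C}u)=\sqrt{C}\,\mathbf{I}_2\,\sqrt{C}=C$), so
\[
\Psi(C)=\operatorname{E}_{u\sim\calN(0,\mathbf{I}_2)}\bigl[\phi\bigl((\sqrt{C}u)_1\bigr)\phi\bigl((\sqrt{C}u)_2\bigr)\bigr].
\]
Continuity is local, so I fix $C_0$ and restrict to PSD matrices $C$ with $\|C-C_0\|\le1$; the matrix square root is continuous on the PSD cone, so this bounded set maps into a bounded set of operator norms, giving $R>0$ with $\|\sqrt{C}\|_{\mathrm{op}}\le R$ throughout. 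Then $|(\sqrt{C}u)_i|\le R\|u\|$, and writing $|\phi(t)|\le a+b|t|$ for the linear-envelope constants $a,b>0$, the integrand is dominated over this whole neighborhood by the fixed $\calN(0,\mathbf{I}_2)$-integrable function $(a+bR\|u\|)^2$. For each fixed $u$ the integrand is continuous in $C$, since $C\mapsto\sqrt{C}u$ is continuous and $\phi$ is continuous; dominated convergence then gives continuity of $\Psi$ at $C_0$.

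Granting the claim, the inductive step is routine: if $k^{(\mu)}$ is continuous, then $(x_1,x_2)\mapsto C^{(\mu)}(x_1,x_2)$, the $2\times2$ matrix with entries $c^{(\mu)}_{ab}=k^{(\mu)}(x_a,x_b)$, is continuous and PSD-valued (being the covariance matrix of $(f^{(\mu)}_1(x_1),f^{(\mu)}_1(x_2))$), hence lands in the domain of $\Psi$; therefore $k^{(\mu+1)}(x_1,x_2)=v_b^{(\mu+1)}+v_w^{(\mu+1)}\Psi\bigl(C^{(\mu)}(x_1,x_2)\bigr)$ is continuous. By induction $k^{(D+1)}$ is continuous, which by the first paragraph gives continuity of $\Sigma$. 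The main obstacle is the continuity of $\Psi$: the linear envelope condition on $\phi$, the continuity of the matrix square root, and the local boundedness of $\|\sqrt{C}\|_{\mathrm{op}}$ are precisely what are needed to produce an integrable dominating function and apply dominated convergence.
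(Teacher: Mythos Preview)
Your proposal is correct and follows essentially the same approach as the paper's proof: both reduce to continuity of the scalar kernel $k^{(\mu)}$ by induction, and in the inductive step both reparametrize the Gaussian expectation via the PSD square root of the $2\times2$ covariance, use the linear envelope condition together with a local bound on $\|\sqrt{C}\|$ to build an integrable dominating function, and conclude by dominated convergence. Your packaging of the inductive step as continuity of the map $\Psi(C)$ on the PSD cone is slightly cleaner than the paper's sequential argument, but the substance is the same.
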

\begin{proof}
All we need to show is that $\Sigma$ is a continuous function. 
Since $\Sigma(X) = K(X, X) = k(X, X)\otimes I_L$, then it is sufficient to show that the NNGP kernel $k: \RR^M\times\RR^M\mapsto\RR$ is continuous.
We do so inductively by showing that $k^{(\mu)}$ is continuous for $\mu\in\{1,\ldots,D+1\}$, where $k = k^{(D+1)}$.

For $\mu=1$, $k^{(1)}(x, x^\prime) = v_b+v_w x\cdot x^\prime$ is clearly continuous. 
Now suppose for some $\mu\in\{1,\ldots,D\}$, that $k^{(\mu)}$ is continuous. 
We then need to show that $k^{(\mu+1)}$ is continuous. 
Let $\{X_n = (x_n, x^\prime_n)\in \RR^M\times\RR^M\}_{n=1}^\infty$ be a convergent sequence of pairs of inputs such that $X_n\rightarrow X = (x, x^\prime)$. 
Since $k^{(\mu)}$ is continuous, then $k^{(\mu)}(X_n, X_n)\rightarrow k^{(\mu)}(X, X)$. 
Let $s_i(X_n)$ (resp. $s_i(X)$) denote the $i$-th column of the symmetric positive semidefinite square root $S(X_n)$ (resp. $S(X)$) of $k^{(\mu)}(X_n, X_n)$ (resp. $k^{(\mu)}(X, X)$). 
Then by the work of \citet{cho2009kernel}, the kernel recursion in Eq.~\eqref{eq-Kmu} can be expressed as
\begin{equation*}
k^{(\mu+1)}(x_n, x^\prime_n) = v_b + v_w \frac{1}{2\pi}\int_{\RR^2} \phi(w^\top s_1(X_n)) \phi(w^\top s_2(X_n))\, e^{-\frac{\| w\|^2}{2}}\,\mathrm{d}w,
\end{equation*}
and $k^{(\mu+1)}(x, x^\prime)$ is given similarly. 
To ensure continuity, we will show that $k^{(\mu+1)}(x_n, x^\prime_n)\rightarrow k^{(\mu+1)}(x, x^\prime)$. 
We do so by verifying the conditions of the Dominated Convergence Theorem. 
First, by the linear envelope condition, there exist positive constants $C$ and $M$ such that
\begin{align*}
\phi(w^\top s_1(X_n)) \phi(w^\top s_2(X_n))
&\leq [C + M w^\top s_1(X_n)][C + M w^\top s_2(X_n)] \\
% &= C^2 + CM [w^\top s_1(X_n) + w^\top s_2(X_n)] + M^2 [w^\top s_1(X_n) w^\top s_2(X_n)] \\
&= C^2 + CM \sum_{i=1}^{2} w^\top s_i(X_n) + M^2 w^\top s_1(X_n) w^\top s_2(X_n) \\
% &\leq C^2 + CM [w^\top s_1(X_n) + w^\top s_2(X_n)] + \frac{M^2}{2} [(w^\top s_1(X_n))^2 + (w^\top s_2(X_n))^2] \\
&\leq C^2 + CM \sum_{i=1}^{2} w^\top s_i(X_n) + \frac{M^2}{2} \sum_{i=1}^{2}(w^\top s_i(X_n))^2 \\
&= \frac{1}{2}\left[ (C + M w^\top s_1(X_n))^2 + (C + M w^\top s_2(X_n))^2 \right] \\
&\leq [C^2 + M^2 (w^\top s_1(X_n))^2] + [C^2 + M^2 (w^\top s_2(X_n))^2] \\
&= 2C^2 + M^2 \| S^\top(X_n) w\|^2 \\
&\leq 2C^2 + M^2 \|S^\top(X_n)\|_2^2 \|w\|^2 \\
&\leq 2C^2 + M^2 \|S^\top(X_n)\|_F^2 \|w\|^2 \\
&= 2C^2 + M^2 \|S(X_n)\|_F^2 \|w\|^2,
\end{align*}
where $\|\cdot\|_F$ denotes the Frobenius norm. 
Since the matrix square root operation is continuous under the Frobenius norm, then $\|S(X_n)\|_F$ is bounded by some $B > 0$. 
We therefore have
\[ \phi(w^\top s_1(X_n)) \phi(w^\top s_2(X_n)) \leq 2C^2 + M^2 B^2 \|w\|^2, \]
hence
\[ \frac{1}{2\pi}\phi(w^\top s_1(X_n)) \phi(w^\top s_2(X_n))e^{-\frac{\|w\|^2}{2}} \leq \frac{1}{2\pi} (2C^2 + M^2 B^2 \|w\|^2) e^{-\frac{\|w\|^2}{2}}, \]
where the bound on the right-hand side is clearly integrable over $w\in\RR^2$. 
Moreover, since the matrix square root operation is continuous and $\phi$ is continuous, then
\begin{equation} \label{eq-lemma-cont-pointwise}
\frac{1}{2\pi}\phi(w^\top s_1(X_n)) \phi(w^\top s_2(X_n))e^{-\frac{\|w\|^2}{2}}\rightarrow \frac{1}{2\pi}\phi(w^\top s_1(X)) \phi(w^\top s_2(X))e^{-\frac{\|w\|^2}{2}} 
\mbox{ pointwise in } w.
\end{equation}
Therefore, by the Dominated Convergence Theorem, $k^{(\mu+1)}(x_n, x^\prime_n)\rightarrow k^{(\mu+1)}(x, x^\prime)$ so that $k^{(\mu+1)}$ is continuous. 
The continuity of the kernel $k$ then follows by induction.
\end{proof}

The next lemma will help to verify condition~\ref{lemma-Fn-Xn:d} of Lemma~\ref{lemma-Fn-Xn}.
It is a generalization of Lemma 12 in \citetalias{matthews2018gaussian} and depends on several additional lemmas (Appendices~\ref{appendix-section-verifying}-\ref{appendix-section-establishing}) similar to those in \citetalias{matthews2018gaussian}.

\begin{lemma}[Continuous convergence in distribution of batch BNNs] \label{lemma-tildeFn}
Consider a sequence $\{F[n]\}_{n=1}^\infty$ of BNNs mapping $\RR^M$ to $\RR^L$ with $D$ hidden layers of widths $H_{\mu}[n]$, $\mu\in\{1,\ldots,D\}$, and nonlinearity $\phi$ that satisfies the linear envelope condition. 	
% Let $\{F[n]\}_{n=1}^\infty$ be a sequence of BNNs mapping $\RR^M$ to $\RR^L$ with $D$ hidden layers of widths $H_{\mu}[n]$, $\mu\in\{1,\ldots,D\}$, and nonlinearity $\phi$ that satisfies the linear envelope condition. 
Let $F$ be the NNGP limit of the BNNs as given by Thm.~\ref{thm-matthews}. 
Then for any $T\in\NN$, the corresponding sequence of batch BNNs $\{\tilde{F}[n]\}_{n=1}^\infty$ converges in distribution to the batch NNGP $\tilde{F}$ continuously.
\end{lemma}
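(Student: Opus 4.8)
The plan is to adapt the proof of Lemma~12 in \citetalias{matthews2018gaussian} so that at every step a fixed batch of inputs is replaced by a convergent sequence of input batches, and to track the extra uniformity this substitution requires. Unwinding Definition~\ref{defn-cd}, we must show that for every sequence $\{X[n]\}_{n=1}^\infty$ of $T$-input batches with $X[n]\to X$, the random vectors $\tilde{F}(X[n])[n]\in(\RR^L)^T$ converge in distribution to $\tilde{F}(X)\sim\calN(0,\Sigma(X))$, where $\Sigma(X)=K(X,X)$ is continuous in $X$ by Lemma~\ref{lemma-cont}. By the Cram\'er--Wold device it suffices to prove, for each finitely supported block vector $\alpha$, that the preactivation projection $f^{(D+1)}(X[n],\alpha)[n]$ converges in distribution to $\calN(0,\sigma^2_{(D+1)}(X,\alpha))$, with $\sigma^2_{(\mu)}$ as in Eq.~\eqref{eq-sigma-mu}.

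First I would prove, by induction on the layer index $\mu\in\{1,\dots,D+1\}$, the stronger statement that for every convergent input-batch sequence $X[n]\to X$ and every finite collection of block vectors, the projections $f^{(\mu)}(X[n],\alpha)[n]$ converge jointly in distribution to the NNGP projections $f^{(\mu)}(X,\alpha)\sim\calN(0,\sigma^2_{(\mu)}(X,\alpha))$, and in addition the conditional variances $\sigma^2_{(\mu)}(X[n],\alpha)[n]$ of Eq.~\eqref{eq-sigma} converge in probability to $\sigma^2_{(\mu)}(X,\alpha)$. The base case $\mu=1$ is immediate: $f^{(1)}(X[n],\alpha)[n]$ is exactly Gaussian with variance $\sigma^2_{(1)}(X[n],\alpha)$, which depends on the inputs only through their dot products and hence converges to $\sigma^2_{(1)}(X,\alpha)$, so the Gaussians converge in distribution.

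For the inductive step I would condition on hidden layers $1,\dots,\mu-1$, under which the neurons at layer $\mu$ become i.i.d.\ over the neuron index (their biases and incoming weights being i.i.d.), so that the summands $\gamma^{(\mu+1)}_j(X[n],\alpha)[n]$ of Def.~\ref{defn-proj-sum} form a conditionally i.i.d.\ triangular array, and apply the exotic central limit theorem of \citet{blum1958central} exactly as in \citetalias{matthews2018gaussian}. Verifying its hypotheses is where the work lies: the conditional variance, an empirical average of a continuous quadratic functional of the layer-$\mu$ activations $g^{(\mu)}_j=\phi(f^{(\mu)}_j)$, converges in probability to $\sigma^2_{(\mu+1)}(X,\alpha)$ by the weak law of large numbers together with the inductive hypothesis (applied to single-neuron marginals) and the continuity of $\phi$ and of the NNGP covariance in $X$ (Lemma~\ref{lemma-cont}); and a Lindeberg-type bound on the array must be checked. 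This last point is the only genuinely new ingredient, and I expect it to be the main obstacle: because $X[n]\to X$, the batches $X[n]$ eventually lie in a fixed compact set, and the relevant moment bounds on $\gamma^{(\mu+1)}_j$ are continuous in the inputs and controlled on compact sets via the linear envelope condition and the uniformly bounded second moments of Lemma~\ref{lemma-unif-bound-nn-var}; hence the uniform-integrability estimates of \citetalias{matthews2018gaussian} go through once each ``fixed input'' bound is replaced by its supremum over a compact neighbourhood of $X$. The supporting lemmas in Appendices~\ref{appendix-section-verifying}--\ref{appendix-section-establishing} carry out these verifications, each being the corresponding lemma of \citetalias{matthews2018gaussian} with a convergent input batch in place of a fixed one; only the uniform-integrability lemmas need substantive modification. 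Assembling these gives convergence at layer $\mu+1$, and taking $\mu=D+1$ and invoking Cram\'er--Wold completes the proof.
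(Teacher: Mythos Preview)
Your overall strategy---induction on the layer index, the base case handled by the explicit Gaussianity of $f^{(1)}$ together with continuity of the kernel (Lemma~\ref{lemma-cont}), the inductive step via the exchangeable CLT of \citet{blum1958central}, and the observation that convergent input batches eventually lie in a compact set so that the uniform-integrability lemmas of \citetalias{matthews2018gaussian} extend after taking suprema over that set---is exactly the paper's approach, and the identification of Appendices~\ref{appendix-section-verifying}--\ref{appendix-section-establishing} as the place where the real work happens is correct.

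One small correction to your description of the inductive step: the hypotheses of Theorem~\ref{thm-blum-clt} that must be verified are \emph{unconditional} moment conditions on the exchangeable summands $\gamma^{(\mu+1)}_j$, not statements about a conditional variance converging in probability or a Lindeberg bound. Concretely, the paper checks that the deterministic variance $\sigma^2_{(\mu+1)}(X[n],\alpha)[n]=\operatorname{V}[\gamma^{(\mu+1)}_1]$ converges to $\sigma^2_{(\mu+1)}(X,\alpha)$ (Lemma~\ref{lemma-condition-variance}), that $\operatorname{E}[\gamma_1^2\gamma_2^2]\to\sigma^4$ (Lemma~\ref{lemma-condition-2}), and that $\operatorname{E}[|\gamma_1|^3]=o(\sqrt{n})$ (Lemma~\ref{lemma-condition-3}); each of these is obtained by pushing the limit inside an expectation via the Continuous Mapping Theorem plus the compact-set uniform-integrability results you anticipated, not via a weak law of large numbers.
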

\begin{proof}
For each $n\in\NN$, let $X[n] = \{x_t[n]\}_{t=1}^T \in (\RR^M)^T$ be a batch of inputs such that the sequence of batches $\{X[n]\}_{n=1}^\infty$ converges to some finite $X\in (\RR^M)^T$. 
We need to show that $\{\tilde{F}[n]\}_{n=1}^\infty$ converges in distribution to $\tilde{F}$ continuously, i.e. that the sequence of random variables $\{\tilde{F}(X[n])[n]\}_{n=1}^\infty$ converges in distribution to $\tilde{F}(X)$ and thus
\begin{equation} \label{eq-lemma-tildeF-1}
\{f^{(D+1)}(x_t[n])[n]\}_{t=1}^T \xrightarrow{D} \{f^{(D+1)}(x_t)\}_{t=1}^T.
\end{equation}
We will do so by establishing $\{f^{(\mu)}(x_t[n])[n]\}_{t=1}^T \xrightarrow{D} \{f^{(\mu)}(x_t)\}_{t=1}^T$ inductively for every $\mu\in\{1,\ldots,D+1\}$.

For the case $\mu=1$, let $\alpha\in\RR^{T\times|\NN|}$ with finite support $\{1,\ldots,T\}\times I$. 

By definition (Eq.~\eqref{eq-f1}), it is straightforward to verify that
\begin{align*}
f^{(1)}(X[n], \alpha)[n] &\sim \mathcal{N}(0, \alpha^\top K^{(1)}(X[n], X[n]) \alpha) \\
f^{(1)}(X, \alpha) &\sim \mathcal{N}(0, \alpha^\top K^{(1)}(X, X) \alpha).
\end{align*}
Let $c$ be a continuity point of $f^{(1)}(X, \alpha)$ so that $c\neq 0$ if $\alpha^\top K^{(1)}(X, X)\alpha = 0$. 
Extend the CDF $\Phi$ of the standard normal distribution by setting $\Phi(-\infty)=0$ and $\Phi(\infty)=1$. 
Then the map
\begin{equation} \label{eq-lemma-tildeF-2}
z \mapsto \Phi\left(\frac{c}{\sqrt{z}}\right)
\end{equation}
 is continuous on $(0, \infty)$ and is right-continuous at $z=0$ if $c\neq 0$. 
Now since the kernel is a continuous function (Lemma~\ref{lemma-cont}) and since $X[n]\rightarrow X$, then $\alpha^\top K^{(1)}(X[n], X[n])\alpha \rightarrow \alpha^\top K^{(1)}(X, X)\alpha$. 
Moreover, since we just established that the map given by Eq.~\eqref{eq-lemma-tildeF-2} is continuous, then it follows that
\[ \Phi\left(\frac{c}{\sqrt{\alpha^\top K^{(1)}(X[n], X[n])\alpha}}\right) \rightarrow \Phi\left(\frac{c}{\sqrt{\alpha^\top K^{(1)}(X, X)\alpha}}\right) \mbox{ as } n\rightarrow\infty, \]
and hence $f^{(1)}(X[n], \alpha)[n] \xrightarrow{D} f^{(1)}(X, \alpha)$. 
By the Cram\'{e}r-Wold Device, we deduce that $f^{(1)}_I(X[n])[n] \xrightarrow{D} f^{(1)}_I(X)$.

Now suppose that $f^{(\mu)}_I(X[n])[n] \xrightarrow{D} f^{(\mu)}_I(X)$ for every finite subset $I\subseteq\NN$ and for some $\mu\in\{1,\ldots,D\}$. 
We then want to show that this same convergence holds for $\mu+1$. 
Let $\alpha\in\RR^{T\times|\NN|}$ with finite support $\{1,\ldots,T\}\times I$. 
We view $\alpha$ as a block vector where $\alpha_{ti}$ is the $t$-th element in the $i$-th block. 
By Lemmas~\ref{lemma-condition-variance}-\ref{lemma-condition-3}, the sequence of summands $\{\gamma^{(\mu+1)}_j(X[n], \alpha)\}_{j=1}^{H_{\mu}[n]}$ for $n\in\NN$ satisfies the conditions of Thm.~\ref{thm-blum-clt}; condition 1 is immediate since the weights $w^{(\mu+1)}_{i1}$ and $w^{(\mu+1)}_{j2}$ are independent and have mean $0$. 
Theorem~\ref{thm-blum-clt} then tells us that the projections $S^{(\mu+1)}(X[n], \alpha)[n] \xrightarrow{D} \calN(0, \sigma^2(X, \alpha))$, where the limiting variance is given by $\sigma^2_{(\mu+1)}(X, \alpha)$ (Eq.~\eqref{eq-sigma-mu}). 
By the Cram\'{e}r-Wold Device, this implies
\[ f^{(\mu+1)}_I(X[n])[n] - b^{(\mu+1)}_I\otimes \mathbf{1}_T \xrightarrow{D} \calN(0, v_w L^{(\mu)}_{II}(X, X)), \]
which in turn implies
\[ F^{(\mu+1)}_I(X[n])[n] \xrightarrow{D} F^{(\mu+1)}_I(X) \sim \calN(0, K^{(\mu+1)}_{II}(X, X)). \]
Equation \eqref{eq-lemma-tildeF-1} then follows by induction, thus establishing continuous distributional convergence.
\end{proof}

Next is the proof of the bottleneck NNGP theorem, which is the main theorem of our paper.

\begin{proof}[Proof of Thm.~\ref{thm-bottleneck}]
We proceed by induction on $d\in\{1,\ldots,D\}$. 
The case $d=1$ is given to us by Thm.~\ref{thm-matthews} (i.e., no hidden bottlenecks). 
Now suppose the claim holds for some $d\in\{1,\ldots,D-1\}$. 
We will prove the claim for the case $d+1$.

Let $X = \{x_t\}_{t=1}^T$ be a finite subset of $\mathcal{X}$. 
Define the random variables
\begin{align*}
Z[n] &= \{(F^{(d)}[n]\circ\cdots\circ F^{(1)}[n])(x_t)\}_{t=1}^T, \\
Z &= \{(F^{(d)}\circ\cdots\circ F^{(1)})(x_t)\}_{t=1}^T.
\end{align*}
Let $\tilde{F}^{(d+1)}[n]$ (resp. $\tilde{F}^{(d+1)}$) be the batch BNN (resp. batch NNGP) corresponding to $F^{(d+1)}[n]$ (resp. $F^{(d+1)}$), and observe that
\begin{align*}
\tilde{F}^{(d+1)}(Z[n])[n] &= \{(F^{(d+1)}[n]\circ\cdots\circ F^{(1)}[n])(x_t)\}_{t=1}^T, \\
\tilde{F}^{(d+1)}(Z) &= \{(F^{(d+1)}\circ\cdots\circ F^{(1)})(x_t)\}_{t=1}^T.
\end{align*}
We proceed to establish the four conditions of Lemma~\ref{lemma-Fn-Xn} in order to prove
\begin{equation} \label{eq-thm-bottleneck-1}
\tilde{F}^{(d+1)}(Z[n])[n] \xrightarrow{D} \tilde{F}^{(d+1)}(Z).
\end{equation}
By the inductive hypothesis, $F^{(d)}[n]\circ\cdots\circ F^{(1)}[n] \xrightarrow{D} F^{(d)}\circ\cdots\circ F^{(1)}$ in $((\RR^L)^{\infty}, \mathcal{A})$ and thus in particular $Z[n] \xrightarrow{D} Z$, establishing condition~\ref{lemma-Fn-Xn:a}. 
Observe that
\[ \operatorname{E}[\|Z[n]\|^2] = \sum_{t=1}^T \operatorname{E}[\|(f^{(d)}[n]\circ\cdots\circ F^{(1)}[n])(x_t)\|^2]. \]
Since a composition of BNNs is still a BNN (with some hidden layers having linear activation), then we can apply Lemma~\ref{lemma-unif-bound-nn-var} to each expectation in the sum to get the bound
\[ \operatorname{E}[\|Z[n]\|^2] \leq \sum_{t=1}^T (A_t + B_t \|x_t\|^2), \]
for some constants $A_t,B_t > 0$. 
In other words, the sequence of second moments of $\{Z[n]\}_{n=1}^\infty$ is bounded, establishing condition~\ref{lemma-Fn-Xn:b}. 
Lemma~\ref{lemma-cont} gives us condition~\ref{lemma-Fn-Xn:c}. 
Finally, Lemma~\ref{lemma-tildeFn} tells us that $\tilde{F}^{(d+1)}[n] \xrightarrow{CD} \tilde{F}^{(d+1)}$. 
By Prop.~\ref{prop-uc}, we immediately have $\tilde{F}^{(d+1)}[n] \xrightarrow{UD} \tilde{F}^{(d+1)}$ on every compact subset of $\RR^{T\times B_d}$, establishing condition~\ref{lemma-Fn-Xn:d}.

Having verified its four conditions, Lemma~\ref{lemma-Fn-Xn} implies Eq.~\eqref{eq-thm-bottleneck-1} and hence
\[ \{(F^{(d+1)}[n]\circ\cdots\circ F^{(1)}[n])(x_t)\}_{t=1}^T \xrightarrow{D} \{(F^{(d+1)}\circ\cdots\circ F^{(1)})(x_t)\}_{t=1}^T. \]
Since this holds for any $T$ inputs in $\mathcal{X}$, then by Thm.~\ref{thm-marginal} the desired convergence in $((\RR^L)^{\infty}, \mathcal{A})$ follows.
\end{proof}

\begin{remark}[Nonlinear bottleneck] \label{rem-nonlinbottleneck-appendix}
    \label{rem-nonlin-bottleneck}
Theorem~\ref{thm-bottleneck} holds even if we replace $F^{(d)}[n]$ and $F^{(d)}$ with $F^{(d)}[n]\circ\left(\frac{1}{\sqrt{B_{d-1}}}\phi\right)$ and $F^{(d)}\circ\left(\frac{1}{\sqrt{B_{d-1}}}\phi\right)$ respectively for $d\in\{2,\ldots,D\}$. 
The proof is nearly identical, making the necessary replacements where appropriate. 
The only additional step needed is to verify condition~\ref{lemma-Fn-Xn:c} of Lemma~\ref{lemma-Fn-Xn} for $\tilde{F}^{(d+1)}[n]\circ\left(\frac{1}{\sqrt{B_d}}\phi\right)$ in the inductive step; 
by Lemma~\ref{lemma-Fn-Xn}, $\tilde{F}^{(d+1)}[n]\xrightarrow{UD}\tilde{F}^{(d+1)}$ and hence $\tilde{F}^{(d+1)}[n]\xrightarrow{CD}\tilde{F}^{(d+1)}$ by Prop.~\ref{prop-uc}. 
Now since $x\mapsto \frac{1}{\sqrt{B_d}}\phi(x)$ is (sequentially) continuous, then $\tilde{F}^{(d+1)}\left(\frac{1}{\sqrt{B_d}}\phi(x_n)\right)\rightarrow \tilde{F}^{(d+1)}\left(\frac{1}{\sqrt{B_d}}\phi(x)\right)$ whenever $x_n\rightarrow x$. 
By Prop.~\ref{prop-uc}, $\tilde{F}^{(d+1)}[n]\circ\left(\frac{1}{\sqrt{B_d}}\phi\right)\xrightarrow{UD}\tilde{F}^{(d+1)}\circ\left(\frac{1}{\sqrt{B_d}}\phi\right)$, establishing condition~\ref{lemma-Fn-Xn:c}.
\end{remark}

\begin{remark}[Discontinuous nonlinearity] \label{rem-discont-appendix}
Theorem~\ref{thm-bottleneck} holds even if the nonlinearity $\phi:\RR\mapsto\RR$ is continuous only almost everywhere (AE), as long as $\phi$ is continuous at $0$ or $v_b > 0$. 
If $\phi$ is continuous AE, then the pointwise convergence in Eq.~\eqref{eq-lemma-cont-pointwise} holds AE, which is still sufficient for the Dominated Convergence Theorem. 
Moreover, the Continuous Mapping Theorem used in Lemmas~\ref{lemma-condition-variance}-\ref{lemma-condition-3} is still applicable as long as the set of discontinuities of $\phi$ has measure $0$ with respect to the distribution of the NNGP preactivation $f^{(\mu)}_i(x)$. 
If $v_b=0$, then it becomes possible for the distribution of $f^{(\mu)}_i(x)$ to degenerate to a delta distribution concentrated at $0$; 
if $\phi$ is also discontinuous at $0$, then its set of discontinuities will have measure $1$ with respect to the delta distribution, 
hence the requirement that $v_b > 0$ if $\phi$ is discontinuous at $0$.
\end{remark}

\subsection{Verifying the conditions of the CLT for exchangeable processes}
\label{appendix-section-verifying}

The results in this section serve to support the proof of Lemma~\ref{lemma-tildeFn}. 
Since Lemma~\ref{lemma-tildeFn} is similar to Lemma 12 in \citetalias{matthews2018gaussian}, then the results in this section are also similar to results in \citetalias{matthews2018gaussian}. 
The approach to proving Lemma~\ref{lemma-tildeFn} is to show that in the (no-bottleneck) NNGP limit, if the preactivations into one hidden layer converge in distribution continuously to a GP, then so do the preactivations into the next hidden layer. 
This is done using a special central limit theorem. 
The challenge is that the preactivations into any hidden layer after the first hidden layer are independent only in the wide limit. 
Moreover, the distribution of each preactivation changes as the preceding hidden layer grows in width. 
The following is a central limit theorem adapted specifically for this case; 
it is a restatement of Lemma 10 in \citetalias{matthews2018gaussian}, which is in turn an adaptation of a central limit theorem for exchangeable processes by \citet{blum1958central}.

\begin{theorem}[CLT for sequences of exchangeable sequences~\citepalias{matthews2018gaussian}] \label{thm-blum-clt}
For each positive integer $n$, let $\{X_i[n]\}_{i=1}^\infty$ be an exchangeable sequence of random variables with mean $0$, variance $\sigma^2[n]$, and finite absolute third moment. 
Suppose also that the variances converge to the limit
%\begin{equation} \label{eq-clt-sigma2}
$
\limn \sigma^2[n] = \sigma^2.
$
%\end{equation}
If
\begin{enumerate}[label={(\alph*)}]
\item $E\{X_1[n]X_2[n]\} = 0$, \label{thm-blum-clt:a}
\item $\limn E\{X_1[n]^2X_2[n]^2\} = \sigma^4$, and \label{thm-blum-clt:b}
\item $E\{|X_1[n]|^3\} = o(\sqrt{n})$ \label{thm-blum-clt:c}
\end{enumerate}
then for any strictly increasing sequence $H$, the sequence of standardized partial sums $\{S[n]\}_{n=1}^\infty$ with
\begin{equation*}
S[n] = \frac{1}{\sqrt{H[n]}} \sum_{i=1}^{H[n]} X_i[n]
\end{equation*}
converges in distribution to $\calN(0, \sigma^2)$ , 
where $\mathcal{N}(0, 0)$ is interpreted as the constant $0$.
\end{theorem}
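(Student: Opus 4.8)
The plan is to reduce to the ordinary i.i.d.\ central limit theorem by conditioning on the de Finetti $\sigma$-algebra and to show that the hypotheses force the de Finetti mixing law to be asymptotically degenerate. Fix $n$. Since $\{X_i[n]\}_{i=1}^{\infty}$ is an infinite exchangeable sequence of real random variables, de Finetti's theorem supplies a $\sigma$-algebra $\mathcal{G}_n$ (the exchangeable $\sigma$-algebra) such that, conditionally on $\mathcal{G}_n$, the $X_i[n]$ are i.i.d.\ with some random law. Put $M_n = \operatorname{E}[X_1[n] \mid \mathcal{G}_n]$ and $V_n = \operatorname{E}[X_1[n]^2 \mid \mathcal{G}_n]$, which are a.s.\ finite since $\operatorname{E}[|X_1[n]|^3] < \infty$. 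Conditional independence and identical conditional laws give $\operatorname{E}[X_1[n] X_2[n]] = \operatorname{E}[M_n^2]$ and $\operatorname{E}[X_1[n]^2 X_2[n]^2] = \operatorname{E}[V_n^2]$; hypothesis~\ref{thm-blum-clt:a} therefore forces $M_n = 0$ a.s., and hypothesis~\ref{thm-blum-clt:b} together with $\operatorname{E}[V_n] = \operatorname{E}[X_1[n]^2] = \sigma^2[n] \to \sigma^2$ gives $\operatorname{Var}(V_n) = \operatorname{E}[V_n^2] - (\operatorname{E}[V_n])^2 \to \sigma^4 - \sigma^4 = 0$, so $V_n \to \sigma^2$ in $L^2$ and hence in probability. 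In words, conditions \ref{thm-blum-clt:a}--\ref{thm-blum-clt:b} say precisely that the mixing measure is asymptotically degenerate, with conditional mean zero and conditional variance concentrating at $\sigma^2$.

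Next I would compute the characteristic function of $S[n]$ by conditioning on $\mathcal{G}_n$. Writing $\varphi_n(s) = \operatorname{E}[e^{i s X_1[n]} \mid \mathcal{G}_n]$ and $W_n = \operatorname{E}[|X_1[n]|^3 \mid \mathcal{G}_n]$, conditional independence gives
\[
\operatorname{E}\big[e^{i t S[n]}\big] = \operatorname{E}\!\Big[\big(\varphi_n(t/\sqrt{H[n]})\big)^{H[n]}\Big].
\]
A third-order Taylor estimate, using $M_n = 0$, yields $\big|\varphi_n(s) - (1 - \tfrac12 V_n s^2)\big| \le \tfrac16 |s|^3 W_n$; combining this with the elementary bounds $|z^m - w^m| \le m\,|z - w|$ for $|z|, |w| \le 1$ and $|1 - x - e^{-x}| \le \tfrac12 x^2$ for $x \ge 0$ gives
\[
\Big|\big(\varphi_n(t/\sqrt{H[n]})\big)^{H[n]} - e^{-V_n t^2/2}\Big| \;\le\; \frac{|t|^3 W_n}{6\sqrt{H[n]}} + \frac{V_n^2\, t^4}{8\, H[n]}.
\]
The crux is that this bound vanishes in probability. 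Because $H$ is a strictly increasing $\NN$-valued sequence, $H[n] \ge n$, so $\operatorname{E}\big[W_n/\sqrt{H[n]}\big] = \operatorname{E}[|X_1[n]|^3]/\sqrt{H[n]} \le \operatorname{E}[|X_1[n]|^3]/\sqrt{n} \to 0$ by hypothesis~\ref{thm-blum-clt:c}; thus $W_n/\sqrt{H[n]} \to 0$ in $L^1$, and the second term vanishes in probability since $V_n \to \sigma^2$ and $H[n] \to \infty$. Hence $\big(\varphi_n(t/\sqrt{H[n]})\big)^{H[n]} - e^{-V_n t^2/2} \to 0$ in probability, and since $e^{-V_n t^2/2} \to e^{-\sigma^2 t^2/2}$ in probability while every quantity has modulus at most $1$, bounded convergence gives $\operatorname{E}[e^{i t S[n]}] \to e^{-\sigma^2 t^2/2}$ for all $t$. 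Lévy's continuity theorem then yields $S[n] \xrightarrow{D} \calN(0, \sigma^2)$; in the degenerate case $\sigma^2 = 0$ the same computation shows the characteristic function tends to the constant $1$, i.e.\ $S[n] \to 0$, consistent with the stated convention.

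I expect the main obstacle to be that hypothesis~\ref{thm-blum-clt:c} is \emph{unconditional}: a priori the conditional third moment $W_n$ could be huge on $\mathcal{G}_n$-sets of small probability, so one cannot simply apply a pathwise Berry--Esseen bound and pass to the limit almost surely. The device that resolves this is to bound $\operatorname{E}[W_n/\sqrt{H[n]}]$ directly, obtaining convergence in probability of the error term, and then to extract the limiting characteristic function via bounded convergence rather than an almost-sure argument. The remaining points are routine: the a.s.\ finiteness of $M_n, V_n, W_n$ (immediate from the moment hypotheses), the degenerate case $\sigma^2 = 0$, and the observation $H[n] \ge n$ that converts the $o(\sqrt{n})$ hypothesis into $o(\sqrt{H[n]})$.
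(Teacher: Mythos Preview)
The paper does not prove this theorem; it is stated as a restatement of Lemma~10 in \citetalias{matthews2018gaussian}, itself an adaptation of the classical exchangeable CLT of \citet{blum1958central}, and is simply invoked as a black box in the proof of Lemma~\ref{lemma-tildeFn}. There is therefore no ``paper's own proof'' to compare against.

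That said, your argument is correct and is essentially the classical Blum--Chernoff--Rosenblatt--Teicher proof transported to the triangular-array setting: condition on the de~Finetti $\sigma$-algebra, use hypotheses~\ref{thm-blum-clt:a}--\ref{thm-blum-clt:b} to force the conditional mean to vanish and the conditional variance to concentrate at~$\sigma^2$, then control the conditional characteristic function by a third-moment Taylor bound and pass to the limit via bounded convergence. Your identification of the main subtlety---that hypothesis~\ref{thm-blum-clt:c} is unconditional, so one must work with $\operatorname{E}[W_n/\sqrt{H[n]}]$ rather than a pathwise Berry--Esseen estimate---is exactly right, and the resolution via $L^1$ convergence of the error followed by bounded convergence is the standard one.

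Two minor points you may want to tighten. First, the telescoping bound $|z^m - w^m| \le m|z - w|$ requires $|w| = |1 - \tfrac12 V_n s^2| \le 1$, which can fail on the event $\{V_n t^2 > 4 H[n]\}$; this event has vanishing probability since $V_n \to \sigma^2$ in probability and $H[n] \to \infty$, so a routine truncation (restrict to $\{V_n \le C\}$, let $C \to \infty$) disposes of it. Second, the inequality $H[n] \ge n$ that you use to convert $o(\sqrt{n})$ into $o(\sqrt{H[n]})$ relies on $H$ being a strictly increasing sequence of \emph{positive integers}; the statement does not say this explicitly, but in the paper's context $H[n]$ is a hidden-layer width and the assumption is harmless.
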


We will apply Thm.~\ref{thm-blum-clt} to the summands $\gamma^{(\mu)}_j(X[n], \alpha)[n]$ (Eq.~\eqref{eq-summand}) to show that the projection $S^{(\mu)}(X[n], \alpha)[n]$ (Eq.~\eqref{eq-projection}) converges to a GP. 
This requires us to verify the conditions of Thm.~\ref{thm-blum-clt}. 
We verify
%Eq.~\eqref{eq-clt-sigma2}
the existence of the limit $\limn \sigma^2[n] = \sigma^2$ first. 
The following lemma is analogous to Lemma 11 in \citetalias{matthews2018gaussian}. 
The main difference is that the batch input $X$ is replaced with a convergent sequence of input batches $\{X[n]\}_{n=1}^\infty$.
We maintain the notation introduced in Sec.~\ref{appendix-section-notation}.

\begin{lemma} \label{lemma-condition-variance}
Suppose that $f^{(\mu)}_I(X[n])[n] \xrightarrow{D} f^{(\mu)}_I(X)$ for some $\mu\in\{1,\ldots,D\}$, and for every finite set $I\subset\NN$. 
Then
\[ \limn \sigma^2_{\mu+1}(X[n], \alpha)[n] = \sigma^2_{\mu+1}(X, \alpha), \]
where these variances are defined in Eqs.~\eqref{eq-sigma}-\eqref{eq-sigma-mu}.
\end{lemma}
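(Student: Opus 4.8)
The plan is to write both variances in closed form, observe that they have the same finite algebraic shape, and thereby reduce the lemma to the convergence of a finite collection of second moments of activations, which I would then obtain from the inductive hypothesis via the Continuous Mapping Theorem together with a uniform integrability estimate.

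First I would expand the left-hand side. Taking $j=1$ (all summands $\gamma^{(\mu+1)}_j$ share a distribution by exchangeability of the activations and the IID weights), write, using Eq.~\eqref{eq-summand},
\[
\gamma^{(\mu+1)}_1(X[n],\alpha)[n] = \sqrt{H_{\mu}[n]}\sum_{i\in I} w^{(\mu+1)}_{i1}\Big(\sum_{t=1}^T \alpha_{ti}\,g^{(\mu)}_1(x_t[n])[n]\Big).
\]
Since the weights $w^{(\mu+1)}_{i1}$ are IID $\calN(0, v_w/H_{\mu}[n])$ and independent of all quantities in layers $\le\mu$ (Eq.~\eqref{eq-v_w}), the cross terms in the variance vanish and
\[
\sigma^2_{\mu+1}(X[n],\alpha)[n] = v_w \sum_{i\in I}\sum_{t,u=1}^{T}\alpha_{ti}\alpha_{ui}\,\operatorname{E}\!\big[g^{(\mu)}_1(x_t[n])[n]\,g^{(\mu)}_1(x_u[n])[n]\big].
\]
Unwinding the block structure in Eq.~\eqref{eq-sigma-mu} and combining the relation $K^{(\mu+1)}_{ij}=v_b\delta_{ij}+v_w L^{(\mu)}_{ij}$ with Eq.~\eqref{eq-Kmu} shows that $\sigma^2_{\mu+1}(X,\alpha)$ has exactly the same form with $L^{(\mu)}_{11}(x_t,x_u)=\operatorname{E}[g^{(\mu)}_1(x_t)\,g^{(\mu)}_1(x_u)]$ in place of the finite-$n$ second moment, where $g^{(\mu)}_1=\phi(f^{(\mu)}_1)$ denotes the limiting NNGP activation and $(f^{(\mu)}_1(x_t),f^{(\mu)}_1(x_u))\sim\calN(0,C^{(\mu)})$. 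Hence it suffices to prove, for each of the finitely many pairs $(t,u)\in\{1,\dots,T\}^2$, that $\operatorname{E}[g^{(\mu)}_1(x_t[n])[n]\,g^{(\mu)}_1(x_u[n])[n]]\to\operatorname{E}[g^{(\mu)}_1(x_t)\,g^{(\mu)}_1(x_u)]$.

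Fixing $(t,u)$, I would apply the inductive hypothesis with $I=\{1\}$ to get $(f^{(\mu)}_1(x_t[n])[n],f^{(\mu)}_1(x_u[n])[n])\xrightarrow{D}(f^{(\mu)}_1(x_t),f^{(\mu)}_1(x_u))$, and then the Continuous Mapping Theorem applied to the continuous map $(a,b)\mapsto\phi(a)\phi(b)$ yields $g^{(\mu)}_1(x_t[n])[n]\,g^{(\mu)}_1(x_u[n])[n]\xrightarrow{D}g^{(\mu)}_1(x_t)\,g^{(\mu)}_1(x_u)$ (if $\phi$ is merely continuous almost everywhere one invokes Remark~\ref{rem-discont-appendix} here). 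Convergence in distribution alone does not give convergence of the expectation, so the remaining and principal difficulty is uniform integrability of $\{g^{(\mu)}_1(x_t[n])[n]\,g^{(\mu)}_1(x_u[n])[n]\}_{n=1}^\infty$. I would deduce it from an $L^{1+\delta}$ bound with $\delta=1$: by the linear envelope condition $|g^{(\mu)}_1(x)[n]|\le C+M|f^{(\mu)}_1(x)[n]|$, so by Cauchy--Schwarz it is enough to bound $\sup_n\operatorname{E}[(f^{(\mu)}_1(x_t[n])[n])^4]$ and $\sup_n\operatorname{E}[(f^{(\mu)}_1(x_u[n])[n])^4]$. Such a uniform fourth-moment bound, polynomial in $\|x\|$, follows by an induction on the layer index parallel to the proof of Lemma~\ref{lemma-unif-bound-nn-var}: conditioned on the previous layer's activations, $f^{(\mu+1)}_i(x)[n]$ is Gaussian with variance $v_b+\tfrac{v_w}{H_{\mu}[n]}\sum_j g^{(\mu)}_j(x)[n]^2$, so $\operatorname{E}[(f^{(\mu+1)}_i(x)[n])^4]\le \mathrm{const}\cdot\big(v_b^2+v_w^2\operatorname{E}[g^{(\mu)}_1(x)[n]^4]\big)$ by exchangeability and Cauchy--Schwarz, and the linear envelope condition closes the induction; alternatively one cites the uniform integrability lemmas of Appendix~\ref{appendix-section-establishing}. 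Crucially, because the sequence $\{X[n]\}_{n=1}^\infty$ converges it is bounded, so the resulting bound is uniform in $n$. A standard uniform integrability argument then upgrades the distributional convergence to $\operatorname{E}[g^{(\mu)}_1(x_t[n])[n]\,g^{(\mu)}_1(x_u[n])[n]]\to\operatorname{E}[g^{(\mu)}_1(x_t)\,g^{(\mu)}_1(x_u)]$, and substituting this into the finite sum for $\sigma^2_{\mu+1}(X[n],\alpha)[n]$ gives $\limn\sigma^2_{\mu+1}(X[n],\alpha)[n]=\sigma^2_{\mu+1}(X,\alpha)$. The main obstacle, as indicated, is precisely this uniform fourth-moment control, which must be robust to the fact that the batch inputs themselves vary with $n$.
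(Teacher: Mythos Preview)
Your proposal is correct and follows essentially the same approach as the paper: expand $\sigma^2_{\mu+1}(X[n],\alpha)[n]$ into a finite sum of second moments $\operatorname{E}[g^{(\mu)}_1(x_t[n])[n]\,g^{(\mu)}_1(x_u[n])[n]]$, use the inductive hypothesis plus the Continuous Mapping Theorem to get convergence in distribution of the products, and then invoke uniform integrability to pass to the limit inside the expectation. The only cosmetic difference is that the paper cites its prepackaged uniform integrability result (Cor.~\ref{cor-unif-int-stochastic-2}, built on the eighth-moment bound of Lemma~\ref{lemma-8th-moment-stochastic}) rather than sketching a direct fourth-moment argument, but you already flag that alternative.
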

\begin{proof}
It is clear that $\operatorname{E}[\gamma^{(\mu+1)}_j(X[n], \alpha)[n]] = 0$ since the weights $w^{(\mu+1)}_{ij}$ have $0$ mean. 
We therefore have
\begin{align*}
\sigma^2_{\mu+1}(X[n], \alpha)[n]
&= \operatorname{E}[\gamma^{(\mu+1)}_1(X[n], \alpha)[n]^2] \\
&= \operatorname{E}\left[\left(\sqrt{H_{\mu}[n]} \sum_{t=1}^T\sum_{i\in I} \alpha_{ti} w^{(\mu)}_{i1} g^{(\mu)}_1(x_t[n])[n]\right)^2\right] \\
&= H_{\mu}[n] \sum_{t,u=1}^T\sum_{i,j\in I} \alpha_{ti}\alpha_{uj}\operatorname{E}[w^{(\mu)}_{i1} w^{(\mu)}_{j1}] \operatorname{E}[g^{(\mu)}_1(x_t[n])[n] g^{(\mu)}_1(x_u[n])[n]] \\
&= v_w \sum_{t,u=1}^T\sum_{i,j\in I} \alpha_{ti}\alpha_{uj} \delta_{ij} \operatorname{E}[g^{(\mu)}_1(x_t[n])[n] g^{(\mu)}_1(x_u[n])[n]].
\end{align*}
Theorem 3.5 in \citet{billingsley1999convergence} tells us that a limit can be moved inside an expectation operator if the sequence inside the expectation converges in distribution and is uniformly integrable. 
Since the preactivations $f^{(\mu)}_1(X[n])[n]$ converge in distribution and since the nonlinearity $\phi$ and multiplication mapping $\RR^2$ to $\RR$ are continuous functions, then the Continuous Mapping Theorem implies that the products of activations in the above expectations also converge in distribution. 
Uniform integrability holds by Cor.~\ref{cor-unif-int-stochastic-2}. 
We therefore have the limit
\begin{align*}
\limn \sigma^2_{\mu+1}(X[n], \alpha)[n]
&= v_w \sum_{t,u=1}^T\sum_{i,j\in I} \alpha_{ti}\alpha_{uj} \delta_{ij} \operatorname{E}[g^{(\mu)}_1(x_t) g^{(\mu)}_1(x_u)] \\
&= v_w \sum_{t,u=1}^T\sum_{i,j\in I} \alpha_{ti}\alpha_{uj} L^{(\mu)}_{11}(x_t, x_u) \\
&= v_w \alpha^\top L^{(\mu)}(X, X) \alpha \\
&= \sigma^2_{(\mu+1)}(X, \alpha),
\end{align*}
completing the proof.
\end{proof}

Condition~\ref{thm-blum-clt:a} of Thm.~\ref{thm-blum-clt} is easily verified directly in the proof of Lemma~\ref{lemma-tildeFn}. 
We thus move to condition~\ref{thm-blum-clt:b}. 
The following lemma is analogous to Lemma 15 in \citetalias{matthews2018gaussian}. 

\begin{lemma} \label{lemma-condition-2}
Suppose that $f^{(\mu)}_I(X[n])[n] \xrightarrow{D} f^{(\mu)}_I(X)$ for some $\mu\in\{1,\ldots,D\}$. and for every finite set $I\subset\NN$. 
Then
\[ \limn \operatorname{E}[\gamma^{(\mu+1)}_1(X[n], \alpha)[n]^2 \gamma^{(\mu+1)}_2(X[n], \alpha)[n]^2] = \sigma^4_{\mu}(X, \alpha). \]
\end{lemma}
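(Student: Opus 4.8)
The plan is to run the same computation as in the proof of Lemma~\ref{lemma-condition-variance}, but now at fourth order, so as to establish condition~\ref{thm-blum-clt:b} of Thm.~\ref{thm-blum-clt}; this mirrors the analogous Lemma~15 in \citetalias{matthews2018gaussian}. First I would expand each factor $\gamma^{(\mu+1)}_1(X[n],\alpha)[n]^2$ and $\gamma^{(\mu+1)}_2(X[n],\alpha)[n]^2$ using the definition of the summand in Eq.~\eqref{eq-summand}, so that the product becomes a sum over $t,t',u,u'\in\{1,\ldots,T\}$ and $i,i',j,j'\in I$ of monomials carrying a prefactor $H_{\mu}[n]^2$, the product of coefficients $\alpha_{ti}\alpha_{t'i'}\alpha_{uj}\alpha_{u'j'}$, the product of weights $w^{(\mu+1)}_{i1}w^{(\mu+1)}_{i'1}w^{(\mu+1)}_{j2}w^{(\mu+1)}_{j'2}$, and the product of four activations of layer $\mu$. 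Taking the expectation, the layer-$(\mu+1)$ weights are independent of the layer-$\mu$ activations, and the weights with second neuron index $1$ are independent of those with second neuron index $2$, so the fourth weight moment factors as $\operatorname{E}[w^{(\mu+1)}_{i1}w^{(\mu+1)}_{i'1}]\operatorname{E}[w^{(\mu+1)}_{j2}w^{(\mu+1)}_{j'2}] = (v_w/H_{\mu}[n])^2\,\delta_{ii'}\delta_{jj'}$. The two factors of $H_{\mu}[n]$ cancel, the $\delta$'s collapse the sum over neuron indices, and (suppressing the trailing $[n]$ suffix on activations) one is left with
\begin{align*}
&\operatorname{E}[\gamma^{(\mu+1)}_1(X[n],\alpha)[n]^2\,\gamma^{(\mu+1)}_2(X[n],\alpha)[n]^2] \\
&\qquad = v_w^2 \sum_{t,t',u,u'=1}^T\sum_{i,j\in I} \alpha_{ti}\alpha_{t'i}\alpha_{uj}\alpha_{u'j}\, \operatorname{E}[g^{(\mu)}_1(x_t[n])\,g^{(\mu)}_1(x_{t'}[n])\,g^{(\mu)}_2(x_u[n])\,g^{(\mu)}_2(x_{u'}[n])].
\end{align*}

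Next I would pass to the limit $n\to\infty$ inside each of these finitely many expectations. By the inductive hypothesis applied with $I=\{1,2\}$, the joint preactivations $f^{(\mu)}_{\{1,2\}}(X[n])[n]$ converge in distribution to $f^{(\mu)}_{\{1,2\}}(X)$; composing coordinatewise with the continuous nonlinearity $\phi$ and with multiplication, the Continuous Mapping Theorem gives convergence in distribution of each four-fold product of activations. Uniform integrability of those products follows from the linear envelope condition together with the uniform moment bounds of Appendix~\ref{appendix-section-establishing} (the same estimates used at second order in Lemma~\ref{lemma-condition-variance}, now invoked at fourth order), so Theorem 3.5 in \citet{billingsley1999convergence} permits interchanging the limit and the expectation. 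In the limiting NNGP the neuron indices are IID, hence $g^{(\mu)}_1$ and $g^{(\mu)}_2$ are independent, and the limiting expectation factors as $\operatorname{E}[g^{(\mu)}_1(x_t)\,g^{(\mu)}_1(x_{t'})]\,\operatorname{E}[g^{(\mu)}_2(x_u)\,g^{(\mu)}_2(x_{u'})] = L^{(\mu)}_{11}(x_t,x_{t'})\,L^{(\mu)}_{11}(x_u,x_{u'})$. Substituting this back, the double sum over $(t,t',i)$ and $(u,u',j)$ separates, giving
\[ \limn \operatorname{E}[\gamma^{(\mu+1)}_1(X[n],\alpha)[n]^2\,\gamma^{(\mu+1)}_2(X[n],\alpha)[n]^2] = \left(v_w\,\alpha^\top L^{(\mu)}(X,X)\,\alpha\right)^2 = \left(\sigma^2_{(\mu+1)}(X,\alpha)\right)^2, \]
which is exactly the limiting value $\sigma^4$ required in condition~\ref{thm-blum-clt:b} of Thm.~\ref{thm-blum-clt}.

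The step I expect to be the main obstacle is the uniform-integrability claim: unlike the second-order computation in Lemma~\ref{lemma-condition-variance}, here one must bound a product of four activation factors uniformly along the sequence $\{X[n]\}_{n=1}^\infty$, which requires fourth-moment control of the preactivations $f^{(\mu)}_1(x_t[n])[n]$ that is uniform in $n$; this is supplied by the linear envelope condition and the uniform-integrability lemmas of Appendix~\ref{appendix-section-establishing}, but it is the part that does not follow formally from earlier steps and needs the convergent-input-batch versions of those lemmas. A secondary subtlety is the factorization in the limit: the independence of distinct neurons holds only in the limiting NNGP (at finite $n$ the activations $g^{(\mu)}_1,g^{(\mu)}_2$ are merely exchangeable), so it is essential that the limit be taken before the factorization is applied.
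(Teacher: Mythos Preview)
Your proposal is correct and follows essentially the same approach as the paper's proof: expand the product of squared summands, factor out the weight moments using independence across neuron indices $1$ and $2$, pass the limit inside via the Continuous Mapping Theorem plus uniform integrability (the paper invokes Cor.~\ref{cor-unif-int-stochastic} specifically), and then factor the limiting four-fold expectation using the independence of $g^{(\mu)}_1$ and $g^{(\mu)}_2$ in the NNGP. Your identification of the uniform-integrability step as the main obstacle and of the limit-before-factorization subtlety matches the paper exactly.
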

\begin{proof}
We proceed in direct analogy to the proof of Lemma~\ref{lemma-condition-variance}. 
We have
\begin{align*}
\begin{aligned}
& \operatorname{E}[\gamma^{(\mu+1)}_1(X[n], \alpha)[n]^2 \gamma^{(\mu+1)}_2(X[n], \alpha)[n]^2] \\
&= \operatorname{E}\left[\left(\sqrt{H_{\mu}[n]} \sum_{r=1}^T\sum_{i\in I} w^{(\mu)}_{i1} g^{(\mu)}_1(x_r[n])[n]\right)^2 \left(\sqrt{H_{\mu}[n]} \sum_{t=1}^T\sum_{k\in I} w^{(\mu)}_{k2} g^{(\mu)}_2(x_t[n])[n]\right)^2\right] \\
&= H_{\mu}^2[n] \sum_{r,s,t,u=1}^T\sum_{i,j,k,\ell\in I} \left(\alpha_{ri}\alpha_{sj}\alpha_{tk}\alpha_{u\ell} 
\cdot
\operatorname{E}[w^{(\mu)}_{i1}w^{(\mu)}_{j1}] \cdot
\operatorname{E}[w^{(\mu)}_{k2} w^{(\mu)}_{\ell2}] \right. \\
& \left. \cdot\operatorname{E}[g^{(\mu)}_1(x_r[n])[n] g^{(\mu)}_1(x_s[n])[n] g^{(\mu)}_2(x_t[n])[n] g^{(\mu)}_2(x_u[n])[n]]
\right) \\
&= v_w^2 \sum_{r,s,t,u=1}^T\sum_{i,j,k,\ell\in I}
\Big( \alpha_{ri}\alpha_{sj}\alpha_{tk}\alpha_{u\ell} \delta_{ij}\delta_{k\ell} \\
& \left. \cdot\operatorname{E}[g^{(\mu)}_1(x_r[n])[n] g^{(\mu)}_1(x_s[n])[n] g^{(\mu)}_2(x_t[n])[n] g^{(\mu)}_2(x_u[n])[n]]\right) .
\end{aligned}
\end{align*}
Since the preactivations $f^{(\mu)}_I(X[n])[n]$ converge in distribution for $I = \{1, 2\}$, and since the nonlinearity $\phi$ and multiplication from $\RR^4$ to $\RR$ are continuous functions, then the Continuous Mapping Theorem implies the four-way products of activations in each expectation above converge in distribution as well. 
Corollary~\ref{cor-unif-int-stochastic} also tells us that the set of these four-way products of activations is uniformly integrable. 
By Theorem 3.5 in \citet{billingsley1999convergence}, we have the limit
\begin{align} \label{eq-lemma-condition-2}
\begin{aligned}
& \limn \operatorname{E}[\gamma^{(\mu+1)}_1(X[n], \alpha)[n]^2 \gamma^{(\mu+1)}_2(X[n], \alpha)[n]^2] \\
& = v_w^2 \sum_{r,s,t,u=1}^T\sum_{i,j,k,\ell\in I} \alpha_{ri}\alpha_{sj}\alpha_{tk}\alpha_{u\ell} \delta_{ij}\delta_{k\ell} \operatorname{E}[g^{(\mu)}_1(x_r) g^{(\mu)}_1(x_s) g^{(\mu)}_2(x_t, \alpha) g^{(\mu)}_2(x_u, \alpha)].
\end{aligned}
\end{align}
Since parallel activations in a layer decorrelate in an NNGP, then we have
\begin{align*}
\begin{aligned}
& \limn \operatorname{E}[\gamma^{(\mu+1)}_1(X[n], \alpha)[n]^2 \gamma^{(\mu+1)}_2(X[n], \alpha)[n]^2] \\
&= v_w^2 \sum_{r,s,t,u=1}^T\sum_{i,j,k,\ell\in I} \alpha_{ri}\alpha_{sj}\alpha_{tk}\alpha_{u\ell} \delta_{ij}\delta_{k\ell} \operatorname{E}[g^{(\mu)}_1(x_r) g^{(\mu)}_1(x_s)] \operatorname{E}[g^{(\mu)}_2(x_t) g^{(\mu)}_2(x_u)] \\
&= \left( v_w \sum_{r,s=1}^T\sum_{i,j\in I} \alpha_{ri}\alpha_{sj}\delta_{ij} \operatorname{E}[g^{(\mu)}_1(x_r) g^{(\mu)}_1(x_s)]\right) \\
&\cdot \left(v_w \sum_{t,u=1}^T\sum_{k,\ell\in I} \alpha_{tk}\alpha_{u\ell}\delta_{k\ell} \operatorname{E}[g^{(\mu)}_2(x_t) g^{(\mu)}_2(x_u)]\right) \\
&= v_w \alpha^\top L^{(\mu)}(X, X)\alpha v_w \alpha^\top L^{(\mu)}(X, X)\alpha \\
&= \sigma^4_{(\mu+1)}(X, \alpha),
\end{aligned}
\end{align*}
completing the proof.
\end{proof}

Finally, we verify condition~\ref{thm-blum-clt:c} of Thm.~\ref{thm-blum-clt}. 
The following lemma is analogous to Lemma 16 in \citetalias{matthews2018gaussian}. 

\begin{lemma} \label{lemma-condition-3}
Suppose that $f^{(\mu)}_I(X[n])[n] \xrightarrow{D} f^{(\mu)}_I(X)$ for some $\mu\in\{1,\ldots,D\}$, and for every finite set $I\subset\NN$. 
Then
\[ \operatorname{E}[|\gamma^{(\mu+1)}_1(X[n], \alpha)[n]|^3] = o(\sqrt{n}). \]
\end{lemma}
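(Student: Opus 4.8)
The plan is to exploit the conditional Gaussianity of the summand given the activations out of the $\mu$-th hidden layer. Writing $A_i[n] = \sum_{t=1}^T \alpha_{ti}\, g^{(\mu)}_1(x_t[n])[n]$, we have $\gamma^{(\mu+1)}_1(X[n],\alpha)[n] = \sqrt{H_\mu[n]}\sum_{i\in I} A_i[n]\, w^{(\mu+1)}_{i1}$. The weights $w^{(\mu+1)}_{i1}$, $i\in I$, are IID $\calN(0, v_w/H_\mu[n])$ and belong to layer $\mu+1$, hence are independent of all the activations $g^{(\mu)}$. Thus, conditionally on the activations, $\gamma^{(\mu+1)}_1(X[n],\alpha)[n]$ is a centered Gaussian with variance $v_w\sum_{i\in I} A_i[n]^2$ (the factor $H_\mu[n]$ cancelling the weight variance), and therefore
\[ \operatorname{E}\bigl[\lvert\gamma^{(\mu+1)}_1(X[n],\alpha)[n]\rvert^3\bigr] = \kappa_3\, v_w^{3/2}\, \operatorname{E}\Bigl[\bigl(\textstyle\sum_{i\in I} A_i[n]^2\bigr)^{3/2}\Bigr], \]
where $\kappa_3 = \operatorname{E}[\lvert N\rvert^3]$ with $N\sim\calN(0,1)$. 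So it suffices to bound $\operatorname{E}[(\sum_{i\in I} A_i[n]^2)^{3/2}]$ uniformly in $n$; being $O(1)$, this is a fortiori $o(\sqrt n)$.

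Next I would control the activation energy by routine inequalities. Cauchy--Schwarz gives $\sum_{i\in I} A_i[n]^2 \le \lVert\alpha\rVert^2 \sum_{t=1}^T g^{(\mu)}_1(x_t[n])[n]^2$ with $\lVert\alpha\rVert^2 = \sum_{i\in I}\sum_{t=1}^T \alpha_{ti}^2$, and the power-mean inequality gives $(\sum_{t=1}^T a_t)^{3/2}\le T^{1/2}\sum_{t=1}^T a_t^{3/2}$ for $a_t\ge 0$, so that
\[ \operatorname{E}\Bigl[\bigl(\textstyle\sum_{i\in I} A_i[n]^2\bigr)^{3/2}\Bigr] \le \lVert\alpha\rVert^3\, T^{1/2}\sum_{t=1}^T \operatorname{E}\bigl[\lvert g^{(\mu)}_1(x_t[n])[n]\rvert^3\bigr]. \]
It remains to bound each $\operatorname{E}[\lvert g^{(\mu)}_1(x_t[n])[n]\rvert^3]$ uniformly in $n$. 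The linear envelope condition gives $\lvert g^{(\mu)}_1(x)\rvert \le C + M\lvert f^{(\mu)}_1(x)\rvert$, so it suffices to bound the third absolute moment of the preactivation $f^{(\mu)}_1(x_t[n])[n]$, which by Jensen's inequality is at most $\operatorname{E}[f^{(\mu)}_1(x_t[n])[n]^4]^{3/4}$. An induction on the layer index running exactly parallel to the proof of Lemma~\ref{lemma-unif-bound-nn-var} --- tracking the fourth moment instead of the second, using conditional Gaussianity of $f^{(\mu+1)}_i$ given the layer-$\mu$ activations together with the exchangeability of those activations and the linear envelope condition --- yields $\operatorname{E}[f^{(\mu)}_1(x)[n]^4]\le A + B\lVert x\rVert^4$ for constants $A,B>0$ independent of $n$. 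Since $\{x_t[n]\}_{n=1}^\infty$ converges, the norms $\lVert x_t[n]\rVert$ are bounded, whence $\sup_n \operatorname{E}[\lvert g^{(\mu)}_1(x_t[n])[n]\rvert^3] < \infty$. Chaining these estimates shows $\operatorname{E}[\lvert\gamma^{(\mu+1)}_1(X[n],\alpha)[n]\rvert^3]$ is bounded in $n$, hence $o(\sqrt n)$.

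The only step requiring any insight is the reduction in the first paragraph: conditioning on the layer-$\mu$ activations collapses $\gamma^{(\mu+1)}_1$ to a Gaussian and so turns the third absolute moment into a $3/2$-moment of $\sum_{i\in I} A_i[n]^2$, after which everything is Cauchy--Schwarz / power-mean bookkeeping plus the fourth-moment refinement of Lemma~\ref{lemma-unif-bound-nn-var}. Note that none of this actually uses the hypothesis $f^{(\mu)}_I(X[n])[n]\xrightarrow{D} f^{(\mu)}_I(X)$; it is retained in the statement only for parallelism with Lemmas~\ref{lemma-condition-variance}--\ref{lemma-condition-2}. I expect the only mildly tedious part to be writing out the fourth-moment induction in full detail, but it presents no real obstacle.
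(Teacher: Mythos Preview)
Your proof is correct but takes a genuinely different route from the paper's. The paper bounds the third absolute moment via H\"older's inequality, $\operatorname{E}[|\gamma^{(\mu+1)}_1|^3]\le\operatorname{E}[(\gamma^{(\mu+1)}_1)^4]^{3/4}$, and then treats the fourth moment exactly as in Lemma~\ref{lemma-condition-2}: expand the product, use the convergence hypothesis $f^{(\mu)}_I(X[n])[n]\xrightarrow{D}f^{(\mu)}_I(X)$ together with the Continuous Mapping Theorem and the uniform integrability result (Cor.~\ref{cor-unif-int-stochastic}) to pass to the limit inside the expectation, and finally bound the limiting four-way product of activations by eighth moments via Lemma~\ref{lemma-4way-product}. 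Boundedness in $n$ then follows because a convergent real sequence is bounded. So the paper's argument \emph{does} use the distributional-convergence hypothesis, whereas yours does not.

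Your route---conditional Gaussianity of $\gamma^{(\mu+1)}_1$ given the layer-$\mu$ activations, reducing to a $3/2$-moment of $\sum_i A_i[n]^2$, then Cauchy--Schwarz, power-mean, linear envelope, and a fourth-moment analogue of Lemma~\ref{lemma-unif-bound-nn-var}---is more elementary and self-contained: it avoids Billingsley's Theorem~3.5, the uniform-integrability corollaries, and the eighth-moment machinery of Lemma~\ref{lemma-8th-moment-stochastic} entirely. The paper's approach, on the other hand, buys economy by recycling the computation already done for Lemma~\ref{lemma-condition-2}, so almost nothing new has to be written. Your observation that the hypothesis is not actually needed here is a real sharpening; the paper's proof obscures this by routing through convergence to the limit.
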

\begin{proof}
We will prove the stronger result that the third absolute moment of $\gamma^{(\mu+1)}_1(X[n], \alpha)[n]$ is bounded over $n$. 
By H\"{o}lder's inequality,
\[ \operatorname{E}[|\gamma^{(\mu+1)}_1(X[n], \alpha)[n]|^3] \leq \operatorname{E}[\gamma^{(\mu+1)}_1(X[n], \alpha)[n]^4]^{\frac{3}{4}}. \]
Thus, to bound the left side independently of $n$, it is sufficient to do the same for the fourth moment of $\gamma^{(\mu+1)}_1(X[n], \alpha)[n]$. 
Observe that
\[ \operatorname{E}[\gamma^{(\mu+1)}_1(X[n], \alpha)[n]^4]^{\frac{3}{4}} = \operatorname{E}[\gamma^{(\mu+1)}_1(X[n], \alpha)[n]^2 \gamma^{(\mu+1)}_1(X[n], \alpha)[n]^2], \]
where the right-hand side is similar to the quantity discussed in Lemma~\ref{lemma-condition-2}. 
Therefore, calculations proceed in direct analogy to the proof of Lemma~\ref{lemma-condition-2} up to and including Eq.~\eqref{eq-lemma-condition-2}. 
Thus, we have
\begin{align*}
\begin{aligned}
& \limn \operatorname{E}[\gamma^{(\mu+1)}_1(X[n], \alpha)[n]^4]^{\frac{3}{4}} \\
&= v_w^2 \sum_{r,s,t,u=1}^T\sum_{i,j,k,\ell\in I} \alpha_{ri}\alpha_{sj}\alpha_{tk}\alpha_{u\ell} \delta_{ij}\delta_{k\ell}
\operatorname{E}[g^{(\mu)}_1(x_r, \alpha) g^{(\mu)}_1(x_s, \alpha) g^{(\mu)}_1(x_t, \alpha) g^{(\mu)}_1(x_u, \alpha)].
\end{aligned}
\end{align*}
The right-hand side can be shown to be finite by applying Lemma~\ref{lemma-4way-product} to bound the expectation of the four-way product by a product of eighth moments, applying the linear envelope property to obtain bounds in terms of preactivations, and finally noting that the eighth moment of a normal distribution is finite; this gives us the desired bound on the fourth and hence third absolute moment.
\end{proof}

\subsection{Establishing uniform integrability}
\label{appendix-section-establishing}

The results in this section serve to support the proofs in Appendix~\ref{appendix-section-verifying}. 
As in Appendix~\ref{appendix-section-verifying}, the results in this appendix are stronger versions of results appearing in \citetalias{matthews2018gaussian}.
The key results in this section are Lemma~\ref{lemma-4way-product} and Cors.~\ref{cor-unif-int-stochastic} and~\ref{cor-unif-int-stochastic-2} and are the only ones referenced outside of this section.

\begin{lemma} \label{lemma-4th-moment}
Let $X$ be a random variable. 
Then $E[X^4] \leq E[X^8]^{\frac{1}{2}}$.
\end{lemma}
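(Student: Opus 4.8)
The plan is to derive this as an immediate consequence of the Cauchy--Schwarz inequality (equivalently, Jensen's inequality applied to the convex map $t\mapsto t^2$, or the Lyapunov moment inequality). First I would dispose of the degenerate case: if $E[X^8]=\infty$ the claimed bound holds trivially, so we may assume $E[X^8]<\infty$, which by monotonicity of $L^p$ norms also guarantees $E[X^4]<\infty$, so both sides are well-defined finite quantities.

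Next I would apply Cauchy--Schwarz to the pair of random variables $X^4$ and the constant $1$:
\begin{equation*}
E[X^4] = E[X^4\cdot 1] \leq \sqrt{E[(X^4)^2]}\,\sqrt{E[1^2]} = \sqrt{E[X^8]} = E[X^8]^{\frac{1}{2}},
\end{equation*}
which is exactly the claim. Alternatively, one can phrase the same step via Jensen: since $t\mapsto t^2$ is convex, $E[X^4]^2 = \left(E[X^4]\right)^2 \leq E\left[(X^4)^2\right] = E[X^8]$, and taking square roots (both sides nonnegative) yields the result. Either route is a one-line computation.

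There is no real obstacle here; the only points requiring any care are the finiteness bookkeeping at the start and noting that $X^4\geq 0$ and $X^8\geq 0$ so that squaring and square-rooting are monotone and the inequality direction is preserved. This lemma is used downstream (in Lemma~\ref{lemma-4way-product} and the third-moment bound of Lemma~\ref{lemma-condition-3}) precisely to trade a fourth moment for a square root of an eighth moment, so no stronger statement is needed.
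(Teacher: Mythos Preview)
Your proof is correct and essentially identical to the paper's own proof, which applies H\"older's inequality (i.e., Cauchy--Schwarz) to $X^4$ and the constant $1$. The only difference is that you add explicit finiteness bookkeeping for the $E[X^8]=\infty$ case, which the paper omits.
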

\begin{proof}
By H\"{o}lder's Inequality, we have
\begin{equation*}
E[X^4] = E[X^4\cdot 1] \leq E[(X^4)^2]^{\frac{1}{2}} E[1^2]^{\frac{1}{2}} = E[X^8]^{\frac{1}{2}}.
\end{equation*}
\end{proof}

The following lemma is a stronger version of Lemma 18 in \citetalias{matthews2018gaussian}.
\citetalias{matthews2018gaussian} proves that the expectation $E\left[\prod_{i=1}^4 |X_i|^{p_i}\right]$ is uniformly bounded
by a polynomial in the eighth moments $E[X_i^8] < \infty$ for $i\in\{1,2,3,4\}$ without specifying the polynomial.
Lemma~\ref{lemma-4way-product} below provides the explicit bound $\prod_{i=1}^4 E[X_i^8]^{\frac{p_i}{8}}$, which is 
a polynomial in the eighth moments.
This bound is important when proving uniform convergence with respect to the inputs of a random 
neural network, since the coefficients and exponents in the bound are independent of the network's input.

\begin{lemma} \label{lemma-4way-product}
Let $X_i$ be random variables with $E[X_i^8] < \infty$ for $i\in\{1,2,3,4\}$.
Then for any choice of $p_i\in \{0,1,2\}$ it holds that
\[ E\left[\prod_{i=1}^4 |X_i|^{p_i}\right] \leq \prod_{i=1}^4 E[X_i^8]^{\frac{p_i}{8}}. \]
\end{lemma}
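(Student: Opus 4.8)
The plan is to recognize the claimed inequality as a direct instance of the generalized Hölder inequality. First I would substitute $Y_i = X_i^8 = |X_i|^8 \ge 0$ and $r_i = p_i/8 \in \{0, \tfrac{1}{8}, \tfrac{1}{4}\}$, so that $|X_i|^{p_i} = Y_i^{r_i}$ and the target bound becomes $E\big[\prod_{i=1}^4 Y_i^{r_i}\big] \le \prod_{i=1}^4 E[Y_i]^{r_i}$. By hypothesis each $E[Y_i] = E[X_i^8]$ is finite and each $Y_i$ is nonnegative, so all the expectations appearing are well defined and finite. The only place where the hypotheses $p_i \in \{0,1,2\}$ and the fact that there are exactly four factors enter is the observation that $\sum_{i=1}^4 r_i = \tfrac{1}{8}\sum_{i=1}^4 p_i \le \tfrac{1}{8}(2 \cdot 4) = 1$.

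Next I would pad the weights so that they sum to exactly one: set $r_0 = 1 - \sum_{i=1}^4 r_i \ge 0$ and $Y_0 \equiv 1$. Discarding any index with $r_i = 0$ — for which $Y_i^{r_i} \equiv 1$ and $E[Y_i]^{r_i} = 1$ under the usual conventions — I would apply the generalized Hölder inequality to the remaining nonnegative factors $Y_i^{r_i}$ with conjugate exponents $1/r_i$ (which are at least $1$ and whose reciprocals sum to $1$). This yields $E\big[\prod_i Y_i^{r_i}\big] \le \prod_i \big(E[(Y_i^{r_i})^{1/r_i}]\big)^{r_i} = \prod_i E[Y_i]^{r_i}$. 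The $Y_0$ factor contributes $E[1]^{r_0} = 1$, and reinstating the discarded $p_i = 0$ factors (each again contributing a harmless factor $1$ on both sides) gives precisely $E\big[\prod_{i=1}^4 |X_i|^{p_i}\big] \le \prod_{i=1}^4 E[X_i^8]^{p_i/8}$.

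I expect no real obstacle: the lemma is merely a quantitative sharpening of Lemma 18 of \citetalias{matthews2018gaussian}, pinning down the implicit polynomial in the eighth moments as $\prod_i E[X_i^8]^{p_i/8}$. The only points deserving a sentence of care are the degenerate cases — if $\sum_i p_i = 0$ both sides equal $1$, and if some $E[X_i^8] = 0$ then $X_i = 0$ almost surely, so both sides vanish when $p_i > 0$ while that index simply drops out when $p_i = 0$. If one prefers to avoid quoting the packaged generalized Hölder inequality, the same conclusion follows by an easy induction on the number of indices with $p_i > 0$, repeatedly applying the two-function Hölder inequality with exponents $8/p_i$ and its conjugate; I would present the one-line generalized-Hölder argument as the proof and relegate the inductive version to a remark.
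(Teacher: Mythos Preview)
Your proof is correct and takes a genuinely different route from the paper. The paper proceeds by applying the two-function H\"older inequality (i.e., Cauchy--Schwarz) twice to obtain $E\big[\prod_i |X_i|^{p_i}\big] \le \prod_i E[X_i^{4p_i}]^{1/4}$, and then handles the three cases $p_i \in \{0,1,2\}$ separately, invoking the auxiliary Lemma~\ref{lemma-4th-moment} ($E[X^4]\le E[X^8]^{1/2}$) for $p_i=1$. Your argument instead invokes the generalized H\"older inequality in a single step after padding the exponents so that $\sum_i r_i = 1$. This is cleaner: it avoids the case split and the auxiliary lemma, and it in fact shows the stronger statement that the inequality holds for any nonnegative reals $p_i$ with $\sum_i p_i \le 8$, not just $p_i\in\{0,1,2\}$. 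The paper's approach buys only that it relies on nothing beyond Cauchy--Schwarz, which is the version you sketch as the ``inductive remark''; either presentation is fine.
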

\begin{proof}
Using H\"{o}lder's inequality twice, we have
\begin{align*}
E[|X_1|^{p_1}|X_2|^{p_2}|X_3|^{p_3}|X_4|^{p_4}]
&\leq E[(|X_1|^{p_1}|X_2|^{p_2})^2]^{\frac{1}{2}} E[(|X_3|^{p_3}|X_4|^{p_4})^2]^{\frac{1}{2}} \\
&= E[X_1^{2p_1}X_2^{2p_2}]^{\frac{1}{2}} E[X_3^{2p_3}X_4^{2p_4}]^{\frac{1}{2}} \\
&\leq \left(E[(X_1^{2p_1})^2]^{\frac{1}{2}} E[(X_2^{2p_2})^2]^{\frac{1}{2}}\right)^{\frac{1}{2}} \left(E[(X_3^{2p_3})^2]^{\frac{1}{2}} E[(X_4^{2p_4})^2]^{\frac{1}{2}}\right)^{\frac{1}{2}} \\
&= E[X_1^{4p_1}]^{\frac{1}{4}} E[X_2^{4p_2}]^{\frac{1}{4}} E[X_3^{4p_3}]^{\frac{1}{4}} E[X_4^{4p_4}]^{\frac{1}{4}} \\
&= \prod_{i=1}^4 E[X_i^{4p_i}]^{\frac{1}{4}}.
\end{align*}
If $p_i=0$, then $E[X_i^{4p_i}]^{\frac{1}{4}} = E[1]^{\frac{1}{4}} = 1$, which can be written as $E[X_i^8]^0$. 
If $p_i=1$, then by Lemma~\ref{lemma-4th-moment}, $E[X_i^{4p_i}]^{\frac{1}{4}} \leq E[X_i^8]^{\frac{1}{8}}$. 
If $p_i=2$, then we simply have $E[X_i^{4p_i}]^{\frac{1}{4}} = E[X_i^8]^{\frac{2}{8}}$. 
We therefore see that for any $p_i\in\{0,1,2\}$, $E[X_i^{4p_i}]^{\frac{1}{4}} = E[X_i^8]^{\frac{p_i}{8}}$. 
Substituting this into the above product yields the desired bound.
\end{proof}

The following lemma extends Lemma 20 in \citetalias{matthews2018gaussian} to stochastic processes in the sense that the input into the BNN is now a variable. 
We can achieve a uniform bound if we assume that the input space is compact.

\begin{lemma} \label{lemma-8th-moment-stochastic}
Let $\mathcal{X}\subset \RR^M$ be a compact input space. 
Then for each $\mu\in\{1,\ldots,D+1\}$, the eighth moments
of the normally distributed random variables $f_i^{(\mu)}(x)[n]$ defined by equation \eqref{eq-f1}
are uniformly bounded over all $i\in\{1,\ldots,h_{\mu}(n)\}$, $n\in\NN$ and $x\in\mathcal{X}$.
\end{lemma}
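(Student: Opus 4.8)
The plan is to induct on $\mu\in\{1,\ldots,D+1\}$, proving the existence of a constant $B_\mu$ that depends only on $\mu$, the variance hyperparameters, the linear-envelope constants $C,M$, and the radius $R:=\sup_{x\in\mathcal{X}}\|x\|$ (finite since $\mathcal{X}$ is compact), such that $\operatorname{E}[f_i^{(\mu)}(x)[n]^8]\le B_\mu$ for every $i$, every $n\in\NN$, and every $x\in\mathcal{X}$. Note that the sole use of compactness is here, in pinning down $R<\infty$; once the bound is established for layer $\mu$ it is already uniform in $x$, so later layers inherit this for free.

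For the base case $\mu=1$, the preactivation $f_i^{(1)}(x)[n]=b_i^{(1)}+\sum_{j=1}^M w_{ij}^{(1)}x_j$ is genuinely normal with mean $0$ and variance $v_b^{(1)}+v_w^{(1)}\|x\|^2\le v_b^{(1)}+v_w^{(1)}R^2$, so using the closed form $\operatorname{E}[(\sigma Z)^8]=105\,\sigma^8$ for a centered Gaussian gives $\operatorname{E}[f_i^{(1)}(x)[n]^8]\le 105\,(v_b^{(1)}+v_w^{(1)}R^2)^4=:B_1$, independent of $i$, $n$, and $x$. For the inductive step, the key observation is that conditionally on the activations $g^{(\mu)}(\cdot)[n]$ of the $\mu$-th layer, the preactivation $f_i^{(\mu+1)}(x)[n]=b_i^{(\mu+1)}+\sum_{j=1}^{H_\mu[n]}w_{ij}^{(\mu+1)}g_j^{(\mu)}(x)[n]$ is a linear combination, with fixed coefficients, of the Gaussians $b_i^{(\mu+1)},w_{ij}^{(\mu+1)}$, which are independent of all weights and biases in layers $1,\dots,\mu$; hence it is conditionally Gaussian with mean $0$ and conditional variance $s^2=v_b^{(\mu+1)}+\tfrac{v_w^{(\mu+1)}}{H_\mu[n]}\sum_{j}g_j^{(\mu)}(x)[n]^2$, so $\operatorname{E}[f_i^{(\mu+1)}(x)[n]^8]=105\,\operatorname{E}[s^8]$. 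I would then bound $\operatorname{E}[s^8]$ as follows: by convexity of $t\mapsto t^4$ on $[0,\infty)$, $s^8=\big(\tfrac1{H_\mu[n]}\sum_j(v_b^{(\mu+1)}+v_w^{(\mu+1)}g_j^{(\mu)}(x)[n]^2)\big)^4\le \tfrac1{H_\mu[n]}\sum_j(v_b^{(\mu+1)}+v_w^{(\mu+1)}g_j^{(\mu)}(x)[n]^2)^4$; taking expectations and using exchangeability of the $g_j^{(\mu)}(x)[n]$ over $j$ collapses the average to $\operatorname{E}[(v_b^{(\mu+1)}+v_w^{(\mu+1)}g_1^{(\mu)}(x)[n]^2)^4]$, removing the $H_\mu[n]$ factor and thus the $n$-dependence. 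Then $(u+v)^4\le 8(u^4+v^4)$, the linear envelope bound $\phi(z)^8\le(C+M|z|)^8\le 128(C^8+M^8z^8)$, and the inductive hypothesis $\operatorname{E}[f_1^{(\mu)}(x)[n]^8]\le B_\mu$ give $\operatorname{E}[s^8]\le 8(v_b^{(\mu+1)})^4+8(v_w^{(\mu+1)})^4\big(128C^8+128M^8B_\mu\big)$, so $B_{\mu+1}:=105\cdot\big(8(v_b^{(\mu+1)})^4+8(v_w^{(\mu+1)})^4(128C^8+128M^8B_\mu)\big)$ works; it involves neither $i$, nor $n$, nor $x$, completing the induction.

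I do not expect a genuine obstacle here. The two points that require actual care are (i) spelling out that $f_i^{(\mu+1)}$ is conditionally Gaussian given the preceding layer — which rests on the independence of the layer-$(\mu+1)$ parameters from everything before them — and (ii) arranging the estimates (Jensen/convexity plus exchangeability in $j$) so that the width $H_\mu[n]$ cancels, yielding a constant that is truly independent of $n$. The remaining work is a short chain of elementary inequalities (convexity, the crude $(u+v)^p\le 2^{p-1}(u^p+v^p)$, the linear envelope condition) together with the Gaussian eighth-moment identity $105\,\sigma^8$.
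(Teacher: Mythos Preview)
Your proof is correct and, in fact, cleaner than the paper's. Both arguments proceed by induction on $\mu$ with the same base case, but the inductive steps differ in a meaningful way. The paper works with the unconditional eighth moment directly: it splits off the bias via $|u+v|^8\le 2^7(|u|^8+|v|^8)$, then bounds $E\big[\big(\sum_j w_{ij}^{(\mu+1)}g_j^{(\mu)}(x)[n]\big)^8\big]$ by invoking an auxiliary bound (Lemma~20 of \citetalias{matthews2018gaussian}) to pass to $\frac{1}{H_\mu[n]^4}E\big[\big(\sum_j(c^2+2cm|f_j^{(\mu)}|+m^2|f_j^{(\mu)}|^2)\big)^4\big]$, expands this fourth power into a sum of $3^4\cdot H_\mu[n]^4$ terms, and applies Lemma~\ref{lemma-4way-product} to each four-way product before using exchangeability to collapse the inner sum and cancel the $H_\mu[n]^4$ factor. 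The final bound is a polynomial $\psi(z)=a^4\sum_{j=1}^{81}z^{m_j}$ in the eighth moment of the previous layer.

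Your route sidesteps this combinatorics entirely by exploiting the conditional Gaussianity of $f_i^{(\mu+1)}(x)[n]$ given the layer-$\mu$ activations, which immediately gives $E[f_i^{(\mu+1)}(x)[n]^8]=105\,E[(s^2)^4]$ with $s^2$ the conditional variance. Because $s^2$ is already an average over $j$, a single application of Jensen's inequality followed by exchangeability eliminates the width dependence without any multinomial expansion or appeal to Lemma~\ref{lemma-4way-product}. What this buys you is a shorter, more transparent argument with an explicit closed-form recursion for $B_{\mu+1}$; what the paper's approach buys is that it does not rely on the independence of the layer-$(\mu+1)$ parameters from earlier layers (it only uses the moment structure of the weights), though in the present setting that independence is available and your use of it is perfectly legitimate.
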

\begin{proof}
We proceed by induction on $\mu$. 
The case $\mu=1$ is trivial; the random variables $f_i^{(1)}(x)[n]$ are IID over $i$ and follow 
the normal distribution $\calN(0, v_b^{(1)}+v_w^{(1)}\|x\|^2)$. 
The eighth moments are therefore
\[ E[f_i^{(1)}(x)[n]^8] = 105 (v_b^{(1)}+v_w^{(1)}\|x\|^2)^4. \]
Clearly the eighth moment is independent of $i$ and $n$. 
Moreover, since $\mathcal{X}$ is compact, then 
$\sup_{x\in\mathcal{X}} E[f_i^{(1)}(x)[n]^8] < \infty$. 
The eighth moments are therefore uniformly bounded over $i$, $n$ and 
$x$.

Now assume that the eighth moments of $f_i^{(\mu)}(x)[n]$ are uniformly bounded over $i$, $n$ and $x$ for all $\mu\in\{1,\ldots,t-1\}$ and for some $t\in \{2,\ldots,D+1\}$. 
We wish to prove that the eighth moments of $f_i^{(t)}(x)[n]$ are uniformly bounded over $i$, $n$ and $x$.
% https://en.wikipedia.org/wiki/Minkowski_inequality
Using the inequality $|u(x)+v(x)|^p\le 2^{p-1}(|u(x)|^p+|v(x)|^p)$ for elements $u$ and $v$ of the $L^p$ space for $p\ge 1$,
which follows from the convexity of $h(x):=x^p$ for $p>1$, the bound 
\[ E[f_i^{(t)}(x)[n]^8] \leq 2^{7}E\left[(b_i^{(t)})^8 + \left(\sum_{j=1}^{h_{t-1}(n)} w_{ij}^{(t)} g_j^{(t)}(x)[n]\right)^8\right]\]
is first established.
The term $E[(b_i^{(t)})^8]$ is bounded since the biases are normally distributed. 
Moreover, the biases are IID over $i$ and are independent of $n$. 
Therefore, to achieve the desired uniform bound, we only need to show that the term
\[ S_i(x)[n] := E\left[\left(\sum_{j=1}^{h_{t-1}(n)} w_{ij}^{(t)} g_j^{(t-1)}(x)[n]\right)^8\right] \]
is uniformly bounded over $i$, $n$ and $x$. 
By Lemma 20 in \citetalias{matthews2018gaussian},
\[ S_i(x)[n] \leq \frac{1}{h_{t-1}(n)^4} E\left[\left(\sum_{i=1}^{h_{t-1}(n)} (c^2+2cm |f_i^{(t-1)}(x)[n]| + m^2 |f_i^{(t-1)}(x)[n]|^2)\right)^4\right], \]
where $c,m > 0$ are constants from the linear envelope property of the activation function. 
Letting $a=\operatorname{max}\{c^2, 2cm, m^2\}$ and multiplying out the quantity in the above expectation, we have
\begin{align*}
S_i(x)[n]
% &\leq \frac{1}{h_{t-1}(n)^4} E\left[\left(\sum_{i=1}^{h_{t-1}(n)} (a+a |f_i^{(t-1)}(x)[n]| + a |f_i^{(t-1)}(x)[n]|^2)\right)^4\right] \\
% &\leq \frac{a^4}{h_{t-1}(n)^4} E\left[\left(\sum_{i=1}^{h_{t-1}(n)} (1+ |f_i^{(t-1)}(x)[n]| + |f_i^{(t-1)}(x)[n]|^2)\right)^4\right] \\
& \leq \frac{a^4}{h_{t-1}(n)^4} E\left[\sum_{i,j,k,\ell=1}^{h_{t-1}(n)} \sum_{p,q,r,s=0}^2 |f_i^{(t-1)}(x)[n]|^p \cdot |f_j^{(t-1)}(x)[n]|^q \right. \\
&  \cdot |f_k^{(t-1)}(x)[n]|^r \cdot |f_{\ell}^{(t-1)}(x)[n]|^s\Bigg] \\
&= \frac{a^4}{h_{t-1}(n)^4} \sum_{i,j,k,\ell=1}^{h_{t-1}(n)} \sum_{p,q,r,s=0}^2 E\left[|f_i^{(t-1)}(x)[n]|^p \cdot |f_j^{(t-1)}(x)[n]|^q \right. \\
& \left. \cdot |f_k^{(t-1)}(x)[n]|^r \cdot |f_{\ell}^{(t-1)}(x)[n]|^s\right].
\end{align*}
Using Lemma~\ref{lemma-4way-product} and the fact that the moments of $f_i^{(t-1)}(x)[n]$ are independent of $i$ by exchangeability, we have
\begin{align*}
S_i(x)[n]
&\leq \frac{a^4}{h_{t-1}(n)^4} \sum_{i,j,k,\ell=1}^{h_{t-1}(n)} \sum_{p,q,r,s=0}^2 \left( E[f_i^{(t-1)}(x)[n]^8]^{\frac{p}{8}} \cdot E[f_j^{(t-1)}(x)[n]^8]^{\frac{q}{8}} \right. \\
&\left. \cdot E[f_k^{(t-1)}(x)[n]^8]^{\frac{r}{8}} \cdot E[f_{\ell}^{(t-1)}(x)[n]^8]^{\frac{s}{8}}\right) \\
% &= \frac{a^4}{h_{t-1}(n)^4} \sum_{i,j,k,\ell=1}^{h_{t-1}(n)} \sum_{p,q,r,s=0}^2 E[f_1^{(t-1)}(x)[n]^8]^{\frac{p}{8}} E[f_1^{(t-1)}(x)[n]^8]^{\frac{q}{8}} E[f_1^{(t-1)}(x)[n]^8]^{\frac{r}{8}} E[f_1^{(t-1)}(x)[n]^8]^{\frac{s}{8}} \\
&= \frac{a^4}{h_{t-1}(n)^4} \sum_{i,j,k,\ell=1}^{h_{t-1}(n)} \sum_{p,q,r,s=0}^2 E[f_1^{(t-1)}(x)[n]^8]^{\frac{p+q+r+s}{8}} \\
&= \frac{a^4}{h_{t-1}(n)^4}\cdot h_{t-1}(n)^4 \sum_{p,q,r,s=0}^2 E[f_1^{(t-1)}(x)[n]^8]^{\frac{p+q+r+s}{8}} \\
&= a^4 \sum_{p,q,r,s=0}^2 E[f_1^{(t-1)}(x)[n]^8]^{\frac{p+q+r+s}{8}} \\
&= a^4 \sum_{j=1}^{81} E[f_1^{(t-1)}(x)[n]^8]^{m_j},
\end{align*}
where each $m_j$ is a rational number between $0$ and $1$. 
Define the function
\[ \psi(z) = a^4 \sum_{j=1}^{81} z^{m_j}, \]
and note that $a$ and the $m_j$ are independent of the hidden width index $n$, the hidden neuron index $i$, and the input $x$. 
Moreover, $\psi$ is increasing on the interval $(0, \infty)$. 
Since we assumed as our inductive hypothesis that $E[f_i^{(t-1)}(x)[n]^8] < \infty$, then it follows that
\[ \sup_{i,n,x} S_i(x)[n] \leq \psi(\sup_{i,n,x} E[f_1^{(t-1)}(x)[n]^8]) < \infty, \]
implying that $E[f_i^{(t)}(x)[n]^8] < \infty$ uniformly over $i$ $n$, and $x$, thereby completing the proof.
\end{proof}

The following lemma extends Lemma 21 in \citetalias{matthews2018gaussian} to stochastic processes in the same sense as Lemma~\ref{lemma-8th-moment-stochastic} above.

\begin{lemma} \label{lemma-unif-int-stochastic}
Let $\mathcal{X}\subset \RR^M$ be a compact input space. 
Then for any
$\mu\in\{1,\ldots,D+1\}$ and indices $i,j,k,\ell\in\NN$, the set of random variables
\[ S := \{g_i^{(\mu)}(x_1)[n] g_j^{(\mu)}(x_2)[n] g_k^{(\mu)}(x_3)[n] g_{\ell}^{(\mu)}(x_4)[n]: n\in\NN \mbox{ and } x_1,x_2,x_3,x_4\in\mathcal{X}\} \]
is uniformly integrable.
\end{lemma}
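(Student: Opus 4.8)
The plan is to reduce the claim to a uniform second-moment bound. Recall the standard sufficient condition for uniform integrability: if $\sup_{Y\in S}\operatorname{E}[Y^2]<\infty$, then $S$ is uniformly integrable, since $|Y|\,\mathbf{1}_{\{|Y|>K\}}\le Y^2/K$ pointwise. Thus it suffices to bound $\operatorname{E}[Y^2]$ uniformly over $n\in\NN$ and over $x_1,x_2,x_3,x_4\in\mathcal{X}$, where $Y = g_i^{(\mu)}(x_1)[n]\,g_j^{(\mu)}(x_2)[n]\,g_k^{(\mu)}(x_3)[n]\,g_{\ell}^{(\mu)}(x_4)[n]$.

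First I would write $Y^2$ as a four-way product of squared activations, $Y^2 = \prod_{m=1}^{4} g^{(\mu)}_{\cdot}(x_m)[n]^2$, and apply the linear envelope condition together with convexity to bound each factor: since $g^{(\mu)}_{\cdot}(x_m)[n]=\phi\!\left(f^{(\mu)}_{\cdot}(x_m)[n]\right)$ and $|\phi(z)|\le C+M|z|$, we get $g^{(\mu)}_{\cdot}(x_m)[n]^2\le 2C^2+2M^2 f^{(\mu)}_{\cdot}(x_m)[n]^2$. Multiplying these four bounds and expanding the product yields
\[
Y^2 \le \sum 2^{4}\,C^{2a}M^{2(4-a)}\prod_{m=1}^{4}\big|f^{(\mu)}_{\cdot}(x_m)[n]\big|^{2p_m},
\]
a sum of $2^{4}$ terms indexed by $(p_1,\dots,p_4)\in\{0,1\}^{4}$ (with $a=\#\{m:p_m=0\}$), each with input-independent coefficient and with each exponent $2p_m\in\{0,2\}\subseteq\{0,1,2\}$. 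Taking expectations term by term, Lemma~\ref{lemma-4way-product} (applied with exponents $2p_m$) gives $\operatorname{E}\big[\prod_{m}|f^{(\mu)}_{\cdot}(x_m)[n]|^{2p_m}\big]\le\prod_{m}\operatorname{E}\big[f^{(\mu)}_{\cdot}(x_m)[n]^8\big]^{p_m/4}$. By Lemma~\ref{lemma-8th-moment-stochastic}, compactness of $\mathcal{X}$ furnishes a finite constant $\Lambda\ge 1$ with $\operatorname{E}\big[f^{(\mu)}_{\cdot}(x)[n]^8\big]\le\Lambda$ for all neuron indices, all $n$, and all $x\in\mathcal{X}$; since $\sum_m p_m/4\le 1$, each term is at most $\Lambda$. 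Summing the $2^4$ terms and collecting the constants, I obtain $\operatorname{E}[Y^2]\le 2^{8}\max(C,M,1)^{8}\,\Lambda$, uniformly in $n$ and in $x_1,\dots,x_4\in\mathcal{X}$, which gives $\sup_{Y\in S}\operatorname{E}[Y^2]<\infty$ and hence the uniform integrability of $S$.

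There is no serious obstacle here; the argument is a straightforward strengthening of Lemma~21 of \citetalias{matthews2018gaussian}, and the only point requiring care is bookkeeping — namely, making sure that all constants and exponents produced by the expansion are genuinely independent of the inputs $x_1,\dots,x_4$. This is exactly why the explicit polynomial bound of Lemma~\ref{lemma-4way-product} and the uniform-over-$\mathcal{X}$ eighth-moment bound of Lemma~\ref{lemma-8th-moment-stochastic} were isolated beforehand, so the uniformity propagates with no extra work.
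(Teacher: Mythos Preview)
Your proof is correct and follows essentially the same strategy as the paper: reduce uniform integrability to a uniform bound on $\operatorname{E}[Y^2]$ (the paper cites de la Vall\'{e}e--Poussin with $\eps=1$), then control this via Lemma~\ref{lemma-4way-product} and the uniform eighth-moment bound of Lemma~\ref{lemma-8th-moment-stochastic}. The only cosmetic difference is ordering: the paper applies Lemma~\ref{lemma-4way-product} directly to the activations $g$ (all exponents $=2$) to get $\prod_q \operatorname{E}[g_1^{(\mu)}(x_q)[n]^8]^{1/4}$ and only then invokes the linear envelope to bound $\operatorname{E}[g^8]$ by $\operatorname{E}[f^8]$, which spares you the product expansion and the $2^4$-term bookkeeping.
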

\begin{proof}
By the de la Vall\'{e}e-Poussin Theorem~\citep[p.19, Theorem T22]{meyer1966probability}, $S$ is uniformly integrable if
\[ \sup_{n, x_1, x_2, x_3, x_4} E[|g_i^{(\mu)}(x_1)[n] g_j^{(\mu)}(x_2)[n] g_k^{(\mu)}(x_3)[n] g_{\ell}^{(\mu)}(x_4)[n]|^{1+\eps}] < \infty \mbox{ for some } \eps > 0. \]
We consider $\eps=1$. 
By Lemma~\ref{lemma-4way-product},
\begin{align*}
\begin{aligned}
& E[|g_i^{(\mu)}(x_1)[n] g_j^{(\mu)}(x_2)[n] g_k^{(\mu)}(x_3)[n] g_{\ell}^{(\mu)}(x_4)[n]|^2] \\
% &= E[g_i^{(\mu)}(x_1)[n]^2 g_j^{(\mu)}(x_2)[n]^2 g_k^{(\mu)}(x_3)[n]^2 g_{\ell}^{(\mu)}(x_4)[n]^2] \\
& \leq E[g_i^{(\mu)}(x_1)[n]^8]^{\frac{1}{4}} E[g_j^{(\mu)}(x_2)[n]^8]^{\frac{1}{4}} E[g_k^{(\mu)}(x_3)[n]^8]^{\frac{1}{4}} E[g_{\ell}^{(\mu)}(x_4)[n]^8]^{\frac{1}{4}}\\
& = \prod_{q=1}^4 E[g_1^{(\mu)}(x_q)[n]^8]^{\frac{1}{4}},\\
\end{aligned}
\end{align*}
where we obtained the last line by exchangeability over the indices $i,j,k,\ell$. 
We therefore have
\begin{align*}
\sup_{n,x_1,x_2,x_3,x_4} E[|g_i^{(\mu)}(x_1)[n] g_j^{(\mu)}(x_2)[n] g_k^{(\mu)}(x_3)[n] g_{\ell}^{(\mu)}(x_4)[n]|^2]
&\leq \prod_{q=1}^4 \sup_{n,x_q} E[g_1^{(\mu)}(x_q)[n]^8]^{\frac{1}{4}} \\
&= \prod_{q=1}^4 \sup_{n,x} E[g_1^{(\mu)}(x)[n]^8]^{\frac{1}{4}} \\
%&= \prod_{q=1}^4 \left(\sup_{n,x} E[g_1^{(\mu)}(x)[n]^8]\right)^{\frac{1}{4}} \\
&= \sup_{n,x} E[g_1^{(\mu)}(x)[n]^8].
\end{align*}
It thus suffices to show that the supremum in the last line is finite. 
By the linear envelope property of the activation function,
\[ E[g_1^{(\mu)}(x)[n]^8] \leq 2^{7} \left(c^8+m^8 E[f_1^{(\mu)}(x)[n]^8]\right). \]
By Lemma~\ref{lemma-8th-moment-stochastic}, the right-hand side is uniformly bounded over all $n\in\NN$ and $x\in\mathcal{X}$, completing the proof.
\end{proof}

Cors.~\ref{cor-unif-int-stochastic} and~\ref{cor-unif-int-stochastic-2} of
Lemma~\ref{lemma-unif-int-stochastic}, below, are used in Lemmas~\ref{lemma-condition-2} and~\ref{lemma-condition-variance}, respectively.

\begin{corollary} \label{cor-unif-int-stochastic}
Let $\{x_q[n]\in\RR^M\}_{n=1}^\infty$ for $q\in\{1,\ldots,4\}$ be four convergent sequences with finite limits. 
Then for any $\mu\in\{1,\ldots,D+1\}$ and indices $i,j,k,\ell\in\NN$, the set of random variables
\[ S = \{g_i^{(\mu)}(x_1[n])[n] g_j^{(\mu)}(x_2[n])[n] g_k^{(\mu)}(x_3[n])[n] g_{\ell}^{(\mu)}(x_4[n])[n]: n\in\NN\} \]
is uniformly integrable.
\end{corollary}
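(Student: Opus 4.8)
The plan is to reduce Cor.~\ref{cor-unif-int-stochastic} to Lemma~\ref{lemma-unif-int-stochastic} by the elementary observation that a convergent sequence in $\RR^M$ together with its limit is a compact set. First I would choose a compact input space $\mathcal{X}\subset\RR^M$ containing every term of all four sequences. Since each $\{x_q[n]\}_{n=1}^\infty$ ($q\in\{1,\ldots,4\}$) converges to a finite limit, it is bounded, so $\sup_{q,n}\|x_q[n]\|$ is finite; I would simply take $\mathcal{X}$ to be the closed ball of that radius centered at $0$ (alternatively, the union of the four compact sets $\{x_q[n]:n\in\NN\}\cup\{\limn x_q[n]\}$). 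In either case $\mathcal{X}$ is compact and $x_q[n]\in\mathcal{X}$ for all $q$ and all $n$.

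With $\mathcal{X}$ fixed, the key step is an inclusion of sets of random variables. Let $S'$ denote the family appearing in Lemma~\ref{lemma-unif-int-stochastic} for this $\mathcal{X}$, namely all products $g_i^{(\mu)}(x_1)[n]\,g_j^{(\mu)}(x_2)[n]\,g_k^{(\mu)}(x_3)[n]\,g_\ell^{(\mu)}(x_4)[n]$ with $n\in\NN$ and $x_1,x_2,x_3,x_4\in\mathcal{X}$. For each fixed $n$, specializing the four free inputs to $x_1=x_1[n],\ldots,x_4=x_4[n]$ (which all lie in $\mathcal{X}$) exhibits the generic element of $S$ as a member of $S'$; hence $S\subseteq S'$. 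Lemma~\ref{lemma-unif-int-stochastic} gives that $S'$ is uniformly integrable, and since any subfamily of a uniformly integrable family is uniformly integrable, $S$ is uniformly integrable as well.

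I do not expect any real obstacle here; the content is entirely in recognizing that the compactness hypothesis of Lemma~\ref{lemma-unif-int-stochastic} is available for free, because convergence to a finite limit forces the sequence into a bounded, hence relatively compact, subset of $\RR^M$. The only thing worth writing out explicitly is this remark, so that the reader sees the passage from a fixed compact input space to convergent sequences of inputs costs nothing beyond it; the dependence on $\mu\in\{1,\ldots,D+1\}$ and on the neuron indices $i,j,k,\ell$ is simply carried along unchanged.
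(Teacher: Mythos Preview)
Your proposal is correct and follows essentially the same approach as the paper: both arguments obtain a compact $\mathcal{X}$ containing all terms of the four convergent sequences and then invoke Lemma~\ref{lemma-unif-int-stochastic}. The only cosmetic difference is that you conclude via ``a subfamily of a uniformly integrable family is uniformly integrable,'' whereas the paper re-extracts the second-moment bound from the proof of Lemma~\ref{lemma-unif-int-stochastic} and reapplies the de~la~Vall\'{e}e--Poussin criterion.
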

\begin{proof}
Since the sequences $\{x_q[n]\}_{n=1}^\infty$ for $q\in\{1,\ldots,4\}$, converge to finite limits, then there exists a compact set $\mathcal{X}\subset\RR^M$ that contains $x_q[n]$ for all $n$ and $q$. 
By (the proof of) Lemma~\ref{lemma-unif-int-stochastic}, we have
\[ \sup_{n,x_1,x_2,x_3,x_4} E[|g_i^{(\mu)}(x_1)[n] g_j^{(\mu)}(x_2)[n] g_k^{(\mu)}(x_3)[n] g_{\ell}^{(\mu)}(x_4)[n]|^2] < \infty, \]
where the $x_q$ are elements of the compact set $\mathcal{X}$. 
It then holds in particular that
\[ \sup_n E[|g_i^{(\mu)}(x_1[n])[n] g_j^{(\mu)}(x_2[n])[n] g_k^{(\mu)}(x_3[n])[n] g_{\ell}^{(\mu)}(x_4[n])[n]|^2] < \infty. \]
Uniform integrability then follows by the de la Vallee-Poussin Theorem.
\end{proof}

\begin{corollary} \label{cor-unif-int-stochastic-2}
Let $\{x_q[n]\in\RR^M\}_{n=1}^\infty$ for $q\in\{1,2\}$, be two convergent sequences with finite limits. 
Then for any $\mu\in\{1,\ldots,D+1\}$ and indices $i,j\in\NN$, the set of random variables
\[ S = \{g_i^{(\mu)}(x_1[n])[n] g_j^{(\mu)}(x_2[n])[n]: n\in\NN\} \]
is uniformly integrable.
\end{corollary}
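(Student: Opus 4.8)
The plan is to mirror the proof of Cor.~\ref{cor-unif-int-stochastic} almost verbatim, since a two-way product of activations is just a special case of the four-way structure handled there. First I would observe that, because the two sequences $\{x_1[n]\}_{n=1}^\infty$ and $\{x_2[n]\}_{n=1}^\infty$ converge to finite limits in $\RR^M$, there is a compact set $\mathcal{X}\subset\RR^M$ containing $x_q[n]$ for all $n\in\NN$ and $q\in\{1,2\}$ (together with the two limit points). This reduces the task to bounding second moments of the products $g_i^{(\mu)}(x_1)[n]\,g_j^{(\mu)}(x_2)[n]$ uniformly over $n\in\NN$ and $x_1,x_2\in\mathcal{X}$, after which we restrict the supremum to the sequence points.

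Next I would invoke the de la Vall\'{e}e-Poussin Theorem, exactly as in Lemma~\ref{lemma-unif-int-stochastic}: it suffices to show that $\sup_{n,x_1,x_2\in\mathcal{X}}\operatorname{E}[|g_i^{(\mu)}(x_1)[n]\,g_j^{(\mu)}(x_2)[n]|^{1+\eps}] < \infty$ for some $\eps>0$. Taking $\eps=1$ and applying H\"{o}lder's inequality (or Cauchy--Schwarz) gives $\operatorname{E}[g_i^{(\mu)}(x_1)[n]^2\,g_j^{(\mu)}(x_2)[n]^2] \le \operatorname{E}[g_i^{(\mu)}(x_1)[n]^4]^{1/2}\operatorname{E}[g_j^{(\mu)}(x_2)[n]^4]^{1/2}$, and then Lemma~\ref{lemma-4th-moment} upgrades each factor to an eighth-moment bound, so that the right-hand side is at most $\operatorname{E}[g_1^{(\mu)}(x_1)[n]^8]^{1/4}\operatorname{E}[g_1^{(\mu)}(x_2)[n]^8]^{1/4}$ after using exchangeability over the indices $i,j$. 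Equivalently, one can apply Lemma~\ref{lemma-4way-product} with two of the four exponents set to $0$ (and the corresponding random variables taken to be the constant $1$) to obtain the same bound in one step.

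Finally, I would control the eighth moments of the activations through the linear envelope condition, $\operatorname{E}[g_1^{(\mu)}(x)[n]^8] \le 2^{7}\bigl(c^8 + m^8\operatorname{E}[f_1^{(\mu)}(x)[n]^8]\bigr)$, and then invoke Lemma~\ref{lemma-8th-moment-stochastic} to conclude that $\sup_{n,x\in\mathcal{X}}\operatorname{E}[g_1^{(\mu)}(x)[n]^8] < \infty$. Combining the displays yields $\sup_{n}\operatorname{E}[|g_i^{(\mu)}(x_1[n])[n]\,g_j^{(\mu)}(x_2[n])[n]|^2] < \infty$, and uniform integrability of $S$ then follows from de la Vall\'{e}e-Poussin. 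I do not expect any genuine obstacle here; the only point requiring a little care is the compact-set argument that lets the uniform bound of Lemma~\ref{lemma-8th-moment-stochastic} apply along the sequences, precisely as in the four-way case of Cor.~\ref{cor-unif-int-stochastic}.
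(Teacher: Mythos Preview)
Your proposal is correct and uses the same underlying machinery as the paper (de la Vall\'{e}e-Poussin, Lemma~\ref{lemma-4way-product}, exchangeability, linear envelope, and Lemma~\ref{lemma-8th-moment-stochastic}). The only difference is cosmetic: you take $\eps=1$ and bound $\operatorname{E}[g_i^2 g_j^2]$ directly via Cauchy--Schwarz/Lemma~\ref{lemma-4way-product}, whereas the paper takes $\eps=3$, rewrites $|g_i g_j|^4 = g_i^2 g_i^2 g_j^2 g_j^2$ as a four-way product with all exponents equal to $2$, and then simply cites (the proof of) Cor.~\ref{cor-unif-int-stochastic}. The paper's route is marginally shorter because it reuses the already-proved four-way corollary wholesale rather than re-running the eighth-moment argument, but both arguments are equivalent in substance and difficulty.
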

\begin{proof}
By the de la Vall\'{e}e-Poussin Theorem, $S$ is uniformly integrable if
\[ \sup_{n,x_1,x_2,x_3,x_4} E[|g_i^{(\mu)}(x_1)[n] g_j^{(\mu)}(x_2)[n]|^{1+\eps}] < \infty \mbox{ for some } \eps > 0. \]
We consider $\eps=3$. 
We have
\begin{align*}
\begin{aligned}
& \sup_{n,x_1,x_2,x_3,x_4} E[|g_i^{(\mu)}(x_1)[n] g_j^{(\mu)}(x_2)[n]|^4] \\
% &= \sup_{n,x_1,x_2,x_3,x_4} E[g_i^{(\mu)}(x_1)[n]^4 g_j^{(\mu)}(x_2)[n]^4] \\
&= \sup_{n,x_1,x_2,x_3,x_4} E[g_i^{(\mu)}(x_1)[n]^2 g_i^{(\mu)}(x_1)[n]^2 g_j^{(\mu)}(x_2)[n]^2 g_j^{(\mu)}(x_2)[n]^2],
\end{aligned}
\end{align*}
which is finite by (the proof of) Cor.~\ref{cor-unif-int-stochastic}. 
The claim then follows.
\end{proof}

\section{Correspondence to the no-bottleneck NNGP}
\label{appendix-section-correspondence}

The following is our proof of the Wide Bottleneck Correspondence Theorem for the case of a single-bottleneck NNGP.

\begin{proof}[Proof of Thm.~\ref{thm-correspondence}]
First we prove statement~\ref{thm-correspondence:b}. 
We will do so for $L=1$; the case $L>1$ proceeds similarly. 
Let $X = \{x_t\}_{t=1}^T$ be a finite set of inputs. 
Let $p_H:\RR^T\mapsto\RR$ and $p:\RR^T\mapsto\RR$ be the PDFs of $F^{(H)}$ and $F$, respectively. 
Let $k^{(D_1)}:\RR^M\times\RR^M\mapsto\RR$ and $k^{(D_2)}:\RR^H\times\RR^H\mapsto\RR$ be the NNGP kernels of the pre-bottleneck and post-bottleneck components (with respective depths $D_1$ and $D_2$) of $F^{(H)}$; 
note that the kernels are independent of $H$. 
Then the PDF $p_H$ is given by
\begin{align} \label{eq-pH}
\begin{aligned}
&p_H(y) 
=\int_{(\RR^T)^H} \Bigg(\calN\left(y; 0, k^{(D_2)}\left(\frac{1}{\sqrt{H}}\phi(\{h_i\}_{i=1}^H), \frac{1}{\sqrt{H}}\phi(\{h_j\}_{j=1}^H)\right) + v_n \mathbf{I}_T\right) \\
& \left.\cdot \prod_{k=1}^H \calN(h_k; 0, k^{(D_1)}(X, X))\right)\,\mathrm{d}h_k,
\end{aligned}
\end{align}
where $h_i\in\RR^T$ is the vector of preactivations into the $i$-th hidden neuron in the bottleneck, 
and where we use the notation $\calN(z; \mu, \Sigma)$ to mean the normal PDF in the variable $z$ with mean $\mu$ and covariance $\Sigma$. 
Observing that the NNGP kernel in Eqs.~\eqref{eq-K1}-\eqref{eq-Kmu} depends on its inputs only through their Gram matrix and writing the kernel of the first layer explicitly, 
we can define a function $\tilde{k}^{(D_2)}$ on the space of symmetric positive semidefinite matrices such that
\begin{equation*}
\tilde{k}^{(D_2)}(v_b + v_w A) = k^{(D_2)}(B, B),
\quad A = BB^\top.
\end{equation*}
Defining the random $T\times T$ matrix
\begin{equation} \label{eq-ZH}
Z_H = \frac{1}{H}\sum_{i=1}^H \phi(h_i)\phi(h_i)^\top,
\quad h_i\sim \calN(0, k^{(D_1)}(X, X)) \mbox{ IID},
\end{equation}
and letting $\mu_H$ denote the probability measure associated with $Z_H$, the PDF in Eq.~\eqref{eq-pH} can be written as
\begin{equation*}
p_H(y) = \int_{\RR^{T\times T}} \calN(y; 0, \tilde{k}^{(D_2)}(v_b + v_w z)+v_n \mathbf{I}_T)\,\mathrm{d}\mu_H(z).
\end{equation*}
Here $z$ is a dummy variable.
Now since the $h_i$ in Eq.~\eqref{eq-ZH} are IID, then so are the matrices $\phi(h_i)\phi(h_i)^\top$. 
Therefore, $Z_H$ is an empirical average of $H$ IID random matrices.
By the Law of Large Numbers, we have
\begin{equation*}
\{Z_H\}_{H=1}^\infty \xrightarrow{P} Z = \operatorname{E}_{h\sim\calN(0, k^{(D_1)}(X, X)}[\phi(h)\phi(h)^\top],
\end{equation*}
where the convergence is in probability. 
In particular, $\{Z_H\}_{H=1}^\infty\xrightarrow{D} Z$ so that the sequence of measures $\{\mu_H\}_{H=1}^\infty$ weakly converges to the probability measure $\mu$ associated with $Z$. 
Note that $\mu$ is a delta distribution concentrated at $Z$. 
Furthermore, thanks to the Gaussian noise, the function $z\rightarrow \calN(y; 0, \tilde{k}^{(D_2)}(z)+v_n\mathbf{I}_T)$ is bounded over $\RR^{T\times T}$; 
it is continuous as well, as the matrix inversion and determination operations and the NNGP kernel (Lemma~\ref{lemma-cont}) are all continuous. 
By the weak convergence of measures and the delta distribution $\mu$, we have
\begin{align*}
\lim_{H\rightarrow\infty} p_H(y)
&= \lim_{H\rightarrow\infty} \int_{\RR^{T\times T}} \calN(y; 0, \tilde{k}^{(D_2)}(v_b + v_w z)+v_n \mathbf{I}_T)\,\mathrm{d}\mu_H(z) \\
&= \int_{\RR^{T\times T}} \calN(y; 0, \tilde{k}^{(D_2)}(v_b + v_w z)+v_n \mathbf{I}_T)\,\mathrm{d}\mu(z) \\
&= \calN\left(y; 0, \tilde{k}^{(D_2)}\left(v_b + v_w \operatorname{E}_{h\sim\calN(0, k^{(D_1)}(X, X)}[\phi(h)\phi(h)^\top]\right)+v_n \mathbf{I}_T\right) \\
&= \calN\left(y; 0, \tilde{k}^{(D_2)}\left(k^{(D_1+1)}(X, X)\right)+v_n \mathbf{I}_T\right) \\
&= \calN(y; 0, k^{(D_1+D_2+1)}(X, X)+v_n \mathbf{I}_T) \\
&= p(y),
\end{align*}
which is the PDF of an NNGP with $D_1+D_2+1$ hidden layers.

To prove statement~\ref{thm-correspondence:a} of the theorem, we first note that the pointwise convergence $p_H\rightarrow p$ ensures the convegence in distribution $F^{(H)}(X)\xrightarrow{D} F(X)$ according to Scheff\'{e}'s Lemma.
Since this holds for any finite set of inputs $X$ and in particular any finite subset of a countable set $\mathcal{X}\subset \RR^M$, then by Thm.~\ref{thm-marginal}, we have that $\{F^{(H)}\}_{H=1}^\infty\xrightarrow{D} F$ in $((\RR^L)^\infty, \mathcal{A})$ for inputs restricted to $\mathcal{X}$ as claimed.
\end{proof}

\section{Bottleneck layers induce correlation}
\label{appendix-section-cor}

Recall the single-bottleneck NNGP $F$ defined in Sec.~\ref{subsection-cor-exact-formula}. 
Each output $(F_i(x_1), F_i(x_2))$ conditional on the activations of the bottleneck layer follow the two-dimensional normal distribution $\mathcal{N}(0, K)$ where
\[ K = \begin{bmatrix} k_{11} & k_{12} \\ k_{21} & k_{22} \end{bmatrix}. \]
It can be shown that the diagonal entries of $K$ are given by
\begin{align*}
k_{aa} &= b_D + \frac{w_D}{H} \sum_{i=1}^H \phi(h_i^a)^2, \\
b_D &= v_n + v_b \sum_{d=0}^{D-1} v_w^d, \\ %\label{eq-bD} \\
w_D &= v_w^D. %\label{eq-wD}
\end{align*}
The expression for the off-diagonals $k_{12}, k_{21}$ will not be important.

The proof of Prop.~\ref{prop-quad-cor} regarding the quadratic correlation between bottleneck NNGP outputs follows.

\begin{proof}[Proof of Prop.~\ref{prop-quad-cor}]
We have
\begin{align} \label{eq-E-Y1a2}
\begin{aligned}
\operatorname{E}[F_1(x_a)^2]
&= \int_{(\RR^2)^H} \int_{(\RR^2)^2} (y_1^a)^2 \calN(y_1; 0, K) \calN(y_2; 0, K) \prod_{m=1}^H \calN(h_m; 0, C)\,\mathrm{d}y\,\mathrm{d}h \\
&= \int_{(\RR^2)^H} k_{aa} \prod_{m=1}^H \calN(h_m; 0, C)\,\mathrm{d}h \\
&= \int_{(\RR^2)^H} \left(b_D + \frac{w_D}{H}\sum_{i=1}^H \phi(h_i^a)^2\right) \prod_{m=1}^H \calN(h_m; 0, C)\,\mathrm{d}h \\
&= b_D + \frac{w_D}{H} \sum_{i=1}^H \int_{(\RR^2)^H} \phi(h_i^a)^2 \prod_{m=1}^H \calN(h_m; 0, C)\,\mathrm{d}h \\
    &= b_D + \frac{w_D}{H} \sum_{i=1}^H \int_{(\RR^2)^H} \phi(h_i^a)^2 \calN(h_i; 0, C)\,\mathrm{d}h_i \\
&= b_D + \frac{w_D}{H} \sum_{i=1}^H E_{z\sim\calN(0, c_{aa})}[\phi(z)^2] \\
&= b_D + w_D E_{z\sim\calN(0, c_{aa})}[\phi(z)^2].
\end{aligned}
\end{align}
We similarly have
\begin{equation}
 \label{eq-E-Y2b2}
\operatorname{E}[F_2(x_b)^2]
= b_D + w_D E_{z\sim\calN(0, c_{bb})}[\phi(z)^2].
\end{equation}
Combining  Eqs.~\eqref{eq-E-Y1a2} and  \eqref{eq-E-Y2b2} yields
\begin{align*}
\begin{aligned}
&\operatorname{E}[F_1(x_a)^2] \operatorname{E}[F_2(x_b)^2]\\
&= b_D^2 + b_D w_D E_{z\sim\calN(0, c_{aa})}[\phi(z)^2] + b_D w_D E_{z\sim\calN(0, c_{bb})}[\phi(z)^2]\\
&+ E_{z\sim\calN(0, c_{aa})}[\phi(z)^2] E_{z\sim\calN(0, c_{bb})}[\phi(z)^2].
\end{aligned}
\end{align*}

We also have
\begin{align*}
\begin{aligned}
&\operatorname{E}[F_1(x_a)^2 F_2(x_b)^2]\\
% &= \int_{(\RR^2)^H} \int_{(\RR^2)^2} (y_1^a)^2 (y_2^b)^2 \calN(y_1; 0, K) \calN(y_2; 0, K) \prod_{m=1}^H \calN(h_m; 0, C)\,\mathrm{d}y\,\mathrm{d}h \\
&= \int_{(\RR^2)^H} \int_{(\RR^2)^2} \prod_{y=y_1^a,y_2^b}y^2 \calN(y; 0, K) \prod_{m=1}^H \calN(h_m; 0, C)\,\mathrm{d}y\,\mathrm{d}h \\
&= \int_{(\RR^2)^H} k_{aa} k_{bb} \prod_{m=1}^H \calN(h_m; 0, C)\,\mathrm{d}h \\
% &= \int_{(\RR^2)^H} \left(b_D + \frac{w_D}{H}\sum_{i=1}^H \phi(h_i^a)^2\right) \left(b_D + \frac{w_D}{H} \sum_{j=1}^H \phi(h_j^b)^2\right) \prod_{m=1}^H \calN(h_m; 0, C)\,\mathrm{d}h \\
&= \int_{(\RR^2)^H} \prod_{h=h_i^a,h_j^b} \left(b_D + \frac{w_D}{H} \sum_{j=1}^H \phi(h)^2\right) \prod_{m=1}^H \calN(h_m; 0, C)\,\mathrm{d}h \\
% &= \int_{(\RR^2)^H} \left(b_D + \frac{w_D}{H}\sum_{i=1}^H \phi(h_i^a)^2\right) \left(b_D + \frac{w_D}{H} \sum_{j=1}^H ]\phi(h_j^b)^2\right) \prod_{m=1}^H \calN(h_m; 0, C)\,\mathrm{d}h \\
% &= b_D^2 + \frac{b_D w_D}{H} \sum_{i=1}^H \int_{(\RR^2)^H} \phi(h_i^a)^2 + \frac{b_D w_D}{H} \sum_{j=1}^H \int_{(\RR^2)^H} ]\phi(h_j^b)^2 \prod_{m=1}^H \calN(h_m; 0, C)\,\mathrm{d}h \\
% &\quad + \frac{w_D^2}{H^2} \sum_{i,j=1}^H \int_{(\RR^2)^H} \phi(h_i^a)^2 ]\phi(h_j^b)^2 \prod_{m=1}^H \calN(h_m; 0, C)\,\mathrm{d}h \\
&= b_D^2 + b_D w_D E_{z\sim\calN(0, c_{aa})}[\phi(z)^2] + b_D w_D E_{z\sim\calN(0, c_{bb})}[\phi(z)^2] \\
&+ \frac{w_D^2}{H^2} \sum_{i,j=1}^H \int_{(\RR^2)^2} \phi(h_i^a)^2 \phi(h_j^b)^2 \calN(h_i; 0, C)\calN(h_j; 0, C)\,\mathrm{d}h_i\,\mathrm{d}h_j \\
&= \operatorname{E}[F_1(x_a)^2] \operatorname{E}[F_2(x_b)^2] - E_{z\sim\calN(0, c_{aa})}[\phi(z)^2] E_{z\sim\calN(0, c_{bb})}[\phi(z)^2] \\
&+ \frac{w_D^2}{H^2} \sum_{i=1}^H \int_{\RR^2} \phi(h_i^a)^2 \phi(h_i^b)^2 \calN(h_i; 0, C)\,\mathrm{d}h_i \\
&+ \frac{w_D^2}{H^2} \sum_{i\neq j=1}^H \int_{(\RR^2)^2} \phi(h_i^a)^2 \phi(h_j^b)^2 \calN(h_i; 0, C)\calN(h_j; 0, C)\,\mathrm{d}h_i\,\mathrm{d}h_j \\
% &= \operatorname{E}[F_1(x_a)^2] \operatorname{E}[F_2(x_b)^2] - E_{z\sim\calN(0, c_{aa})}[\phi(z)^2] E_{z\sim\calN(0, c_{bb})}[\phi(z)^2] \\
% &+ \frac{w_D^2}{H^2} \sum_{i=1}^H E_{(z,z^\prime)\sim\calN(0, C)}[\phi(z)^2\phi(z^\prime)^2] \\
% &+ \frac{w_D^2}{H^2} \sum_{i\neq j=1}^H E_{z\sim\calN(0, c_{aa})}[\phi(z)^2] E_{z\sim\calN(0, c_{bb})}[\phi(z)^2] \\
&= \operatorname{E}[F_1(x_a)^2] \operatorname{E}[F_2(x_b)^2] - E_{z\sim\calN(0, c_{aa})}[\phi(z)^2] E_{z\sim\calN(0, c_{bb})}[\phi(z)^2] \\
&+ \frac{w_D^2}{H} E_{(z,z^\prime)\sim\calN(0, C)}[\phi(z)^2\phi(z^\prime)^2] \\
&+ w_D^2\left(1-\frac{1}{H}\right) E_{z\sim\calN(0, c_{aa})}[\phi(z)^2] E_{z\sim\calN(0, c_{bb})}[\phi(z)^2] \\
&= \operatorname{E}[F_1(x_a)^2] \operatorname{E}[F_2(x_b)^2]
+ \frac{w_D^2}{H} E_{(z,z^\prime)\sim\calN(0, C)}[\phi(z)^2\phi(z^\prime)^2] \\
&- \frac{w_D^2}{H} E_{z\sim\calN(0, c_{aa})}[\phi(z)^2] E_{z\sim\calN(0, c_{bb})}[\phi(z)^2] \\
&= \operatorname{E}[F_1(x_a)^2] \operatorname{E}[F_2(x_b)^2] + \frac{w_D^2}{H} \cov_{(z,z^\prime)\sim\calN(0, C)}[\phi(z)^2, \phi(z^\prime)^2].
\end{aligned}
\end{align*}
We therefore have
\begin{equation*}
\cov[F_1(x_a)^2, F_2(x_b)^2] = \frac{w_D^2}{H} \cov_{(z,z^\prime)\sim\calN(0, C)}[\phi(z)^2, \phi(z^\prime)^2].
\end{equation*}
On the right-hand side, we have the covariance of two rectified quadratic units with respect to the Gaussian measure $\calN(0, C)$. 
By the work of \cite{cho2009kernel} and by adjusting for differences in normalization, we have the expectations
\begin{align}
E_{z\sim\calN(0, c_{aa})}[\phi(z)^2] &= c_{aa} \label{eq-E-z}, \\
E_{(z,z^\prime)\sim\calN(0, C)}[\phi(z)^2\phi(z^\prime)^2] &= \frac{2}{\pi}c_{aa}c_{bb}J_2(\beta),
\end{align}
where $\beta = \cos^{-1}\left(\frac{c_{ab}}{\sqrt{c_{aa}c_{bb}}}\right)$ and 
$J_2(\beta) = 3\sin\beta\cos\beta + (\pi-\beta)(1 + 2\cos^2\beta)$. 
Using also the fact that $w_D = v_w^D$, we obtain
\begin{align*}
\cov[F_1(x_a)^2, F_2(x_b)^2]
&= \frac{(v_w^D)^2}{H}\left(\frac{2}{\pi}c_{aa}c_{bb}J_2(\beta)-c_{aa}c_{bb}\right) \\
&= \frac{v_w^{2D}c_{aa}c_{bb}}{H}\left(\frac{2}{\pi}J_2(\beta)-1\right),
\end{align*}
establishing Eq.~\eqref{eq-square-cov}.

The corresponding correlation is defined as
\begin{equation} \label{eq-cor-1}
q^{\times}_{ab} = \frac{\cov[F_1(x_a)^2, F_2(x_b)^2]}{\sqrt{\operatorname{V}[F_1(x_a)^2] \operatorname{V}[F_2(x_b)^2]}}.
\end{equation}
We already know the numerator on the right-hand side, but we need to calculate the variances in the denominator. 
Using the fact that $F_1(x_a)$ and $F_2(x_b)$ are identically (but not independently) distributed, we have
\begin{align*}
\begin{aligned}
&\operatorname{V}[F_1(x_a)^2]\\
&= \operatorname{E}[F_1(x_a)^4] - \operatorname{E}[F_1(x_a)^2]^2 \\
&= \int_{(\RR^2)^H} \int_{(\RR^2)^2} (y_1^a)^4 \prod_{i=1}^{2}\calN(y_i; 0, K) \prod_{m=1}^H \calN(h_m; 0, C)\,\mathrm{d}y\,\mathrm{d}h - \operatorname{E}[F_1(x_a)^2]^2 \\
% &= \int_{(\RR^2)^H} \int_{(\RR^2)^2} (y_1^a)^4 \calN(y_1; 0, K) \calN(y_2; 0, K) \prod_{m=1}^H \calN(h_m; 0, C)\,\mathrm{d}y\,\mathrm{d}h - \operatorname{E}[F_1(x_a)^2]^2 \\
% &= \int_{(\RR^2)^H} 3(k_{aa})^2 \prod_{m=1}^H \calN(h_m; 0, C)\,\mathrm{d}h - \operatorname{E}[F_1(x_a)^2]^2 \\
&= 3\int_{(\RR^2)^H} k_{aa}k_{aa} \prod_{m=1}^H \calN(h_m; 0, C)\,\mathrm{d}h - \operatorname{E}[F_1(x_a)^2]^2 \\
&= 3\operatorname{E}[F_1(x_a)^2, F_2(x_a)^2] - \operatorname{E}[F_1(x_a)^2] \operatorname{E}[F_2(x_a)^2] \\
&= 3\cov[F_1(x_a)^2, F_2(x_a)^2] + 3\operatorname{E}[F_1(x_a)^2] \operatorname{E}[F_2(x_a)^2] - \operatorname{E}[F_1(x_a)^2] \operatorname{E}[F_2(x_a)^2] \\
&= 3\cov[F_1(x_a)^2, F_2(x_a)^2] + 2\operatorname{E}[F_1(x_a)^2] \operatorname{E}[F_2(x_a)^2] \\
&= 3\cov[F_1(x_a)^2, F_2(x_a)^2] + 2\operatorname{E}[F_1(x_a)^2]^2.
\end{aligned}
\end{align*}
By Eqs.~\eqref{eq-E-Y1a2}, \eqref{eq-E-z}, and \eqref{eq-square-cov}, we have
\begin{align}
\label{eq-prop-quad-cor-square-var}
\begin{aligned}
\operatorname{V}[F_1(x_a)^2]
&= \frac{3w_D^2 c_{aa}c_{aa}}{H}\left(\frac{2}{\pi}J_2(0)-1\right) + 2\left(b_D + w_D c_{aa}\right)^2 \\
&= \frac{3w_D^2(c_{aa})^2}{H}\left(\frac{2\cdot 3\pi}{\pi}-1\right) + 2\left(b_D + w_D c_{aa}\right)^2 \\
&= \frac{15w_D^2(c_{aa})^2}{H} + 2\left(b_D + w_D c_{aa}\right)^2.
\end{aligned}
\end{align}
We similarly have
\begin{equation*}
\operatorname{V}[F_2(x_b)^2] = \frac{15w_D^2(c_{bb})^2}{H} + 2\left(b_D + w_D c_{bb}\right)^2.
\end{equation*}
Substituting these variances into Eq.~\eqref{eq-cor-1}, we obtain
\begin{align*}
q^{\times}_{ab}
% &= \corr[F_1(x_a)^2, F_2(x_b)^2] \\
% &= \frac{\frac{w_D^2 c_{aa}c_{bb}}{H}\left(\frac{2}{\pi}J_2(\beta)-1\right)}{\sqrt{\left[\frac{15w_D^2(c_{aa})^2}{H} + 2\left(b_D + w_D c_{aa}\right)^2\right]\left[\frac{15w_D^2(c_{bb})^2}{H} + 2\left(b_D + w_D c_{bb}\right)^2\right]}} \\
% &= \frac{\left(\frac{2}{\pi}J_2(\beta)-1\right)}{\sqrt{\frac{H}{w_D (c_{aa})^2}\left[\frac{15w_D^2(c_{aa})^2}{H} + 2\left(b_D + w_D c_{aa}\right)^2\right]\frac{H}{w_D(c_{bb})^2}\left[\frac{15w_D^2(c_{bb})^2}{H} + 2\left(b_D + w_D c_{bb}\right)^2\right]}} \\
% &= \frac{\left(\frac{2}{\pi}J_2(\beta)-1\right)}{\sqrt{\left[15 + 2H\left(\frac{r_D}{c_{aa}} + 1\right)^2\right]\left[15 + 2H\left(\frac{r_D}{c_{bb}} + 1\right)^2\right]}},
&= \frac{\frac{w_D^2 c_{aa}c_{bb}}{H}\left(\frac{2}{\pi}J_2(\beta)-1\right)}{\displaystyle\prod_{c=c_{aa},c_{bb}}\sqrt{\frac{15w_D^2c^2}{H} + 2\left(b_D + w_D c\right)^2}} \\
&= \frac{\left(\frac{2}{\pi}J_2(\beta)-1\right)}{\displaystyle\prod_{c=c_{aa},c_{bb}}\sqrt{\frac{H}{w_D c^2}\left(\frac{15w_D^2c^2}{H} + 2\left(b_D + w_D c\right)^2\right)}} \\
&= \frac{\left(\frac{2}{\pi}J_2(\beta)-1\right)}{\displaystyle\prod_{c=c_{aa},c_{bb}}\sqrt{15 + 2H\left(\frac{r_D}{c} + 1\right)^2}},
\end{align*}
where
\begin{align*}
r_D
&= \frac{b_D}{w_D} \\
&= \frac{v_n}{v_w^D} + \frac{v_b}{v_w^D}\sum_{d=0}^{D-1} v_w^d \\
&= \frac{v_n}{v_w^D} + v_b \sum_{d=1}^D \frac{1}{v_w^d} \\
&= \begin{cases}
v_n + D v_b & \mbox{ if } v_w = 1 \\
\frac{v_n}{v_w^D} + \frac{v_b}{1-v_w}\left(\frac{1}{v_w^D}-1\right) & \mbox{ otherwise},
\end{cases}
\end{align*}
establishing Eqs.~\eqref{eq-square-cor} and \eqref{eq-rD}.
\end{proof}

\begin{proof}[Proof of Prop.~\ref{prop-quad-cor-lim}]
For part~\ref{prop-quad-cor-lim:a}, Eq.~\eqref{eq-square-cor-lim} follows by substitution of Eq.~\eqref{eq-r_infty} into Eq.~\eqref{eq-square-cor}.

For part~\ref{prop-quad-cor-lim:b}, the map $G\mapsto Q^{\times(\infty)}$ is a composition of the maps $G\mapsto C$ and $C\mapsto Q^{\times(\infty)}$, 
and thus it suffices to show that these two maps are invertible. 
The map $G\mapsto C$ sends the input Gram matrix to the NNGP kernel at the bottleneck layer. 
Inverting Eq.~\eqref{eq-Kmu-relu} for the case $i=j$, we obtain the recursion for the backward propagation of the NNGP kernel:
\begin{equation*}
K^{(\mu-1)}_{ii}(x_1, x_2) = \frac{1}{v_w}\left(\prod_{a=1,2}\sqrt{K^{(\mu)}_{ii}(x_a, x_a)-v_b}\right)\cos J_1^{-1}\left(\pi \frac{K^{(\mu)}_{ii}(x_1, x_2)-v_b}{\prod_{a=1,2}\sqrt{K^{(\mu)}_{ii}(x_a, x_a)-v_b}}\right),
\end{equation*}
where we note $J_1$ is strictly decreasing on $[0, \pi]$. 
Applying this recursion to $C$ $d$ times (where $d$ is the depth of the pre-bottleneck NNGP) gives $K^{(1)}_{ii}(x_1, x_2)$, 
and by solving Eq.~\eqref{eq-K1} we obtain $G$. 
Thus, $G\mapsto C$ is invertible.

To show $C\mapsto Q^{\times(\infty)}$ is invertible, we inspect Eq.~\eqref{eq-square-cor-lim} for the case $v_w>1$ in Prop.~\ref{prop-quad-cor-lim} and observe that $q^{\times(\infty)}_{aa}$ depends only on $c_{aa}$ 
(the bottleneck angle $\beta$ is $0$ when the two inputs are identical). 
We may then solve for $c_{aa}$. 
Substituting $c_{aa}$ for $a\in\{1,2\}$ into Eq.~\eqref{eq-square-cor-lim} and noting that $J_2$ is strictly decreasing, we may solve for the bottleneck angle $\beta$ from $q^{\times(\infty)}_{12}$ and thus obtain $c_{12}$, recovering $C$. 
\end{proof}

\begin{proof}[Proof of Prop.~\ref{prop-depth-scale}]
	\label{proof:prop-depth-scale}
	Since $\|x_1\|=\|x_2\|$, then $c_{11}=c_{22}$. 
	Letting $c=c_{11}=c_{22}$, we have by Eq.~\eqref{eq-square-cor} that
	\[ q^{\times(D)}_{ab} = \frac{\left(\frac{2}{\pi}J_2(\beta)-1\right)}{15+2H\left(\frac{r_D}{c}+1\right)^2}. \]
	We will find a $\lambda > 0$ such that the limit $L$ in Eq.~\eqref{eq-asymp-exp-L} is finite and non-zero. 
	Note that while evaluating the limit, we will drop (non-zero) constants of proportionality. 
	Observing that both the numerator and denominator inside the limit $L$ in Eq.~\eqref{eq-asymp-exp-L} tend to $0$ as $D\rightarrow\infty$ and using L'H\^{o}pital's rule, we have
	\begin{align}
	\label{eq-asymp-exp-2}
	\begin{aligned}
	L 
	&= \limD \frac{\frac{\mathrm{d}}{\mathrm{d}D} q^{\times(D)}_{ab} - \frac{\mathrm{d}}{\mathrm{d}D}q^{\times(\infty)}_{ab}}{\frac{\mathrm{d}}{\mathrm{d}D}e^{-\frac{D}{\lambda}}} \\
	&= \limD \frac{\frac{\mathrm{d}}{\mathrm{d}D} \frac{\left(\frac{2}{\pi}J_2(\beta)-1\right)}{15+2H\left(\frac{r_D}{c}+1\right)^2} - 0}{-\frac{1}{\lambda}e^{-\frac{D}{\lambda}}} \\
	&\propto -\limD e^{\frac{D}{\lambda}}\cdot\frac{\mathrm{d}}{\mathrm{d}D}\frac{1}{15+2H\left(\frac{r_D}{c}+1\right)^2} \\
	&= \limD e^{\frac{D}{\lambda}}\cdot \frac{4H\left(\frac{r_D}{c}+1\right)\frac{1}{c}\cdot\left(v_n+\frac{v_b}{1-v_w}\right)\cdot\frac{1}{v_w^D}\ln\left(\frac{1}{v_w}\right)}{\left[15+2H\left(\frac{r_D}{c}+1\right)^2\right]^2} \\
	&\propto \limD \frac{e^{\frac{D}{\lambda}}}{v_w^D}\cdot \frac{\left(\frac{r_D}{c}+1\right)}{\left[15+2H\left(\frac{r_D}{c}+1\right)^2\right]^2}.
	\end{aligned}
	\end{align}
	
	In the case $v_w > 1$, $r_D$ tends to a finite positive limit as $D\rightarrow\infty$, so that the second fraction in the limit in Eq.~\eqref{eq-asymp-exp-2} tends to a finite positive limit as well. We therefore have
	\[ L
	\propto \limD \frac{e^{\frac{D}{\lambda}}}{v_w^D}, \]
	which is finite and non-zero (and exists) if and only if $\frac{e^{\frac{1}{\lambda}}}{v_w} = 1$, implying $\lambda = \ln(v_w)^{-1}$.
	
	In the case $v_w < 1$, $r_D\rightarrow\infty$ as $D\rightarrow\infty$, so that Eq.~\eqref{eq-asymp-exp-2} simplifies to
	\begin{align}
	L
	&\propto \limD \frac{e^{\frac{D}{\lambda}}}{v_w^D}\cdot \frac{\left(\frac{r_D}{c}\right)}{\left[2H\left(\frac{r_D}{c}\right)^2\right]^2} \nonumber \\
	&\propto \limD \frac{e^{\frac{D}{\lambda}}}{v_w^D}\cdot \frac{1}{r_D^3} \label{eq-asymp-exp-3} \\
	&= \limD \frac{e^{\frac{D}{\lambda}}}{v_w^D}\cdot \frac{1}{\left[\frac{v_n}{v_w^D}+\frac{v_b}{1-v_w}\left(\frac{1}{v_w^D}-1\right)\right]^3} \nonumber \\
	&= \limD \frac{e^{\frac{D}{\lambda}}}{v_w^D}\cdot \frac{1}{\left[\left(v_n+\frac{v_b}{1-v_w}\right)\frac{1}{v_w^D}-\frac{v_b}{1-v_w}\right]^3}. \nonumber
	\end{align}
	Since $\frac{1}{v_w^D}\rightarrow\infty$ as $D\rightarrow\infty$, then
	\[ L
	\propto \limD \frac{e^{\frac{D}{\lambda}}}{v_w^D}\cdot \frac{1}{\left[\left(v_n+\frac{v_b}{1-v_w}\right)\frac{1}{v_w^D}\right]^3} 
	\propto \limD v_w^{2D} e^{\frac{D}{\lambda}}, \]
	which is finite and non-zero (and exists) if and only if $v_w^2 e^{\frac{1}{\lambda}} = 1$, implying $\lambda = \ln\left(\frac{1}{v_w^2}\right)^{-1}$.
	
	In the case $v_w=1$, we again have $r_D\rightarrow\infty$ as $D\rightarrow\infty$ and thus still obtain Eq.~\eqref{eq-asymp-exp-3}. 
	Substituting $v_w=1$ and $r_D=v_n+v_b D$ into Eq.~\eqref{eq-asymp-exp-3} gives
	\[ L \propto \limD \frac{e^{\frac{D}{\lambda}}}{(v_n + v_b D)^3}, \]
	which is infinite for all finite $\lambda > 0$.
\end{proof}

\begin{proof}[Proof of Prop.~\ref{prop-q}]
	\label{proof:prop-q}
	For part~\ref{prop-q:a}, let
	\[ \mathrm{d}\mu(h) = \prod_{m=1}^H \calN(h_m; 0, C)\,\mathrm{d}h. \]
	on $(\RR^2)^H$. 
	Note that $\mu$ is a non-degenerate normal distribution. 
	Then we have
	\begin{align*}
	\cov[F_1(x_a)^2, F_1(x_b)^2]
	&= \operatorname{E}[F_1(x_a)^2 F_2(x_b)^2] - \operatorname{E}[F_1(x_a)^2] \operatorname{E}[F_1(x_b)^2] \\
	&= \int_{(\RR^2)^H}\int_{(\RR^2)^2} (y^a)^2 (y^b)^2 \calN(y; 0, K)\,\mathrm{d}y\,\mathrm{d}\mu(h) - \operatorname{E}[F_1(x_a)^2] \operatorname{E}[F_1(x_b)^2] \\
	&= \int_{(\RR^2)^H} (2k_{ab}^2+k_{aa}k_{bb})\,\mathrm{d}\mu(h) - \operatorname{E}[F_1(x_a)^2] \operatorname{E}[F_1(x_b)^2] \\
	&= 2\int_{(\RR^2)^H} k_{ab}^2\,\mathrm{d}\mu(h) + \int_{(\RR^2)^H} k_{aa}k_{bb}\,\mathrm{d}\mu(h) - \operatorname{E}[F_1(x_a)^2] \operatorname{E}[F_1(x_b)^2].
	\end{align*}
	Noting that $F_1(x_b)$ and $F_2(x_b)$ are identically distributed and recalling the proof of Prop.~\ref{prop-quad-cor}, we have
	\begin{align*}
	\cov[F_1(x_a)^2, F_1(x_b)^2]
	&= 2\int_{(\RR^2)^H} k_{ab}^2\,\mathrm{d}\mu(h) + \int_{(\RR^2)^H} k_{aa}k_{bb}\,\mathrm{d}\mu(h) - \operatorname{E}[F_1(x_a)^2] \operatorname{E}[F_2(x_b)^2] \\
	&= 2\int_{(\RR^2)^H} k_{ab}^2\,\mathrm{d}\mu(h) + \cov[F_1(x_a)^2, F_2(x_b)^2].
	\end{align*}
	Using again the fact that $F_1(x_b)$ and $F_2(x_b)$ are identically distributed, we have the correlation
	\begin{align*}
	q^{(D)}_{ab}
	&= \frac{\cov[F_1(x_a)^2, F_1(x_b)^2]}{\sqrt{\operatorname{V}[F_1(x_a)^2]\operatorname{V}[F_1(x_b)^2]}} \\
	&= 2\int_{(\RR^2)^H} \frac{k_{ab}^2}{\sqrt{\operatorname{V}[F_1(x_a)^2]\operatorname{V}[F_1(x_b)^2]}}\,\mathrm{d}\mu(h) + \frac{\cov[F_1(x_a)^2, F_2(x_b)^2]}{\sqrt{\operatorname{V}[F_1(x_a)^2]\operatorname{V}[F_1(x_b)^2]}} \\
	&= 2\int_{(\RR^2)^H} \frac{k_{ab}^2}{\sqrt{\operatorname{V}[F_1(x_a)^2]\operatorname{V}[F_1(x_b)^2]}}\,\mathrm{d}\mu(h) + q^{\times (D)}_{ab}.
	\end{align*}
	To make the dependence on the post-bottleneck depth $D$ more explicit, we write
	\[ q^{(D)}_{ab} = 2\int_{(\RR^2)^H} \frac{(k^{(D)}_{ab})^2}{\sqrt{\operatorname{V}[f^{(D)}_1(x_a)^2]\operatorname{V}[f^{(D)}_1(x_b)^2]}}\,\mathrm{d}\mu(h) + q^{\times (D)}_{ab}. \]
	By Eq.~\eqref{eq-prop-quad-cor-square-var}, this becomes
	\begin{align*}
	q^{(D)}_{ab}
	% &= 2\int_{(\RR^2)^H} \frac{(k^{(D)}_{ab})^2}{\sqrt{\left(\frac{15w_D^2(c_{aa})^2}{H} + 2\left(b_D + w_D c_{aa}\right)^2\right)\left(\frac{15w_D^2(c_{bb})^2}{H} + 2\left(b_D + w_D c_{bb}\right)^2\right)}}\,\mathrm{d}\mu(h) + q^{\times (D)}_{ab} \\
	%&= 2\int_{(\RR^2)^H} \frac{(k^{(D)}_{ab})^2}{\sqrt{\displaystyle\prod_{c=c_{aa},c_{bb}}\left(\frac{15w_D^2c^2}{H} + 2\left(b_D + w_D c\right)^2\right)}}\,\mathrm{d}\mu(h) + q^{\times (D)}_{ab} \\
	&= 2\int_{(\RR^2)^H} \frac{(k^{(D)}_{ab})^2}{w_D^2 c_{aa}c_{bb}\sqrt{\left(\frac{15}{H} + 2\left(\frac{r_D}{c_{aa}} + 1\right)^2\right)\left(\frac{15}{H} + 2\left(\frac{r_D}{c_{bb}} + 1\right)^2\right)}}\,\mathrm{d}\mu(h) + q^{\times (D)}_{ab}.
	\end{align*}
	The correlation at infinite depth is then
	\[ q^{(\infty)}_{ab} = \limD 2\int_{(\RR^2)^H} \frac{(k^{(D)}_{ab})^2}{w_D^2 c_{aa}c_{bb}\sqrt{\left(\frac{15}{H} + 2\left(\frac{r_D}{c_{aa}} + 1\right)^2\right)\left(\frac{15}{H} + 2\left(\frac{r_D}{c_{bb}} + 1\right)^2\right)}}\,\mathrm{d}\mu(h) + q^{\times (\infty)}_{ab}. \]
	The limit can be moved inside the integral. 
	To justify this, observe that the integrand (as a function of $h\in(\RR^2)^H$) can be expressed as
	\begin{align*}
	I_D(h)
	&:= \frac{(k^{(D)}_{ab})^2}{w_D^2 c_{aa}c_{bb}\sqrt{\left(\frac{15}{H} + 2\left(\frac{r_D}{c_{aa}} + 1\right)^2\right)\left(\frac{15}{H} + 2\left(\frac{r_D}{c_{bb}} + 1\right)^2\right)}} \\
	&\leq \frac{k^{(D)}_{aa}k^{(D)}_{bb}}{w_D^2 c_{aa}c_{bb}\sqrt{\left(\frac{15}{H} + 2\left(\frac{r_D}{c_{aa}} + 1\right)^2\right)\left(\frac{15}{H} + 2\left(\frac{r_D}{c_{bb}} + 1\right)^2\right)}} \\
	&= \frac{\left(b_D+\frac{w_D}{H}\sum_i \phi(h^a_i)^2\right)\left(b_D+\frac{w_D}{H}\sum_i \phi(h^b_i)^2\right)}{w_D^2 c_{aa}c_{bb}\sqrt{\left(\frac{15}{H} + 2\left(\frac{r_D}{c_{aa}} + 1\right)^2\right)\left(\frac{15}{H} + 2\left(\frac{r_D}{c_{bb}} + 1\right)^2\right)}} \\
	&\leq \frac{\left(b_D+\frac{w_D}{H}\sum_i \phi(\operatorname{max}(h))^2\right)\left(b_D+\frac{w_D}{H}\sum_i \phi(\operatorname{max}(h))^2\right)}{w_D^2 c_{aa}c_{bb}\sqrt{\left(\frac{15}{H} + 2\left(\frac{r_D}{c_{aa}} + 1\right)^2\right)\left(\frac{15}{H} + 2\left(\frac{r_D}{c_{bb}} + 1\right)^2\right)}} \\
	&= \frac{\left(b_D+w_D \phi(\operatorname{max}(h))^2\right)^2}{w_D^2 c_{aa}c_{bb}\sqrt{\left(\frac{15}{H} + 2\left(\frac{r_D}{c_{aa}} + 1\right)^2\right)\left(\frac{15}{H} + 2\left(\frac{r_D}{c_{bb}} + 1\right)^2\right)}} \\
	&= \frac{\left(r_D+\phi(\operatorname{max}(h))^2\right)^2}{c_{aa}c_{bb}\sqrt{\left(\frac{15}{H} + 2\left(\frac{r_D}{c_{aa}} + 1\right)^2\right)\left(\frac{15}{H} + 2\left(\frac{r_D}{c_{bb}} + 1\right)^2\right)}} \\
	% &\leq \frac{2\left(r_D^2+\phi(\operatorname{max}(h))^4\right)}{c_{aa}c_{bb}\sqrt{\left(\frac{15}{H} + 2\left(\frac{r_D}{c_{aa}} + 1\right)^2\right)\left(\frac{15}{H} + 2\left(\frac{r_D}{c_{bb}} + 1\right)^2\right)}} \\
	&\leq \frac{2r_D^2}{c_{aa}c_{bb}\sqrt{\left(\frac{15}{H} + 2\left(\frac{r_D}{c_{aa}} + 1\right)^2\right)\left(\frac{15}{H} + 2\left(\frac{r_D}{c_{bb}} + 1\right)^2\right)}} \\
	&+ \frac{2\phi(\operatorname{max}(h))^4}{c_{aa}c_{bb}\sqrt{\left(\frac{15}{H} + 2\left(\frac{r_D}{c_{aa}} + 1\right)^2\right)\left(\frac{15}{H} + 2\left(\frac{r_D}{c_{bb}} + 1\right)^2\right)}}.
	\end{align*}
	Recall that $r_D\rightarrow \frac{v_b}{v_w-1}$ if $v_w>1$ and $r_D\rightarrow\infty$ otherwise. 
	In either case, it is easy to verify that the first term and the denominator of the second term converge to non-negative numbers independent of $D$. 
	Therefore, there exist positive constants $A$ and $B$ such that
	\[ I_D(h) < A + B \phi(\operatorname{max}(h))^4 \mbox{ for sufficiently large } D. \]
	Note the right-hand side is integrable with respect to the non-degenerate Gaussian measure $\mu$ since it is a piecewise polynomial in $h$ (with finitely many pieces). 
	We can therefore use the Dominated Convergence Theorem. 
	Recall also that the NNGP kernel (post-bottleneck) degenerates to a constant-element kernel corresponding to a correlation matrix of $1$'s given any fixed input $h$ from the bottleneck layer. 
	Using the Dominated Convergence Theorem twice, we therefore have
	\begin{align*}
	q^{(\infty)}_{ab}
	% &= 2\int_{(\RR^2)^H} \limD \frac{(k^{(D)}_{ab})^2}{w_D^2 c_{aa}c_{bb}\sqrt{\left(\frac{15}{H} + 2\left(\frac{r_D}{c_{aa}} + 1\right)^2\right)\left(\frac{15}{H} + 2\left(\frac{r_D}{c_{bb}} + 1\right)^2\right)}}\,\mathrm{d}\mu(h) + q^{\times (\infty)}_{ab} \\
	&= 2\int_{(\RR^2)^H} \limD \frac{(k^{(D)}_{ab})^2}{w_D^2 c_{aa}c_{bb}\displaystyle\prod_{c=c_{aa},c_{bb}}\sqrt{\frac{15}{H} + 2\left(\frac{r_D}{c} + 1\right)^2}}\,\mathrm{d}\mu(h) + q^{\times (\infty)}_{ab} \\
	% &= 2\int_{(\RR^2)^H} \limD \frac{(k^{(D)}_{ab})^2}{k^{(D)}_{aa}k^{(D)}_{bb}} \frac{k^{(D)}_{aa}k^{(D)}_{bb}}{w_D^2 c_{aa}c_{bb}\sqrt{\left(\frac{15}{H} + 2\left(\frac{r_D}{c_{aa}} + 1\right)^2\right)\left(\frac{15}{H} + 2\left(\frac{r_D}{c_{bb}} + 1\right)^2\right)}}\,\mathrm{d}\mu(h) + q^{\times (\infty)}_{ab} \\
	&= 2\int_{(\RR^2)^H} \limD \frac{(k^{(D)}_{ab})^2}{k^{(D)}_{aa}k^{(D)}_{bb}} \frac{k^{(D)}_{aa}k^{(D)}_{bb}}{w_D^2 c_{aa}c_{bb}\displaystyle\prod_{c=c_{aa},c_{bb}}\sqrt{\frac{15}{H} + 2\left(\frac{r_D}{c} + 1\right)^2}}\,\mathrm{d}\mu(h) + q^{\times (\infty)}_{ab} \\
	% &= 2\int_{(\RR^2)^H} \limD \frac{k^{(D)}_{aa}k^{(D)}_{bb}}{w_D^2 c_{aa}c_{bb}\sqrt{\left(\frac{15}{H} + 2\left(\frac{r_D}{c_{aa}} + 1\right)^2\right)\left(\frac{15}{H} + 2\left(\frac{r_D}{c_{bb}} + 1\right)^2\right)}}\,\mathrm{d}\mu(h) + q^{\times (\infty)}_{ab} \\
	&= 2\int_{(\RR^2)^H} \limD \frac{k^{(D)}_{aa}k^{(D)}_{bb}}{w_D^2 c_{aa}c_{bb}\displaystyle\prod_{c=c_{aa},c_{bb}}\sqrt{\frac{15}{H} + 2\left(\frac{r_D}{c} + 1\right)^2}}\,\mathrm{d}\mu(h) + q^{\times (\infty)}_{ab} \\
	% &= \limD \frac{2}{w_D^2 c_{aa}c_{bb}\sqrt{\left(\frac{15}{H} + 2\left(\frac{r_D}{c_{aa}} + 1\right)^2\right)\left(\frac{15}{H} + 2\left(\frac{r_D}{c_{bb}} + 1\right)^2\right)}} \int_{(\RR^2)^H} k^{(D)}_{aa}k^{(D)}_{bb}\,\mathrm{d}\mu(h) + q^{\times (\infty)}_{ab} \\
	&= \limD \frac{2}{w_D^2 c_{aa}c_{bb}\displaystyle\prod_{c=c_{aa},c_{bb}}\sqrt{\frac{15}{H} + 2\left(\frac{r_D}{c} + 1\right)^2}} \int_{(\RR^2)^H} k^{(D)}_{aa}k^{(D)}_{bb}\,\mathrm{d}\mu(h) + q^{\times (\infty)}_{ab} \\
	% &= \limD \frac{2}{w_D^2 c_{aa}c_{bb}\sqrt{\left(\frac{15}{H} + 2\left(\frac{r_D}{c_{aa}} + 1\right)^2\right)\left(\frac{15}{H} + 2\left(\frac{r_D}{c_{bb}} + 1\right)^2\right)}} \left((b_D+w_D c_{aa})(b_D+w_D c_{bb}) + \cov[f_1^{(D)}(x_a)^2, f_2^{(D)}(x_b)^2]\right) + q^{\times (\infty)}_{ab} \\
	&= \limD \frac{2}{w_D^2 c_{aa}c_{bb}\displaystyle\prod_{c=c_{aa},c_{bb}}\sqrt{\frac{15}{H} + 2\left(\frac{r_D}{c} + 1\right)^2}} \\
	& \cdot\left((b_D+w_D c_{aa})(b_D+w_D c_{bb}) + \cov[f_1^{(D)}(x_a)^2, f_2^{(D)}(x_b)^2]\right) + q^{\times (\infty)}_{ab} \\
	% &= \limD \frac{2(b_D+w_D c_{aa})(b_D+w_D c_{bb})}{w_D^2 c_{aa}c_{bb}\sqrt{\left(\frac{15}{H} + 2\left(\frac{r_D}{c_{aa}} + 1\right)^2\right)\left(\frac{15}{H} + 2\left(\frac{r_D}{c_{bb}} + 1\right)^2\right)}} + 3q^{\times (\infty)}_{ab} \\
	&= \limD \frac{2(b_D+w_D c_{aa})(b_D+w_D c_{bb})}{w_D^2 c_{aa}c_{bb}\displaystyle\prod_{c=c_{aa},c_{bb}}\sqrt{\frac{15}{H} + 2\left(\frac{r_D}{c} + 1\right)^2}} + 3q^{\times (\infty)}_{ab} \\
	% &= \limD \frac{\left(\frac{r_D}{c_{aa}}+1\right)\left(\frac{r_D}{c_{bb}}\right)}{\sqrt{\left(\frac{15}{2H} + \left(\frac{r_D}{c_{aa}} + 1\right)^2\right)\left(\frac{15}{2H} + \left(\frac{r_D}{c_{bb}} + 1\right)^2\right)}} + 3q^{\times (\infty)}_{ab}.
	&= \limD \frac{\left(\frac{r_D}{c_{aa}}+1\right)\left(\frac{r_D}{c_{bb}}\right)}{\displaystyle\prod_{c=c_{aa},c_{bb}}\sqrt{\frac{15}{2H} + \left(\frac{r_D}{c} + 1\right)^2}} + 3q^{\times (\infty)}_{ab}.
	\end{align*}
	Evaluating the limit by recalling the limit of $r_D$, we obtain Eq.~\eqref{eq-prop-q} as desired.

	For part~\ref{prop-q:b}, given $(Q^{(\infty)}, \operatorname{diag}(G))$, we will show that we recover $G$. 
	We can obtain $\operatorname{diag}(C)$ from $\operatorname{diag}(G)$ by applying the NNGP kernel propagation defined in Eq.~\eqref{eq-Kmu-relu}. 
	Given $\operatorname{diag}(C)$, we can solve for $Q^{\times(\infty)}$ using Eq.~\eqref{eq-prop-q} for the case $v_w>1$ in Prop.~\ref{prop-q}. 
	We can then obtain $G$ by Prop.~\ref{prop-quad-cor-lim}~\ref{prop-quad-cor-lim:b}.
\end{proof}

\subsection{Other nonlinearities}
\label{appendix-section-other-nonlin}

The following is the proof for the proposition linking the sinusoidal nonlinearity in Eq.~\eqref{eq-sinusoidal} to the RBF kernel.

\begin{proof}[Proof of Prop.~\ref{prop-rbf-nngp}]
First we prove part~\ref{prop-rbf-nngp:a}. 
The case $\mu=0$ is trivial. 
So, consider $\mu \geq 1$. 
Define the function $F_{\phi}:\RR^3\mapsto\RR$ by
\begin{equation} \label{eq-rbf-nngp-Fphi}
F_{\phi}(a, b, c) = \int_{\RR^2} \phi(z_1)\phi(z_2) \calN\left(z; 0, \lmat a & b\\ b & c\rmat\right)\,\mathrm{d}z,
\end{equation}
where $\phi$ is the sinusoidal nonlinearity in Eq.~\eqref{eq-sinusoidal} and
\[ z = \lmat z_1\\ z_2\rmat. \]
The NNGP kernel recursion can then be written as
\[ K^{(\mu+1)}(x, x^\prime) = v_b + v_w F_{\phi}[K^{(\mu)}(x, x), K^{(\mu)}(x, x^\prime), K^{(\mu)}(x^\prime, x^\prime)]. \]
All we need to show is that
\begin{equation} \label{eq-rbf-nngp-Fphi_abc}
F_{\phi}(a, b, c) = e^{-\frac{1}{2}(a+c-2b)}.
\end{equation}
Let $X$ be a $2\times 2$ matrix with columns $x_1,x_2\in\RR^2$ such that
\begin{equation} \label{eq-rbf-nngp-XTX}
X^\top X = \lmat a & b\\ b & c\rmat.
\end{equation}
Such a matrix $X$ exists since the matrix on the right side is symmetric positive semidefinite. 
Performing the change of variables $z = X^\top w$, Eq.~\eqref{eq-rbf-nngp-Fphi} becomes
\begin{align*}
F_{phi}(a, b, c)
&= \int_{\RR^2} \phi(x_1^\top w)\phi(x_2^\top w) \calN(w; 0, I)\,\mathrm{d}w \\
&= \frac{1}{2\pi}\int_{\RR^2} \phi(w\cdot x_1)\phi(w\cdot x_2) e^{-\frac{1}{2}\Vert w\Vert^2}\,\mathrm{d}w.
\end{align*}
The product of activations in the integrand can be rewritten as
\begin{align*}
\phi(w\cdot x_1)\phi(w\cdot x_2)
&= [\cos(w\cdot x_1)+\sin(w\cdot x_1)] [\cos(w\cdot x_2)+\sin(w\cdot x_2)] \\
&= \cos(w\cdot x_1)\cos(w\cdot x_2) + \sin(w\cdot x_1)\sin(w\cdot x_2) \\
&+ \cos(w\cdot x_1)\sin(w\cdot x_2) + \sin(w\cdot x_1)\cos(w\cdot x_2) \\
&= \cos[w\cdot (x_1-x_2)] + \sin[w\cdot (x_1+x_2)].
\end{align*}
We therefore have
\begin{align*}
F_{\phi}(a, b, c)
&= \frac{1}{2\pi}\int_{\RR^2} \cos[w\cdot (x_1-x_2)] e^{-\frac{1}{2}\Vert w\Vert^2}\,\mathrm{d}w \\
&+ \frac{1}{2\pi}\int_{\RR^2} \sin[w\cdot (x_1+x_2)] e^{-\frac{1}{2}\Vert w\Vert^2}\,\mathrm{d}w.
\end{align*}
The integrand of the second integral on the right side is odd in $w$ for all $x_1$ and $x_2$, 
and thus this integral is zero. 
We can therefore replace $x_1+x_2$ with $x_1-x_2$ in the second integral and multiply the integral by the imaginary unit $i$ without changing its value:
\begin{align*}
F_{\phi}(a, b, c)
&= \frac{1}{2\pi}\int_{\RR^2} \cos[w\cdot (x_1-x_2)] e^{-\frac{1}{2}\Vert w\Vert^2}\,\mathrm{d}w \\
&+ i \frac{1}{2\pi}\int_{\RR^2} \sin[w\cdot (x_1-x_2)] e^{-\frac{1}{2}\Vert w\Vert^2}\,\mathrm{d}w \\
&= \frac{1}{2\pi}\int_{\RR^2} \left(\cos[w\cdot (x_1-x_2)] + i\sin[w\cdot (x_1-x_2)]\right) e^{-\frac{1}{2}\Vert w\Vert^2}\,\mathrm{d}w \\
&= \frac{1}{2\pi}\int_{\RR^2} e^{iw\cdot (x_1-x_2)} e^{-\frac{1}{2}\Vert w\Vert^2}\,\mathrm{d}w \\
&= e^{-\frac{1}{2}\Vert x_1-x_2\Vert^2},
\end{align*}
where the last line holds because the Gaussian is an eigenfunction of the Fourier transform. 
Using Eq.~\eqref{eq-rbf-nngp-XTX}, this becomes
\begin{align*}
F_{\phi}(a, b, c)
&= e^{-\frac{1}{2}(\Vert x_1\Vert^2+\Vert x_2\Vert^2-2x_1\cdot x_2)} \\
&= e^{-\frac{1}{2}(a+c-2b)},
\end{align*}
giving us Eq.~\eqref{eq-rbf-nngp-Fphi_abc} as desired.

For part~\ref{prop-rbf-nngp:b}, consider two distinct inputs $x,x^\prime\in\RR^M$ and define
\[ v^{(\mu)} = K^{(\mu)}(x, x^\prime) 
\mbox{ and }
c^{(\mu)} = \frac{K^{(\mu)}(x, x^\prime)}{\sqrt{K^{(\mu)}(x, x) K^{(\mu)}(x^\prime, x^\prime)}}. \]
All we need to show is that $(v^{(\mu)}, c^{(\mu)})$ has a globally attractive fixed point $(v_*, c_*)$ of the form given in the statement of the proposition. 
The dynamics of $v^{(\mu)}$ is given by $v^{(\mu+1)} = f_v(v^{(\mu)})$ where $f_v:(0, \infty)\mapsto (0, \infty)$ is given by
\begin{align*}
f_v(v)
&= v_b + v_w F_{\phi}(v, v, v) \\
&= v_b + v_w.
\end{align*}
It is thus trivial that the global fixed point of $f_v$ is $v_*(v_b, v_w) = v_b + v_w$. 
The dynamics of $c^{(\mu)}$ is given by $c^{(\mu+1)} = f_c(c^{(\mu)})$ where $f_c:[-1, 1]\mapsto [-1, 1]$ is given by
\[ f_c(c) = v_b + v_w F_{\phi}(v_*, v_* c, v_*), \]
where we approximate $v^{(\mu)}$ with $v_*$; 
this approximation becomes exact in the deep limit. 
Substituting in $v_*=v_b+v_w$, we get
\[ f_c(c) = \frac{v_b}{v_b+v_w} + \frac{v_w}{v_b+v_w} e^{(v_b+v_w)(c-1)}. \]
We make use of the fact that if a function $f:\RR\mapsto\RR$ has strictly positive first and second derivatives, then $f$ has either no fixed points, one fixed point that is neither stable nor unstable, or two fixed points $c_1 < c_2$ where $c_1$ is stable with basin of attraction $(-\infty, c_2)$ and $c_2$ is unstable. 
Observe that $f_c(1) = 1$ and $f^\prime_c(1)=v_w$. 
If $v_w < 1$, then $c=1$ is a stable fixed point and is thus globally attractive on $[-1, 1]$, 
thus establishing $c_*(v_b, v_w) = 1$ for $v_w < 1$. 
On the other hand, if $v_w>1$, then $c=1$ is an unstable fixed point. 
Since $f_c(0) > 0$ and $f_c(1)=1$, then by the Intermediate Value Theorem, $f_c$ has a stable fixed point $c^\prime\in (0, 1)$ that is globally attractive on $[-1, 1)$. 
Given the initial condition $c^{(0)} = \frac{x^\top x^\prime}{\Vert x\Vert \Vert x^\prime\Vert} < 1$ (since $x\neq x^\prime$) and that $c=1$ is an unstable fixed point, 
then we must have $c_*(v_b, v_w) < 1$ for $v_w>1$ and thus in particular $c_*(v_b, v_w)=c^\prime$, 
concluding the proof.
\end{proof}

\begin{proof}[Proof of Prop.~\ref{prop-rbf-nngp-bottleneck}]
Let $H$ be the bottleneck width, 
and let $h_1,h_2\in\RR^H$ be the bottleneck preactivations---i.e., the outputs of the pre-bottleneck NNGP component---given network inputs $x_1,x_2\in\RR^M$. 
Let $\mu$ be the (Gaussian) probability measure associated to $(h_1, h_2)$. 
Let $z_1,z_2\in\RR^H$ be the corresponding bottleneck activations that are fed into the post-bottleneck NNGP component, 
where $z_a = \frac{1}{\sqrt{H}}\phi(h_a)$ for $a=1,2$. 
Finally, let $K^{(D)}:\RR^H\times\RR^H\mapsto\RR$ be the kernel of the post-bottleneck NNGP component assuming post-bottleneck depth $D$. 
Then the PDF of the bottleneck NNGP outputs is given by
\[ p^{(D)}(y) = \int_{(\RR^H)^2} \calN(y; 0, K^{(D)}(Z, Z)+v_n I)\,\mathrm{d}\mu(h_1, h_2), \]
where $K^{(D)}(Z, Z)$ is a $2\times 2$ matrix with entries $K^{(D)}(z_a, z_b)$ for $a,b=1,2$. 
Using the fact that for any $2\times 2$ positive semidefinite matrix $A$ with eigenvalues $\lambda_1,\lambda_2\geq 0$,
\[ \operatorname{det}(A+v_n I) = (\lambda_1+v_n)(\lambda_2+v_n) \geq v_n^2, \]
we have the bound
\begin{align*}
\calN(y; 0, K^{(D)}(Z, Z)+v_n I)
&\leq \frac{1}{2\pi}\operatorname{det}[K^{(D)}(Z, Z)+v_n I]^{-\frac{1}{2}} \\
&\leq \frac{1}{2\pi} (v_n^2)^{-\frac{1}{2}} \\
&= \frac{v_n}{2\pi}.
\end{align*}
Since the bound is clearly an integrable function with respect to $\mu$ and since the bound holds for all $D$, then we may apply the Bounded Convergence Theorem. 
By the continuity of the matrix determinant and inversion operations, the PDF converges to
\begin{align}
p^{(\infty)}(y)
&= \limD p^{(D)}(y) \nonumber \\
&= \int_{(\RR^H)^2} \calN(y; 0, \limD K^{(D)}(Z, Z)+v_n I)\,\mathrm{d}\mu(h_1, h_2) \nonumber \\
&= \int_{(\RR^H)^2} \calN(y; 0, K^{(\infty)}(Z, Z)+v_n I)\,\mathrm{d}\mu(h_1, h_2). \label{eq-rbf-nngp-pinfty-temp}
\end{align}
Define the set
\begin{align*}
S
&= \left\{(h_1, h_2)\in (\RR^H)^2: z_1=z_2\right\} \\
&= \left\{(h_1, h_2)\in (\RR^H)^2: \frac{1}{\sqrt{H}}\phi(h_1) = \frac{1}{\sqrt{H}}\phi(h_2)\right\} \\
&= \{(h_1, h_2)\in (\RR^H)^2: \phi(h_1) = \phi(h_2)\} \\
&= \{(h_1, h_2)\in (\RR^H)^2: \exists n\in\ZZ^H\mid h_1 = h_2+2n\pi\} \\
&= \bigcup_{n\in\ZZ^H} \{(h, h+2n\pi): h\in\RR^H\}.
\end{align*}
We see that $S$ is a countable disjoint union of $H$-dimensional planes embedded in a $2h$-dimensional space. 
Since the network inputs $x_1$ and $x_2$ are distinct, then $\mu$ is a non-degenerate Gaussian distribution on $(\RR^H)^2$ so that $\mu(S) = 0$. 
We can therefore remove $S$ from the region of integration in Eq.~\eqref{eq-rbf-nngp-pinfty-temp}, 
so that $z_1$ and $z_2$ are distinct inputs into $K^{(\infty)}$ under the integral. 
We thus evaluate the covariance of the integrand in Eq.~\eqref{eq-rbf-nngp-pinfty-temp} using Prop.~\ref{prop-rbf-nngp}~\ref{prop-rbf-nngp:b} and Eq.~\eqref{eq-rbf-nngp-Kinfty} and get
\begin{align*}
p^{(\infty)}(y)
&= \int_{(\RR^H)^2\setminus S} \calN(y; 0, K^{(\infty)}(Z, Z)+v_n I)\,\mathrm{d}\mu(h_1, h_2) \\
&= \int_{(\RR^H)^2\setminus S} \calN\left(y; 0, v_*(v_b, v_w) \lmat 
1 & c_*(v_b, v_w) \\
c_*(v_b, v_w) & 1
\rmat + v_n I\right)\,\mathrm{d}\mu(h_1, h_2) \\
&= \calN\left(y; 0, v_*(v_b, v_w) \lmat 
1 & c_*(v_b, v_w) \\
c_*(v_b, v_w) & 1
\rmat + v_n I\right),
\end{align*}
which according to Prop.~\ref{prop-rbf-nngp}~\ref{prop-rbf-nngp:b} is precisely the deep limit with no bottleneck given two distinct inputs, 
thus establishing Eq.~\eqref{eq-rbf-nngp-pinfty}.
\end{proof}

\vskip 0.2in
\bibliography{main}

\end{document}